%% file: paper.tex
\documentclass[10pt]{article}
\usepackage[margin=0.9in]{geometry}
\usepackage[T1]{fontenc}
\usepackage[utf8]{inputenc}
\usepackage[numbers,sort&compress]{natbib}
\usepackage{caption}
\usepackage{graphicx}
\usepackage{amsmath}
\usepackage{xcolor}
\usepackage{titletoc}

\usepackage{amssymb,mathtools,bm,microtype,booktabs,enumitem}
\usepackage{amsthm}
\usepackage{flafter}
\usepackage{placeins}

\newcommand{\E}{\mathbb E}
\newcommand{\R}{\mathbb R}
\newcommand{\N}{\mathcal N}
\newcommand{\dd}{\mathrm d}
\newcommand{\iid}{\mathrel{\overset{\mathrm{iid}}{\sim}}}
\newcommand{\dto}{\xrightarrow[d\to\infty]{}}
\newcommand{\one}{\mathbf 1}
\newcommand{\id}{\mathbf I}
\newcommand{\sig}{\sigma}
\newcommand{\calD}{\mathcal D}
\newcommand{\calZ}{\mathcal Z}
\newcommand{\bx}{\bm x}
\newcommand{\bw}{\bm w}
\newcommand{\bu}{\bm u}
\newcommand{\bz}{\bm z}
\newcommand{\br}{\bm r}
\newcommand{\bchi}{\bm\chi}
\newcommand{\bv}{\bm v}
\newcommand{\bdelta}{\bm\delta}
\newcommand{\bfield}{\bm f}
\newcommand{\bkappa}{\bm\kappa}
\newcommand{\laR}{\lambda_R}
\newcommand{\Cov}{\operatorname{Cov}}
\newcommand{\Var}{\operatorname{Var}}
\newcommand{\rank}{\operatorname{rank}}
\newcommand{\vis}{\mathrm{vis}}
\newcommand{\mask}{\mathrm{mask}}
\newcommand{\out}{\mathrm{out}}
\newcommand{\Extr}{\mathop{\mathrm{extr}}}
\newcommand{\argmin}{\operatorname*{arg\,min}}

\definecolor{dynamicColor}{HTML}{1F4E8C}
\definecolor{staticColor}{HTML}{9A7300}
\definecolor{augmentationColor}{HTML}{6F4A8E}
\definecolor{fullrecColor}{HTML}{9E2F2F}
\definecolor{bayesColor}{HTML}{2E6E3F}

\newif\ifconceptcolors
\conceptcolorstrue

\newcommand{\dynamicmask}[1]{\ifconceptcolors\textcolor{dynamicColor}{#1}\else#1\fi}
\newcommand{\staticmask}[1]{\ifconceptcolors\textcolor{staticColor}{#1}\else#1\fi}
\newcommand{\standardaug}[1]{\ifconceptcolors\textcolor{augmentationColor}{#1}\else#1\fi}
\newcommand{\fullrecon}[1]{\ifconceptcolors\textcolor{fullrecColor}{#1}\else#1\fi}
\newcommand{\bayesopt}[1]{\ifconceptcolors\textcolor{bayesColor}{#1}\else#1\fi}

\usepackage{hyperref}
\definecolor{referenceColor}{HTML}{405A73}
\hypersetup{
  colorlinks=true,
  allcolors=referenceColor
}
\usepackage{url}
\usepackage[nameinlink,capitalise]{cleveref}
\newif\ifabbreviatedreferences
\abbreviatedreferencestrue
\ifabbreviatedreferences
  \crefname{figure}{Fig.}{Figs.}
  \Crefname{figure}{Fig.}{Figs.}
  \crefname{equation}{Eq.}{Eqs.}
  \Crefname{equation}{Eq.}{Eqs.}
  \crefname{section}{Sec.}{Secs.}
  \Crefname{section}{Sec.}{Secs.}
  \crefname{appendix}{App.}{Apps.}
  \Crefname{appendix}{App.}{Apps.}
\else
  \crefname{figure}{Figure}{Figures}
  \Crefname{figure}{Figure}{Figures}
  \crefname{equation}{Equation}{Equations}
  \Crefname{equation}{Equation}{Equations}
  \crefname{section}{Section}{Sections}
  \Crefname{section}{Section}{Sections}
  \crefname{appendix}{Appendix}{Appendices}
  \Crefname{appendix}{Appendix}{Appendices}
\fi

\AddToHook{cmd/appendix/after}{%
  \crefalias{section}{appendix}%
  \crefalias{subsection}{appendix}%
  \crefalias{subsubsection}{appendix}%
  \crefalias{subappendix}{appendix}%
  \crefalias{subsubappendix}{appendix}%
}
\newtheorem{theorem}{Theorem}[section]
\newtheorem{proposition}[theorem]{Proposition}
\newtheorem{lemma}[theorem]{Lemma}

\newtheorem{assumption}[theorem]{Assumption}

\title{The Hidden Advantage of Mask Resampling:\\
A Theory of Masked Autoencoders}
\input{authors}
\date{}

\begin{document}

\setcounter{footnote}{1}
\maketitle

\begin{abstract}

Why can masked prediction learn useful representations that unmasked reconstruction misses? We study this question in a high-dimensional model of a masked autoencoder (MAE) trained on data with shared latent structure and heterogeneous noise. We prove that masked linear reconstruction can recover the latent feature at linear sample complexity in regimes where unmasked linear reconstruction, equivalent to PCA, fails. The analysis also quantifies the statistical advantage of mask resampling, an established ingredient of masked pretraining. By introducing a fixed collection of $K$ masks per sample, we characterize its effect on feature recovery and downstream performance, identifying regimes where greater mask diversity lowers sample complexity. Guided by this prediction, we find that random cropping and flipping in standard image-training pipelines can obscure the advantage of mask resampling by renewing the prediction task even when the patch mask is fixed. Removing these transformations reveals a downstream advantage for dynamic over static masking in CNN autoencoders and vision transformers. A complementary BERT pilot finds benefits from greater mask diversity on downstream language tasks. Our results separate the benefit of the masked prediction objective from that of mask diversity, and show how a tractable theory can guide experiments that uncover advantages hidden by standard training practices.
\end{abstract}
\begin{center}
\small
Code available at:
\url{https://github.com/SPOC-group/mask-resampling}
\end{center}

\section{Introduction}
\label{sec:introduction}
Why can predicting missing parts of an input produce more useful representations than reconstructing the entire input? Building on denoising autoencoders \citep{vincent2010stacked}, \citet{pathak2016context} predicted missing image regions from visible context and obtained more transferable representations than a standard autoencoder trained without input masking, a paradigm that now underlies masked language and image modeling \citep{devlin2019bert,he2022masked,bao2021beit,xie2022simmim}. Masked learning therefore couples two ingredients: a \textit{prediction objective} that specifies what structure is rewarded, and an \textit{augmentation procedure} that controls how many distinct prediction tasks a given sample ever presents. Despite its empirical success, our theoretical understanding of how these ingredients shape learned representations remains incomplete. We develop a high-dimensional theory that separates their effects and connects them to latent-feature recovery and downstream performance.

Motivated by the observation that unmasked, i.e., the standard autoencoder, reconstruction \footnote{By unmasked reconstruction, we mean the standard autoencoder objective that observes the entire input and minimizes squared reconstruction error over all coordinates.}
can favor high-variance nuisance directions over downstream-relevant structure \citep{balestriero2024learning}, we consider a data model with a shared latent direction and heterogeneous coordinate-wise noise. The latent direction carries the downstream signal, while heterogeneity allows nuisance variance to dominate the leading principal component.
Masking changes this competition by requiring hidden coordinates to be predicted from the visible ones, rather than rewarded for their own variance. We prove that a linear masked autoencoder recovers the latent direction with $n=O(d)$ samples where \fullrecon{unmasked linear reconstruction}, equivalent to PCA, fails (\Cref{sec:main-separation}).

\begin{figure}[!htbp]
\centering

\includegraphics[width=\textwidth]{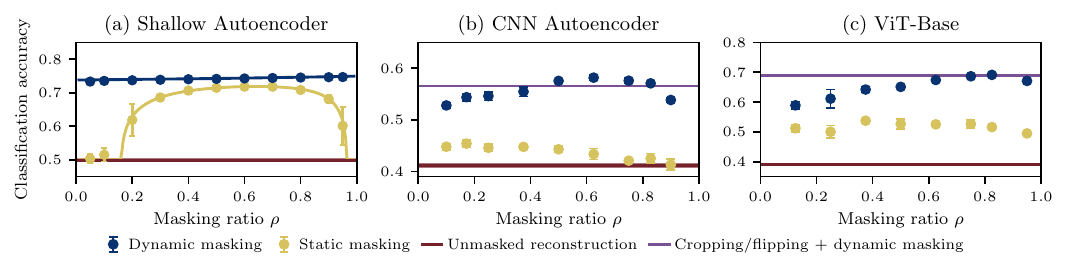}

\caption{\textbf{Isolating mask resampling reveals its contribution to representation learning.} Downstream accuracy versus masking ratio $\rho$ for \dynamicmask{dynamic masking} (blue), \staticmask{static masking} (yellow), and \fullrecon{unmasked reconstruction} (red). \textbf{(a)} Shallow autoencoder on spiked model (\Cref{sec:main-model}); curves are theory and markers are simulations. \textbf{(b)} CNN autoencoder on CIFAR-10 (test top-1). \textbf{(c)} ViT-Base on ImageNet-100 (validation top-5). In (b,c), blue and yellow curves use no stochastic cropping or flipping. The purple reference includes cropping and flipping with \emph{dynamic mask resampling} at $\rho=0.625$ \textbf{(b)} and $\rho=0.75$ \textbf{(c)}. At suitable masking ratios, resampling alone reaches comparable accuracy. Bars and shading show $\pm1$ standard deviation (SD). See \Cref{app:experimental-reporting} for details.}
\label{fig:static-vs-dynamic}

\end{figure}

Analyzing this model clearly reveals a second notable effect: the number of distinct masks available per sample controls how many underlying training samples are needed for latent-feature recovery. We distinguish \staticmask{static masking}, where each sample is assigned a single random mask that is reused throughout training, from \dynamicmask{dynamic masking}, commonly used in practice \citep{he2022masked,liu2019roberta}, in which the mask is independently resampled each time a sample is used during pretraining. More generally, assigning each sample $K$ independent masks and reusing them thereafter interpolates between these two regimes at fixed number of underlying samples. This allows us to isolate mask diversity from data diversity. Our analysis predicts that increasing $K$ lowers the recovery threshold, with corresponding improvements in downstream performance in the settings studied.

This prediction prompted us to revisit how the contribution of mask diversity is assessed empirically. In language modeling, dynamic masking has been reported to provide only modest gains over reusing a fixed collection of masks \citep{liu2019roberta}, but their static baseline already uses $K=10$ pre-generated masks per sequence rather than a single fixed mask. Data2Vec 2.0 shows substantial improvements in image classification accuracy by increasing the number of masked views per sample within each training update \citep{baevski2023efficient}. Furthermore, in vision, self-supervised image learning commonly uses random cropping, flipping, and color transformations to generate changing views of the same sample \citep{chen2020simple,tian2020makes}. In MAE specifically, standard pretraining combines random cropping and flipping with mask resampling \citep{he2022masked}. Under a newly drawn crop, for example, the same patch mask can hide different content from the original image. We therefore hypothesize that stochastic image augmentation adds another source of prediction-task diversity, obscuring the effect of resampling the mask itself.

Guided by this hypothesis, we compare static and dynamic masking \standardaug{with} and \dynamicmask{without} \standardaug{stochastic cropping and flipping}. Without these transformations, \dynamicmask{dynamic masking} improves downstream accuracy in CNN autoencoders on CIFAR-10 and ViT-Base on ImageNet-100 (\Cref{fig:static-vs-dynamic} (b,c)). In the ViT experiments, reintroducing cropping and flipping substantially narrows the observed static–dynamic gap, mainly because static masking improves (\Cref{fig:masking-augmentation}).
Finite-$K$ experiments further show progressive improvements as each image is exposed to more masks (\Cref{fig:K-interpolation}). A complementary BERT-Medium pilot finds improvements with increasing mask diversity on GLUE tasks, extending the qualitative evidence beyond vision (\Cref{sec:main-experiments}).

Together, these results distinguish two benefits of masked representation learning: the prediction objective can favor useful correlations over nuisance variance, while renewing the masks can lower the sample complexity required to learn those correlations. By identifying this distinction, the theory motivates a controlled test of its relevance beyond the solvable model.

The \textbf{main contributions} of this paper are the following:
\begin{itemize}
\item Under suitable heterogeneous noise, we prove that a linear masked autoencoder recovers the latent direction for all sufficiently large finite sample complexity $\alpha=n/d$, whereas unmasked linear reconstruction (PCA) fails for every fixed $\alpha>0$. Replica calculations complement this guarantee by predicting the recovery threshold (\Cref{sec:main-separation}).

\item We formulate a tractable high-dimensional model of a two-stage learning protocol: self-supervised learning (SSL) through masked reconstruction, followed by supervised linear probing with a frozen encoder (\Cref{sec:main-model}).
For linear and nonlinear autoencoders, we derive replica predictions for feature recovery and downstream accuracy as functions of the masking ratio and mask diversity, showing the benefit of resampling the masks (\Cref{sec:main-recovery}).

\item Guided by our theoretical predictions, we test the role of mask diversity in vision and language. Dynamic masking improves downstream performance without stochastic cropping and flipping; adding these transformations substantially narrows the observed static–dynamic gap in the ViT setting, consistent with overlapping sources of prediction-task diversity. A BERT-Medium pilot provides complementary evidence for the benefit of mask diversity in language modeling (\Cref{sec:main-experiments}).
\end{itemize}

\subsection*{Further related work}

We outline the broad lines of related work here; \Cref{app:positioning} provides an expanded comparison with the closest references and clarifies the differences in setting, analytical scope, and contribution.

\paragraph{Statistical-physics theory of neural networks.} Our replica analysis builds on high-dimensional empirical risk minimization \citep{aubin2020generalization,loureirogmm2021,cui2025high}, shallow-autoencoder dynamics \citep{refinetti2022dynamics}, and denoising autoencoders \citep{cui2023high}. \citet{conde2026solvable} prove an advantage of nonlinear autoencoders over PCA. Heterogeneous noise has also been studied in supervised learning \citep{adomaityte2023classification,adomaityte2024robust}.

\paragraph{Theory of masked learning.} Existing work connects masked autoencoding to contrastive learning and downstream guarantees \citep{zhang2022mask}, and establishes latent identifiability in hierarchical models \citep{kong2023understanding}. \citet{bisulco2025linearity} characterize linear MAEs with full reconstruction from masked inputs, including visible coordinates rather than only hidden targets. Their analysis relates patch size and masking ratio to the spatial correlations learned. \citet{dong2026understanding} analyze decoupled coordinate predictors through rigorous ridgeless-regression risk formulas, accounting for both corrupted covariates and the samples supplying a selected prediction target. Single-coordinate prediction also connects to factored attention \citep{rende2024mapping}. \citet{wortsman2026a} characterize a spiked-model recovery transition in the joint matrix of single-coordinate ridge predictors and establish prediction-risk advantages over PCA in structured covariance settings. \citet{pmlr-v202-cabannes23a} distinguish sample variability from augmentation variability in SSL generalization bounds; additional views can reduce the latter without removing the former.

\paragraph{Spiked models and heteroskedastic noise.} Our linear baseline connects to spiked covariance models \citep{johnstone2001distribution} and PCA recovery transitions \citep{baik2005bbp,benaych2012singular}. \citet{hong2018asymptotic} characterize PCA under heteroskedastic noise. Spectral methods addressing heteroskedasticity include weighted PCA \citep{hong2023optimally}, diagonal correction through HeteroPCA \citep{zhang2022heteroskedastic}, and optimal estimators for doubly heteroskedastic noise \citep{ZhangMondelli2024}. We study how masking changes the reconstruction objective to promote recovery, without supplying the coordinate variances or their distribution.

\section{A high-dimensional model of masked learning}
\label{sec:main-model}

\paragraph{Data and noise heterogeneity.} Our goal is to compare prediction objectives and masking protocols for a data distribution that enables sharp high-dimensional analysis. Inspired by insights into the advantage of masking from \cite{balestriero2024learning} and by work on unsupervised learning in high dimensions \citep{johnstone2001distribution,deshpande2016sparse,lesieur2017constrained}, we consider a data model that separates a shared informative feature in the form of a spike from coordinate-specific nuisance fluctuations. Concretely, we consider $n$ samples with $d$ coordinates generated as
\begin{equation}
    x_{\mu i} = \sqrt{\frac{\beta}{d}}\,\lambda_\mu u_i^\star + \sqrt{\gamma_i^\star}\,z_{\mu i}, \qquad \mu\in[n],\quad i\in[d].
    \label{eq:main-data-model}
\end{equation}
Here $\bu^\star$ is the shared latent direction (spike), $\lambda_\mu$ is the latent variable of sample $\mu$, and $\beta>0$ controls signal strength. The variance $\gamma_i^\star$ sets the strength of nuisance fluctuations in coordinate $i$: a large value increases input variance independently of the spiked direction. These variances are drawn once and shared by all samples, whereas the noise realizations $z_{\mu i}$ change across samples and coordinates.

The variables $\lambda_\mu$ and $u_i^\star$ are independently drawn from centered, unit-second-moment distributions $P_\lambda$ and $P_u$; $z_{\mu i}\iid\N(0,1)$, and the variances $\gamma_i^\star\iid P_\gamma$ are positive. We normalize $\E_\gamma[\gamma]=1$ and write $\kappa_2=\E_\gamma[\gamma^2]<\infty$; hence $\kappa_2\geq1$, with equality only for homogeneous noise. All random families are mutually independent. The case $P_\gamma=\delta_1$ is the standard spiked covariance model \citep{johnstone2001distribution}. Full assumptions and notation are in \Cref{sec:finite-K-model,app:notation}.

\paragraph{Architecture and masked prediction.} The learner is a single-hidden-unit autoencoder with tied encoder and decoder weights $\bw\in\R^d$. For each sample $\mu$, we draw $K$ mask vectors $\boldsymbol{\chi}_\mu^{(r)} =(\chi_{\mu i}^{(r)})_{i\in[d]}\in\{0,1\}^d$, with entries drawn independently from $\mathrm{Bernoulli}(\rho)$ and independently of the data. The mask collection $\Xi=(\boldsymbol{\chi}_\mu^{(r)})_{\mu\in[n],\,r\in[K]}$ is held fixed throughout pretraining. An entry of one marks a hidden coordinate; so $\rho$ is the \textit{expected masking ratio}. Mask $r$ of sample $\mu$ encodes only visible coordinates through
\begin{equation}
    h_{\mu r,\mathrm{vis}}(\bw) = \frac{1}{\sqrt d}\, \bw^\top\! \left[ \bigl(\mathbf{1}_d-\boldsymbol{\chi}_\mu^{(r)}\bigr) \odot \mathbf{x}_\mu \right],
    \label{eq:main-preactivation}
\end{equation}
where $\odot$ denotes the elementwise (Hadamard) product and $\mathbf{1}_d$ is the all-ones vector. For the fixed mask collection $\Xi$, pretraining minimizes the regularized empirical masked risk
\begin{equation}
    \mathcal R_{K,d}(\bw) = \frac{\laR}{2}\|\bw\|_2^2 + \frac{1}{K}\sum_{\mu=1}^n\sum_{r=1}^K \left\| \boldsymbol{\chi}_\mu^{(r)} \odot \left[ \mathbf{x}_\mu - \frac{\bw}{\sqrt d}\, \sig\!\left(h_{\mu r,\mathrm{vis}}(\bw)\right) \right] \right\|_2^2,
    \label{eq:main-finite-K-objective}
\end{equation}
where $\sig$ is the activation function and $\laR$ controls ridge regularization. The loss is evaluated only on hidden coordinates, so each target is predicted from the visible input. \fullrecon{Unmasked reconstruction} instead observes and predicts every coordinate; we define this separate objective in \Cref{sec:main-separation}.

\paragraph{Static, finite-$K$, and dynamic masking.} The choice $K=1$ gives \staticmask{static masking}. Increasing $K$ adds masked views without adding underlying samples. At a fixed dataset, the average over masks converges to the mask expectation as $K\to\infty$. We refer to this loss as the \dynamicmask{dynamic masking} objective; drawing a fresh mask at each encounter provides stochastic evaluations of it. The high-dimensional analysis takes $n,d\to\infty$ at fixed $K$ and obtains dynamic masking as $K\rightarrow \infty$ (\Cref{sec:dynamic-limit}).

Our \textit{two-stage protocol} first learns a representation from unlabeled data and then fits a supervised probe with the encoder frozen. We assess both whether pretraining recovers the shared latent direction and whether the resulting representation supports prediction from a small, independent labeled dataset.

\paragraph{Pretraining and feature recovery.} Given the pretraining dataset $\mathcal D_{\rm pre}=\{\bx_\mu\}_{\mu=1}^{n}$ with $n=\alpha d$ and the fixed mask collection $\Xi$, we study a global minimizer $\widehat\bw\in\operatorname*{arg\,min}_{\bw\in\R^d} \mathcal R_{K,d}(\bw)$ of the risk in \eqref{eq:main-finite-K-objective}. In the proportional limit $n,d\to\infty$ at fixed sample complexity $\alpha=n/d$, recovery is measured by the cosine similarity (alignment):
\begin{equation}
    \theta_{\bu^\star} = \frac{\lvert\langle\widehat\bw,\bu^\star\rangle\rvert} {\lVert\widehat\bw\rVert_2\,\lVert\bu^\star\rVert_2}
    \label{eq:main-cosine}
\end{equation}
Weak recovery means that there exists $\eta>0$, independent of $n$ and $d$ such that $\theta_{\bu^\star}>\eta$ has probability that tends to one as $n,d\to \infty$. A recovery threshold marks the onset of this alignment.

\paragraph{Downstream task and linear probing.} After pretraining, we freeze $\widehat\bw$ and draw a labeled dataset $\mathcal D_{\rm ds} =\{(\bx_\nu^{\rm ds},y_\nu^{\rm ds})\}_{\nu=1}^{n_{\rm ds}}$, where $y_\nu^{\rm ds} =\operatorname{sign}(\lambda_\nu^{\rm ds})$. These samples follow \eqref{eq:main-data-model} with the same latent direction and coordinate variances as $\mathcal D_{\rm pre}$, but independently drawn latent coefficients and noise. We fit a one-dimensional logistic classifier on the unmasked scalar features $d^{-1/2}{\widehat\bw}^\top\bx_\nu^{\rm ds}$ and evaluate it on independent test samples from the same model. Thus pretraining uses $n=\alpha d$ unlabeled samples, while the labeled probe size $n_{\rm ds}$ remains fixed as $d$ grows (\Cref{subsec:downstream-classification,app:downstream-label-count}).

\paragraph{Analytical predictions and optimality.} The replica method predicts feature recovery after pretraining and the accuracy of the subsequent frozen-representation probe (\Cref{sec:replica-derivation,sec:model-specializations}). Replica calculations predict weak recovery thresholds by locating where an uninformative solution becomes unstable to a small alignment. These predictions concern continuous recovery transitions and are validated with simulations; an earlier discontinuous transition requires a separate analysis (\Cref{subsec:recovery-threshold-scope}). Below, ``weak recovery threshold'' has this predictive meaning when referring to replica results. Independently, \Cref{sec:main-separation} proves a separation between masked linear autoencoder and PCA.

We also compare feature recovery with a matched \bayesopt{Bayes-optimal} reference that uses the same unlabeled samples but is supplied with the latent priors and realized coordinate variances. Unmasked reconstruction and finite-$K$ comparisons isolate the effects of the objective and mask reuse, respectively; the Bayes reference instead assesses the gap to matched inference with additional prior information. Following \citet{lesieur2017constrained}, \Cref{app:bayes-optimal} gives the state evolution and heteroskedastic approximate message passing (AMP) updates. Downstream evaluation uses the same scalar logistic probe for AMP and the autoencoders (\Cref{subsec:downstream-classification}).

\section{A provable advantage of masked reconstruction}
\label{sec:main-separation}

We first isolate the effect of the prediction objective. Unmasked autoencoders learn a compressed representation by reconstructing the full input through a low-dimensional bottleneck. The bottleneck forces the learner to select which variations in the data to preserve, with reconstruction error providing the criterion for this selection. Unmasked reconstruction is therefore a natural baseline for asking whether the features most useful for reconstructing the input also capture its shared latent structure.

\paragraph{Unmasked reconstruction.} The autoencoder observes and predicts every coordinate, minimizing
\begin{equation}
    \mathcal R_{\rm full,d}(\bw) =\sum_{\mu=1}^n \left\| \mathbf{x}_\mu - \frac{\bw}{\sqrt d}\, \sig\bigl(h_\mu^{\rm full}(\bw)\bigr)\right\|^2_2, \qquad h_\mu^{\rm full}(\bw)=\frac{\bw^\top\bx_\mu}{\sqrt d}.
    \label{eq:main-full-objective}
\end{equation}
For $\sig(h)=h$, the global minimizers of this tied single-unit autoencoder point along a direction of largest empirical variance, making it equivalent to ordinary PCA \citep{bourlard1988auto}; see \Cref{subsec:sep-pca} for the reduction.

\paragraph{What changes when the target is hidden?} For the masked objective \eqref{eq:main-finite-K-objective}, we likewise consider a linear activation, set $\laR=0$, and average over masks. Write $\bw=\sqrt{dQ}\,\br$, with $Q\geq0$ and $\|\br\|=1$, and let $\Sigma=\operatorname{diag}(\gamma_1^\star,\ldots,\gamma_d^\star) +\beta\bu^\star(\bu^\star)^\top/d$ be the population covariance conditional on the planted direction and coordinate variances. Both population losses per sample are quadratic in $Q$. Their terms linear in $Q$ are $-2Q$ times the following rewards:
\begin{equation}
\begin{aligned}
 \text{Unmasked:}\quad &\br^\top\Sigma\br \qquad \text{Masked:}\quad &\rho(1-\rho)\,\br^\top \bigl(\Sigma-\operatorname{diag}\Sigma\bigr)\br
\end{aligned}
\label{eq:main-masked-reward}
\end{equation}
Unmasked reconstruction rewards total variance, whereas masked prediction rewards cross-correlations between different coordinates. The diagonal is absent from the masked reward because masking prevents a coordinate from predicting itself, removing the reward for its own variance: only correlations with visible coordinates can help reconstruct it.

To illustrate this effect, consider a coordinate with large noise variance, preserving its fluctuations in the compressed representation can substantially reduce unmasked reconstruction error, even though those fluctuations carry no information about the shared structure (the spike $\bu^\star$ in our model). But when that coordinate is hidden, its independent noise cannot be predicted from the visible coordinates. The shared feature $\bu^\star$, by contrast, creates correlations between visible and hidden coordinates and can therefore improve masked prediction. Noise no longer earns a direct reward through its variance, and the quadratic prediction penalty discourages reliance on noisy inputs. Together, these effects can allow masked reconstruction to favor cleaner coordinates. The full loss decomposition is given in
\Cref{subsec:sep-exact-mask,subsec:sep-population-gap}.

\paragraph{Homogeneous noise provides a reference case.} When $P_\gamma=\delta_1$, noise contributes equally to the unmasked reward in every unit direction. For Gaussian latent priors, PCA recovers above $\alpha_c=\beta^{-2}$ \citep{baik2005bbp,benaych2012singular}, while the replica-symmetric prediction for linear dynamic masking gives the same threshold and asymptotic alignment (\Cref{subsec:sep-pca,subsec:dyn-bound-informative}). \Cref{fig:validation} (a) illustrates this agreement.

\paragraph{A separation at linear sample complexity.} We specialize the model of \Cref{sec:main-model} to $P_u=P_\lambda=\N(0,1)$ and a fixed, mean-one variance law supported in $[a,\infty)$ for some $a>0$, with finite second moment. We fix $\beta>0$ and $\rho\in(0,1)$ as $n,d\to\infty$ with $n/d\to\alpha$. The precise hypotheses, including additional edge conditions for bounded-noise PCA failure, are collected in Assumption~\ref{as:main-separation}.

\begin{theorem}[Recovery by masked reconstruction and separation from PCA]
\label{thm:main-separation}
Under \ref{as:model} and \ref{as:noise}, there exist finite $A_{\rm suff}(a,\beta,\rho)$ and $\eta(a,\beta,\rho)>0$ such that, for every fixed $\alpha>A_{\rm suff}$, with probability tending to one the linear dynamic masking objective, with exact averaging over masks, admits global minimizers and every such minimizer satisfies $\theta_{\bu^\star}^2\geq\eta$. If, additionally, the variance support is unbounded or the bounded-noise conditions \ref{as:pca-fail} hold, then unmasked linear reconstruction (PCA) has $\theta_{\bu^\star}^2\to0$ in probability for every fixed $\alpha>0$.
\end{theorem}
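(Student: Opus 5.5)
The plan is to reduce the exactly mask-averaged linear objective to a one-dimensional problem in the scale $Q$ and a ratio-type criterion in the direction $\br$. Then show that this criterion is maximized only near $\bu^\star$ once $\alpha$ is large. Write $\bw=\sqrt{dQ}\,\br$ with $\|\br\|=1$, and let $S=\frac1n\sum_\mu \bx_\mu\bx_\mu^\top$ be the empirical second-moment matrix. Taking the expectation over the Bernoulli$(\rho)$ masks in \eqref{eq:main-finite-K-objective}, with $\sig(h)=h$ and $\laR=0$, gives a polynomial in $Q$ of the form
\begin{equation*}
\frac{1}{n}\,\E_\Xi\mathcal R = \rho\,\mathrm{tr}(S)/1 - 2Q\,R_n(\br) + Q^2 P_n(\br), \qquad R_n(\br)=\rho(1-\rho)\,\br^\top(S-\operatorname{diag}S)\br,
\end{equation*}
the empirical analogue of \eqref{eq:main-masked-reward}. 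The penalty $P_n(\br)$ is an explicit combination of quartic-in-$\br$ empirical moments, such as $\frac1n\sum_\mu\sum_i r_i^2\,(\br^\top((1-\bchi)\odot\bx_\mu))^2$ averaged over masks. I will first check that $P_n(\br)\geq c\,\rho(1-\rho)^2\,\min_i\gamma_i\cdot(\text{const})>0$ uniformly on the sphere with high probability, using $\gamma\geq a$ and a lower bound on the smallest singular value of the noise Gaussian matrix restricted to random column subsets. This yields existence of global minimizers, since the objective is coercive in $Q$. Minimizing over $Q$, the minimizers are exactly $Q=R_n/P_n$ at maximizers of $F_n(\br)=R_n(\br)_+^2/P_n(\br)$, or $Q=0$ if $R_n\le0$ everywhere.

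Second, I lower-bound $F_n$ at the planted direction $\br^\star=\bu^\star/\|\bu^\star\|$. Here $R_n(\br^\star)\to\rho(1-\rho)\beta$ by the law of large numbers, because the diagonal removal only costs $O(1/d)$ of the spike. Meanwhile $P_n(\br^\star)$ converges to a finite constant $p^\star(a,\beta,\rho,\kappa_2)$. Hence $\max F_n\geq f^\star>0$ with high probability, and in particular the minimizer is nonzero.

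Third, I need an upper bound on $F_n(\br)$ for $\br$ with $\langle\br,\br^\star\rangle^2<\eta$. Decompose $S=\Sigma+E$, where $E$ collects the sample fluctuations. The off-diagonal part of $\Sigma$ is $\frac\beta d\bu^\star\bu^{\star\top}$ minus its diagonal, so its contribution to $R_n$ is at most $\rho(1-\rho)\beta\eta+o(1)$. The noise contribution is $\br^\top(E_{\rm noise}-\operatorname{diag}E_{\rm noise})\br$ plus cross terms between spike and noise. For bounded coordinates this has operator norm $O(\sqrt{1/\alpha})$.

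The main obstacle is unbounded variance support, where $\|\Gamma^{1/2}(Z^\top Z/n-\id)\Gamma^{1/2}\|_{\rm op}$ grows with $\max_i\gamma_i$. I would handle it by not bounding $R_n$ alone but the ratio $R_n^2/P_n$. The penalty contains the term $\sum_i r_i^2\gamma_i\cdot\br^\top\Gamma\br$-type quantities, so weight placed on large-$\gamma$ coordinates inflates $P_n$ at least quadratically in that weight. Concretely, I would bound $|\br^\top(E-\operatorname{diag}E)\br|\le C\alpha^{-1/2}\,\br^\top\Gamma\br$ via a weighted Hanson--Wright / $\epsilon$-net argument on $\Gamma^{1/2}\br$. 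I would then use $P_n(\br)\ge c\,(\br^\top\Gamma\br)^2$. Together these give $F_n(\br)\le C'(\beta\eta+\alpha^{-1/2})^2$ after optimizing. Choosing $\eta$ small and $A_{\rm suff}$ large so that this is below $f^\star$ forces every maximizer, hence every global minimizer, to satisfy $\theta_{\bu^\star}^2\ge\eta$.

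For the PCA half, I use the reduction of \Cref{subsec:sep-pca}: the minimizer direction is a top eigenvector of $S$. In the unbounded case, $\max_i\gamma_i\to\infty$, so the top eigenvalue of $S$ is at least $\max_i\gamma_i(1+o(1))$. This eventually exceeds the spike-aligned level $\beta+O(1)$. A resolvent/perturbation argument, using that the top eigenvector of $\Gamma^{1/2}Z^\top Z\Gamma^{1/2}/n$ localizes on the few largest-$\gamma$ coordinates, then gives overlap $o(1)$ with the delocalized $\bu^\star$. In the bounded case, I invoke the heteroskedastic BBP characterization \citep{hong2018asymptotic}. The edge conditions in \ref{as:pca-fail} make the relevant Stieltjes-type integral of $P_\gamma$ at the bulk edge infinite, which means the spike never separates from the bulk for any $\alpha$. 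Standard eigenvector-delocalization results for sub-critical spikes then give $\theta_{\bu^\star}^2\to0$.
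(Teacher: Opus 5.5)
Your architecture matches the paper's: exact mask average, reduction to the directional score $F_n=(R_n)_+^2/P_n$, a lower bound at $\bm e=\bu^\star/\|\bu^\star\|$, a uniform bound over low-overlap directions, and normalizing sampling errors by the penalty. However, step three rests on two false properties of $P_n$.

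First, the decoder norm in the hidden-coordinate loss is $m_{\mask}=\sum_i\chi_ir_i^2$, not $\sum_ir_i^2$. So $P_n(\br)=\E_\chi[m_{\mask}\,\br_{\vis}^\top S\br_{\vis}]$. At a coordinate vector $\bm f_k$ the integrand is $\chi_k(1-\chi_k)S_{kk}=0$, hence $P_n(\bm f_k)=R_n(\bm f_k)=0$. This has three consequences.
\begin{itemize}
\item There is no uniform lower bound on the sphere.
\item The objective is constant along every coordinate axis, so it is not coercive and existence must be argued separately. One way is to maximize the score on the compact set $\{|\bm e^\top\br|^2\ge\eta\}$, which contains no coordinate direction once $\max_ie_i^2<\eta$.
\item $F_n$ is a $0/0$ ratio near localized directions.
\end{itemize}
Write $\br=r_k\bm f_k+\bv$ with $R=\|\bv\|^2\to0$. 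Then $P_n=\Theta(R)$, while your bounds on $R_n$ stay of order one: spike part $\le\rho(1-\rho)\beta\eta$, noise part $\lesssim\alpha^{-1/2}\br^\top\Gamma^\star\br\approx\alpha^{-1/2}\gamma_k^\star$. They therefore give only $F_n\lesssim1/R$. You need numerator bounds vanishing like $\sqrt R$.
\begin{itemize}
\item For the spike, set $c=\bm e^\top\br$, $J=\sum_ir_i^2e_i^2$, $\delta=\max_ie_i^2$. Then $c^2-J\le2\sqrt{\delta R}+R$, which yields the localization inequality $[c^2-J]_+^2/(1-\sum_ir_i^4)\le8(c^2+\delta)$.
\item For the noise, the zero diagonal gives $|\br^\top\operatorname{off}(E_{\rm noise})\br|\le2\|\operatorname{off}(E_{\rm noise})\bm f_k\|\sqrt R+|\bv^\top\operatorname{off}(E_{\rm noise})\bv|$. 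This calls for a bound on $\max_k\|\operatorname{off}(E_{\rm noise})\bm f_k\|/\sqrt{\gamma_k^\star}$ holding simultaneously over all rows.
\end{itemize}

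Second, the noise part of the population penalty is exactly $\rho(1-\rho)D_\Gamma(\br)$ with $D_\Gamma(\br)=\sum_i\gamma_i^\star r_i^2(1-r_i^2)$, and the signal part lies in $[0,\beta_d]$. The penalty is thus \emph{linear} in the variances, so $P_n\ge c\,(\br^\top\Gamma^\star\br)^2$ is false. With the correct scale, your noise bound (true, but just the Wishart operator norm) gives only $F_n\lesssim\alpha^{-1}\br^\top\Gamma^\star\br$. Spread $\br$ evenly over the $k$ largest-variance coordinates with $k\to\infty$ slowly: the overlap stays $O_{\mathbb P}(d^{-1/2})$ while this bound diverges. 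So the unbounded case, which you rightly call the crux, is not closed.

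What is needed is square-root scaling: $\sup_{\br}|\br^\top\operatorname{off}(E_{\rm noise})\br|/\sqrt{D_\Gamma(\br)}\le C(\alpha^{-1/2}+a^{-1/2}\alpha^{-1})+o_{\mathbb P}(1)$, over all directions including data-dependent ones. This cannot come from $\|Z^\top Z/n-\id\|_{\rm op}$ alone; it must use that $\operatorname{tr}\Gamma^\star/n\to1/\alpha$ caps how many coordinates carry large variance. The paper gets it as follows.
\begin{itemize}
\item Precondition with $(\Gamma^\star+t\id)^{-1/2}$.
\item Apply covariance concentration in terms of effective rank (at most $\operatorname{tr}\Gamma^\star/t$), simultaneously over a dyadic grid of $t$ up to $\max_i\gamma_i^\star$, and choose $t\approx\br^\top\Gamma^\star\br$.
\item Add the row bound above for near-coordinate directions.
\item Handle spike--noise cross terms with $\|\operatorname{off}(\br\br^\top)\|_{\mathrm F}^2=1-\sum_ir_i^4\le D_\Gamma(\br)/a$.
\end{itemize}

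Two smaller points on PCA. In the unbounded case no eigenvector localization is needed: $|\bm e^\top\widehat\br_{\rm PCA}|\le\|Y\bm e\|/s_1(Y)\to0$, since $\E\|Y\bm e\|^2=1+\beta$ while $s_1(Y)\to\infty$. In the bounded case, be precise about the two conditions in (A3).
\begin{itemize}
\item $\E_\gamma[\gamma^2/(\gamma_+-\gamma)^2]=\infty$ puts the critical point $t_e$ of $\psi_\alpha(t)=t(1+\alpha^{-1}\E_\gamma[\gamma/(t-\gamma)])$ strictly above $\gamma_+$ for every $\alpha$. This gives the square-root noise edge that the Benaych-Georges--Nadakuditi eigenvector theorem needs.
\item Subcriticality is the finite inequality $\beta\E_\gamma[(t_e-\gamma)^{-1}]<\beta\E_\gamma[(\gamma_+-\gamma)^{-1}]<1$.
\end{itemize}
Any heteroskedastic BBP result you cite must cover such continuous laws, since finitely supported laws violate (A3).
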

\begin{figure}[!htbp]
\centering

\includegraphics[width=\textwidth]{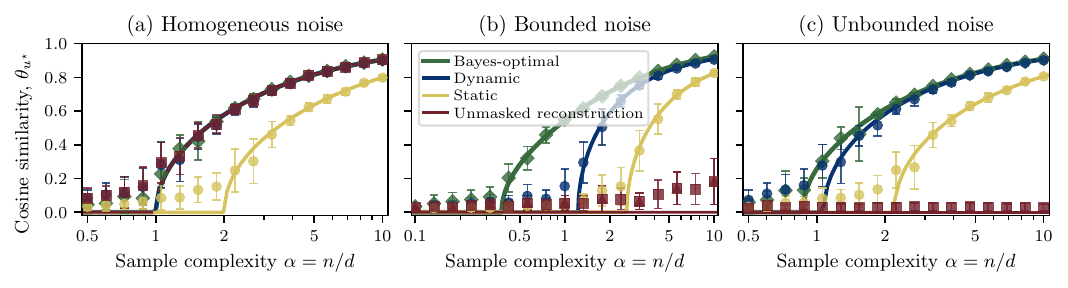}

\caption{\textbf{Masking recovers a feature that PCA misses.} Cosine similarity versus sample complexity $\alpha=n/d$, for \textbf{(a)} homogeneous noise, $\gamma=1$; \textbf{(b)} bounded heterogeneous noise, $\gamma=\frac14+\frac94 B$, $B\sim\operatorname{Beta}(1,2)$; and \textbf{(c)} unbounded heterogeneous noise, $\gamma=\frac34+\frac14 e^{Z-1/2}$, $Z\sim\N(0,1)$. Curves are theoretical predictions (linear masked autoencoders), markers are finite-dimensional simulations, and error bars show $\pm1$ SD across seeds. Green curves give the matched \bayesopt{Bayes-optimal} reference; green diamonds show AMP results (\Cref{subsec:BO-AMP,app:experimental-bayes}). \fullrecon{Unmasked reconstruction} is ordinary PCA. The homogeneous panel provides a control; the heterogeneous panels illustrate recovery by masking where the asymptotic PCA alignment vanishes. See \Cref{app:experimental-reporting} for details.}
\label{fig:validation}

\end{figure}

For bounded noise with upper endpoint $\gamma_+$, \ref{as:pca-fail} requires $\beta\,\mathbb E_\gamma[(\gamma_+-\gamma)^{-1}]<1$ and $\mathbb E_\gamma[ \gamma^2(\gamma_+-\gamma)^{-2}]=\infty$. The first condition excludes a separated population covariance outlier; the second ensures a regular upper edge of the noise-only sample-covariance spectrum at every fixed finite $\alpha$. $P_\gamma$ from \Cref{fig:validation}(b) satisfies these conditions.

\Cref{thm:main-separation} therefore shows that the same dataset contains a recoverable feature that its leading principal component misses: changing the prediction objective is enough to obtain positive alignment with $n=O(d)$ samples, without supplying coordinate variances to either learner. The proof first shows that the population masked objective favors an informative direction over all directions with sufficiently small overlap. It then controls empirical fluctuations uniformly over candidate directions, relative to the noise penalty, so that this preference survives at sufficiently large finite $\alpha$. The argument applies to every global minimizer of the objective (\Cref{app:linear-separation,app:dynamic-unbounded}).

\paragraph{Interpreting the separation.} \Cref{fig:validation} (b,c) illustrates how heterogeneity changes the comparison with the homogeneous reference: masking recovers while \fullrecon{PCA} can remain uninformative\footnote{\Cref{app:pca-numerics} shows that the positive cosine similarity of PCA in~\Cref{fig:validation}(b) is due to finite-size effects.}. The bounded example (panel b) shows that this effect does not require arbitrarily large coordinate variances; the signal can remain below the noise spectral edge even when the variance support is compact. With unbounded support (panel c), extreme coordinate variances instead drive the leading sample component away from the shared
feature. The respective PCA arguments are given in \Cref{subsec:sep-compact-pca,subsec:sep-unbounded}.
The masked reward in \eqref{eq:main-masked-reward} involves the same off-diagonal covariance structure used by diagonal-deletion PCA \citep{florescu2016spectral}, but the full masked objective also contains a noise-sensitive, direction-dependent prediction penalty. In our simulations, dynamic masking achieves better alignment than both diagonal-deletion PCA and HeteroPCA (\Cref{app:heteroPCAs}).

The theorem guarantees recovery for all sufficiently large finite $\alpha$ for dynamic masking, but does not locate the sharp transition, for which the replica analysis provides a prediction. Having established why the masked objective can favor useful structure, we next ask how its performance changes when each sample supplies only a finite collection of masks.
\FloatBarrier

\section{Mask diversity and sample complexity}
\label{sec:main-recovery}

We now keep the original dataset fixed and vary the number $K$ of masks per sample. Reusing a fixed collection of masks repeatedly presents the same visible-to-hidden prediction tasks; increasing $K$ adds new tasks without adding independent observations. The question is how this change affects the weak recovery thresholds and downstream performance.

\begin{figure}[!htbp]
\centering

\includegraphics[width=\textwidth]{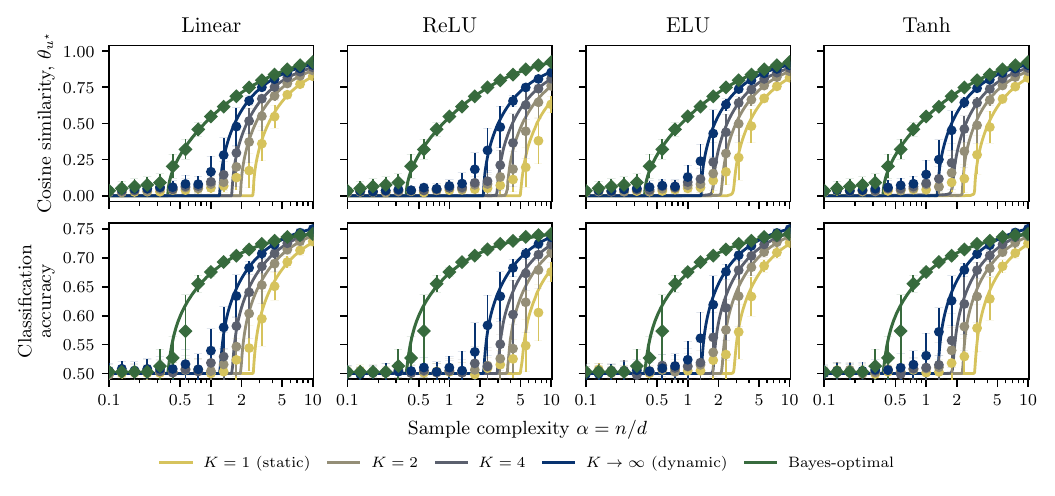}

\caption{\textbf{Replica predictions track learning across mask diversity.} Cosine similarity (top) and downstream classification accuracy (bottom) versus sample complexity $\alpha=n/d$ under bounded heterogeneous noise (variance distribution of \Cref{fig:validation}(b)), for linear, ReLU, ELU, and tanh activations. Colors distinguish $K=$\staticmask{$1$(static)}$,2,4$ and the \dynamicmask{dynamic masking} limit. Solid curves are replica predictions and markers are finite-dimensional simulations. Green curves show the \bayesopt{Bayes-optimal} pretraining reference, evaluated with the same logistic probe for the lower row; green diamonds show AMP results (\Cref{subsec:BO-AMP,app:experimental-bayes}). Error bars show $\pm1$ SD. See \Cref{app:experimental-reporting} for experimental details.}
\label{fig:mask-sample-efficiency}

\end{figure}

\paragraph{Predicting learning across mask diversity.} Our replica analysis yields high-dimensional learning curves for \staticmask{static}, finite-$K$, and \dynamicmask{dynamic masking}, resolving their dependence on sample complexity $\alpha$, masking ratio $\rho$, and activation function (\Cref{sec:replica-derivation}). It predicts alignment with the planted feature and the effective noise along the learned direction; the downstream analysis then translates these quantities into classification accuracy under the linear-probe protocol introduced in \Cref{sec:main-model}.

\Cref{fig:mask-sample-efficiency} compares these predictions with numerical training at finite dimension. Across the tested activations, increasing mask diversity $K$ lowers the weak recovery threshold and improves alignment above the transition in the settings shown. The replica predictions capture not only these threshold shifts, but also the magnitude and $\alpha$-dependence of the cosine similarity over the displayed range. The theory also captures training and test errors (\Cref{fig:app-reconstruction-errors}). Thus the analysis quantifies how mask resampling changes the representation learned from a given number of examples.

\paragraph{From feature recovery to downstream performance.} The lower row of \Cref{fig:mask-sample-efficiency} shows how these representation gains translate into classification accuracy. The linear-probe predictions (\Cref{subsec:downstream-classification}) track the numerical results as sample complexity and mask diversity vary across the four activation functions. Together, the two rows show that the theory captures both latent-feature recovery during pretraining and the downstream usefulness of the frozen representation.

The \bayesopt{Bayes-optimal} pretraining curve provides the reference using the matched priors and the realized coordinate variances (\Cref{app:bayes-optimal}). Increasing mask diversity brings the representations closer to this reference in the displayed comparisons, while a gap can remain even with dynamic masking (\Cref{fig:mask-sample-efficiency}).

\paragraph{How mask diversity lowers sample complexity.} The replica equations describe learning at finite signal strength $\beta$. A weak-signal expansion ($\beta \ll1$) isolates a simple expression for the sample complexity as a function of the number of masks. For activations satisfying $\sig(0)=0$ and $\sig(x)=s_\sig x+O(x^2)$ as $x\to0$, with $0<s_\sig<\infty$, fixed $\rho$, $\laR=0$, we have that the weak recovery threshold follows
\begin{equation}
\alpha_{c,\mathrm{sm}}^{(K)} = \frac{\kappa_2}{\beta^2} \left[1+\frac{1}{K} \left(\frac{1}{2\rho(1-\rho)}-1\right)\right] [1+o_\beta(1)], \qquad \kappa_2=\E_\gamma[\gamma^2].
    \label{eq:mask-diversity-threshold}
\end{equation}

\begin{figure}[!htbp]
\centering
\includegraphics[width=\textwidth]{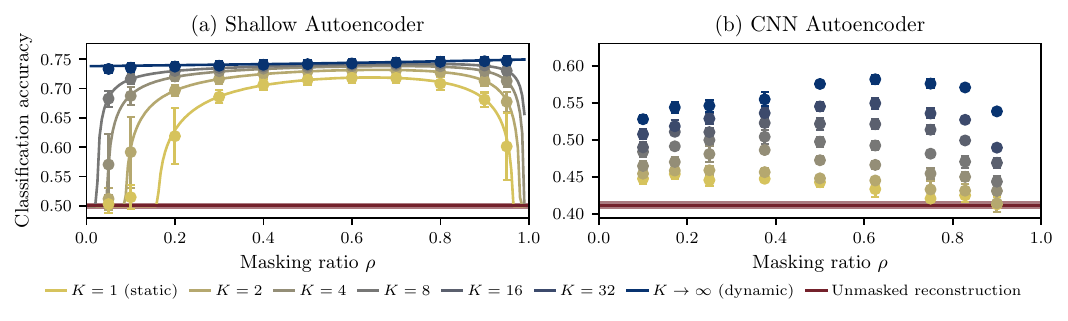}

\caption{\textbf{Finite mask diversity interpolates between static and dynamic training.} Downstream accuracy versus masking ratio for \textbf{(a)} the solvable model, with theoretical curves and simulation markers, and \textbf{(b)} a CNN autoencoder on CIFAR-10. Increasing $K$ improves accuracy in the settings shown, and the curves approach the \dynamicmask{dynamic masking} reference. The number of training samples is held fixed. Bars and shading show $\pm1$ SD across seeds; see \Cref{app:experimental-reporting} for details.}
\label{fig:K-interpolation}

\end{figure}

The derivation and conditions underlying this expansion are given in \Cref{subsec:ridgeless-weak-signal-thresholds,subsec:recovery-threshold-scope}. The prefactor $\kappa_2/\beta^2$ sets the weak-signal scale, including the cost of variance heterogeneity. At leading order, mask diversity lowers the excess threshold by a term proportional to $1/K$: doubling $K$ halves this excess sample requirement, with smaller absolute gains at each doubling.

This leading law is shared by linear, tanh, and ELU activations; ReLU has a different coefficient but retains the same ordering in $K$ (\Cref{subsec:ridgeless-weak-signal-thresholds}). Beyond weak signal, \Cref{fig:app-threshold-ratios} compares linear threshold ratios computed from the full replica equations with the asymptotic prediction. The ratios continue to decrease with $K$ at the finite $\beta$, even when they depart from the weak-signal expression. Finite-dimensional training in \Cref{fig:mask-sample-efficiency} supports this predicted ordering across all four activations.

\paragraph{Mask diversity can change the learning mechanism.} Beyond the weak-signal expansion, replicas reveal a qualitative difference between static and dynamic masking. In the linear case, at the weak recovery transition, static masking does not preferentially use cleaner coordinates to identify the feature: its threshold reflects noise heterogeneity only through the aggregate moment $\kappa_2$. Dynamic masking can instead find and favor cleaner coordinates as recovery begins, implicitly learning which coordinates provide more reliable information. At suitable masking ratios, this adaptation provides an additional benefit not captured by the leading weak-signal law \eqref{eq:mask-diversity-threshold}. \Cref{app:mask-noise-adaptation} derives this distinction and compares noise distributions with identical static but different dynamic thresholds.

\paragraph{Interaction with the masking ratio.} The finite-$K$ penalty in \eqref{eq:mask-diversity-threshold} is smallest at balanced masking and grows towards either endpoint. Near these extremes, each view offers relatively few visible or hidden coordinates, and a small collection of masks provides few distinct prediction tasks. Increasing $K$ reduces this penalty. The shallow-model sweep in \Cref{fig:K-interpolation}(a) illustrates the associated improvement in accuracy and the broader range of useful masking ratios as $K$ increases. At finite signal strength, the best downstream masking ratio also depends on the activation and the learned representation's noise, so it need not minimize the leading threshold alone.

\section{Testing the mask-diversity prediction in vision and language}

\label{sec:main-experiments}

\Cref{fig:K-interpolation} links the theoretical prediction to the CNN experiments: increasing the number \(K\) of fixed masks per sample improves downstream accuracy towards the dynamic-masking reference in both the solvable model (a) and CNN autoencoder (b), with the underlying dataset fixed. We test mask diversity further in ViT and BERT, and examine whether random cropping and flipping narrow the static–dynamic gap in vision.

\begin{figure}[!htbp]
    \centering
\includegraphics[width=0.5\linewidth]{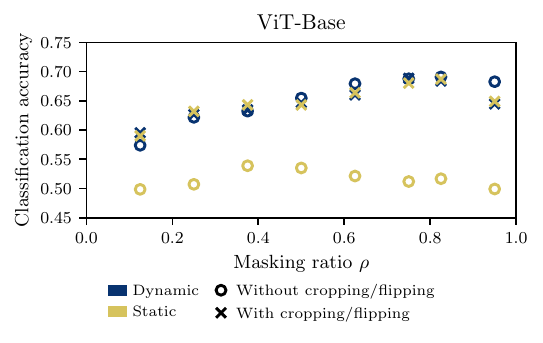}
    \caption{\textbf{Mask renewal and image augmentation.} Linear-probe validation top-5 accuracy for ViT-Base on ImageNet-100. Blue/yellow indicate \dynamicmask{dynamic}/\staticmask{static} masking; circles/crosses denote pretraining without/with stochastic cropping and flipping, respectively. All four conditions use a common seed, data split, initialization, optimizer settings, and linear-probing protocol. Each point represents one run. The blue cross at $\rho=0.75$ is the purple horizontal line in \Cref{fig:static-vs-dynamic}(c). See \Cref{app:experimental-reporting} for training details.}
    \label{fig:masking-augmentation}
\end{figure}

\begin{table}[!htbp]

\centering
\setlength{\tabcolsep}{6pt}
\begin{tabular}{lrrrrr}
\toprule
Masking & MNLI-m & QNLI & QQP & SST-2 & AVG \\
\midrule
$K=1$   & 72.9 & 82.6 & 85.9 & 87.6 & 62.7 \\
$K=10$  & 76.3 & 84.4 & 87.0 & 88.1 &  65.8 \\
Dynamic & 76.8 & 85.5 & 87.2 & 89.4 & 66.4\\
\bottomrule
\end{tabular}
\caption{Development scores ($\times 100$, $\uparrow$) for BERT-Medium under different masking procedures, on a subset of GLUE tasks. AVG is the unweighted mean of the task-specific scores over all eight evaluated tasks (accuracy; Spearman correlation for STS-B; MCC for MRPC, CoLA, and RTE), using MNLI-m for MNLI. Entries are means over three pretraining seeds, each averaged over two fine-tuning runs; standard deviations are in \Cref{tab:bert-glue-means}. See \Cref{app:bert-masking} for details.}
\label{tab:bert-main}
\end{table}

\paragraph{Isolating mask resampling.} We test the qualitative mask-diversity prediction in CNN autoencoders on CIFAR-10 and ViT-Base on ImageNet-100. Without stochastic cropping or flipping during pretraining, \dynamicmask{dynamic masking} improves downstream accuracy over \staticmask{static masking} in both architectures (\Cref{fig:static-vs-dynamic}(b,c)); see \Cref{fig:app-vit-top1} for ViT top-1 accuracy. At suitable masking ratios, resampling alone reaches accuracy comparable to our reference combining \standardaug{dynamic masking with random cropping and flipping} \citep{he2022masked}. With extended pretraining and probing schedules, \dynamicmask{dynamic masking} (without stochastic cropping or flipping) reaches accuracy comparable to the ImageNet-100 MAE baseline reported by \citet{zhang2022mask} (\Cref{tab:vit-training-budget,tab:vit-probe-augmentation}).

\paragraph{Interaction with cropping and flipping.} \Cref{fig:masking-augmentation} compares static and dynamic ViT masking with and without cropping and flipping. Under \staticmask{static masking}, patch positions remain fixed while transformations change the visible content and reconstruction targets. Cropping and flipping substantially improve static masking, narrowing the gap to dynamic masking. \Cref{fig:app-cnn-augmentation} extends the comparison to our CNN experiments with CIFAR-10 leading to the same qualitative conclusions.

\paragraph{Finite mask diversity.} In the CNN sweep of \Cref{fig:K-interpolation}(b), each image is assigned $K$ independently drawn masks that are reused throughout pretraining, whereas \dynamicmask{dynamic masking} draws a fresh mask at every presentation. For validation-selected encoders, mean downstream accuracy generally increases with $K$, approaching the dynamic-masking reference while using the same pretraining images. To assess sensitivity to the training protocol, we repeat the comparison at $\rho=0.625$ with five paired seeds and a fixed learning rate under a schedule defined in optimizer updates. The mask-diversity trend persists both with validation-selected checkpoints and when all encoders are evaluated at the same 125k-update checkpoint (\Cref{app:cnn-training-diagnostics}).

For \fullrecon{unmasked reconstruction} in \Cref{fig:static-vs-dynamic}, we found that lower reconstruction error need not imply better representations for the downstream task: continued pretraining can reduce reconstruction MSE while linear probing degrades (\Cref{fig:app-no-masking-cnn,fig:app-no-masking-vit}).

\paragraph{Mask diversity in language modeling.} A BERT-Medium pilot compares $K=1$, $K=10$, and \dynamicmask{dynamic masking}, using only \texttt{[MASK]} replacements (\Cref{tab:bert-main}). The corpus, architecture, and pretraining budget are shared across masking conditions. The ordering \dynamicmask{dynamic masking} $> K=10 > K=1$ holds in mean development score on seven of the eight GLUE tasks. Additional results and experimental details are provided in \Cref{app:bert-masking}.

\FloatBarrier
\section{Discussion}
\label{sec:main-discussion}

Our analysis separates two benefits of masked learning: masked prediction can favor shared structure over nuisance variance, while mask resampling can reduce the sample complexity required to recover that structure. We establish the first benefit rigorously in the linear model and quantify the second through replica calculations that agree with simulations. The CNN and ViT experiments, together with the BERT pilot, support the relevance of mask diversity across architectures and data modalities.

This perspective connects masked reconstruction to theories of data augmentation. Analyses based on transformation groups explain statistical gains through invariance and averaging \citep{chen2020group,pmlr-v134-mei21a}; in joint-embedding self-supervised learning, multiple views can reduce augmentation variability while retaining the variability associated with the original observations \citep{pmlr-v202-cabannes23a}. Our analysis gives a complementary account for masked reconstruction: changing $K$ preserves both the underlying dataset and the distribution of individual masks, yet changes the predicted recovery threshold. The weak-signal law quantifies the additional sample requirement at finite $K$ and the diminishing gains from increasing mask diversity.

The ViT comparison suggests that geometric augmentation can partly compensate for mask reuse. A predictive theory of augmentation should therefore explain when geometric transformations and mask resampling offer complementary benefits, and when one reduces the benefit of the other. Our replica analysis also identifies a qualitative change in learning: dynamic masking can exploit cleaner coordinates at the recovery transition, while static masking lacks this adaptation. Whether resampling similarly changes which information is learned in larger models remains an open question.

Better representations need not require more data; they can come from a greater diversity of prediction tasks on the same observations.

\section*{Acknowledgments}

We acknowledge funding from the Swiss National Science Foundation grants SNSF SMArtNet (grant number 212049), the Simons Collaboration on the Physics of Learning and Neural Computation via the Simons Foundation grant ($\#$1257413 (LZ)), and the Swiss State Secretariat for Education, Research and Innovation (SERI), which funded the ERC Advanced Grant SPASM-net. 
This work was also supported under project IDs \#43 and SNAI000097 as part of the Swiss AI Initiative, through a grant from the ETH Domain and computational resources provided by the Swiss National Supercomputing Centre (CSCS) under the Alps infrastructure.

We thank Vittorio Erba, Emanuele Troiani, Cédric Koller and Vicente Conde Mendes for preliminary discussions on the architecture and data model. We are also grateful to Alexander Mathis and Amir Zamir for insightful discussions and comments that helped sharpen our perspective on the modeling choices and practical training of masked autoencoders.

\subsection*{AI use statement}
We used generative AI tools in an iterative, author-led research process to assist with literature retrieval, drafting, and polishing. Additional uses included formulating and refining mathematical claims, developing and revising derivations and proof arguments, including input of proof ideas from AI, discussing experimental design and interpretation, implementing and debugging numerical routines, and preparing scientific plots. Throughout this collaboration, the authors actively directed the work, critically examined the suggestions, and checked and revised AI-assisted outputs at every stage, including the mathematical arguments, computational methods, references, and written presentation. All scientific judgments and decisions about the manuscript's content remained with the authors. The authors take full responsibility for the manuscript.

\newpage

\bibliography{references}
\bibliographystyle{unsrtnat}

\clearpage
\appendix

\numberwithin{equation}{section}

% =============================================================================
\startcontents[appendices]
\section*{Appendix Contents}
\printcontents[appendices]{}{1}{\setcounter{tocdepth}{2}}

\newpage

\section{Detailed comparison with the closest prior work}
\label{app:positioning}

The main text outlines the broad lines of related work. Here we expand on the closest references to clarify both their settings and what their analyses establish. We compare data models, prediction objectives, and masking protocols, as well as the observables and guarantees studied. These distinctions locate our contributions: recovery under heterogeneous noise for the exact mask-averaged objective, and replica predictions for mask diversity and downstream performance.

\paragraph{Reconstruction and useful features.} The reconstruction--perception mismatch identified by \citet{balestriero2024learning} motivates our separation: a direction can explain substantial input variance without carrying useful latent information. Their closed-form linear denoising analysis shows how corruption, including masking, can improve this alignment. In our spiked model, coordinate-dependent noise makes the obstruction explicit, and hidden-target prediction admits a recovery guarantee at linear sample complexity: every global minimizer of the exact mask-averaged linear objective weakly recovers at linear sample complexity while ordinary PCA fails (\Cref{thm:main-separation}).

\paragraph{Full reconstruction from masked inputs.} The linear MAE analyzed by \citet{bisulco2025linearity} has untied encoder and decoder matrices, whereas our bottleneck uses tied weights. More crucially, their mask-averaged analytical loss reconstructs all coordinates from masked inputs, including visible coordinates; ours scores only hidden targets. The visible-coordinate rewards retain a contribution from reproducing input variance, changing both the loss landscape and its masking-ratio dependence. Their characterization of critical points and of the spatial correlations selected by patch size and masking ratio is therefore closely relevant, but its spectral solution does not directly solve our objective. Our statistical analysis follows how the structure favored by hidden-target prediction translates into recovery at finite sample size and under finite mask reuse. These are different objectives and constraints, rather than a general performance ordering.

\paragraph{Single-token prediction and factored attention.} \citet{rende2024mapping} connect factored attention to generalized Potts conditionals and use replicas to predict generalization in a Gaussian relaxation. There, predicting one coordinate from all the others probes a full-rank random precision matrix. This differs from our low-rank signal with diagonal heterogeneous noise both in correlation structure and in the number of targets per mask: a positive limiting fraction is hidden in our model. The shared tied bottleneck couples these prediction tasks, so its alignment with the planted feature, and the downstream utility of that alignment, become explicit observables as the number of reused masks varies.

\paragraph{The joint matrix of single-coordinate predictors.} Single-coordinate prediction can itself reveal latent structure: \citet{wortsman2026a} analyze the joint matrix of self-supervised ridge predictors, rigorously characterizing its spectrum, prediction risk, and spiked-model BBP recovery transition. They also establish prediction-risk advantages over PCA in structured covariance settings. Coupling an extensive set of hidden targets through a shared tied bottleneck changes the estimator in our work; mask diversity is an additional statistical control. Our analysis follows its effects on recovery and downstream performance via replicas, alongside a rigorous recovery separation under heteroskedastic noise for the exact mask-averaged objective. Nevertheless, it is important to stress that recovering latent structure through the data correlations exploited by masked prediction is already established in their work.

\paragraph{Masking ratio and a selected prediction target.} \citet{dong2026understanding} start from a loss over all hidden coordinates, but decouple their linear predictors and solve a stylized regression for one selected target, $y=X\boldsymbol\beta+\epsilon$ in their notation. Their isotropic and spiked covariance assumptions describe the remaining covariates $X$; the target coefficient and residual noise are specified separately. Consequently, this target need not be statistically representative of the other coordinates of the augmented sample $(X,y)$. The resulting risk formulas rigorously quantify how masking both corrupts covariates and selects the samples supplying this target. In our common-factor model, every coordinate follows the same generative construction without a designated response, and all hidden targets train the same tied bottleneck; the analysis then follows collective feature recovery. Their fixed-ratio theory motivates randomized masking ratios, with downstream benefits assessed across modalities; our finite-reuse calculation keeps the ratio fixed and links representation alignment to a finite-label probe, with replica rather than rigorous guarantees for the general learning curves.

\paragraph{Contrastive connections and downstream guarantees.} The connection between masked reconstruction and useful representations is also explicit in \citet{zhang2022mask}: reconstruction induces alignment on a mask-based augmentation graph, supporting downstream guarantees and an analysis of masking ratio and dimensional collapse. Our recovery thresholds address a more specific statistical question: how reusing a finite collection of masks changes the learned alignment at a fixed sample-to-dimension ratio. The downstream calculation then tracks the effect of that alignment on a probe trained with finitely many labels.

\paragraph{Identifiability in hierarchical models.} Under the hierarchical generative and invertibility assumptions of \citet{kong2023understanding}, masking ratio and patch size determine which latent variables are shared between visible and hidden regions and can be identified. In our additive-noise model, the relevant observable is partial alignment with a planted direction at finite sample-to-dimension ratio. The recovery transition and its dependence on mask diversity concern statistical estimation under these different data and encoder assumptions, rather than an alternative identifiability statement for their hierarchy.

\paragraph{Nonlinear recovery beyond PCA.} \citet{conde2026solvable} show that nonlinear autoencoders recover higher-order structure invisible to PCA, and that reconstruction loss can misalign with representation quality. Our signal is covariance-visible, but heterogeneous variances obstruct ordinary PCA. Hidden-target prediction changes which covariance structure the objective rewards, yielding a separation that does not require nonlinearity. Complementary high-dimensional work studies how pretraining sample size and representation dimensionality jointly control downstream linear-probe generalization in spiked models \citep{njaradi2026optimal}.

\paragraph{Noise-aware spectral alternatives.} HeteroPCA \citep{zhang2022heteroskedastic} and whitening-based methods \citep{leeb2021optimal} address the same nuisance of heterogeneous noise through spectral correction. \Cref{fig:app-heterogeneous-pca} compares such estimators on the same observations: our separation from ordinary PCA does not imply a separation from all spectral methods. The comparison locates masking among several routes to recovery. It changes which structure the reconstruction objective rewards, without supplying the autoencoder with the coordinate variances or their distribution.

\paragraph{Bayes-optimal inference and the role of prior information.} Our Bayes analysis builds on the low-rank matrix-estimation framework of \citet{lesieur2015mmse,lesieur2017constrained}, which connects matched priors and output channels to approximate message passing and state evolution. We specialize this framework to coordinate-dependent Gaussian noise to obtain the reference learning curves and AMP updates in \Cref{app:bayes-optimal}. For Gaussian latent priors, the continuous recovery threshold is $\alpha_c^{\rm BO}=[\beta^2\E_\gamma(\gamma^{-2})]^{-1}$ when the inverse second moment is finite (\Cref{subsec:BO-weak-threshold}). Its dependence on inverse variances highlights the information available in low-noise coordinates. \citet{ZhangMondelli2024} provide an independent rigorous foundation: their analysis of doubly heteroskedastic matrix denoising yields this weak-estimation threshold under their Gaussian-prior and compact, positive covariance-support assumptions, and a covariance-aware spectral estimator attains it under their additional technical condition. Thus the Bayes reference connects our reconstruction-based learners to an established optimal-inference benchmark, rather than introducing a new information-theoretic limit.

The distinction in available information matters for interpreting the comparison. Our Bayes reference uses the matched latent priors and the realized coordinate variances, whereas the autoencoders receive neither the variances nor their distribution. A remaining gap to Bayes therefore reflects both the reconstruction objective and the additional information used by matched inference. Increasing mask diversity improves how that objective uses the observations; it does not supply the learner with the noise model. The variance-consistency calculation in \Cref{app:bayes-optimal} discusses learning the noise levels from the samples, separately from the known-variance benchmark used here.

\paragraph{Augmentation multiplicity and finite reuse.} Prior work has studied the benefit of drawing multiple augmented views of the same example. \citet{fort2021drawing} show empirically that averaging several freshly sampled augmentations per image can improve supervised generalization, while \citet{pmlr-v202-cabannes23a} analyze the corresponding multi-view setting theoretically and separate variability due to sampling original examples from variability due to sampling their augmentations. These works vary the number of views used for an example while preserving the underlying augmentation distribution. We study a different axis: \emph{finite reuse}. Each training sample is assigned $K$ masks that are held fixed throughout learning, so changing $K$ changes the finite collection of prediction tasks available to the learner rather than the number of views averaged within an update. Replica predictions quantify how this finite mask diversity changes feature recovery and downstream accuracy, including the weak-signal excess sample requirement in \eqref{eq:mask-diversity-threshold}. The masks contain no additional information about the planted feature; we show that their benefit comes from how a richer collection of prediction tasks allows the objective to extract information already present in the observed samples.

\paragraph{Mask resampling and other augmentations.} Dynamic masking in RoBERTa \citep{liu2019roberta} and multiple masked views per update in data2vec~2.0 \citep{baevski2023efficient} already exploit variation in prediction tasks. The interaction with image augmentation matters when interpreting its benefit: even fixed patch positions expose changing content under stochastic cropping and flipping. Our vision comparisons isolate this interaction, while the BERT pilot provides complementary evidence for finite mask diversity. Group-based accounts of augmentation through invariance and averaging \citep{chen2020group,pmlr-v134-mei21a} offer another perspective; here the controlled variable is the reuse of prediction tasks from each original observation. These experiments test the qualitative role of mask diversity beyond the solvable model, without assuming that its quantitative recovery law transfers to these architectures and datasets.

\newpage
\section{Model, notation, and effective output channel}
\label{app:model-details}

\subsection{Notation}
\label{app:notation}
% =============================================================================

\begin{center}
\begin{tabular}{@{}ll@{}}
\toprule
\textbf{Symbol} & \textbf{Meaning}\\
\midrule
\multicolumn{2}{@{}l}{\textbf{\emph{Norms}}}\\
$\|\bm v\|=\|\bm v\|_2$ & Euclidean vector norm, $(\sum_i |v_i|^2)^{1/2}$\\
$\|A\|_{\mathrm{op}}$ & matrix operator norm, $\sup_{\|\bm v\|=1}\|A\bm v\|$\\
$\|A\|_{\mathrm F}$ & Frobenius norm, $(\sum_{i,j}|A_{ij}|^2)^{1/2}$\\
\addlinespace
\multicolumn{2}{@{}l}{\textbf{\emph{Data and model}}}\\
$d,\;n,\;\alpha=n/d$ & dimension, number of samples, sample complexity\\
$\bu^\star$ & planted latent direction\\
$\lambda_\mu$ & latent variable of sample $\mu$; $\E[\lambda^2]=1$\\
$\beta$ & signal strength/signal-to-noise ratio\\
$\gamma_i^\star\sim P_\gamma$ & noise variance of coordinate $i\in[1,d]$; $\E_\gamma[\gamma]=1$\\
$\kappa_2=\E_\gamma[\gamma^2]$ & variance heterogeneity; $\kappa_2=1$ iff homoskedastic\\
$\Gamma^\star$ & $\mathrm{diag}(\gamma_1^\star,\dots,\gamma_d^\star)$\\
$\bw,\;\widehat\bw$ & tied encoder/decoder weights; empirical-risk minimizer\\
$\sig$ & activation function\\
$\laR$ & ridge parameter ($\laR=0$ is ridgeless)\\
\addlinespace
\multicolumn{2}{@{}l}{\textbf{\emph{Masking}}}\\
$\rho$ & masking probability per coordinate\\
$K$ & number of fixed masks per sample (mask diversity)\\
$\chi_{\mu i}^{(r)}$ & mask indicator; $1$ if coordinate $i$ hidden in view $r$\\
$h$ & full preactivation, shared by all $K$ views of a sample\\
$\delta_r$ & mask-induced fluctuation of view $r$\\
\addlinespace
\multicolumn{2}{@{}l}{\emph{\textbf{Order parameters}}}\\
$M=d^{-1}(\bu^\star)^\top\bw$ & teacher--student overlap\\
$q_I=d^{-1}\|\bw\|_2^2$ & Euclidean self-overlap\\
$q_\Gamma=d^{-1}\bw^\top\Gamma^\star\bw$ & noise-weighted self-overlap\\
$\theta_{\bu^\star}=|M|/[\sqrt{q_I}(\|\bu^\star\|/\sqrt d)]$ & cosine similarity with $\bu^\star$\\
$\Psi^{(s)}$ & replicated free entropy with $s$ replicas\\
$\eta_{\rm th}$ & Gibbs inverse temperature; $\eta_{\rm th}\to\infty$ for ERM\\
\addlinespace
\multicolumn{2}{@{}l}{\emph{\textbf{Thresholds and response functions}}}\\
$\alpha_c^{(K)}$ & weak-recovery threshold at mask diversity $K$\\
$\alpha_c^{\rm BO}$ & Bayes-optimal recovery threshold\\
$\widetilde{\bm F}_K$ & whitened output force\\
$\mathcal{X}_K$ & weak-signal limit of the averaged informative response\\
$T_K$ & limiting, $q_\Gamma$-normalized force second moment\\
$\eta_{K,c}^2=\mathcal E_{K,c}/(q_{\Gamma,c}\mathcal{X}_{K,c}^2)\ge1$ & exact output-efficiency factor, \eqref{eq:fixed-beta-eta}\\
$v_{K}(\gamma)\propto(1+\tau_K\gamma)^{-1}$ & input susceptibility of the uninformative saddle\\
$\tau_{K,c}$ & susceptibility shape; sign decides up- or down-weighting\\
$\mathcal R_\gamma(\tau)$ & heteroskedastic reweighting factor; $\mathcal R_\gamma(0)=\kappa_2$\\
$\zeta_c$ & spectral parameter of the full-reconstruction branch\\
$\gamma_+$ & essential upper edge of $P_\gamma$\\
\bottomrule
\end{tabular}
\end{center}
Bold symbols denote vectors, and a star marks ground-truth quantities. Unless conditioning is explicitly stated or specified in the surrounding text, an unsubscripted $\E$ denotes expectation over all model randomness. Subscripts specify the variables being averaged, with other quantities held fixed: in particular, $\E_\gamma$ averages over the noise-variance distribution $P_\gamma$, and $\E_\chi$ averages over masks at a fixed dataset and candidate weight vector. Conditional expectations are written as $\E[\,\cdot\mid\cdot\,]$. Population quantities defined by averaging over fresh samples at fixed quenched variables retain their dependence on the realized teacher $\bu^\star$ and variance matrix $\Gamma^\star$; for example, $\E[\bx\bx^\top\mid\bu^\star,\Gamma^\star] =\Gamma^\star+\beta\bu^\star(\bu^\star)^\top/d$. We also write $[m]=\{1,\ldots,m\}$ for any positive integer $m$.

Samples index rows and coordinates index columns throughout:
\begin{equation}
 X=(x_{\mu i}),\quad Z=(z_{\mu i})\in\R^{n\times d},\qquad X=\sqrt{\beta/d}\,\bm\lambda(\bu^\star)^\top +Z(\Gamma^\star)^{1/2}.
 \label{eq:notation-data-matrices}
\end{equation}
Here $\bm\lambda=(\lambda_1,\ldots,\lambda_n)^\top$.

Here $\mathcal E_{K,c}=\E\|\widetilde{\bm F}_{K,c}\|_2^2$ and $\mathcal X_{K,c}$ are the force second moment and averaged informative response at criticality. Along the regular weak-signal branch at fixed finite $K$, $\eta_{K,c}^2\to T_K/\mathcal X_K^2$ as $\beta\downarrow0$, with the limiting quantities defined in \eqref{eq:weak-signal-chi-definition}--\eqref{eq:weak-signal-T-definition}.

The cosine identity in the table is exact for nonzero teacher and weights. Since $\|\bu^\star\|^2/d\to1$, its high-dimensional expression is $|M|/\sqrt{q_I}$. For a fixed candidate weight vector, independent masks make the fluctuations $\delta_r$ independent across views conditional on the sample; averaging over the shared sample need not preserve independence at finite dimension. Independence in the limiting Gaussian channel is understood under the assumptions in \Cref{sec:finite-K-model}.

\subsection{Detailed model and effective output channel}
\label{sec:finite-K-model}
% =============================================================================

As mentioned in the main document, we study a single hidden-neuron autoencoder with tied weights and $K=O_d(1)$ fixed masks per training sample. The high-dimensional limit is the linear one where
\begin{equation}
    n,d\to\infty, \qquad \frac nd\to\alpha,
\end{equation}
with $K$ fixed. Dynamic masking is obtained afterward through the sequential limit $K\to\infty$ at the level of the effective output channel.

\subsubsection{Data model}
\label{subsec:data-model}

\paragraph{Why allow unequal noise variances?} The distinction between variance and task-relevant information motivates the model. Reconstruction objectives can emphasize components that are weakly useful for perception \citep{balestriero2024learning}. Examples of nuisance variation dominating leading components include illumination in face recognition \citep{belhumeur1997eigenfaces}, natural-image backgrounds superimposed on handwritten digits \citep{abid2018exploring}, and heterogeneous noise in biological data \citep{brennecke2013accounting}. These examples motivate separating useful structure from variance; they do not identify their covariance structure with the coordinate-noise model studied here. Our model isolates this distinction using a single shared feature and independently varying coordinate noise levels.

The training samples $\bx_\mu\in\R^d$ are generated from the heteroskedastic rank-one model
\begin{equation}
    x_{\mu i} = \sqrt{\frac{\beta}{d}}\,\lambda_\mu u_i^\star + \sqrt{\gamma_i^\star}\,z_{\mu i}, \qquad \mu=1,\ldots,n, \quad i=1,\ldots,d.
    \label{eq:data-model}
\end{equation}
The sample latent variables satisfy
\begin{equation}
    \lambda_\mu\iid P_\lambda, \qquad \E[\lambda]=0, \qquad \E[\lambda^2]=1.
    \label{eq:lambda-law}
\end{equation}
The teacher direction is quenched and has coordinates
\begin{equation}
    u_i^\star\iid P_u, \qquad \E[u^\star]=0, \qquad \E[(u^\star)^2]=1.
    \label{eq:teacher-law}
\end{equation}
The noise variables are
\begin{equation}
    z_{\mu i}\iid\N(0,1).
\end{equation}
The coordinate variances are also quenched,
\begin{equation}
    \gamma_i^\star\iid P_\gamma, \qquad \gamma_i^\star>0, \qquad \E_\gamma[\gamma]=1, \qquad \E_\gamma[\gamma^2]<\infty.
    \label{eq:gamma-law}
\end{equation}
It is convenient to collect the coordinate variances into the quenched diagonal matrix
\begin{equation}
    \Gamma^\star := \operatorname{diag}\!\left( \gamma_1^\star,\ldots,\gamma_d^\star \right).
    \label{eq:Gamma-definition}
\end{equation}
Thus $\bu^\star$ and $\bm\gamma^\star=(\gamma_1^\star,\ldots,\gamma_d^\star)$, equivalently $\Gamma^\star$, are coordinate-wise quenched disorder, whereas $\lambda_\mu$ and $z_{\mu i}$ are sample-dependent random variables. Unless stated otherwise, all random families introduced in the data model are mutually independent. In particular,
\begin{equation}
    \bu^\star,\qquad \bm\gamma^\star,\qquad \{\lambda_\mu\}_{\mu=1}^n,\qquad \{z_{\mu i}\}_{\mu,i}
\end{equation}
are mutually independent.

\subsubsection{Finite-$K$ masked autoencoder}
\label{subsec:finite-K-masks}

Fix a masking probability $0<\rho<1$. For each unmasked sample $\bx_\mu$, we draw $K$ independent masks
\begin{equation}
    \chi_{\mu i}^{(r)} \iid \operatorname{Bernoulli}(\rho), \qquad r=1,\ldots,K,
    \label{eq:finite-K-mask-definition}
\end{equation}
with the convention
\begin{equation}
    \chi_{\mu i}^{(r)}=1 \quad\Longleftrightarrow\quad \text{coordinate $i$ is masked in view $r$ of sample $\mu$}.
\end{equation}
The corresponding visible indicator is simply $1-\chi_{\mu i}^{(r)}$; meaning that the input and output of the autoencoder will be complementary.

The collection of masks $\{\chi_{\mu i}^{(r)}\}_{\mu,i,r}$ is independent of all random variables entering the data-generating model. For fixed $\mu$, however, the $K$ masked views share the same underlying sample $\bx_\mu$. The finite-$K$ dataset is therefore
\begin{equation}
    \calD_K = \left\{ \left( \bx_\mu, \bchi_\mu^{(1)},\ldots,\bchi_\mu^{(K)} \right) \right\}_{\mu=1}^n.
    \label{eq:finite-K-dataset}
\end{equation}
The masks are part of the training disorder and remain fixed during optimization.

\paragraph{Architecture and masked reconstruction loss.}

The model has one hidden unit, no bias, a scalar activation function $\sig$, and tied encoder/decoder weights $\bw\in\R^d$. For view $r$ of sample $\mu$, the encoder receives only the visible coordinates. Its preactivation is
\begin{equation}
    h_{\mu r,\vis}(\bw) = \frac1{\sqrt d} \sum_{i=1}^d \bigl(1-\chi_{\mu i}^{(r)}\bigr) w_i x_{\mu i}.
    \label{eq:finite-K-visible-preactivation}
\end{equation}
The tied decoder reconstructs coordinate $i$ as
\begin{equation}
    \widehat x_{\mu i}^{(r)} = \frac{w_i}{\sqrt d}\, \sig\!\left(h_{\mu r,\vis}(\bw)\right).
    \label{eq:finite-K-reconstruction}
\end{equation}
Only the masked coordinates contribute to the reconstruction loss. The loss of view $r$ is therefore
\begin{equation}
    \ell_{d}^{(r)} \left(\bw;\bx_\mu,\bchi_\mu^{(r)}\right) = \sum_{i=1}^d \chi_{\mu i}^{(r)} \left[ x_{\mu i} - \frac{w_i}{\sqrt d} \sig\!\left(h_{\mu r,\vis}(\bw)\right) \right]^2.
    \label{eq:finite-K-view-loss}
\end{equation}
The finite-$K$ loss of the original sample is the average over its $K$ fixed masks,
\begin{equation}
    \ell_{K,d}(\bw;\bx_\mu,\bchi_\mu^{(1:K)}) = \frac1K \sum_{r=1}^K \ell_d^{(r)} \left(\bw;\bx_\mu,\bchi_\mu^{(r)}\right).
    \label{eq:finite-K-sample-loss-finite-d}
\end{equation}
and we study the standard risk minimization of
\begin{equation}
    \mathcal R_{K,d}(\bw) = \frac{\laR}{2}\|\bw\|_2^2 + \sum_{\mu=1}^n \ell_{K,d} \left(\bw;\bx_\mu,\bchi_\mu^{(1:K)}\right).
    \label{eq:finite-K-ERM}
\end{equation}
\paragraph{Effective sample loss.}

For each view, define the contribution of the masked coordinates to the full preactivation,
\begin{equation}
    h_{\mu r,\mask}(\bw) = \frac1{\sqrt d} \sum_{i=1}^d \chi_{\mu i}^{(r)}w_i x_{\mu i}.
    \label{eq:finite-K-masked-preactivation}
\end{equation}
Expanding \eqref{eq:finite-K-view-loss} and dropping the term independent of $\bw$ gives
\begin{equation}
    \ell_{d}^{(r),\mathrm{inf}} = \frac{\sig(h_{\mu r,\vis})^2}{d} \sum_{i=1}^d \chi_{\mu i}^{(r)}w_i^2 - 2\sig(h_{\mu r,\vis})h_{\mu r,\mask}.
    \label{eq:finite-K-view-informative-loss}
\end{equation}
Introduce the Euclidean self-overlap
\begin{equation}
    Q_I(\bw) = \frac1d\sum_{i=1}^d w_i^2 = \frac1d\bw^\top\bw.
    \label{eq:QI-definition}
\end{equation}
\paragraph{Self-averaging of the masked norm.} In the one-sample disorder average, the candidate weight vector $\bw$ is held fixed while the masks are averaged. Hence, we need to assume the following
\[
Q_I(w)=O(1), \qquad \frac{1}{d^2}\sum_{i=1}^d w_i^4 \longrightarrow 0.
\]
to keep the replica derivation consistent. The second condition excludes a macroscopic contribution from any single coordinate to the Euclidean norm. For independent Bernoulli mask entries, with the candidate weights held fixed,
\[
\mathbb{E}_{\chi}\!\left[ \frac{1}{d}\sum_{i=1}^d (\chi_{\mu i}^{(r)}-\rho)w_i^2 \right]=0, \qquad \mathbb{E}_{\chi}\!\left[ \left( \frac{1}{d}\sum_{i=1}^d (\chi_{\mu i}^{(r)}-\rho)w_i^2 \right)^2 \right] = \frac{\rho(1-\rho)}{d^2}\sum_{i=1}^d w_i^4.
\]
Consequently, for each fixed sample and fixed $K$,
\begin{equation}
\max_{1\le r\le K} \left| \frac{1}{d}\sum_{i=1}^d\chi_{\mu i}^{(r)}w_i^2 -\rho Q_I(w) \right| \xrightarrow[d\to\infty]{\mathbb{P}_{\chi}}0.
\label{eq:mask_norm_self_averaging}
\end{equation}
The same conclusion holds jointly for any fixed finite collection of candidate replicas. For fixed $K$, under the high-dimensional self-averaging used throughout the analysis,
\begin{equation}
    \frac1d \sum_{i=1}^d \chi_{\mu i}^{(r)}w_i^2 \dto \rho Q_I(\bw), \qquad r=1,\ldots,K.
    \label{eq:masked-norm-self-average}
\end{equation}
Hence the effective high-dimensional sample loss is
\begin{equation}
    \ell_K \left( \{h_{r,\vis},h_{r,\mask}\}_{r=1}^K;Q_I \right) = \frac1K\sum_{r=1}^K \left[ \rho Q_I\sig(h_{r,\vis})^2 - 2\sig(h_{r,\vis})h_{r,\mask} \right]
    \label{eq:finite-K-effective-loss-raw}
\end{equation}
From this point onward, the sample index $\mu$ is suppressed when discussing the one-sample channel. We also work with the centered, $\bw$-dependent objective obtained after dropping the reconstruction term independent of $\bw$; this leaves the Gibbs measure and the ERM minimizers unchanged.

\subsubsection{Low-dimensional field representation}
\label{subsec:finite-K-decomposition}

Define the ordinary full preactivation
\begin{equation}
    h(\bw) = \frac1{\sqrt d}\sum_{i=1}^d w_i x_i.
    \label{eq:full-preactivation}
\end{equation}
Since every coordinate is either visible or masked in view $r$,
\begin{equation}
    h_{r,\vis}+h_{r,\mask}=h, \qquad r=1,\ldots,K.
    \label{eq:finite-K-sum-identity}
\end{equation}
This identity is exact at finite $d$. In particular, the $2K$ quantities
\begin{equation}
    (h_{1,\vis},\ldots,h_{K,\vis}, h_{1,\mask},\ldots,h_{K,\mask})
\end{equation}
contain only $K+1$ independent directions.

Define the centered visible-field fluctuation of mask $r$ by
\begin{equation}
    \delta_r := h_{r,\vis}-(1-\rho)h.
    \label{eq:finite-K-delta-definition}
\end{equation}
Using \eqref{eq:finite-K-visible-preactivation} and \eqref{eq:full-preactivation},
\begin{equation}
    \delta_r = \frac1{\sqrt d} \sum_{i=1}^d \bigl(\rho-\chi_i^{(r)}\bigr)w_ix_i.
    \label{eq:finite-K-delta-explicit}
\end{equation}
Therefore
\begin{equation}
    h_{r,\vis} =(1-\rho)h+\delta_r, \qquad h_{r,\mask} =\rho h-\delta_r.
    \label{eq:finite-K-decomposition-identities}
\end{equation}
The natural field vector for the finite-$K$ problem is thus
\begin{equation}
    \bfield_K = (h,\delta_1,\ldots,\delta_K)\in\R^{K+1}.
    \label{eq:finite-K-field-vector}
\end{equation}

The centered mask coefficients satisfy
\begin{equation}
    \E[\rho-\chi_i^{(r)}]=0,
\end{equation}
and, by independence of different masks,
\begin{equation}
    \E\!\left[ (\rho-\chi_i^{(r)})(\rho-\chi_i^{(s)}) \right] = \rho(1-\rho)\delta_{rs}.
    \label{eq:centered-mask-covariance}
\end{equation}
This is the basic reason why the mask fluctuations become diagonal in the reduced representation.

Substituting \eqref{eq:finite-K-decomposition-identities} into \eqref{eq:finite-K-effective-loss-raw} gives the finite-$K$ effective loss directly in terms of the nonredundant fields,
\begin{equation}
    \ell_K(h,\delta_1,\ldots,\delta_K;Q_I) = \frac1K\sum_{r=1}^K \left[ \rho Q_I \sig\!\left((1-\rho)h+\delta_r\right)^2 - 2\sig\!\left((1-\rho)h+\delta_r\right) (\rho h-\delta_r) \right].
    \label{eq:finite-K-effective-loss-reduced}
\end{equation}

\paragraph{Finite-$K$ Gaussian channel.}
\label{subsec:finite-K-Gaussian-channel}

We use the replica method \citep{mezard1987spin,mezard2009information,zdeborova2016statistical}, a heuristic tool from statistical physics.

For replicas $\mathsf a,\mathsf b$, define the teacher--student overlap
\begin{equation}
    M^{\mathsf a} := \frac1d\sum_{i=1}^d u_i^\star w_i^{\mathsf a} = \frac1d(\bu^\star)^\top\bw^{\mathsf a}.
    \label{eq:M-overlap}
\end{equation}
More generally, for a deterministic or quenched metric $A\in\R^{d\times d}$, define the metric-dependent replica overlap
\begin{equation}
    Q_A^{\mathsf a\mathsf b} := \frac1d(\bw^{\mathsf a})^\top A\bw^{\mathsf b}.
    \label{eq:QA-overlap-family}
\end{equation}
The two metrics needed in the present heteroskedastic problem are
\begin{equation}
    Q_I^{\mathsf a\mathsf b} = \frac1d\sum_{i=1}^d w_i^{\mathsf a}w_i^{\mathsf b}, \qquad Q_\Gamma^{\mathsf a\mathsf b} = \frac1d\sum_{i=1}^d \gamma_i^\star w_i^{\mathsf a}w_i^{\mathsf b} = \frac1d(\bw^{\mathsf a})^\top\Gamma^\star\bw^{\mathsf b}.
    \label{eq:QI-QGamma-overlaps}
\end{equation}
Thus $Q_I^{\mathsf a\mathsf b}$ is the ordinary Euclidean replica overlap, whereas $Q_\Gamma^{\mathsf a\mathsf b}$ is the noise-weighted overlap controlling the covariance of the projected Gaussian data noise. In the homoskedastic limit $\Gamma^\star=I$, the two coincide: $Q_\Gamma^{\mathsf a\mathsf b}=Q_I^{\mathsf a\mathsf b}$. For a single replica we use the shorthand $Q_I:=Q_I^{\mathsf a\mathsf a}$ and $Q_\Gamma:=Q_\Gamma^{\mathsf a\mathsf a}$.

In the following sample and mask averages, the teacher $\bu^\star$, variance realization $\Gamma^\star$, and candidate replicas are held fixed; their conditioning is suppressed. Conditionally also on the sample latent variable $\lambda$, the full preactivation satisfies
\begin{equation}
    \E[h^{\mathsf a}\mid\lambda] \longrightarrow \sqrt\beta\lambda M^{\mathsf a}, \qquad \Cov(h^{\mathsf a},h^{\mathsf b}\mid\lambda) \longrightarrow Q_\Gamma^{\mathsf a\mathsf b}.
    \label{eq:h-replica-channel}
\end{equation}
For the centered mask fluctuations,
\begin{gather}
    \E[\delta_r^{\mathsf a}\mid\lambda] \longrightarrow 0,
    \label{eq:delta-mean}
    \\
    \Cov(h^{\mathsf a},\delta_r^{\mathsf b}\mid\lambda) \longrightarrow 0,
    \label{eq:h-delta-cov}
    \\
    \Cov(\delta_r^{\mathsf a},\delta_s^{\mathsf b}\mid\lambda) \longrightarrow \rho(1-\rho)\delta_{rs}Q_\Gamma^{\mathsf a\mathsf b}.
    \label{eq:delta-delta-cov}
\end{gather}
The rank-one signal contribution to the centered fluctuations is subleading: because $\rho-\chi_i^{(r)}$ is centered, the corresponding empirical sum is $o_d(1)$ under the same self-averaging assumptions.

Let
\begin{equation}
    \bm M_s=(M^1,\ldots,M^s)^\top, \qquad \mathsf Q_{\Gamma,s}=(Q_\Gamma^{\mathsf a\mathsf b})_{\mathsf a,\mathsf b\le s}, \qquad \bm e_0=(1,0,\ldots,0)^\top\in\R^{K+1}.
\end{equation}
Define
\begin{equation}
    \Omega_K := \operatorname{diag} \left( 1, \rho(1-\rho)\id_K \right).
    \label{eq:finite-K-Omega}
\end{equation}
Then the replicated finite-$K$ field vector has the asymptotic conditional law
\begin{equation}
    (\bfield_K^1,\ldots,\bfield_K^s)\mid\lambda \Longrightarrow \N_{(K+1)s}\!\left( \sqrt\beta\lambda(\bm M_s\otimes\bm e_0), \mathsf Q_{\Gamma,s}\otimes\Omega_K \right).
    \label{eq:finite-K-replicated-Gaussian-law}
\end{equation}
For $0<\rho<1$, $\Omega_K$ is positive definite. This is why the reduced $(K+1)$-dimensional representation is preferable to the raw $2K$-dimensional visible/masked representation.

\subsubsection{Dynamic-mask output channel}
\label{sec:dynamic-limit}
Dynamic masking is represented by the sequential limit
\begin{equation}
    n,d\to\infty \quad\text{with $K$ fixed}, \qquad\text{followed by}\qquad K\to\infty.
    \label{eq:order-of-limits-finite-K-dynamic}
\end{equation}

For one replica, the reduced Gaussian channel of \Cref{subsec:finite-K-Gaussian-channel} can be represented as
\begin{equation}
    \delta_r = \sqrt{\rho(1-\rho)Q_\Gamma}\,\xi_r, \qquad \xi_r\iid\N(0,1), \qquad r=1,\ldots,K,
    \label{eq:delta-single-replica-gaussian}
\end{equation}
independently of the full preactivation $h$. Substituting this representation into \eqref{eq:finite-K-effective-loss-reduced} gives
\begin{equation}
    \begin{aligned}
    \ell_K =\frac1K\sum_{r=1}^K \Big[& \rho Q_I \sig\!\left( (1-\rho)h +\sqrt{\rho(1-\rho)Q_\Gamma}\,\xi_r \right)^2 \\
        &-2 \sig\!\left( (1-\rho)h +\sqrt{\rho(1-\rho)Q_\Gamma}\,\xi_r \right) \left( \rho h -\sqrt{\rho(1-\rho)Q_\Gamma}\,\xi_r \right) \Big].
    \end{aligned}
    \label{eq:finite-K-loss-gaussian-representation}
\end{equation}
By the law of large numbers,
\begin{equation}
    \begin{aligned}
    &\ell_K\xrightarrow[K\to\infty]{} \ell_{\rm dyn}(h;Q_I,Q_\Gamma):=\E_\xi\Bigg[ \rho Q_I\, \sig\!\left((1-\rho)h +\sqrt{\rho(1-\rho)Q_\Gamma}\,\xi\right)^2\\
    &\qquad-2\, \sig\!\left((1-\rho)h +\sqrt{\rho(1-\rho)Q_\Gamma}\,\xi\right) \left(\rho h -\sqrt{\rho(1-\rho)Q_\Gamma}\,\xi\right) \Bigg].
    \end{aligned}
    \label{eq:dynamic-limit-loss}
\end{equation}
Thus dynamic masking does not introduce a new high-dimensional field. Once the mask average is performed, the only explicit sample-dependent field is the ordinary preactivation $h$; the fluctuations generated by the masks are retained inside the averaged loss \eqref{eq:dynamic-limit-loss}.

The reduced $(K+1)$-dimensional representation is sufficient for the replica calculation. For completeness, the covariance in the redundant $2K$ visible/masked representation is recorded here as a consistency check. The singular raw covariance can be recovered from the reduced variables, but it is not needed for the replica calculation. Let
\begin{equation}
    \bm h_{\vis} = (h_{1,\vis},\ldots,h_{K,\vis})^\top, \qquad \bm h_{\mask} = (h_{1,\mask},\ldots,h_{K,\mask})^\top.
\end{equation}
Then
\begin{equation}
    \begin{pmatrix}
        \bm h_{\vis}\\
        \bm h_{\mask}
    \end{pmatrix}
    = \underbrace{
    \begin{pmatrix}
        (1-\rho)\one_K&\id_K\\
        \rho\one_K&-\id_K
    \end{pmatrix}
    }_{L_K}
    \begin{pmatrix}
        h\\
        \bdelta
    \end{pmatrix}.
    \label{eq:raw-linear-map}
\end{equation}
Suppressing the common replica factor $Q_\Gamma^{\mathsf a\mathsf b}$,
\begin{equation}
    \Omega_K^{\rm raw} = L_K
    \begin{pmatrix}
        1&0\\
        0&\rho(1-\rho)\id_K
    \end{pmatrix}
    L_K^\top,
    \label{eq:raw-covariance-factorization}
\end{equation}
which gives
\begin{equation}
    \Omega_K^{\rm raw} =
    \begin{pmatrix}
        (1-\rho)^2\one_K\one_K^\top +\rho(1-\rho)\id_K & \rho(1-\rho)(\one_K\one_K^\top-\id_K) \\
        \rho(1-\rho)(\one_K\one_K^\top-\id_K) & \rho^2\one_K\one_K^\top +\rho(1-\rho)\id_K
    \end{pmatrix}.
    \label{eq:raw-covariance}
\end{equation}
For $K\ge2$ this matrix is singular because
\begin{equation}
    h_{1,\vis}+h_{1,\mask} =\cdots= h_{K,\vis}+h_{K,\mask}=h.
\end{equation}
For $0<\rho<1$,
\begin{equation}
    \rank(\Omega_K^{\rm raw})=K+1, \qquad \dim\ker(\Omega_K^{\rm raw})=K-1.
\end{equation}
The singularity is therefore only a consequence of using $2K$ coordinates to represent a $(K+1)$-dimensional random object.

\newpage
% =============================================================================

% =============================================================================
\section{Replica derivation and output-channel specializations}
\label{app:replica-analysis}

\subsection{Generic replica derivation}
\label{sec:replica-derivation}
% =============================================================================

The computation is performed once for a generic finite-dimensional output channel. The model-specific information enters only through the field dimension $L$, the signal loading $\bkappa$, the covariance matrix $\Omega$, and the output factor $P_{\out}$. Finite-$K$ and dynamic masking are specified after the generic saddle-point calculation.

\subsubsection{High-level description}
\label{subsec:generic-output-channel}
Let
\begin{equation}
    \bfield(\bw) = (f_1(\bw),\ldots,f_L(\bw))\in\R^L, \qquad L=O_d(1),
    \label{eq:generic-field-vector}
\end{equation}
be a finite collection of sample-dependent projected fields of the form
\begin{equation}
    f_\ell(\bw) = \frac1{\sqrt d} \sum_{i=1}^d r_{\ell i}w_i x_i, \qquad \ell=1,\ldots,L,
    \label{eq:generic-field-definition}
\end{equation}
where $r_{\ell i}$ denotes the sample-disorder coefficient associated with component $\ell$. We assume that, for every fixed number of replicas, the empirical mean and covariance coefficients self-average as
\begin{gather}
    \frac1d\sum_{i=1}^d r_{\ell i}u_i^\star w_i^{\mathsf a} \longrightarrow \kappa_\ell M^{\mathsf a},
    \label{eq:generic-field-mean-assumption}
    \\
    \frac1d\sum_{i=1}^d r_{\ell i}r_{k i}\gamma_i^\star w_i^{\mathsf a}w_i^{\mathsf b} \longrightarrow \Omega_{\ell k}Q_\Gamma^{\mathsf a\mathsf b},
    \label{eq:generic-field-cov-assumption}
\end{gather}
for a deterministic loading vector $\bkappa\in\R^L$ and a positive-semidefinite matrix $\Omega\in\R^{L\times L}$.

The generic sample contribution is encoded through
\begin{equation}
    P_{\out} \left( \bfield,Q_I,Q_\Gamma \right),
    \label{eq:generic-output-factor-definition}
\end{equation}
where the diagonal overlaps $Q_I$ and $Q_\Gamma$ are allowed to enter explicitly. We keep the weight factor
\begin{equation}
    P_w(\bw) = \exp\!\left\{ -\eta_{\rm th}\frac{\laR}{2}\|\bw\|_2^2 \right\}.
    \label{eq:generic-weight-factor}
\end{equation}
Here $\eta_{\rm th}>0$ is the Gibbs inverse temperature. The corresponding partition function is
\begin{equation}
    \calZ(\calD) = \int\dd\bw\,P_w(\bw) \prod_{\mu=1}^n P_{\out}\!\left( \bfield_\mu(\bw), Q_I(\bw), Q_\Gamma(\bw) \right),
    \label{eq:generic-partition-function}
\end{equation}
with
\begin{equation}
    Q_I(\bw)=\frac1d\bw^\top\bw =\frac1d\sum_iw_i^2, \qquad Q_\Gamma(\bw)=\frac1d\bw^\top\Gamma^\star\bw =\frac1d\sum_i\gamma_i^\star w_i^2.
\end{equation}

\subsubsection{Replicated moment and overlap parameters}
The free entropy per dimension is
\begin{equation}
    \Phi = \lim_{d\to\infty} \frac1d \E_{\calD}\log\calZ(\calD),
    \label{eq:free-entropy}
\end{equation}
and the replica method formally uses
\begin{equation}
    \E[\log\calZ] = \lim_{s\to0} \frac{\E[\calZ^s]-1}{s}.
    \label{eq:replica-trick}
\end{equation}
For an integer $s\ge1$, introduce replicas $\bw^1,\ldots,\bw^s$ and the same overlap family as above,
\begin{equation}
    M^{\mathsf a} := \frac1d(\bu^\star)^\top\bw^{\mathsf a}, \qquad Q_A^{\mathsf a\mathsf b} := \frac1d(\bw^{\mathsf a})^\top A\bw^{\mathsf b},
    \label{eq:generic-M-QA-overlaps}
\end{equation}
with the two specializations
\begin{equation}
    Q_I^{\mathsf a\mathsf b} = \frac1d(\bw^{\mathsf a})^\top\bw^{\mathsf b}, \qquad Q_\Gamma^{\mathsf a\mathsf b} = \frac1d(\bw^{\mathsf a})^\top\Gamma^\star\bw^{\mathsf b}.
    \label{eq:generic-QI-QGamma-overlaps}
\end{equation}
Then
\begin{equation}
    \calZ^s = \int \prod_{\mathsf a=1}^s \dd\bw^{\mathsf a}\, P_w(\bw^{\mathsf a}) \prod_{\mu=1}^n \prod_{\mathsf a=1}^s P_{\out}\!\left( \bfield_\mu^{\mathsf a}, Q_I^{\mathsf a\mathsf a}, Q_\Gamma^{\mathsf a\mathsf a} \right).
    \label{eq:integer-moment}
\end{equation}
The output factor of replica $\mathsf a$ depends explicitly only on the diagonal quantities $Q_I^{\mathsf a\mathsf a}$ and $Q_\Gamma^{\mathsf a\mathsf a}$. The off-diagonal overlaps $Q_\Gamma^{\mathsf a\mathsf b}$ enter through the joint Gaussian law because all replicas are evaluated on the same training sample and their projected data-noise fields are correlated. The off-diagonal Euclidean overlaps $Q_I^{\mathsf a\mathsf b}$ do not enter the output covariance directly; they become relevant in the weight-side saddle-point constraints. Because $\gamma_i^\star$ is quenched, the primitive overlap variables are resolved by the variance type $\gamma\sim P_\gamma$. For each replica $a$, define the teacher overlap restricted to coordinates with variance $\gamma$
\begin{equation}
    m^a(\gamma) = \frac{1}{d\,P_\gamma(\dd\gamma)} \sum_{i:\gamma_i^\star\simeq\gamma}u_i^\star w_i^a,
    \label{eq:m-gamma}
\end{equation}
and for each pair $a,b$ define the corresponding type-resolved Euclidean overlap
\begin{equation}
    Q^{ab}(\gamma) = \frac{1}{d\,P_\gamma(\dd\gamma)} \sum_{i:\gamma_i^\star\simeq\gamma}w_i^aw_i^b.
    \label{eq:Q-gamma}
\end{equation}
This notation is understood by first discretizing the support of $P_\gamma$ and then taking the continuum limit. The global overlaps are the two metric moments of this type-resolved overlap:
\begin{equation}
    M^a=\E_\gamma[m^a(\gamma)], \qquad Q_I^{ab}=\E_\gamma[Q^{ab}(\gamma)], \qquad Q_\Gamma^{ab}=\E_\gamma[\gamma Q^{ab}(\gamma)].
    \label{eq:global-overlaps}
\end{equation}
Thus $Q_I^{ab}=Q_{A=I}^{ab}$ is the Euclidean overlap, whereas $Q_\Gamma^{ab}=Q_{A=\Gamma^\star}^{ab}$ is the overlap seen by the heteroskedastic Gaussian noise. In the homoskedastic limit $\Gamma^\star=I$, the two matrices coincide.

Here $Q^{ab}(\gamma)$ is a type-resolved overlap density. It is not a third global metric overlap: $Q_I^{ab}$ and $Q_\Gamma^{ab}$ are the two moments of this same object that are required by the Euclidean decoder geometry and the heteroskedastic field covariance, respectively.

Because $u_i^\star$ and $\gamma_i^\star$ are independent and $\E[(u^\star)^2]=1$, the teacher self-overlap at fixed variance type self-averages to
\begin{equation}
    \frac{1}{d\,P_\gamma(\dd\gamma)} \sum_{i:\gamma_i^\star\simeq\gamma}(u_i^\star)^2 \longrightarrow 1.
\end{equation}
Hence the teacher self-overlap is fixed by the prior and does not introduce an additional variational order parameter.

Conditioning on the quenched coordinate variances $\bm\gamma^\star$, independence across the sample index $\mu$ gives
\begin{equation}
    \E_{\calD\mid\bm\gamma^\star}\calZ^s = \E_{\bu^\star} \int \prod_{\mathsf a=1}^s \dd\bw^{\mathsf a}\, P_w(\bw^{\mathsf a}) \times \left[ \E_{\lambda,\bz,\br} \prod_{\mathsf a=1}^s P_{\out}\!\left( \bfield^{\mathsf a}, Q_I^{\mathsf a\mathsf a}, Q_\Gamma^{\mathsf a\mathsf a} \right) \right]^n
    \label{eq:moment-before-field-reduction}
\end{equation}
The expectation in square brackets is a one-sample quantity. Although it originates from a $d$-dimensional sample, the output factor depends on that sample only through the finite set of $Ls$ replicated fields.

\subsubsection{Gaussian reduction and replicated output factor}
Using the data model \eqref{eq:data-model}, the field of replica $\mathsf a$ decomposes as
\begin{equation}
    f_\ell^{\mathsf a} = \frac{\sqrt\beta\lambda}{d} \sum_{i=1}^d r_{\ell i}u_i^\star w_i^{\mathsf a} + \frac1{\sqrt d} \sum_{i=1}^d r_{\ell i}\sqrt{\gamma_i^\star}\, w_i^{\mathsf a}z_i.
    \label{eq:generic-field-spike-noise-decomposition}
\end{equation}
The first term converges, by \eqref{eq:generic-field-mean-assumption}, to
\begin{equation}
    \sqrt\beta\lambda\,\kappa_\ell M^{\mathsf a}.
\end{equation}
The second term is Gaussian conditional on the coefficients $r_{\ell i}$ because the $z_i$ are Gaussian. Its empirical covariance between components $(\ell,\mathsf a)$ and $(k,\mathsf b)$ is
\begin{equation}
    \frac1d \sum_{i=1}^d r_{\ell i}r_{k i}\gamma_i^\star w_i^{\mathsf a}w_i^{\mathsf b},
\end{equation}
which converges by \eqref{eq:generic-field-cov-assumption} to
\begin{equation}
    \Omega_{\ell k}Q_\Gamma^{\mathsf a\mathsf b}.
\end{equation}
Thus no additional central-limit approximation is needed for the Gaussian data-noise term; only its empirical covariance is replaced by its high-dimensional limit.

Let
\begin{equation}
    \bm M_s=(M^1,\ldots,M^s)^\top, \qquad \mathsf Q_{\Gamma,s}=(Q_\Gamma^{\mathsf a\mathsf b})_{\mathsf a,\mathsf b\le s}.
\end{equation}
Then
\begin{equation}
(\bfield^1,\ldots,\bfield^s)\mid\lambda \Longrightarrow \N_{Ls}\!\left( \sqrt\beta\lambda(\bm M_s\otimes\bkappa), \mathsf Q_{\Gamma,s}\otimes\Omega \right).
    \label{eq:generic-replicated-field-law}
\end{equation}
If $\Omega$ is singular, this notation denotes the corresponding Gaussian probability measure on the support of its covariance matrix.

Define
\begin{equation}
    \Psi_{\out}^{(s)} \left( \bm M_s, \mathsf Q_{\Gamma,s}, \{Q_I^{\mathsf a\mathsf a}\}, \{Q_\Gamma^{\mathsf a\mathsf a}\} \right) := \E_\lambda \E_{(\bfield^1,\ldots,\bfield^s)\mid\lambda} \left[ \prod_{\mathsf a=1}^s P_{\out}\!\left( \bfield^{\mathsf a}, Q_I^{\mathsf a\mathsf a}, Q_\Gamma^{\mathsf a\mathsf a} \right) \right].
    \label{eq:generic-replicated-output}
\end{equation}
The inner expectation is with respect to \eqref{eq:generic-replicated-field-law}. Equivalently,
\begin{equation}
    \begin{aligned}
    \Psi_{\out}^{(s)} ={}& \E_\lambda \int \N_{Ls}\!\left( \dd\bfield^1\cdots\dd\bfield^s; \sqrt\beta\lambda(\bm M_s\otimes\bkappa), \mathsf Q_{\Gamma,s}\otimes\Omega \right) \\
    &\hspace{1.6cm}\times \prod_{\mathsf a=1}^s P_{\out}\!\left( \bfield^{\mathsf a}, Q_I^{\mathsf a\mathsf a}, Q_\Gamma^{\mathsf a\mathsf a} \right).
    \end{aligned}
    \label{eq:generic-replicated-output-integral}
\end{equation}
Hence the replicated moment takes the generic form
\begin{equation}
    \E_{\calD\mid\bm\gamma^\star}\calZ^s = \E_{\bu^\star} \int \prod_{\mathsf a=1}^s \dd\bw^{\mathsf a}\, P_w(\bw^{\mathsf a}) \left[ \Psi_{\out}^{(s)} \right]^n.
    \label{eq:moment-after-field-reduction}
\end{equation}
This is the point at which the model-specific sample disorder has been compressed into a finite-dimensional output contribution. The remainder of the replica saddle-point calculation depends on the learning problem only through $\Psi_{\out}^{(s)}$.

\subsubsection{Overlap constraints and replica action}

The low-dimensional output factor depends on the microscopic replicas only through the type-resolved overlaps. We enforce their definitions with conjugate fields, which decouples the coordinate dependence in the usual way. We enforce the type-resolved student overlaps with their conjugate fields
\begin{equation}
    \begin{aligned}
        1 & = \int\mathcal Dm\,\mathcal DQ\,\mathcal D\widehat m\,\mathcal D\widehat Q \exp\Bigg\{ d\E_\gamma\left[ \sum_{a=1}^s\widehat m^a(\gamma)m^a(\gamma) + \sum_{a\le b}\widehat Q^{ab}(\gamma)Q^{ab}(\gamma) \right] \\
         & \qquad - \sum_{i=1}^d \left[ \sum_{a=1}^s\widehat m^a(\gamma_i^\star)u_i^\star w_i^a + \sum_{a\le b}\widehat Q^{ab}(\gamma_i^\star)w_i^aw_i^b \right] \Bigg\}.
    \end{aligned}
    \label{eq:functional-delta}
\end{equation}
Substituting this into Eq.~\eqref{eq:moment-after-field-reduction} and factorizing, one can show that
\begin{equation}
    \E_{\calD\mid\bm\gamma^\star}\calZ^s = \int\mathcal Dm\,\mathcal DQ\,\mathcal D\widehat m\,\mathcal D\widehat Q \exp\left\{d\left[ \Phi_{\rm trace}^{(s)}+ \Phi_{\rm prior}^{(s)}+ \alpha\Phi_{\out}^{(s)} \right]\right\},
    \label{eq:replicated-action-decomp}
\end{equation}
where

\begin{gather}
    \Phi_{\rm trace}^{(s)} = \E_\gamma\left[ \sum_{a=1}^s\widehat m^a(\gamma)m^a(\gamma) + \sum_{a\le b}\widehat Q^{ab}(\gamma)Q^{ab}(\gamma) \right],
    \label{eq:Phi-trace-s}
    \displaybreak[1] \\
    \begin{aligned}
    \Phi_{\rm prior}^{(s)} = \E_\gamma \log \E_{u^\star\sim P_u} \int\prod_{a=1}^s\dd w^a\,P_w(w^a) \exp\left\{ -\sum_{a=1}^s\widehat m^a(\gamma)u^\star w^a -\sum_{a\le b}\widehat Q^{ab}(\gamma)w^aw^b \right\},
    \end{aligned}
    \label{eq:Phi-prior-s}
    \displaybreak[1] \\
    \Phi_{\out}^{(s)} = \log \Psi_{\out}^{(s)} \left( \bm M_s, \mathsf Q_{\Gamma,s}, \{Q_I^{aa}\}, \{Q_\Gamma^{aa}\} \right).
    \label{eq:Phi-output-s}
\end{gather}

\subsubsection{Replica-symmetric ansatz and $s\to0$ reduction}
\label{sec:RS-ansatz}

We now impose the replica-symmetric (RS) ansatz
\begin{gather}
    m^a(\gamma)=m(\gamma), \\
    Q^{ab}(\gamma) = q(\gamma)+V(\gamma)\delta_{ab}, \qquad a,b=1,\ldots,s.
    \label{eq:RS-Q}
\end{gather}
For the conjugated overlaps we use
\begin{gather}
    \widehat m^a(\gamma)=-\widehat m(\gamma), \\
    \widehat Q^{ab}(\gamma) = \left(\frac{\widehat V(\gamma)-\widehat q(\gamma)}{2} +\widehat q(\gamma)\right)\delta_{ab} -\widehat q(\gamma),
    \label{eq:RS-Qhat}
\end{gather}
Under the RS assumption, we have that
\begin{equation}
    \begin{aligned}
    &-\sum_{a=1}^s\widehat m^a(\gamma)u^\star w^a -\sum_{a\le b}\widehat Q^{ab}(\gamma)w^aw^b\\
    &\quad=\widehat m(\gamma)u^\star\sum_{a=1}^sw^a -\frac{\widehat V(\gamma)}2\sum_{a=1}^s(w^a)^2 +\frac{\widehat q(\gamma)}2 \left(\sum_{a=1}^sw^a\right)^2.
    \end{aligned}
    \label{eq:RS-site-exponent-rewrite}
\end{equation}

and a few additional order parameters
\begin{equation}
    \begin{aligned}
        M          &:=\E_\gamma[m(\gamma)],\\
        q_I        &:=\E_\gamma[q(\gamma)], &\qquad q_\Gamma   &:=\E_\gamma[\gamma q(\gamma)],\\
        V_I        &:=\E_\gamma[V(\gamma)], &\qquad V_\Gamma   &:=\E_\gamma[\gamma V(\gamma)].
    \end{aligned}
    \label{eq:macros-finite-temp}
\end{equation}
Consequently, the two global replica-overlap matrices are
\begin{equation}
    Q_I^{ab}=q_I+V_I\delta_{ab}, \qquad Q_\Gamma^{ab}=q_\Gamma+V_\Gamma\delta_{ab}.
    \label{eq:global-RS-matrices}
\end{equation}

\paragraph{Trace term.} For the type-resolved quadratic overlap
\begin{equation}
    \begin{aligned}
        \sum_{a\le b}\widehat Q^{ab}Q^{ab} ={} & s\,\frac{\widehat V-\widehat q}{2}(q+V) -\frac{s(s-1)}2\widehat q\,q \\
        ={} & \frac s2\left[ \widehat Vq+(\widehat V-\widehat q)V \right] -\frac{s^2}{2}\widehat q\,q.
    \end{aligned}
    \label{eq:RS-trace-algebra-pointwise}
\end{equation}
The trace term can be rewritten as
\begin{equation}
    \Phi_{\rm trace}^{(s)} = s\left\{ -\E_\gamma[\widehat m(\gamma)m(\gamma)] +\frac12\E_\gamma[\widehat V(\gamma)q(\gamma)] +\frac12\E_\gamma[(\widehat V(\gamma)-\widehat q(\gamma))V(\gamma)] \right\} +O(s^2).
    \label{eq:trace-RS}
\end{equation}
\paragraph{Prior term.} Substituting the RS ansatz into Eq.~\eqref{eq:Phi-prior-s}, the only term coupling replicas is quadratic in $\sum_a w^a$.  For each fixed $\gamma$,
\begin{equation}
    \exp\left\{\frac{\widehat q(\gamma)}{2}\left(\sum_{a=1}^s w^a\right)^2\right\} = \E_{\xi\sim\N(0,1)} \exp\left\{\sqrt{\widehat q(\gamma)}\xi\sum_{a=1}^s w^a\right\}.
    \label{eq:HS-prior}
\end{equation}
Define the scalar input partition function
\begin{equation}
    \mathcal Z_{\rm in} (\gamma,u^\star,\xi) :=\int_\R\dd w\,P_w(w) \exp\left\{ \sqrt{\widehat q(\gamma)}\,\xi w -\frac{\widehat V(\gamma)}2w^2 +\widehat m(\gamma)u^\star w \right\}.
    \label{eq:RS-scalar-input-partition}
\end{equation}
Using Eq.~\eqref{eq:HS-prior}, the finite-$s$ site partition becomes exactly
\begin{equation}
    \mathcal Z_{\rm in}^{(s)}(\gamma) =\E_{u^\star,\xi} \left[\mathcal Z_{\rm in}(\gamma,u^\star,\xi)\right]^s.
    \label{eq:RS-finite-s-input-factor}
\end{equation}
Therefore
\begin{equation}
    \Phi_{\rm prior}^{(s)} =\E_\gamma \log\E_{u^\star,\xi} \exp\left\{ s\log\mathcal Z_{\rm in}(\gamma,u^\star,\xi) \right\},
    \label{eq:RS-prior-cumulant-form}
\end{equation}
expanding around $s=0$ gives
\begin{equation}
    \log\E e^{sX} =s\E X+O(s^2).
    \label{eq:small-s-cumulant-expansion}
\end{equation}

Applying this with $X=\log\mathcal Z_{\rm in}$ gives
\begin{equation}
    \Phi_{\rm prior}^{(s)} =s\,\E_{\gamma,u^\star,\xi} \log\int\dd w\,P_w(w) \exp\left\{ \sqrt{\widehat q(\gamma)}\xi w -\frac12\widehat V(\gamma)w^2 +\widehat m(\gamma)u^\star w \right\} +O(s^2).
    \label{eq:prior-RS-finite-temp}
\end{equation}

\paragraph{Output term.}

Under Eq.~\eqref{eq:global-RS-matrices}, the replicated field covariance is
\begin{equation}
    (q_\Gamma\one_s\one_s^\top+V_\Gamma\id_s)\otimes\Omega.
\end{equation}
It can be generated by one common Gaussian vector $\bz\sim\N(0,\id_L)$ and independent replica-specific vectors $\bz^a\iid\N(0,\id_L)$:
\begin{equation}
    \bfield^a =\sqrt\beta\lambda\bkappa M +\sqrt{q_\Gamma}\,\Omega^{1/2}\bz +\sqrt{V_\Gamma}\,\Omega^{1/2}\bz^a.
    \label{eq:RS-field-common-private-decomposition}
\end{equation}
Define the common field
\begin{equation}
    \bfield_0 :=\sqrt\beta\lambda\bkappa M +\sqrt{q_\Gamma}\,\Omega^{1/2}\bz, \qquad \bz\sim\N(0,\id_L).
    \label{eq:f0-generic}
\end{equation}
Conditionally on $(\lambda,\bz)$, the replicas are independent and
\begin{equation}
    \bfield^a\mid\lambda,\bz \iid\N(\bfield_0,V_\Gamma\Omega).
\end{equation}
The diagonal norm arguments are
\begin{equation}
    Q_I^{aa}=q_I+V_I, \qquad Q_\Gamma^{aa}=q_\Gamma+V_\Gamma.
    \label{eq:RS-diagonal-norms}
\end{equation}
Define the scalar output partition function
\begin{equation}
    \mathcal Z_{\out}(\bfield_0) :=\int\dd\bfield\, \N(\bfield;\bfield_0,V_\Gamma\Omega) P_{\out}(\bfield,q_I+V_I,q_\Gamma+V_\Gamma).
    \label{eq:RS-scalar-output-partition}
\end{equation}
Then the replicated output factor is exactly
\begin{equation}
    \Psi_{\out}^{(s)} =\E_{\lambda,\bz} \left[\mathcal Z_{\out}(\bfield_0)\right]^s,
    \label{eq:RS-finite-s-output-factor}
\end{equation}
so that the same cumulant expansion gives
\begin{equation}
    \Phi_{\out}^{(s)} =s\,\E_{\lambda,\bz} \log\int\dd\bfield\, \N(\bfield;\bfield_0,V_\Gamma\Omega) P_{\out}(\bfield,q_I+V_I,q_\Gamma+V_\Gamma) +O(s^2).
    \label{eq:output-RS-finite-temp}
\end{equation}

\subsubsection{Replica-symmetric free entropy}
\label{sec:zero-replica}

Using the analytic continuation around $s=0$, the replica trick can equivalently be written as
\begin{equation}
    \Phi = \lim_{s\to0}\frac1s \lim_{d\to\infty}\frac1d \log \E_{\calD}[\calZ^s],
    \label{eq:replica-trick-log-form}
\end{equation}
where the equality is understood through the analytic continuation around $s=0$.

At fixed $s$, the large-$d$ saddle gives
\begin{equation}
    \lim_{d\to\infty}\frac1d \log \E_{\calD}[\calZ^s] = \Extr_{m,q,V,\widehat m,\widehat q,\widehat V} \left\{ \Phi_{\rm trace}^{(s)} +\Phi_{\rm prior}^{(s)} +\alpha\Phi_{\out}^{(s)} \right\}.
    \label{eq:finite-s-RS-saddle}
\end{equation}
Here the extremization is over the $\gamma$-resolved functions $m(\gamma),q(\gamma),V(\gamma)$ and their conjugates $\widehat m(\gamma),\widehat q(\gamma),\widehat V(\gamma)$.

Equations~\eqref{eq:trace-RS}, \eqref{eq:prior-RS-finite-temp}, and \eqref{eq:output-RS-finite-temp} have the common small-$s$ structure
\begin{equation}
    \Phi_X^{(s)} =s\,\Phi_X+O(s^2), \qquad X\in\{\mathrm{trace},\mathrm{prior},\out\}.
    \label{eq:small-s-RS-structure}
\end{equation}
Therefore, dividing Eq.~\eqref{eq:finite-s-RS-saddle} by $s$ and taking $s\to0$ yields a single replica-symmetric saddle for the quenched free entropy,
\begin{equation}
    \Phi = \Extr_{m(\cdot),q(\cdot),V(\cdot), \widehat m(\cdot),\widehat q(\cdot),\widehat V(\cdot)} \left\{ \Phi_{\rm trace} +\Phi_{\rm prior} +\alpha\Phi_{\out} \right\}
    \label{eq:finite-temp-RS-variational}
\end{equation}
The three $s\to0$ contributions are
\begin{equation}
    \Phi_{\rm trace} = -\E_\gamma[\widehat m(\gamma)m(\gamma)] +\frac12\E_\gamma[\widehat V(\gamma)q(\gamma)] \quad +\frac12\E_\gamma[ (\widehat V(\gamma)-\widehat q(\gamma))V(\gamma) ].
    \label{eq:RS-free-entropy-trace}
\end{equation}
For the input contribution,
\begin{equation}
    \Phi_{\rm prior} = \E_{\gamma,u^\star,\xi} \log \mathcal Z_{\rm in}(\gamma,u^\star,\xi),
    \label{eq:RS-free-entropy-prior}
\end{equation}
with $\mathcal Z_{\rm in}$ defined in Eq.~\eqref{eq:RS-scalar-input-partition}.  For the output contribution,
\begin{equation}
    \Phi_{\out} = \E_{\lambda,\bz} \log \mathcal Z_{\out}(\bfield_0),
    \label{eq:RS-free-entropy-output}
\end{equation}
where
\begin{gather}
    \mathcal Z_{\out}(\bfield_0) = \int\dd\bfield\, \N(\bfield;\bfield_0,V_\Gamma\Omega) P_{\out}(\bfield,q_I+V_I,q_\Gamma+V_\Gamma),
    \label{eq:RS-free-entropy-Zout}
    \\
    \bfield_0 = \sqrt\beta\lambda\bkappa M +\sqrt{q_\Gamma}\,\Omega^{1/2}\bz.
    \label{eq:RS-free-entropy-common-field}
\end{gather}
The global metric moments entering the output channel remain
\begin{equation}
    q_I=\E_\gamma[q(\gamma)], \qquad q_\Gamma=\E_\gamma[\gamma q(\gamma)], \qquad V_I=\E_\gamma[V(\gamma)], \qquad V_\Gamma=\E_\gamma[\gamma V(\gamma)].
    \label{eq:RS-free-entropy-metric-moments}
\end{equation}
The replicated problem has therefore reduced to a single RS variational saddle. The zero-temperature ERM limit is taken only after this $s\to0$ reduction.

\subsubsection{Zero-temperature ERM limit}
\label{sec:zero-temp}
We will specialize our equations for ridge regularization:
\begin{equation}
    P_w(w)=\exp\left\{-\eta_{\rm th}\frac{\laR}{2}w^2\right\}, \qquad P_{\out}(\bfield,Q_I,Q_\Gamma)=\exp\{-\eta_{\rm th}\ell_{\out}(\bfield;Q_I,Q_\Gamma)\}.
\end{equation}
We rescale the replica-diagonal excess overlap as
\begin{gather}
    V(\gamma):=\frac{v(\gamma)}{\eta_{\rm th}}, \qquad V_I=\frac{v_I}{\eta_{\rm th}}, \quad v_I:=\E_\gamma[v(\gamma)], \qquad V_\Gamma=\frac{v_\Gamma}{\eta_{\rm th}}, \quad v_\Gamma:=\E_\gamma[\gamma v(\gamma)].
    \label{eq:zero-temp-V}
    \\
    \widehat m(\gamma)\mapsto\eta_{\rm th}\widehat m(\gamma), \qquad \widehat V(\gamma)\mapsto\eta_{\rm th}\widehat v(\gamma), \qquad \widehat q(\gamma)\mapsto\eta_{\rm th}^2\widehat q(\gamma).
    \label{eq:zero-temp-hats}
\end{gather}

\paragraph{Input proximal.} Using Laplace's method in Eq.~\eqref{eq:prior-RS-finite-temp}, the prior term contribution at leading order is
\begin{equation}
    \Phi_{\rm prior}^{0} = -\E_{\gamma,u^\star,\xi} \min_{w\in\R} \left\{ \frac{\laR+\widehat v(\gamma)}{2}w^2 - \left[\widehat m(\gamma)u^\star+\sqrt{\widehat q(\gamma)}\xi\right]w \right\}.
    \label{eq:prior-prox-potential}
\end{equation}
Defining the coordinate proximal
\begin{equation}
    w_\star(\gamma,u^\star,\xi) = \argmin_{w\in\R} \left\{ \frac{\laR+\widehat v(\gamma)}{2}w^2 - \left[\widehat m(\gamma)u^\star+\sqrt{\widehat q(\gamma)}\xi\right]w \right\}.
    \label{eq:w-prox}
\end{equation}
For ridge regularization, this gives
\begin{equation}
    w_\star(\gamma,u^\star,\xi) = \frac{\widehat m(\gamma)u^\star+\sqrt{\widehat q(\gamma)}\xi} {\laR+\widehat v(\gamma)}
    \label{eq:wstar-ridge-style}
\end{equation}
Stationarity with respect to the conjugates gives the first saddle point equations
\begin{gather}
    m(\gamma)=\E_{u^\star,\xi}[u^\star w_\star(\gamma,u^\star,\xi)], \qquad q(\gamma)=\E_{u^\star,\xi}[w_\star(\gamma,u^\star,\xi)^2],
    \label{eq:coordinate-mq-general}
    \\
    v(\gamma) = \E_{u^\star,\xi}\left[\frac{\xi w_\star(\gamma,u^\star,\xi)}{\sqrt{\widehat q(\gamma)}}\right].
    \label{eq:coordinate-v-general}
\end{gather}
For Eq.~\eqref{eq:wstar-ridge-style}, this reduces to
\begin{equation}
    m(\gamma)=\frac{\widehat m(\gamma)}{\laR+\widehat v(\gamma)}, \qquad q(\gamma)=\frac{\widehat m(\gamma)^2+\widehat q(\gamma)}{(\laR+\widehat v(\gamma))^2}, \qquad v(\gamma)=\frac1{\laR+\widehat v(\gamma) }
    \label{eq:coordinate-ridge-equations}
\end{equation}

\paragraph{Output proximal.} For the zero-temperature output analysis we assume that
\begin{equation}
    \Omega\succ0.
\end{equation}
This is satisfied by all output channels considered in the present work. In particular, for finite-$K$ masking,
\begin{equation}
    \Omega_K = \operatorname{diag}\!\left( 1,\rho(1-\rho)\id_K \right) \succ0, \qquad 0<\rho<1,
\end{equation}
while for dynamic masking $\Omega=1$. Hence $\Omega^{-1}$ is well defined in the Moreau-envelope representation below.

The output term Eq.~\eqref{eq:output-RS-finite-temp} becomes, after using $V_\Gamma=v_\Gamma/\eta_{\rm th}$,
\begin{equation}
    \int\dd\bfield\,\mathcal N\left(\bfield;\bfield_0,\frac{v_\Gamma}{\eta_{\rm th}}\Omega\right) \exp\{-\eta_{\rm th}\ell_{\out}(\bfield;q_I,q_\Gamma)\}.
\end{equation}
Laplace's method gives the standard Moreau envelope
\begin{equation}
    \mathcal M_{\out}(\bfield_0;q_I,q_\Gamma,v_\Gamma) = \min_{\bfield\in\R^L} \left\{ \ell_{\out}(\bfield;q_I,q_\Gamma) + \frac1{2v_\Gamma}(\bfield-\bfield_0)^\top\Omega^{-1}(\bfield-\bfield_0) \right\}.
    \label{eq:generic-output-Moreau}
\end{equation}
Let $\bfield_\star$ be any minimizer.  In the $\eta_{\rm th} \rightarrow \infty$ limit we have that
\begin{equation}
    \Phi_{\out}^{0} = -\E_{\lambda,\bz} \left[\mathcal M_{\out}(\bfield_0;q_I,q_\Gamma,v_\Gamma)\right],
    \label{eq:Phi-output-zero-temp}
\end{equation}
with $\bfield_0$ given by Eq.~\eqref{eq:f0-generic}.  Define
\begin{equation}
    \bm F = \frac1{v_\Gamma}\Omega^{-1}(\bfield_\star-\bfield_0).
    \label{eq:generic-force}
\end{equation}
hence
\begin{equation}
    \bm F+\nabla_{\bfield}\ell_{\out}(\bfield_\star;q_I,q_\Gamma)=0.
    \label{eq:generic-output-foc}
\end{equation}
Since $\partial_M\bfield_0=\sqrt\beta\lambda\bkappa$, we have that
\begin{equation}
    A_\star = \bkappa^\top\bm F.
    \label{eq:Astar-generic}
\end{equation}
Moreover,
\begin{equation}
    \frac{\partial\mathcal M_{\out}}{\partial v_\Gamma} = -\frac12\bm F^\top\Omega\bm F.
\end{equation}
For the derivative with respect to the noise-weighted off-diagonal overlap $q_\Gamma$, define
\begin{equation}
    D_\Gamma(\bfield_\star) = \frac{\partial\mathcal M_{\out}}{\partial q_\Gamma} = \partial_{q_\Gamma}\ell_{\out}(\bfield_\star;q_I,q_\Gamma) - \bm F^\top\partial_{q_\Gamma}\bfield_0,
    \label{eq:generic-DGamma}
\end{equation}
where
\begin{equation}
    \partial_{q_\Gamma}\bfield_0 = \frac1{2\sqrt{q_\Gamma}}\Omega^{1/2}\bz.
\end{equation}

\paragraph{Saddle-point equations.} At order $\eta_{\rm th}$, the RS extremization gives
\begin{equation}
    \Phi_0 = \Extr_{m,q,v,\widehat m,\widehat q,\widehat v} \Bigg\{ \Phi_{\rm trace}^0 +\Phi_{\rm prior}^0 +\alpha\Phi_{\out}^0 \Bigg\},
    \label{eq:zero-temp-variational}
\end{equation}
where
\begin{equation}
    \Phi_{\rm trace}^0 = -\E_\gamma[\widehat m(\gamma)m(\gamma)] -\frac12\E_\gamma[\widehat q(\gamma)v(\gamma)] +\frac12\E_\gamma[\widehat v(\gamma)q(\gamma)].
    \label{eq:trace-zero-temp}
\end{equation}
with
\begin{equation}
    \widehat m(\gamma) = \alpha\sqrt\beta\, \E_{\lambda,\bz}[\lambda A_\star]
    \label{eq:hatm-generic-style}
\end{equation}
The conjugate overlap associated with the replica-diagonal excess follows
\begin{equation}
    \widehat q(\gamma) = \alpha\gamma\, \E_{\lambda,\bz}[\bm F^\top\Omega\bm F]
    \label{eq:hatq-generic-style}
\end{equation}
Because $q(\gamma)$ enters the output channel through both $q_I=\E_\gamma[q(\gamma)]$ and $q_\Gamma=\E_\gamma[\gamma q(\gamma)]$, we obtain
\begin{equation}
    \widehat v(\gamma) = 2\alpha\, \E_{\lambda,\bz} \left[ \partial_{q_I}\ell_{\out}(\bfield_\star;q_I,q_\Gamma) + \gamma D_\Gamma(\bfield_\star) \right]
    \label{eq:hatv-generic-style}
\end{equation}
Equations \eqref{eq:coordinate-mq-general}--\eqref{eq:coordinate-v-general} and \eqref{eq:hatm-generic-style}--\eqref{eq:hatv-generic-style}, together with the metric moments, give
\begin{equation}
    \begin{aligned}
        M&=\E_\gamma[m(\gamma)],\\
        q_I&=\E_\gamma[q(\gamma)], &q_\Gamma&=\E_\gamma[\gamma q(\gamma)],\\
        v_I&=\E_\gamma[v(\gamma)], &v_\Gamma&=\E_\gamma[\gamma v(\gamma)].
    \end{aligned}
    \label{eq:final-closures-generic}
\end{equation}

\subsection{Output-channel specializations}
\label{sec:model-specializations}
% =============================================================================

The generic computation of \Cref{sec:replica-derivation} applies to both masking schemes.  The weight-side saddle equations are common to the two models; only the output channel and the corresponding conjugate equations need to be specialized.

% -----------------------------------------------------------------------------
\subsubsection{Finite-$K$ specialization}
\label{subsec:finite-K-specialization}
% -----------------------------------------------------------------------------

For the finite-$K$ masked autoencoder,
\begin{gather}
    L=K+1, \qquad \bfield=\bfield_K=(h,\delta_1,\ldots,\delta_K), \qquad \bkappa=\bm e_0,
    \label{eq:finite-K-generic-specialization-1}
    \\
    \Omega =\Omega_K = \operatorname{diag}\!\left( 1,\rho(1-\rho)\id_K \right).
    \label{eq:finite-K-generic-specialization-2}
\end{gather}
The output factor is
\begin{equation}
    P_{\out,K}(\bfield_K;Q_I) = \exp\!\left\{ -\eta_{\rm th}\ell_K(\bfield_K;Q_I) \right\},
    \label{eq:finite-K-output-factor}
\end{equation}
where $\ell_K$ is given in \eqref{eq:finite-K-effective-loss-reduced}.  It has no additional explicit dependence on $Q_\Gamma$; $Q_\Gamma$ enters through the Gaussian law of the fields.

Substituting
\eqref{eq:finite-K-generic-specialization-1}--%
\eqref{eq:finite-K-output-factor} into the generic replicated output factor gives
\begin{equation}
    \Psi_{K,\out}^{(s)} := \E_\lambda \E_{(\bfield_K^1,\ldots,\bfield_K^s)\mid\lambda} \left[ \prod_{\mathsf a=1}^s \exp\!\left\{ -\eta_{\rm th} \ell_K\!\left( \bfield_K^{\mathsf a}; Q_I^{\mathsf a\mathsf a} \right) \right\} \right],
    \label{eq:finite-K-replicated-output-specialization}
\end{equation}
where
\begin{equation}
    (\bfield_K^1,\ldots,\bfield_K^s)\mid\lambda \Longrightarrow \N_{(K+1)s}\!\left( \sqrt\beta\lambda(\bm M_s\otimes\bm e_0), \mathsf Q_{\Gamma,s}\otimes\Omega_K \right).
    \label{eq:finite-K-replicated-channel-specialization}
\end{equation}

\paragraph{Saddle-point specialization.}

Let $\bfield_\star$ denote the finite-$K$ output proximal associated with $\ell_K$ through the generic Moreau problem \eqref{eq:generic-output-Moreau}, and let $\bm F_K$ denote the corresponding force defined in \eqref{eq:generic-force}.  Since the signal is loaded only along the full-field direction,
\begin{equation}
    A_\star = \bm e_0^\top\bm F_K.
    \label{eq:finite-K-Astar}
\end{equation}
Moreover,
\begin{equation}
    \partial_{q_I}\ell_K(\bfield_\star;q_I) = \frac{\rho}{K} \sum_{r=1}^K \sig\!\left( (1-\rho)h_\star+\delta_{r,\star} \right)^2.
    \label{eq:finite-K-dqI}
\end{equation}
Since $\ell_K$ has no explicit dependence on $q_\Gamma$, \eqref{eq:generic-DGamma} reduces to
\begin{equation}
    D_\Gamma(\bfield_\star) = - \frac{1}{2\sqrt{q_\Gamma}}\, \bm F_K^\top\Omega_K^{1/2}\bz.
    \label{eq:finite-K-DGamma}
\end{equation}

The conjugate saddle equations \eqref{eq:hatm-generic-style}--\eqref{eq:hatv-generic-style} therefore become
\begin{gather}
    \widehat m(\gamma) = \alpha\sqrt\beta\, \E_{\lambda,\bz} \left[ \lambda\, \bm e_0^\top\bm F_K \right],
    \label{eq:finite-K-hatm}
    \\
    \widehat q(\gamma) = \alpha\gamma\, \E_{\lambda,\bz} \left[ \bm F_K^\top \Omega_K \bm F_K \right],
    \label{eq:finite-K-hatq}
    \\
    \widehat v(\gamma) = 2\alpha\, \E_{\lambda,\bz} \Bigg[ \frac{\rho}{K} \sum_{r=1}^K \sig\!\left( (1-\rho)h_\star+\delta_{r,\star} \right)^2 - \frac{\gamma}{2\sqrt{q_\Gamma}}\, \bm F_K^\top \Omega_K^{1/2}\bz \Bigg].
    \label{eq:finite-K-hatv}
\end{gather}
Together with the input-side equations \eqref{eq:coordinate-ridge-equations} and the global closures \eqref{eq:final-closures-generic}, these equations close the finite-$K$ saddle.

Static masking is the special case $K=1$.

% -----------------------------------------------------------------------------
\subsubsection{Dynamic-mask specialization}
\label{subsec:dynamic-specialization}
% -----------------------------------------------------------------------------

After the mask average has been taken, the dynamic-mask loss \eqref{eq:dynamic-limit-loss} depends explicitly only on the full preactivation $h$.  The generic channel therefore reduces to
\begin{equation}
    L=1, \qquad \bfield=h, \qquad \bkappa=1, \qquad \Omega=1.
    \label{eq:dynamic-generic-specialization}
\end{equation}
The corresponding output factor is
\begin{equation}
    P_{{\rm dyn},\out}(h;Q_I,Q_\Gamma) = \exp\!\left\{ -\eta_{\rm th}\ell_{\rm dyn}(h;Q_I,Q_\Gamma) \right\}.
    \label{eq:dynamic-output-factor}
\end{equation}
Thus the generic replicated output factor becomes
\begin{equation}
    \Psi_{{\rm dyn},\out}^{(s)} = \E_\lambda \E_{\bm h_s\mid\lambda} \left[ \prod_{\mathsf a=1}^s \exp\!\left\{ -\eta_{\rm th} \ell_{\rm dyn}\!\left( h^{\mathsf a}; Q_I^{\mathsf a\mathsf a}, Q_\Gamma^{\mathsf a\mathsf a} \right) \right\} \right],
    \label{eq:dynamic-replicated-output-factor}
\end{equation}
where
\begin{equation}
    \bm h_s\mid\lambda \sim \N_s\!\left( \sqrt\beta\lambda\bm M_s, \mathsf Q_{\Gamma,s} \right).
    \label{eq:dynamic-replicated-h-channel}
\end{equation}

\paragraph{Saddle-point specialization.}

Let $h_\star$ denote the scalar output proximal associated with $\ell_{\rm dyn}$ through \eqref{eq:generic-output-Moreau}, and let $F$ be the corresponding scalar force.  Since $\bkappa=1$,
\begin{equation}
    A_\star=F.
    \label{eq:dynamic-Astar}
\end{equation}
The explicit derivative with respect to the Euclidean overlap is
\begin{equation}
    \partial_{q_I} \ell_{\rm dyn}(h;q_I,q_\Gamma) = \rho \E_\xi \left[ \sig\!\left( (1-\rho)h + \sqrt{\rho(1-\rho)q_\Gamma}\,\xi \right)^2 \right].
    \label{eq:dynamic-dqI}
\end{equation}
Unlike the finite-$K$ loss, $\ell_{\rm dyn}$ depends explicitly on $q_\Gamma$.  Hence
\begin{equation}
    D_\Gamma(h_\star) = \partial_{q_\Gamma} \ell_{\rm dyn}(h_\star;q_I,q_\Gamma) - \frac{Fz}{2\sqrt{q_\Gamma}},
    \label{eq:dynamic-DGamma}
\end{equation}
where
\begin{equation}
    \partial_{q_\Gamma} \ell_{\rm dyn}(h;q_I,q_\Gamma) = \sqrt{ \frac{\rho(1-\rho)}{q_\Gamma} }\, \E_\xi \Bigg[ \xi \Bigg( \rho q_I \sig(y)\sig'(y) - t\,\sig'(y) + \sig(y) \Bigg) \Bigg],
    \label{eq:dynamic-dqGamma}
\end{equation}
with, only within this equation,
\begin{equation}
    y = (1-\rho)h + \sqrt{\rho(1-\rho)q_\Gamma}\,\xi, \qquad t = \rho h - \sqrt{\rho(1-\rho)q_\Gamma}\,\xi.
    \label{eq:dynamic-y-t}
\end{equation}

The conjugate saddle equations are therefore
\begin{gather}
    \widehat m(\gamma) = \alpha\sqrt\beta\, \E_{\lambda,z} \left[ \lambda F \right],
    \label{eq:dynamic-hatm}
    \\
    \widehat q(\gamma) = \alpha\gamma\, \E_{\lambda,z} \left[ F^2 \right],
    \label{eq:dynamic-hatq}
    \\
    \widehat v(\gamma) = 2\alpha\, \E_{\lambda,z} \left[ \partial_{q_I} \ell_{\rm dyn}(h_\star;q_I,q_\Gamma) + \gamma \left( \partial_{q_\Gamma} \ell_{\rm dyn}(h_\star;q_I,q_\Gamma) - \frac{Fz}{2\sqrt{q_\Gamma}} \right) \right].
    \label{eq:dynamic-hatv}
\end{gather}
Together with the same input-side equations \eqref{eq:coordinate-ridge-equations} and global closures \eqref{eq:final-closures-generic}, these equations close the dynamic-mask saddle.

The expressions containing $q_\Gamma^{-1/2}$ are understood through their continuous limit when $q_\Gamma\downarrow0$.

% -----------------------------------------------------------------------------
\subsubsection{The limit $K\to\infty$ at the replicated output level}
\label{subsec:replica-dynamic-limit}
% -----------------------------------------------------------------------------

It remains to connect the two specializations.  For fixed positive integer $s$, write
\begin{equation}
    \bm h_s=(h^1,\ldots,h^s)^\top, \qquad \bm\delta_r=(\delta_r^1,\ldots,\delta_r^s)^\top.
\end{equation}
Under the finite-$K$ Gaussian channel,
\begin{equation}
    \bm h_s\mid\lambda \sim \N_s\!\left( \sqrt\beta\lambda\bm M_s, \mathsf Q_{\Gamma,s} \right),
    \label{eq:replicated-h-dynamic-limit}
\end{equation}
while
\begin{equation}
    \bm\delta_r \iid \N_s\!\left( \bm0, \rho(1-\rho)\mathsf Q_{\Gamma,s} \right), \qquad r=1,\ldots,K,
    \label{eq:replicated-delta-iid}
\end{equation}
and the $\bm\delta_r$ are independent of $\bm h_s$.  The components of a fixed $\bm\delta_r$ are generally correlated across replicas, but the fluctuation vectors are i.i.d. across the mask index $r$.

Define
\begin{equation}
    q_{\mathsf a}(h,\delta) := \rho Q_I^{\mathsf a\mathsf a} \sig\!\left((1-\rho)h+\delta\right)^2 - 2 \sig\!\left((1-\rho)h+\delta\right) (\rho h-\delta).
    \label{eq:single-mask-replica-loss}
\end{equation}
Then
\begin{equation}
    \prod_{\mathsf a=1}^s P_{\out,K}\!\left( \bfield_K^{\mathsf a}; Q_I^{\mathsf a\mathsf a} \right) = \exp\!\left\{ -\eta_{\rm th}\frac1K \sum_{r=1}^K \sum_{\mathsf a=1}^s q_{\mathsf a}\!\left( h^{\mathsf a}, \delta_r^{\mathsf a} \right) \right\}.
    \label{eq:replicated-output-mask-average}
\end{equation}
Conditioning on $\bm h_s$ and applying the law of large numbers across the independent mask index gives
\begin{equation}
    \frac1K \sum_{r=1}^K \sum_{\mathsf a=1}^s q_{\mathsf a}\!\left( h^{\mathsf a}, \delta_r^{\mathsf a} \right) \xrightarrow[K\to\infty]{} \sum_{\mathsf a=1}^s \E_{\delta^{\mathsf a}} q_{\mathsf a}\!\left( h^{\mathsf a}, \delta^{\mathsf a} \right).
    \label{eq:replicated-LLN}
\end{equation}
Only the marginal distribution of each replica component appears on the right-hand side.  From \eqref{eq:replicated-delta-iid},
\begin{equation}
    \delta^{\mathsf a} \sim \N\!\left( 0, \rho(1-\rho)Q_\Gamma^{\mathsf a\mathsf a} \right),
\end{equation}
and therefore
\begin{equation}
    \E_{\delta^{\mathsf a}} q_{\mathsf a}\!\left( h^{\mathsf a}, \delta^{\mathsf a} \right) = \ell_{\rm dyn}\!\left( h^{\mathsf a}; Q_I^{\mathsf a\mathsf a}, Q_\Gamma^{\mathsf a\mathsf a} \right).
    \label{eq:replica-dynamic-loss-identification}
\end{equation}
Consequently, under the usual integrability conditions that allow the limit to pass through the remaining Gaussian expectation,
\begin{equation}
    \Psi_{K,\out}^{(s)} \xrightarrow[K\to\infty]{} \Psi_{{\rm dyn},\out}^{(s)}.
    \label{eq:replicated-output-K-to-infinity}
\end{equation}
This establishes the dynamic-mask limit directly inside the same generic replica framework.  The field dimension does not need to be sent formally from $K+1$ to infinity inside the generic derivation.  Rather, the derivation is applied at each fixed $K$, and the resulting finite-$K$ output functional converges to the $L=1$ dynamic output functional.

The sequential interpretation is therefore
\begin{equation}
    \begin{gathered}
    d,n\to\infty \quad\text{at fixed }K, \\
    \text{generic replica reduction with }L=K+1, \\
    K\to\infty: \quad \Psi_{K,\out}^{(s)} \longrightarrow \Psi_{{\rm dyn},\out}^{(s)}, \\
    \text{followed by the usual replica continuation }s\to0.
    \end{gathered}
    \label{eq:replica-order-of-limits}
\end{equation}
Since the weight prior and the definitions of $M$, $Q_I$, and $Q_\Gamma$ are unchanged, the remainder of the saddle-point calculation is common to both finite-$K$ and dynamic masking; only the model-specific output functional changes.

% -----------------------------------------------------------------------------
\subsubsection{Full-reconstruction autoencoder}
\label{subsec:full-reconstruction-specialization}
% -----------------------------------------------------------------------------

We derive the output channel for the full-reconstruction objective \eqref{eq:main-full-objective}, in which both the encoder input and the reconstruction targets include all coordinates. For one sample, the reconstruction loss is
\begin{equation}
    \sum_{i=1}^d \left[ x_i - \frac{w_i}{\sqrt d}\sig(h) \right]^2, \qquad h = \frac1{\sqrt d} \sum_{i=1}^d w_i x_i.
\end{equation}
Dropping the term independent of $\bw$ gives the effective output loss
\begin{equation}
    \ell_{\rm full}(h;Q_I) = Q_I\sig(h)^2 - 2h\sig(h).
    \label{eq:full-reconstruction-loss}
\end{equation}
This is the full-reconstruction counterpart of the masked output loss \eqref{eq:finite-K-effective-loss-raw}: the encoder and target fields are both $h$, and the squared decoder norm is $Q_I$ rather than the masked fraction $\rho Q_I$. In particular, the cross term is $-2h\sig(h)$ instead of $-2h_{r,\mask}\sig(h_{r,\vis})$, making explicit the difference between reconstructing a field from itself and predicting its hidden part from its visible part.

The generic output channel therefore specializes to
\begin{equation}
    L=1, \qquad \bfield=h, \qquad \bkappa=1, \qquad \Omega=1.
    \label{eq:full-reconstruction-channel}
\end{equation}
The corresponding output factor is
\begin{equation}
    P_{{\rm full},\out}(h;Q_I) = \exp\!\left\{ -\eta_{\rm th} \ell_{\rm full}(h;Q_I) \right\}.
    \label{eq:full-reconstruction-output-factor}
\end{equation}
Hence the replicated output contribution is
\begin{equation}
    \Psi_{{\rm full},\out}^{(s)} := \E_\lambda \E_{\bm h_s\mid\lambda} \left[ \prod_{\mathsf a=1}^s \exp\!\left\{ -\eta_{\rm th} \ell_{\rm full}\!\left( h^{\mathsf a}; Q_I^{\mathsf a\mathsf a} \right) \right\} \right],
    \label{eq:full-reconstruction-replicated-output}
\end{equation}
where
\begin{equation}
    \bm h_s\mid\lambda \sim \N_s\!\left( \sqrt\beta\lambda\bm M_s, \mathsf Q_{\Gamma,s} \right).
    \label{eq:full-reconstruction-replicated-channel}
\end{equation}

\paragraph{Saddle-point specialization.}

Let $h_\star$ denote the scalar output proximal obtained from the generic Moreau problem \eqref{eq:generic-output-Moreau}, and let $F$ denote the corresponding scalar force defined in \eqref{eq:generic-force}. Since $\bkappa=1$,
\begin{equation}
    A_\star=F.
    \label{eq:full-reconstruction-Astar}
\end{equation}
The first-order condition of the proximal is
\begin{equation}
    F + 2q_I \sig(h_\star)\sig'(h_\star) - 2\sig(h_\star) - 2h_\star\sig'(h_\star) = 0.
    \label{eq:full-reconstruction-prox-foc}
\end{equation}

The loss depends explicitly on $q_I$ through
\begin{equation}
    \partial_{q_I} \ell_{\rm full}(h_\star;q_I) = \sig(h_\star)^2.
    \label{eq:full-reconstruction-dqI}
\end{equation}
It has no explicit dependence on $q_\Gamma$. Therefore \eqref{eq:generic-DGamma} reduces to
\begin{equation}
    D_\Gamma(h_\star) = - \frac{Fz}{2\sqrt{q_\Gamma}},
    \label{eq:full-reconstruction-DGamma}
\end{equation}
where the common field is
\begin{equation}
    h_0 = \sqrt\beta\,\lambda M + \sqrt{q_\Gamma}\,z, \qquad z\sim\N(0,1).
\end{equation}

Substitution into \eqref{eq:hatm-generic-style}--\eqref{eq:hatv-generic-style} gives
\begin{gather}
    \widehat m(\gamma) = \alpha\sqrt\beta\, \E_{\lambda,z} \left[ \lambda F \right],
    \label{eq:full-reconstruction-hatm}
    \\
    \widehat q(\gamma) = \alpha\gamma\, \E_{\lambda,z} \left[ F^2 \right],
    \label{eq:full-reconstruction-hatq}
    \\
    \widehat v(\gamma) = 2\alpha \E_{\lambda,z} \left[ \sig(h_\star)^2 - \gamma \frac{Fz}{2\sqrt{q_\Gamma}} \right].
    \label{eq:full-reconstruction-hatv}
\end{gather}
Together with the common input-side equations \eqref{eq:coordinate-ridge-equations} and the global closures \eqref{eq:final-closures-generic}, these equations close the full-reconstruction saddle.

\newpage
% =============================================================================

\section{Replica recovery predictions and the role of mask diversity}
\label{app:replica-recovery}

This appendix explains how mask diversity changes the onset of feature recovery and how dynamic masking exploits heterogeneous coordinate noise. We derive the recovery criterion and its factorization in \Cref{app:fixed-beta-comparison}, obtain the weak-signal cost of mask reuse in \Cref{subsec:ridgeless-weak-signal-thresholds}, and examine its finite-signal behavior in \Cref{app:threshold-ratio-assessment}. The linear theory then identifies the noise-adaptation mechanism in \Cref{app:mask-noise-adaptation}. These predictions complement the learning curves and simulation comparisons in \Cref{fig:validation,fig:mask-sample-efficiency,fig:K-interpolation}. We use the replica-symmetric equations of \Cref{app:replica-analysis} at zero ridge; the onset calculation follows the growth of an infinitesimal feature overlap from the uninformative solution.

\subsection{Recovery criterion and fixed-signal factorization}
\label{app:fixed-beta-comparison}
% =============================================================================

This subsection derives the fixed-$\beta$ recovery criterion for the finite-$K$ masked autoencoder. We set $\lambda_R=0$, keep $0<\beta<\infty$ fixed, and work on a regular replica-symmetric uninformative branch with $M=0$, $q_{\Gamma,0}>0$, $\mathcal{X}_{K,0}>0$, and finite output moments. The only small parameter in the instability calculation is the teacher overlap $M$.

\subsubsection{Finite-$K$ masked uninformative saddle}
\label{subsec:fixed-beta-masked-saddle}
Whiten the finite-$K$ field using
\begin{equation}
    \bm g := \Omega_K^{-1/2}\bfield_K = (h,x_1,\ldots,x_K), \qquad \delta_r=\sqrt{\rho(1-\rho)}\,x_r.
    \label{eq:fixed-beta-whitened-field}
\end{equation}
The common field entering the output proximal is
\begin{equation}
    \bm g_0 = \sqrt\beta\,\lambda M\bm e_0 + \sqrt{q_\Gamma}\,\bm z, \qquad \bm z\sim\mathcal N(\bm0,\id_{K+1}).
    \label{eq:fixed-beta-common-field}
\end{equation}
In whitened coordinates the proximal and force are
\begin{gather}
    \bm g_\star = \argmin_{\bm g\in\mathbb R^{K+1}} \left\{ \ell_K(\bm g;q_I) + \frac1{2v_\Gamma}\|\bm g-\bm g_0\|_2^2 \right\},
    \label{eq:fixed-beta-prox}
    \\
    \widetilde{\bm F}_K := \Omega_K^{1/2}\bm F_K = \frac{\bm g_\star-\bm g_0}{v_\Gamma}.
    \label{eq:fixed-beta-force}
\end{gather}
At $M=0$ write
\begin{equation}
    \bm g_0^{(0)} = \sqrt{q_{\Gamma,0}}\,\bm z.
    \label{eq:fixed-beta-uninformative-field}
\end{equation}
All quantities below are evaluated at the corresponding uninformative proximal. Define
\begin{gather}
    \mathcal E_{K,0} := \mathbb E\|\widetilde{\bm F}_{K,0}\|_2^2,
    \label{eq:fixed-beta-E}
    \\
    \mathcal{X}_{K,0} :=\left.\frac{d}{d\varepsilon} \mathbb E_{\bm z}\!\left[ \widetilde F_{K,h}\!\left( \sqrt{q_{\Gamma,0}}\,\bm z+\varepsilon\bm e_0 \right) \right]\right|_{\varepsilon=0},
    \label{eq:fixed-beta-chi}
    \\
    \mathcal A_{K,0} := \mathbb E\!\left[ \partial_{q_I}\ell_K(\bm g_\star;q_{I,0}) \right] = \rho\, \mathbb E\!\left[ \frac1K\sum_{r=1}^K\sigma(y_{r,\star})^2 \right] \ge0,
    \label{eq:fixed-beta-A}
    \\
    \mathcal B_{K,0} := \mathbb E[D_{\Gamma,K}].
    \label{eq:fixed-beta-B-definition}
\end{gather}
In \eqref{eq:fixed-beta-chi}, all other order parameters are held fixed. Differentiating the shifted Gaussian density gives the score expression \eqref{eq:fixed-beta-Stein}; $q_{\Gamma,0}>0$ and the finite force second moment suffice for this step. This averaged response includes changes in the selected proximal minimizer. It agrees with the expectation of the ordinary force derivative when differentiation can be interchanged with Gaussian averaging. At finite $K$, the loss has no explicit dependence on $q_\Gamma$. Using \eqref{eq:finite-K-DGamma} in whitened coordinates gives
\begin{equation}
    \mathcal B_{K,0} = -\frac1{2\sqrt{q_{\Gamma,0}}} \mathbb E\!\left[ \widetilde{\bm F}_{K,0}^{\mathsf T}\bm z \right].
    \label{eq:fixed-beta-B}
\end{equation}
The finite-$K$ conjugate equations \eqref{eq:finite-K-hatq}--\eqref{eq:finite-K-hatv} reduce at $M=0$ to
\begin{equation}
    \widehat q_0(\gamma) = \alpha\gamma\mathcal E_{K,0}, \qquad \widehat v_0(\gamma) = 2\alpha \left( \mathcal A_{K,0} + \gamma\mathcal B_{K,0} \right).
    \label{eq:fixed-beta-hats}
\end{equation}
The ridgeless input equations therefore give
\begin{gather}
    v_0(\gamma) = \frac1{ 2\alpha( \mathcal A_{K,0} + \gamma\mathcal B_{K,0} ) },
    \label{eq:fixed-beta-vgamma}
    \\
    q_0(\gamma) = \alpha\gamma\mathcal E_{K,0} v_0(\gamma)^2.
    \label{eq:fixed-beta-qgamma}
\end{gather}
Consequently,
\begin{gather}
    q_{I,0} = \alpha\mathcal E_{K,0} \mathbb E_\gamma[\gamma v_0(\gamma)^2],
    \label{eq:fixed-beta-qI-closure}
    \\
    q_{\Gamma,0} = \alpha\mathcal E_{K,0} \mathbb E_\gamma[\gamma^2v_0(\gamma)^2],
    \label{eq:fixed-beta-qGamma-closure}
    \\
    v_{\Gamma,0} = \mathbb E_\gamma[\gamma v_0(\gamma)].
    \label{eq:fixed-beta-vGamma-closure}
\end{gather}
A regular ridgeless saddle requires
\begin{equation}
    \mathcal A_{K,0}+\gamma\mathcal B_{K,0}>0 \qquad P_\gamma\text{-a.s.}
    \label{eq:fixed-beta-regularity-AB}
\end{equation}

\subsubsection{Linear instability in the teacher overlap}
\label{subsec:fixed-beta-masked-instability}
Recovery begins when the response to a small feature alignment amplifies that alignment. Perturb the uninformative saddle by an infinitesimal $M$:
\begin{equation}
    \bm g_0 = \bm g_0^{(0)} + \sqrt\beta\,\lambda M\bm e_0.
    \label{eq:fixed-beta-M-perturbation}
\end{equation}
The remaining uninformative order parameters change only at order $M^2$. Expanding the Gaussian-averaged informative force at fixed remaining order parameters gives
\begin{equation}
    \mathbb E_{\bm z}\!\left[ \widetilde F_{K,h}(\bm g_0^{(0)}+\varepsilon\bm e_0) \right] =\mathbb E_{\bm z}\!\left[ \widetilde F_{K,h}(\bm g_0^{(0)}) \right] +\varepsilon\mathcal{X}_{K,0}+o(\varepsilon).
    \label{eq:fixed-beta-force-expansion}
\end{equation}
Substituting $\varepsilon=\sqrt\beta\,\lambda M$ and averaging over $\lambda$, using its independence from the $M=0$ channel, $\mathbb E[\lambda]=0$, and $\mathbb E[\lambda^2]=1$, gives
\begin{equation}
    \mathbb E[\lambda\widetilde F_{K,h}] = \sqrt\beta\,M\mathcal{X}_{K,0} + o(M).
    \label{eq:fixed-beta-force-response}
\end{equation}
Equation~\eqref{eq:finite-K-hatm} then yields the teacher conjugate
\begin{equation}
    \widehat m = \alpha\beta\mathcal{X}_{K,0}M + o(M).
    \label{eq:fixed-beta-hatm-linearized}
\end{equation}
The input relation $m(\gamma)=v_0(\gamma)\widehat m+o(M)$ gives
\begin{equation}
    M_{\rm new} = \alpha\beta\mathcal{X}_{K,0} \mathbb E_\gamma[v_0(\gamma)]M + o(M).
    \label{eq:fixed-beta-M-map}
\end{equation}
Thus the exact continuous-instability condition is
\begin{equation}
    1 = \alpha_c^{{\rm mask},K}\, \beta\,\mathcal{X}_{K,c} \mathbb E_\gamma[v_{K,c}(\gamma)].
    \label{eq:fixed-beta-instability-condition}
\end{equation}
Equivalently, along the regular $M=0$ branch define
\begin{equation}
    \Lambda_K(\alpha;\beta) := \alpha\beta\mathcal{X}_{K,0}(\alpha) \mathbb E_\gamma[v_{K,0}(\gamma;\alpha)].
    \label{eq:fixed-beta-gain}
\end{equation}
The crossing of $\Lambda_K$ through one predicts the continuous onset of feature recovery on this branch.

\subsubsection{Exact fixed-$\beta$ factorization}
\label{subsec:fixed-beta-factorization}

The threshold separates into an output-efficiency factor and a factor describing the response to heterogeneous coordinate noise. The latter is
\begin{equation}
    \mathcal R_\gamma(\tau) := \frac{\displaystyle\E_\gamma\!\left[ \frac{\gamma^2}{(1+\tau\gamma)^2}\right]} {\displaystyle\left(\E_\gamma\!\left[ \frac1{1+\tau\gamma}\right]\right)^2}, \qquad 1+\tau\gamma>0\quad P_\gamma\text{-a.s.}
    \label{eq:main-Rgamma}
\end{equation}
It collects the effect of the variance distribution on the recovery threshold. Define
\begin{equation}
    H_{0,K} := \mathbb E_\gamma[v_{K,c}(\gamma)], \qquad H_{2,K} := \mathbb E_\gamma[\gamma^2v_{K,c}(\gamma)^2].
    \label{eq:fixed-beta-H0-H2}
\end{equation}
At criticality, \eqref{eq:fixed-beta-qGamma-closure} reads
\begin{equation}
    q_{\Gamma,c} = \alpha_c^{{\rm mask},K}\mathcal E_{K,c}H_{2,K}.
    \label{eq:fixed-beta-qGamma-critical}
\end{equation}
Introduce the output-efficiency factor
\begin{equation}
    \eta_{K,c}^2 := \frac{\mathcal E_{K,c}} {q_{\Gamma,c}\mathcal{X}_{K,c}^2}.
    \label{eq:fixed-beta-eta}
\end{equation}
Substituting $\mathcal E_{K,c}=\eta_{K,c}^2q_{\Gamma,c}\mathcal{X}_{K,c}^2$ into \eqref{eq:fixed-beta-qGamma-critical} and cancelling $q_{\Gamma,c}>0$ gives
\begin{equation}
    1 = \alpha_c^{{\rm mask},K} \eta_{K,c}^2\mathcal{X}_{K,c}^2H_{2,K}.
    \label{eq:fixed-beta-norm-condition}
\end{equation}
On the other hand, \eqref{eq:fixed-beta-instability-condition} gives
\begin{equation}
    \mathcal{X}_{K,c} = \frac1{ \alpha_c^{{\rm mask},K}\beta H_{0,K} }.
    \label{eq:fixed-beta-chi-elimination}
\end{equation}
Eliminating $\mathcal{X}_{K,c}$ yields
\begin{equation}
    \alpha_c^{{\rm mask},K} = \frac{\eta_{K,c}^2}{\beta^2} \frac{H_{2,K}}{H_{0,K}^2}.
    \label{eq:fixed-beta-factorization-v}
\end{equation}
Assume $\mathcal A_{K,c}>0$ and define
\begin{equation}
    \tau_{K,c} := \frac{\mathcal B_{K,c}}{\mathcal A_{K,c}}.
    \label{eq:fixed-beta-tau}
\end{equation}
Equation~\eqref{eq:fixed-beta-vgamma} becomes
\begin{equation}
    v_{K,c}(\gamma) = \frac1{2\alpha_c^{{\rm mask},K}\mathcal A_{K,c}} \frac1{1+\tau_{K,c}\gamma}.
    \label{eq:fixed-beta-v-tau}
\end{equation}
The overall scale cancels from \eqref{eq:fixed-beta-factorization-v}, giving
\begin{equation}
    \alpha_c^{{\rm mask},K} = \frac{\eta_{K,c}^2}{\beta^2} \mathcal R_\gamma(\tau_{K,c}),
    \label{eq:fixed-beta-factorization-R}
\end{equation}
with $\mathcal R_\gamma$ defined in \eqref{eq:main-Rgamma}. No $\beta\downarrow0$ limit has been used.

\paragraph{Dynamic masking.} The same derivation applies after the mask average. One keeps $\mathcal E_{\infty,0}=\mathbb E[F_{{\rm dyn},0}^2]$ and $\mathcal{X}_{\infty,0}=q_{\Gamma,0}^{-1/2}\mathbb E[zF_{{\rm dyn},0}]$ with the same averaged-response convention, while $\mathcal B_{\infty,0}$ includes the explicit derivative of the dynamic loss,
\begin{equation}
    \mathcal B_{\infty,0} = \mathbb E\!\left[ \partial_{q_\Gamma}\ell_{\rm dyn}(h_\star;q_{I,0},q_{\Gamma,0}) - \frac{F_{\rm dyn}z}{2\sqrt{q_{\Gamma,0}}} \right].
    \label{eq:fixed-beta-dynamic-B}
\end{equation}
Since $\widehat v_0(\gamma)=2\alpha(\mathcal A_{\infty,0}+ \gamma\mathcal B_{\infty,0})$, the factorization \eqref{eq:fixed-beta-factorization-R} carries over with $K=\infty$ whenever the regular dynamic saddle exists.

\subsubsection{Output-efficiency bound}
\label{subsec:fixed-beta-output-bound}
Differentiating the Gaussian density in the averaged response \eqref{eq:fixed-beta-chi}, evaluated at criticality, gives
\begin{equation}
    \mathcal{X}_{K,c} = \frac1{\sqrt{q_{\Gamma,c}}} \mathbb E[z_h\widetilde F_{K,h}].
    \label{eq:fixed-beta-Stein}
\end{equation}
This expression also evaluates the response for a discontinuous proximal force, without differentiating within a selected minimizer branch. Therefore
\begin{equation}
    q_{\Gamma,c}\mathcal{X}_{K,c}^2 = \left(\mathbb E[z_h\widetilde F_{K,h}]\right)^2 \le \mathbb E[\widetilde F_{K,h}^2] \le \mathbb E\|\widetilde{\bm F}_{K,c}\|_2^2 = \mathcal E_{K,c}.
    \label{eq:fixed-beta-eta-bound-proof}
\end{equation}
Hence
\begin{equation}
    \eta_{K,c}^2\ge1.
    \label{eq:fixed-beta-eta-bound}
\end{equation}
Equality requires that the output force have no component in the mask-specific directions and that its informative component be proportional to $z_h$ in $L^2$.

\subsubsection{Monotonicity and bounds for the heteroskedastic factor}
\label{subsec:fixed-beta-R-bounds}
Let
\begin{equation}
    a_\tau(\gamma) := \frac1{1+\tau\gamma}, \qquad X_\tau(\gamma) := \frac{\gamma}{1+\tau\gamma}.
    \label{eq:fixed-beta-a-X}
\end{equation}
Then $a_\tau=1-\tau X_\tau$ and
\begin{equation}
    \mathcal R_\gamma(\tau) = \frac{\mathbb E_\gamma[X_\tau^2]} {\mathbb E_\gamma[a_\tau]^2}.
    \label{eq:fixed-beta-R-X}
\end{equation}
The derivatives are
\begin{equation}
    \frac{\mathrm d X_\tau}{\mathrm d\tau}=-X_\tau^2, \qquad \frac{\mathrm d a_\tau}{\mathrm d\tau}=-X_\tau a_\tau,
    \label{eq:fixed-beta-a-X-derivatives}
\end{equation}
so
\begin{equation}
    \frac12\frac{\mathrm d}{\mathrm d\tau} \log\mathcal R_\gamma(\tau) = -\frac{\mathbb E_\gamma[X_\tau^3]}{\mathbb E_\gamma[X_\tau^2]} + \frac{\mathbb E_\gamma[X_\tau a_\tau]}{\mathbb E_\gamma[a_\tau]}.
    \label{eq:fixed-beta-R-derivative-intermediate}
\end{equation}
Let $\gamma'\sim P_\gamma$ be independent of $\gamma$, and set $X_\tau'=X_\tau(\gamma')$. Symmetrizing the last expression gives
\begin{equation}
    \frac12\frac{\mathrm d}{\mathrm d\tau} \log\mathcal R_\gamma(\tau) = - \frac{ \mathbb E_{\gamma,\gamma'}\!\left[ (X_\tau-X_\tau')^2 (X_\tau+X_\tau'-\tau X_\tau X_\tau') \right] }{ 2\,\mathbb E_\gamma[X_\tau^2]\,\mathbb E_\gamma[a_\tau] } \le0.
    \label{eq:fixed-beta-R-derivative}
\end{equation}
The sign follows because
\begin{equation}
    X_\tau+X_\tau'-\tau X_\tau X_\tau' = \frac{ \gamma+\gamma'+\tau\gamma\gamma' }{ (1+\tau\gamma)(1+\tau\gamma') } >0
    \label{eq:fixed-beta-positive-factor}
\end{equation}
throughout the regularity domain. The inequality is strict for a nondegenerate variance law. At $\tau=0$,
\begin{equation}
    \mathcal R_\gamma(0) = \mathbb E_\gamma[\gamma^2] = \kappa_2.
    \label{eq:fixed-beta-R-zero}
\end{equation}
If $\mathbb E_\gamma[\gamma^{-1}]<\infty$, dominated convergence gives
\begin{equation}
    \lim_{\tau\to+\infty}\mathcal R_\gamma(\tau) = \frac1{\mathbb E_\gamma[\gamma^{-1}]^2}.
    \label{eq:fixed-beta-R-infinity}
\end{equation}
Thus every regular critical saddle obeys
\begin{equation}
    \mathcal R_\gamma(\tau_{K,c}) \ge \frac1{\mathbb E_\gamma[\gamma^{-1}]^2}.
    \label{eq:fixed-beta-R-lower-bound}
\end{equation}
Combining this with \eqref{eq:fixed-beta-eta-bound} gives
\begin{equation}
    \alpha_c^{{\rm mask},K} \ge \frac1{\beta^2\mathbb E_\gamma[\gamma^{-1}]^2}.
    \label{eq:fixed-beta-mask-lower-bound}
\end{equation}
Finally,
\begin{equation}
    \mathbb E_\gamma[\gamma^{-2}] \ge \mathbb E_\gamma[\gamma^{-1}]^2
    \label{eq:fixed-beta-negative-moment-CS}
\end{equation}
by Cauchy--Schwarz, and hence
\begin{equation}
    \alpha_c^{{\rm mask},K} \ge \frac1{\beta^2\mathbb E_\gamma[\gamma^{-1}]^2} \ge \alpha_c^{\rm BO}.
    \label{eq:fixed-beta-mask-BO-hierarchy}
\end{equation}

For unbounded variance support, the positivity condition in \eqref{eq:main-Rgamma} requires $\tau_{K,c}\geq0$. Hence the factorization gives
\begin{equation}
    \alpha_c^{{\rm mask},K} \leq\frac{\eta_{K,c}^2\kappa_2}{\beta^2},
    \label{eq:fixed-beta-unbounded-upper-bound}
\end{equation}
whenever the critical solution has finite output efficiency and $\kappa_2<\infty$. The linear dynamic case is developed in \Cref{app:dynamic-linear-bound}.

\subsection{Mask diversity in the weak-signal regime}
\label{subsec:ridgeless-weak-signal-thresholds}
% =============================================================================
\subsubsection{Recovery predictions and weak-signal regime}
\label{subsec:recovery-threshold-scope}
The weak-signal limit makes the sample cost of mask reuse explicit. Starting from the high-dimensional equations, we take
\begin{equation}
    \laR=0, \qquad \beta\downarrow0, \qquad \alpha=O(\beta^{-2}), \qquad K=O(1),
    \label{eq:weak-signal-scaling-alpha}
\end{equation}
with $\alpha\beta^2=O(1)$ and $K$ held fixed while the weak-signal limit is taken. The dynamic-mask coefficient is obtained afterward through the sequential limit $K\to\infty$. We denote
\begin{equation}
    \kappa_2:=\E_\gamma[\gamma^2].
    \label{eq:weak-signal-kappa2}
\end{equation}
The analysis concerns the regular continuous branch on which $q_I,q_\Gamma,v_\Gamma\to0$ as $\beta\downarrow0$.

% -----------------------------------------------------------------------------
% -----------------------------------------------------------------------------

\paragraph{Weak-signal recovery predictions.} Fix $0<\rho<1$ and finite $K$.

For the smooth class, assume $\sig\in C^1(\R)$, with both $\sig$ and $\sig'$ globally Lipschitz, and
\begin{equation}
    \sig(0)=0, \qquad \sig(x)=s_\sig x+O(x^2), \qquad 0<s_\sig<\infty,
    \label{eq:weak-signal-smooth-activation-assumption}
\end{equation}
the continuous weak-recovery threshold is
\begin{equation}
    \alpha_{{\rm c},{\rm sm}}^{(K)} = \frac{\kappa_2}{\beta^2} \left[ 1+ \frac{ 1-2\rho(1-\rho) }{ 2K\rho(1-\rho) } \right] \left[1+o_\beta(1)\right].
    \label{eq:weak-signal-smooth-threshold-K}
\end{equation}
These sufficient regularity conditions justify the Gaussian moment and proximal expansions below; they include linear, tanh, and the standard ELU $\sig(x)=x$ for $x\geq0$, $\sig(x)=e^x-1$ for $x<0$. The second derivative need only exist almost everywhere and be bounded, so the jump in the ELU second derivative at zero is allowed. Activations satisfying these conditions belong to the same weak-signal universality class. Activations with $\sig(0)\ne0$, vanishing linear response, or discontinuities require a different expansion.

For ReLU,
\begin{equation}
    \sig(x)=x_+:=\max\{x,0\},
    \label{eq:weak-signal-relu-definition}
\end{equation}
define the off-diagonal force covariance
\begin{equation}
    \mathcal D_{\rm R}(\rho) = \rho^2(1-\rho)^2 \left[ 1+\frac2\pi\arcsin(1-\rho) \right] + \frac{\rho(1-\rho)}{2\pi} \sqrt{\frac{\rho}{2-\rho}}.
    \label{eq:weak-signal-relu-D-rho}
\end{equation}
The ReLU threshold is
\begin{equation}
    \alpha_{{\rm c},{\rm R}}^{(K)} = \frac{\kappa_2}{\beta^2} \Bigg[ \frac{ \mathcal D_{\rm R}(\rho) }{ \rho^2(1-\rho)^2 } + \frac1K \left( \frac1{\rho(1-\rho)} - \frac{ \mathcal D_{\rm R}(\rho) }{ \rho^2(1-\rho)^2 } \right) \Bigg] \left[1+o_\beta(1)\right].
    \label{eq:weak-signal-relu-threshold-K}
\end{equation}

For both activation classes, the leading weak-signal threshold is strictly decreasing with the number of independent masks in the fixed collection. The following ordering refers to these leading coefficients, with the dynamic coefficient obtained in the sequential limit specified above:
\begin{equation}
    \alpha_{\rm c}^{(1)} > \alpha_{\rm c}^{(2)} > \cdots > \alpha_{\rm c}^{(\infty)}.
    \label{eq:weak-signal-main-monotonicity}
\end{equation}
Since $K=1$ corresponds to static masking and $K\to\infty$ to dynamic masking, the leading dynamic coefficient is strictly smaller than the static one. The fixed-$K$ expansion does not establish uniform finite-signal monotonicity for $K$ increasing jointly with $\beta^{-1}$.

At half masking, $\rho=1/2$, the leading coefficients multiplying $\kappa_2/\beta^2$ are
\begin{equation}
\begin{array}{c|ccc}
 &K=1&K=2&K\to\infty\\ \hline
\text{Smooth}&2&3/2&1\\
\text{ReLU}&4&\dfrac83+\dfrac1{\pi\sqrt3} &\dfrac43+\dfrac2{\pi\sqrt3}
\end{array}
\label{eq:weak-signal-half-coefficients}
\end{equation}
ReLU has a distinct coefficient because its two one-sided slopes differ.

\paragraph{Excess over the dynamic weak-signal reference.} For the centered smooth class, let $\alpha_c^{(\infty)}=\kappa_2\beta^{-2}[1+o_\beta(1)]$ denote the sequential dynamic reference. The finite-$K$ expansion can equivalently be written as
\begin{equation}
    \alpha_c^{(K)}-\alpha_c^{(\infty)} = \frac{\alpha_c^{(1)}-\alpha_c^{(\infty)}}{K} + o_\beta(\beta^{-2}).
    \label{eq:main-one-over-K}
\end{equation}
Thus each additional independent mask reduces the excess sample requirement, with diminishing returns. The full fixed-signal criterion of \Cref{app:fixed-beta-comparison} is evaluated numerically in \Cref{app:threshold-ratio-assessment}.

\subsubsection{Common weak-signal reduction}
\paragraph{Finite-$K$ output channel.} Use the whitened field, common Gaussian center, proximal, and force from \eqref{eq:fixed-beta-whitened-field}--\eqref{eq:fixed-beta-uninformative-field}. At $M=0$, $\bm g_0\sim\N(\bm0,q_\Gamma\id_{K+1})$. The visible and masked fields of view $r$ are therefore
\begin{equation}
    y_r := h_{r,\vis} = (1-\rho)h + \sqrt{\rho(1-\rho)}\,x_r, \qquad t_r := h_{r,\mask} = \rho h - \sqrt{\rho(1-\rho)}\,x_r.
    \label{eq:weak-signal-y-t-fields}
\end{equation}
The finite-$K$ output loss becomes
\begin{equation}
    \ell_K(\bm g;q_I) = \frac1K\sum_{r=1}^K \left[ \rho q_I\sig(y_r)^2 - 2\sig(y_r)t_r \right].
    \label{eq:weak-signal-loss-whitened}
\end{equation}

At differentiable proximal minimizers, $\widetilde{\bm F}_K=-\nabla_{\bm g}\ell_K(\bm g_\star;q_I)$; at ReLU kinks the displacement in \eqref{eq:fixed-beta-force} defines the force. Since the signal loads on the common component,
\begin{equation}
    A_\star=\widetilde F_{K,h},\qquad \bm F_K^\top\Omega_K\bm F_K=\|\widetilde{\bm F}_K\|_2^2.
    \label{eq:weak-signal-force-identities}
\end{equation}
The different views share the field $h$; their force covariances must be retained when computing the effect of $K$.

For the two activation classes considered below, take the weak-signal limit of the averaged response \eqref{eq:fixed-beta-chi}, using its Gaussian-score form \eqref{eq:fixed-beta-Stein}, and define
\begin{gather}
    \mathcal{X}_K := \lim_{\substack{ q_I,q_\Gamma,v_\Gamma\to0\\
        M=0 }} \frac1{\sqrt{q_\Gamma}} \E_{\bm z}\!\left[ z_h\widetilde F_{K,h}(\sqrt{q_\Gamma}\,\bm z) \right],
    \label{eq:weak-signal-chi-definition}
    \\
    T_K := \lim_{\substack{ q_I,q_\Gamma,v_\Gamma\to0\\
        M=0 }} \frac1{q_\Gamma} \E \left[ \|\widetilde{\bm F}_K\|_2^2 \right].
    \label{eq:weak-signal-T-definition}
\end{gather}
We also require
\begin{equation}
    U_K := \lim_{q_\Gamma\downarrow0} \frac1{q_\Gamma} \E \left[ \left. \partial_{q_I} \ell_K(\bm g_0;q_I) \right|_{q_I=0,M=0} \right].
    \label{eq:weak-signal-U-definition}
\end{equation}
The coefficient $U_K$ controls the norm of the uninformative solution but will cancel from the recovery threshold.

% -----------------------------------------------------------------------------
\paragraph{Generic reduction of the weak-recovery condition.}

Taking the weak-signal limit of the response in \eqref{eq:fixed-beta-force-expansion} gives, at leading order,
\begin{equation}
    \widehat m=\alpha\beta\mathcal{X}_K M+o(M).
    \label{eq:weak-signal-hatm-linearized}
\end{equation}
At $M=0$,
\begin{equation}
    \widehat q(\gamma) = \alpha\gamma q_\Gamma T_K + o(\alpha q_\Gamma).
    \label{eq:weak-signal-hatq-linearized}
\end{equation}

It remains to expand $\widehat v(\gamma)$. At fixed finite $K$, the loss \eqref{eq:weak-signal-loss-whitened} has no explicit dependence on $q_\Gamma$. Thus, at $M=0$,
\begin{equation}
    D_\Gamma = - \widetilde{\bm F}_K^\top \partial_{q_\Gamma}\bm g_0 = - \frac1{2q_\Gamma} \widetilde{\bm F}_K^\top\bm g_0.
    \label{eq:weak-signal-DGamma-whitened}
\end{equation}

\paragraph{Euler identity and the nonlinear remainder.} In this paragraph write $q=q_\Gamma$, $Q=q_I$, and $v=v_\Gamma$. All limits have $q,Q,v\downarrow0$ at fixed $K$ and $\rho$; constants may depend on $K$, $\rho$, and the activation, but not on $q,Q,v$. For the smooth class the loss decomposes as
\begin{equation}
 \ell_K(\bm g;Q)=\ell_K^{(2)}(\bm g)+QG_K^{(2)}(\bm g) +\mathcal E_K(\bm g;Q),\qquad |\mathcal E_K(\bm g;Q)|\leq C(1+Q)\|\bm g\|^3,
 \label{eq:weak-signal-leading-loss-decomposition}
\end{equation}
where $\ell_K^{(2)}=-2s_\sig K^{-1}\sum_j y_jt_j$ and $G_K^{(2)}=\rho s_\sig^2K^{-1}\sum_j y_j^2$ are homogeneous of degree two. At the uninformative Gaussian field, $y_j$ and $t_j$ are independent and centered for each view. Thus
\begin{equation}
 \E[\ell_K^{(2)}(\bm g_0)]=0,\qquad \E[G_K^{(2)}(\bm g_0)]=U_Kq,\qquad \E[\ell_K(\bm g_0;Q)]=QU_Kq+O(Qq^{3/2}).
 \label{eq:weak-signal-leading-loss-means}
\end{equation}
The last estimate uses the exact cancellation $\E[\sig(y_j)t_j]=0$, rather than just the pointwise cubic bound.

Define the Euler defect of the full loss by
\begin{equation}
 \mathcal R_K(\bm g;Q) :=\bm g^\top\nabla\ell_K(\bm g;Q)-2\ell_K(\bm g;Q).
 \label{eq:weak-signal-Euler-defect}
\end{equation}
Proximal stationarity gives the exact identity
\begin{equation}
 \widetilde{\bm F}_K^\top\bm g_0 =-2\ell_K(\bm g_\star;Q)-v\|\widetilde{\bm F}_K\|^2 -\mathcal R_K(\bm g_\star;Q).
 \label{eq:weak-signal-Euler-force-identity}
\end{equation}
The defect vanishes for the linear loss. For a general activation in the smooth class, put $d_\sig(y)=y\sig'(y)-\sig(y)$. Direct differentiation gives
\begin{equation}
 \mathcal R_K(\bm g;Q) =\frac2K\sum_{j=1}^K [\rho Q\sig(y_j)-t_j]d_\sig(y_j).
 \label{eq:weak-signal-Euler-defect-explicit}
\end{equation}
The assumptions on $\sig$ imply $|\sig(y)|\leq C|y|$, $|d_\sig(y)|\leq Cy^2$, and $|d_\sig'(y)|\leq C|y|$ almost everywhere. Independence within each pair $(y_j,t_j)$ consequently yields
\begin{equation}
 \E[\mathcal R_K(\bm g_0;Q)]=O(Qq^{3/2}),\qquad \|\nabla\mathcal R_K(\bm g;Q)\| \leq C(1+Q)\|\bm g\|^2\quad\text{a.e.}
 \label{eq:weak-signal-Euler-defect-bounds}
\end{equation}

To transfer this estimate to the proximal, note that $\ell_K(\bm g;Q)\geq-C\|\bm g\|^2$ and $\|\nabla\ell_K(\bm g;Q)\|\leq C(1+Q)\|\bm g\|$. For sufficiently small $v$ the proximal objective is coercive, and stationarity of any minimizer gives, for small $Q,v$,
\begin{equation}
 \|\bm g_\star\|\leq C\|\bm g_0\|,\qquad \|\bm g_\star-\bm g_0\|\leq Cv\|\bm g_0\|.
 \label{eq:weak-signal-prox-displacement}
\end{equation}
Integrating the defect gradient along this displacement and using $\E\|\bm g_0\|^3=O(q^{3/2})$ proves
\begin{equation}
 |\E[\mathcal R_K(\bm g_\star;Q)]| \leq C(Q+v)q^{3/2}=o\bigl(q(Q+v)\bigr).
 \label{eq:weak-signal-Euler-defect-negligible}
\end{equation}
This is an expectation bound: cancellation of the hidden-field term is essential, and a pointwise cubic remainder alone would not suffice.

The same regularity assumptions give $\|\nabla^2\ell_K(\bm g;Q)\|\leq C(1+Q)(1+\|\bm g\|)$ almost everywhere. Taylor expansion between $\bm g_\star$ and $\bm g_0$, together with \eqref{eq:weak-signal-prox-displacement}, therefore gives
\begin{equation}
 \E[\ell_K(\bm g_\star;Q)] =\E[\ell_K(\bm g_0;Q)]-v\E\|\widetilde{\bm F}_K\|^2 +o(vq).
 \label{eq:weak-signal-loss-at-prox-expansion}
\end{equation}
Indeed the Taylor remainder is $O(v^2(q+q^{3/2}))=o(vq)$. Moreover, $\nabla\ell_K(\bm g;Q)=\nabla\ell_K^{(2)}(\bm g) +O(Q\|\bm g\|+\|\bm g\|^2)$ as $Q\downarrow0$. The displacement estimate and Gaussian moments then show that $q^{-1}\E\|\widetilde{\bm F}_K\|^2\to T_K$, as evaluated below.

\paragraph{Positive homogeneity of ReLU.} For ReLU, \eqref{eq:weak-signal-leading-loss-decomposition} holds with $\mathcal E_K=0$, using the ReLU homogeneous terms. The full loss is positively homogeneous of degree two, so the Euler defect is zero. At a kink the force is defined by the proximal displacement; varying $\bm g_\star$ along its radial direction gives \eqref{eq:weak-signal-Euler-force-identity} with zero defect there as well. The bounds \eqref{eq:weak-signal-prox-displacement} still hold, since the loss is locally Lipschitz and every generalized gradient has norm at most $C(1+Q)\|\bm g\|$.

For completeness, rescale $\bm g_0=\sqrt q\,\bm z$ and $\bm g_\star=\sqrt q\,\bm z_\star$. At almost every Gaussian $\bm z$, all visible fields are nonzero. For small enough $v$ the proximal remains in the same activation region, where the loss is quadratic, and the first-order energy change divided by $vq$ tends to $-\|\nabla\ell_K^{(2)}(\bm z)\|^2$ as $Q,v\downarrow0$. The displacement and gradient bounds dominate this quotient, and the squared force divided by $q$, by $C\|\bm z\|^2$. Dominated convergence proves \eqref{eq:weak-signal-loss-at-prox-expansion} and, more precisely,
\[
    q^{-1/2}\widetilde{\bm F}_K(\sqrt q\,\bm z) \longrightarrow-\nabla\ell_K^{(2)}(\bm z) \qquad\text{in }L^2(\bm z).
\]
This convergence yields both $q^{-1}\E\|\widetilde{\bm F}_K\|^2\to T_K$ and the Gaussian-score response in \eqref{eq:weak-signal-chi-definition}, including the effect of activation-boundary crossings. Also $\E[\ell_K(\bm g_0;Q)]=QU_Kq$ exactly.

\paragraph{Susceptibility expansion.} For either activation class, substitute \eqref{eq:weak-signal-leading-loss-means}, \eqref{eq:weak-signal-Euler-force-identity}, and \eqref{eq:weak-signal-loss-at-prox-expansion} into \eqref{eq:weak-signal-DGamma-whitened}. The smooth Euler defect contributes $o(Q+v)$ after division by $2q$; it is identically zero in the ReLU case. Restoring the original notation yields
\begin{equation}
    \E[D_\Gamma] = U_Kq_I - \frac12v_\Gamma T_K + o(q_I+v_\Gamma).
    \label{eq:weak-signal-DGamma-expansion}
\end{equation}
Likewise, the displacement bound and the Gaussian moment estimate give $\E[\partial_{q_I}\ell_K(\bm g_\star;q_I)]=U_Kq_\Gamma+o(q_\Gamma)$ for both activation classes. The conjugate susceptibility therefore becomes
\begin{equation}
    \widehat v(\gamma) = 2\alpha \left[ U_Kq_\Gamma + \gamma \left( U_Kq_I - \frac12v_\Gamma T_K \right) \right] + o\!\left( \alpha(q_I+q_\Gamma+v_\Gamma) \right).
    \label{eq:weak-signal-hatv-expansion}
\end{equation}

The regular uninformative branch has the scaling
\begin{equation}
    q_I = \beta\bar q_I+o(\beta), \qquad q_\Gamma = \beta\bar q_\Gamma+o(\beta), \qquad v(\gamma) = \beta\bar v(\gamma)+o(\beta), \qquad v_\Gamma = \beta\bar v_\Gamma+o(\beta),
    \label{eq:weak-signal-order-parameter-scaling}
\end{equation}
with $\bar q_\Gamma>0$.

At $M=0$, the ridgeless input equations together with \eqref{eq:weak-signal-hatq-linearized} give
\begin{equation}
    q(\gamma) = \alpha\beta^3 \gamma\bar q_\Gamma T_K \bar v(\gamma)^2 + o(\beta).
    \label{eq:weak-signal-qgamma-leading}
\end{equation}
Consequently,
\begin{gather}
    1 = \alpha\beta^2T_K \E_\gamma \left[ \gamma^2\bar v(\gamma)^2 \right],
    \label{eq:weak-signal-uninformative-norm-condition-general}
    \\
    \bar q_I = \alpha\beta^2 \bar q_\Gamma T_K \E_\gamma \left[ \gamma\bar v(\gamma)^2 \right].
    \label{eq:weak-signal-qI-general}
\end{gather}

On the other hand, $v(\gamma)=1/\widehat v(\gamma)$ in the ridgeless problem. Using \eqref{eq:weak-signal-hatv-expansion},
\begin{equation}
    \frac1{\bar v(\gamma)} = 2\alpha\beta^2 \left[ U_K\bar q_\Gamma + \gamma \left( U_K\bar q_I - \frac12\bar v_\Gamma T_K \right) \right].
    \label{eq:weak-signal-vgamma-general}
\end{equation}
Multiplying by $\gamma\bar v(\gamma)^2$, averaging over $\gamma$, and using \eqref{eq:weak-signal-uninformative-norm-condition-general} and \eqref{eq:weak-signal-qI-general} gives
\begin{equation}
    \bar v_\Gamma = \frac{4U_K}{T_K}\bar q_I - \bar v_\Gamma.
\end{equation}
Hence
\begin{equation}
    2U_K\bar q_I = T_K\bar v_\Gamma.
    \label{eq:weak-signal-gamma-cancellation}
\end{equation}
The coefficient multiplying $\gamma$ in \eqref{eq:weak-signal-vgamma-general} therefore vanishes. Thus the leading input susceptibility is forced to be independent of the variance type,
\begin{equation}
    \bar v(\gamma)=\bar v.
    \label{eq:weak-signal-constant-v}
\end{equation}

Using $\E_\gamma[\gamma]=1$ and $\kappa_2=\E_\gamma[\gamma^2]$, the uninformative saddle then satisfies
\begin{equation}
    1 = \alpha\beta^2 \kappa_2T_K\bar v^2.
    \label{eq:weak-signal-uninformative-norm-condition}
\end{equation}
Moreover,
\begin{equation}
    \frac{\bar q_I}{\bar q_\Gamma} = \alpha\beta^2T_K\bar v^2 = \frac1{\kappa_2}, \qquad \bar v_\Gamma=\bar v.
    \label{eq:weak-signal-qI-qGamma-vGamma-relations}
\end{equation}
Thus the full variance distribution enters the leading weak-signal threshold only through its second moment $\kappa_2$.

Finally, the magnetization equation gives
\begin{equation}
    M = \alpha\beta^2 \mathcal{X}_K\bar v\,M + o(M).
    \label{eq:weak-signal-M-closure}
\end{equation}
A continuous instability occurs when
\begin{equation}
    1 = \alpha\beta^2\mathcal{X}_K\bar v.
    \label{eq:weak-signal-magnetization-condition}
\end{equation}
Eliminating $\bar v$ between \eqref{eq:weak-signal-uninformative-norm-condition} and \eqref{eq:weak-signal-magnetization-condition} gives
\begin{equation}
    \alpha_{\rm c}^{(K)} = \frac{\kappa_2}{\beta^2} \frac{T_K}{\mathcal{X}_K^2} \left[1+o_\beta(1)\right].
    \label{eq:weak-signal-generic-threshold}
\end{equation}
The activation and the number of masks therefore enter only through the output ratio $T_K/\mathcal{X}_K^2$.

% -----------------------------------------------------------------------------
\subsubsection{Centered smooth activations with nonzero local slope}

Under \eqref{eq:weak-signal-smooth-activation-assumption}, the leading output loss is
\begin{equation}
    \ell_{K,{\rm sm}}^{(2)} = -\frac{2s_\sig}{K} \sum_{r=1}^K y_rt_r.
    \label{eq:weak-signal-smooth-leading-loss}
\end{equation}
The positive reconstruction term does not contribute to the leading force, but its $q_I$ derivative is required in \eqref{eq:weak-signal-hatv-expansion}. Since
\begin{equation}
    \Var(y_r) = (1-\rho)q_\Gamma,
\end{equation}
we obtain
\begin{equation}
    U_{K,{\rm sm}} = \rho(1-\rho)s_\sig^2.
    \label{eq:weak-signal-smooth-U}
\end{equation}

Expanding the product of the visible and masked fields,
\begin{equation}
    y_rt_r = \rho(1-\rho)h^2 + (2\rho-1) \sqrt{\rho(1-\rho)}\,hx_r - \rho(1-\rho)x_r^2.
    \label{eq:weak-signal-smooth-yrtr}
\end{equation}
The Hessian of the leading loss in the coordinates $(h,x_1,\ldots,x_K)$ is therefore
\begin{equation}
    \mathsf H_{K,{\rm sm}} =
    \begin{pmatrix}
        -4s_\sig\rho(1-\rho) & \dfrac{ 2s_\sig(1-2\rho) \sqrt{\rho(1-\rho)} }{K} \one_K^\top \\[3mm]
        \dfrac{ 2s_\sig(1-2\rho) \sqrt{\rho(1-\rho)} }{K} \one_K & \dfrac{ 4s_\sig\rho(1-\rho) }{K}\id_K
    \end{pmatrix}.
    \label{eq:weak-signal-smooth-Hessian}
\end{equation}
The preceding displacement and moment bounds give, in $L^2$,
\begin{equation}
    \widetilde{\bm F}_{K,{\rm sm}} = -\mathsf H_{K,{\rm sm}}\bm g_0 + o(\sqrt{q_\Gamma}).
    \label{eq:weak-signal-smooth-force}
\end{equation}
The informative susceptibility is therefore
\begin{equation}
    \mathcal{X}_{K,{\rm sm}} = 4s_\sig\rho(1-\rho).
    \label{eq:weak-signal-smooth-chi}
\end{equation}
Moreover,
\begin{equation}
    \begin{aligned}
    T_{K,{\rm sm}} &= \operatorname{Tr} \left( \mathsf H_{K,{\rm sm}}^2 \right) \\
    &= 16s_\sig^2\rho^2(1-\rho)^2 + \frac{ 8s_\sig^2\rho(1-\rho) }{K} \left[ (1-2\rho)^2 + 2\rho(1-\rho) \right] \\
    &= 16s_\sig^2\rho^2(1-\rho)^2 + \frac{ 8s_\sig^2\rho(1-\rho) [1-2\rho(1-\rho)] }{K}.
    \end{aligned}
    \label{eq:weak-signal-smooth-T}
\end{equation}
Hence
\begin{equation}
    \frac{ T_{K,{\rm sm}} }{ \mathcal{X}_{K,{\rm sm}}^2 } = 1+ \frac{ 1-2\rho(1-\rho) }{ 2K\rho(1-\rho) }.
    \label{eq:weak-signal-smooth-output-ratio}
\end{equation}
The slope $s_\sig$ cancels, and substitution into \eqref{eq:weak-signal-generic-threshold} gives \eqref{eq:weak-signal-smooth-threshold-K}.

For consecutive values of $K$,
\begin{equation}
    \alpha_{{\rm c},{\rm sm}}^{(K)} - \alpha_{{\rm c},{\rm sm}}^{(K+1)} = \frac{\kappa_2}{\beta^2} \frac{ 1-2\rho(1-\rho) }{ 2\rho(1-\rho)K(K+1) } \left[1+o_\beta(1)\right] >0.
    \label{eq:weak-signal-smooth-monotonicity}
\end{equation}
Indeed, $\rho(1-\rho)\le1/4$, so $1-2\rho(1-\rho)\ge1/2$.

\subsubsection{ReLU activation}

For ReLU, positive homogeneity gives the leading loss
\begin{equation}
    \ell_{K,{\rm R}}^{(2)} = -\frac2K \sum_{r=1}^K (y_r)_+t_r.
    \label{eq:weak-signal-relu-leading-loss}
\end{equation}
At $M=0$, $y_r\sim\N(0,(1-\rho)q_\Gamma)$, and hence
\begin{equation}
    U_{K,{\rm R}} = \rho \frac{ \E[(y_r)_+^2] }{ q_\Gamma } = \frac{\rho(1-\rho)}2.
    \label{eq:weak-signal-relu-U}
\end{equation}

Let $\bm e_0$ denote the $h$ direction and $\bm e_r$ the $x_r$ direction. The force contribution of view $r$ can be written as
\begin{equation}
    \bm\psi_r := \one_{\{y_r>0\}} \left[ \big( (1-\rho)t_r+\rho y_r \big)\bm e_0 + \sqrt{\rho(1-\rho)} (t_r-y_r)\bm e_r \right].
    \label{eq:weak-signal-relu-psi}
\end{equation}
The leading force is then
\begin{equation}
    \widetilde{\bm F}_{K,{\rm R}} = \frac2K \sum_{r=1}^K \bm\psi_r.
    \label{eq:weak-signal-relu-force}
\end{equation}

\paragraph{Informative susceptibility.}

The common component of $\bm\psi_r$ is
\begin{equation}
    (\bm\psi_r)_h = \one_{\{y_r>0\}} \left[ (1-\rho)t_r+\rho y_r \right].
    \label{eq:weak-signal-relu-psi-h}
\end{equation}
Evaluate \eqref{eq:weak-signal-chi-definition} with the leading force. At $M=0$, $h=y_r+t_r$, while $y_r$ and $t_r$ are independent centered Gaussians with variances $(1-\rho)q_\Gamma$ and $\rho q_\Gamma$. Consequently,
\[
    \E[h(\bm\psi_r)_h] =(1-\rho)\E[t_r^2]\mathbb P(y_r>0) +\rho\E[y_r^2\one_{\{y_r>0\}}] =\rho(1-\rho)q_\Gamma.
\]
Since $h=\sqrt{q_\Gamma}z_h$, summing the $K$ view contributions in \eqref{eq:weak-signal-relu-force} and dividing by $q_\Gamma$ gives
\begin{equation}
    \mathcal{X}_{K,{\rm R}} = 2\rho(1-\rho).
    \label{eq:weak-signal-relu-chi}
\end{equation}

\paragraph{Force variance: diagonal term.}

A direct expansion gives
\begin{equation}
    \|\bm\psi_r\|_2^2 = \one_{\{y_r>0\}} \left[ (1-\rho)t_r^2 + \rho y_r^2 \right].
    \label{eq:weak-signal-relu-psi-norm}
\end{equation}
At $q_\Gamma=1$,
\begin{equation}
    y_r\sim\N(0,1-\rho), \qquad t_r\sim\N(0,\rho),
\end{equation}
and the two variables are independent. Therefore
\begin{equation}
    \E\|\bm\psi_r\|_2^2 = \rho(1-\rho).
    \label{eq:weak-signal-relu-diagonal-force-moment}
\end{equation}

\paragraph{Force variance: off-diagonal term.}

For $r\ne s$, the mask-specific directions $\bm e_r$ and $\bm e_s$ are orthogonal, so only the common components contribute to $\bm\psi_r^\top\bm\psi_s$. We have
\begin{equation}
    (1-\rho)t_r+\rho y_r = 2\rho(1-\rho)h + (2\rho-1) \sqrt{\rho(1-\rho)}\,x_r.
    \label{eq:weak-signal-relu-common-force-component}
\end{equation}
Conditioning on $h=H$, with $H\sim\N(0,1)$, the variables $x_r$ and $x_s$ are independent.

Let
\begin{equation}
    \varphi(u) := \frac{e^{-u^2/2}}{\sqrt{2\pi}}, \qquad \Phi_{\rm G}(u) := \int_{-\infty}^u \varphi(v)\,\dd v.
    \label{eq:weak-signal-relu-normal-functions}
\end{equation}
The conditional expectation of the common force component is
\begin{equation}
    \begin{aligned}
    &\E_x \Bigg[ \one_{ \{ (1-\rho)H + \sqrt{\rho(1-\rho)}\,x >0 \} } \times \left( 2\rho(1-\rho)H + (2\rho-1) \sqrt{\rho(1-\rho)}\,x \right) \Bigg] \\
    &= 2\rho(1-\rho)H \Phi_{\rm G} \left( \sqrt{\frac{1-\rho}{\rho}}\,H \right) + (2\rho-1) \sqrt{\rho(1-\rho)} \, \varphi \left( \sqrt{\frac{1-\rho}{\rho}}\,H \right).
    \end{aligned}
    \label{eq:weak-signal-relu-conditional-common-force}
\end{equation}
Consequently,
\begin{equation}
    \mathcal D_{\rm R}(\rho) = \E[ \bm\psi_r^\top\bm\psi_s ], \qquad r\ne s,
    \label{eq:weak-signal-relu-D-definition}
\end{equation}
is the expectation over $H$ of the square of \eqref{eq:weak-signal-relu-conditional-common-force}.

The required Gaussian identities are
\begin{gather}
    \E \left[ \varphi\!\left( \sqrt{\frac{1-\rho}{\rho}}H \right)^2 \right] = \frac1{2\pi} \sqrt{\frac{\rho}{2-\rho}},
    \label{eq:weak-signal-relu-gaussian-1}
    \\
    \E \left[ H \Phi_{\rm G}\!\left( \sqrt{\frac{1-\rho}{\rho}}H \right) \varphi\!\left( \sqrt{\frac{1-\rho}{\rho}}H \right) \right] = \frac{ \sqrt{\rho(1-\rho)} }{ 2\pi } \sqrt{\frac{\rho}{2-\rho}},
    \label{eq:weak-signal-relu-gaussian-2}
    \\
    \E \left[ H^2 \Phi_{\rm G}\!\left( \sqrt{\frac{1-\rho}{\rho}}H \right)^2 \right] = \frac14 + \frac{\arcsin(1-\rho)}{2\pi} + \frac{1-\rho}{\pi} \sqrt{\frac{\rho}{2-\rho}}.
    \label{eq:weak-signal-relu-gaussian-3}
\end{gather}
Substituting these identities into \eqref{eq:weak-signal-relu-D-definition} gives the closed expression \eqref{eq:weak-signal-relu-D-rho}.

Combining the diagonal and off-diagonal contributions,
\begin{equation}
    T_{K,{\rm R}} = 4 \left[ \mathcal D_{\rm R}(\rho) + \frac{ \rho(1-\rho) - \mathcal D_{\rm R}(\rho) }{K} \right].
    \label{eq:weak-signal-relu-T}
\end{equation}
Together with \eqref{eq:weak-signal-relu-chi},
\begin{equation}
    \frac{ T_{K,{\rm R}} }{ \mathcal{X}_{K,{\rm R}}^2 } = \frac{ \mathcal D_{\rm R}(\rho) }{ \rho^2(1-\rho)^2 } + \frac1K \left[ \frac1{\rho(1-\rho)} - \frac{ \mathcal D_{\rm R}(\rho) }{ \rho^2(1-\rho)^2 } \right].
    \label{eq:weak-signal-relu-output-ratio}
\end{equation}
Substitution into \eqref{eq:weak-signal-generic-threshold} gives \eqref{eq:weak-signal-relu-threshold-K}.

To establish strict monotonicity, Cauchy--Schwarz gives
\begin{equation}
    \mathcal D_{\rm R}(\rho) = \E[ \bm\psi_r^\top\bm\psi_s ] \le \sqrt{ \E\|\bm\psi_r\|_2^2 \E\|\bm\psi_s\|_2^2 } = \rho(1-\rho).
\end{equation}
The inequality is strict for $0<\rho<1$, because independently masked views do not generate identical force vectors almost surely. Hence
\begin{equation}
    \alpha_{{\rm c},{\rm R}}^{(K)} - \alpha_{{\rm c},{\rm R}}^{(K+1)} = \frac{\kappa_2}{\beta^2} \frac{ \rho(1-\rho)-\mathcal D_{\rm R}(\rho) }{ \rho^2(1-\rho)^2K(K+1) } \left[1+o_\beta(1)\right] >0.
    \label{eq:weak-signal-relu-monotonicity}
\end{equation}

\input{appendices/appendix_dynamic_linear_bound.tex}
\input{appendices/appendix_mask_adaptation.tex}
\FloatBarrier

\section{Proofs of the linear reconstruction separation}
\label{app:rigorous-separation}

\begin{assumption}
\label{as:main-separation}
We collect the precise hypotheses for \Cref{thm:main-separation}. The activation is linear, $\sig(h)=h$, and $\laR=0$. The noise law is fixed as the dimension grows; the independence assumptions are those of \eqref{eq:main-data-model}.
\begin{enumerate}[label=\textup{(A\arabic*)},leftmargin=*,itemsep=1pt]
  \item\label{as:model} \textup{(Model and scaling.)} Data are generated by \eqref{eq:main-data-model} with $P_u=P_\lambda=\N(0,1)$; the signal strength $\beta>0$, the masking ratio $\rho\in(0,1)$ and the sample complexity $\alpha=n/d$ are held fixed as $n,d\to\infty$.
  \item\label{as:noise} \textup{(Coordinate noise.)} $P_\gamma$ is a mean-one distribution supported in $[a,\infty)$ for some $a>0$, with $\kappa_2=\E_\gamma[\gamma^2]<\infty$. We write $\gamma_+=\operatorname{ess\,sup}\gamma$, allowing $\gamma_+=\infty$.
  \item\label{as:pca-fail} \textup{(Bounded failure conditions.)} In addition to \ref{as:model} and \ref{as:noise}, $\gamma_+<\infty$ and
        \begin{equation}
            \beta\,\E_\gamma\frac1{\gamma_+-\gamma}<1, \qquad \E_\gamma\frac{\gamma^2}{(\gamma_+-\gamma)^2}=\infty.
            \label{eq:main-bounded-separation-conditions}
        \end{equation}
\end{enumerate}
\end{assumption}

We establish the equivalence of full linear reconstruction and PCA, prove the PCA obstructions, and give finite sufficient sample bounds for dynamic masking. The bounded-noise proof is followed by its extension to variance laws without an upper support bound.

\input{appendices/appendix_linear_separation.tex}
\input{appendices/appendix_dynamic_unbounded.tex}

\section{Bayes-optimal pretraining and AMP}
\label{app:bayes-optimal}

The benchmark in this appendix conditions on the realized variance labels and uses priors matched to the data-generating model. We derive its scalar state evolution and specialize the bipartite AMP of \citet{lesieur2017constrained} in \Cref{subsec:BO-AMP}. The calculation of variance consistency below is distinct from a proof that an estimated-variance algorithm attains the same benchmark.

\subsection{Mapping to bipartite low-rank estimation}
We map the heteroskedastic spiked model to the bipartite low-rank output-channel formulation of \citep{lesieur2015mmse}:
\begin{equation}
    x_{\mu i} = \sqrt{\frac{\beta}{d}}\,\lambda_\mu u_i^\star + \sqrt{\gamma_i^\star}\,z_{\mu i}, \qquad \mu=1,\ldots,n, \quad i=1,\ldots,d,
    \label{eq:BO-original-model}
\end{equation}
with
\begin{equation}
    \frac nd\longrightarrow\alpha, \qquad \lambda_\mu\stackrel{\mathrm{iid}}{\sim}P_\lambda, \qquad u_i^\star\stackrel{\mathrm{iid}}{\sim}P_u, \qquad \gamma_i^\star\stackrel{\mathrm{iid}}{\sim}P_\gamma, \qquad z_{\mu i}\stackrel{\mathrm{iid}}{\sim}\mathcal N(0,1).
\end{equation}
We assume throughout that
\begin{equation}
    \mathbb E[u^\star]=\mathbb E[\lambda]=0, \qquad \mathbb E[(u^\star)^2]=\mathbb E[\lambda^2]=1,
\end{equation}
and that the moments needed below are finite.  In particular, for the linearized weak-recovery calculation we assume
\begin{equation}
    \mathbb E_\gamma[\gamma^{-2}]<\infty.
    \label{eq:BO-negative-moment-assumption}
\end{equation}

We retain $n,d$ for the sample and coordinate counts, respectively, and the sample-first data matrix $X=(x_{\mu i})$. Relative to the cited bipartite formulation, its coordinate and sample counts correspond to $d$ and $n$, and its observation matrix is $X^\top$. We identify
\begin{equation}
    P_U=P_u, \qquad P_V=P_\lambda, \qquad v_\mu^\star=\lambda_\mu.
\end{equation}
The rank-one variable entering the output channel is
\begin{equation}
    w_{\mu i}^\star := \frac{u_i^\star v_\mu^\star}{\sqrt d}.
\end{equation}
The SNR parameter $\beta$ is absorbed into the output channel, which, conditional on the variance label $\gamma$, is
\begin{equation}
    P_{\mathrm{out},\gamma}(x\mid w) = \frac{1}{\sqrt{2\pi\gamma}} \exp\left\{ -\frac{(x-\sqrt\beta\,w)^2}{2\gamma} \right\}.
    \label{eq:BO-output-channel}
\end{equation}
Thus
\begin{equation}
    x_{\mu i} = \sqrt\beta\,w_{\mu i}^\star + \sqrt{\gamma_i^\star}\,z_{\mu i},
\end{equation}
which is exactly Eq.~\eqref{eq:BO-original-model} in the sample-first convention.

\subsection{Asymptotic identifiability of the coordinate variances}
\label{subsec:BO-gamma-identifiability}

The bipartite formulas of \citep{lesieur2017constrained} are written for a known output channel. The state-evolution reference conditions on the realized coordinate variances. In the numerical AMP implementation, we instead estimate these variances from the unlabeled observations.

For a fixed coordinate \(i\), define the empirical second moment
\begin{equation}
    \widehat\gamma_i := \frac1n\sum_{\mu=1}^n x_{\mu i}^2 .
    \label{eq:BO-empirical-gamma}
\end{equation}
Using the data model \eqref{eq:BO-original-model},
\begin{equation}
    \widehat\gamma_i = \gamma_i^\star \frac1n\sum_{\mu=1}^n z_{\mu i}^2 + \frac{\beta (u_i^\star)^2}{d} \frac1n\sum_{\mu=1}^n\lambda_\mu^2 + 2\sqrt{\frac{\beta\gamma_i^\star}{d}}\, u_i^\star \frac1n\sum_{\mu=1}^n \lambda_\mu z_{\mu i}.
    \label{eq:BO-gamma-decomposition-cross}
\end{equation}

We now estimate the three terms separately.  Since \(z_{\mu i}\stackrel{\mathrm{iid}}{\sim}\mathcal N(0,1)\),
\begin{equation}
    \frac1n\sum_{\mu=1}^n z_{\mu i}^2 = 1+O_{\mathbb P}(n^{-1/2}) = 1+O_{\mathbb P}(d^{-1/2}),
    \label{eq:BO-noise-second-moment}
\end{equation}
where the last equality uses \(n/d\to\alpha\in(0,\infty)\) in the high-dimensional limit. Therefore the first term is
\begin{equation}
    \gamma_i^\star \frac1n\sum_{\mu=1}^n z_{\mu i}^2 = \gamma_i^\star+O_{\mathbb P}(d^{-1/2}).
    \label{eq:BO-leading-variance-term}
\end{equation}
Because \(\mathbb E[\lambda^2]=1\), we have that
\begin{equation}
    \frac1n\sum_{\mu=1}^n\lambda_\mu^2 = 1+o(1).
\end{equation}
For each fixed coordinate $i$, $u_i^\star=O_{\mathbb P}(1)$, and hence the second term is subleading
\begin{equation}
    \frac{\beta (u_i^\star)^2}{d} \frac1n\sum_{\mu=1}^n\lambda_\mu^2 = O_{\mathbb P}(d^{-1}).
    \label{eq:BO-signal-bias-order}
\end{equation}
Finally, the variables \(\lambda_\mu z_{\mu i}\) are centered and have unit variance, so
\begin{equation}
    \frac1n\sum_{\mu=1}^n \lambda_\mu z_{\mu i} = O_{\mathbb P}(n^{-1/2}) = O_{\mathbb P}(d^{-1/2}).
\end{equation}
Multiplying by the prefactor \(d^{-1/2}\) shows that the cross term is
\begin{equation}
    2\sqrt{\frac{\beta\gamma_i^\star}{d}}\, u_i^\star \frac1n\sum_{\mu=1}^n \lambda_\mu z_{\mu i} = O_{\mathbb P}(d^{-1}).
    \label{eq:BO-cross-term-order}
\end{equation}

Combining Eqs.~\eqref{eq:BO-leading-variance-term}, \eqref{eq:BO-signal-bias-order}, and \eqref{eq:BO-cross-term-order}, we obtain
\begin{equation}
        \widehat\gamma_i = \gamma_i^\star+O_{\mathbb P}(d^{-1/2}).
    \label{eq:BO-gamma-consistency}
\end{equation}
In particular,
\begin{equation}
    \widehat\gamma_i \xrightarrow{\mathbb P} \gamma_i^\star .
    \label{eq:BO-gamma-convergence}
\end{equation}

Thus each fixed coordinate variance is consistently estimable. In the Gaussian-prior specialization with $\gamma_i^\star\geq a>0$, one can also obtain uniform relative consistency. Conditionally on the teacher and variances, $\widehat\gamma_i/[\gamma_i^\star+\beta(u_i^\star)^2/d]$ has law $\chi_n^2/n$. A union bound and $\max_i(u_i^\star)^2=O_{\mathbb P}(\log d)$ give
\begin{equation}
 \max_i\left|\frac{\widehat\gamma_i}{\gamma_i^\star}-1\right| =O_{\mathbb P}\left(\sqrt{\frac{\log d}{n}} +\frac{\log d}{d}\right) \longrightarrow0.
 \label{eq:BO-gamma-uniform-relative}
\end{equation}
Neither pointwise nor uniform parameter consistency alone establishes that substituting all $\widehat\gamma_i$ into the posterior or AMP preserves its limiting overlap and threshold. That substitution also introduces dependence through the data used to estimate the variances. We therefore state the following equations for the known-variance benchmark; equivalence of its plug-in version requires an additional stability argument.

\subsection{Bayes-optimal output channel and Fisher information}
Conditioned on $\gamma_i^\star$, define the output channel
\begin{equation}
    g_i(x,w) := \log P_{\mathrm{out},\gamma_i^\star}(x\mid w).
\end{equation}
The posterior can be written in the normalized form used in the bipartite low-rank results:
\begin{align}
    P(U,V\mid X,\bm\gamma^\star) \propto{} & \prod_{i=1}^d P_U(\mathrm du_i) \prod_{\mu=1}^n P_V(\mathrm dv_\mu) \nonumber          \\
              & \times \exp\left\{ \sum_{i=1}^d\sum_{\mu=1}^n \left[ g_i\left(x_{\mu i},\frac{u_iv_\mu}{\sqrt d}\right) -g_i(x_{\mu i},0) \right] \right\}.
    \label{eq:BO-posterior-lesieur-form}
\end{align}

Following the Fisher-score notation
\begin{gather}
    S_i(x) := \left. \frac{\partial g_i(x,w)}{\partial w} \right|_{w=0} \\
    R_i(x) := \left. \left[ \left(\frac{\partial g_i(x,w)}{\partial w}\right)^2 + \frac{\partial^2g_i(x,w)}{\partial w^2} \right] \right|_{w=0}.
\end{gather}
In particular, for the Gaussian channel \eqref{eq:BO-output-channel},
\begin{equation}
    S_i(x) = \frac{\sqrt\beta}{\gamma_i^\star}x,
    \label{eq:BO-score-Gaussian}
\end{equation}
whereas
\begin{equation}
    R_i(x) = \beta\left( \frac{x^2}{(\gamma_i^\star)^2} - \frac1{\gamma_i^\star} \right).
    \label{eq:BO-R-Gaussian}
\end{equation}
Under the null output law $x\sim\mathcal N(0,\gamma_i^\star)$,
\begin{equation}
    \mathbb E[S_i(x)]=0, \qquad \mathbb E[R_i(x)]=0.
\end{equation}
The local Fisher-information parameter is defined by
\begin{equation}
    \frac1{\Delta_i} := \mathbb E_{x\sim P_{\mathrm{out},\gamma_i^\star}(\cdot\mid0)} \left[S_i(x)^2\right].
    \label{eq:BO-local-Delta-definition}
\end{equation}
and using Eq.~\eqref{eq:BO-score-Gaussian},
\begin{equation}
    \frac1{\Delta_i} = \frac{\beta}{\gamma_i^\star}, \qquad \Delta_i = \Delta(\gamma_i^\star) = \frac{\gamma_i^\star}{\beta}.
    \label{eq:BO-local-Delta-Gaussian}
\end{equation}
Thus heteroskedasticity enters the Bayes-optimal low-rank equations only in the site-dependent Fisher information
\begin{equation}
    \Delta(\gamma)^{-1}=\beta\gamma^{-1}.
\end{equation}

\subsection{Denoisers}

For a prior $P_U$, define the scalar partition function
\begin{equation}
    Z_U(A,B) := \int P_U(\mathrm du) \exp\left\{ Bu-\frac A2u^2 \right\},
    \label{eq:BO-ZU}
\end{equation}
and the posterior-mean denoiser
\begin{equation}
    f_{\mathrm{in}}^U(A,B) := \frac{\partial}{\partial B}\log Z_U(A,B) = \frac{ \int P_U(\mathrm du)\,u \exp\{Bu-Au^2/2\} }{ Z_U(A,B) }.
    \label{eq:BO-fin-U}
\end{equation}
Define $Z_V$ and $f_{\mathrm{in}}^V$ analogously with $P_V=P_\lambda$.

For a scalar Bayes channel of $A$ introduce the overlap response functions
\begin{align}
    \mathcal F_U(A) :={} & \mathbb E_{u^\star,W} \left[ u^\star f_{\mathrm{in}}^U \left(A,Au^\star+\sqrt A\,W\right) \right],
    \label{eq:BO-FU-definition}
    \\
    \mathcal F_V(A) :={} & \mathbb E_{v^\star,W} \left[ v^\star f_{\mathrm{in}}^V \left(A,Av^\star+\sqrt A\,W\right) \right],
    \label{eq:BO-FV-definition}
\end{align}
where
\begin{equation}
    u^\star\sim P_U, \qquad v^\star\sim P_V, \qquad W\sim\mathcal N(0,1),
\end{equation}
independently.  By the Nishimori identity,
\begin{align}
    \mathcal F_U(A) & = \mathbb E_{u^\star,W} \left[ f_{\mathrm{in}}^U \left(A,Au^\star+\sqrt A\,W\right)^2 \right], \\
    \mathcal F_V(A) & = \mathbb E_{v^\star,W} \left[ f_{\mathrm{in}}^V \left(A,Av^\star+\sqrt A\,W\right)^2 \right].
\end{align}

\input{appendices/appendix_bayes_amp.tex}

\subsection{State evolution}
\label{subsec:BO-SE-lesieur}

Let
\begin{equation}
    M_v^t := \frac1n\sum_{\mu=1}^n \widehat v_\mu^t v_\mu^\star
\end{equation}
denote the overlap on the latent variables side. On the coordinate side (spike), the effective channel depends on the quenched variances. For a site with label $\gamma$, the bipartite state-evolution field has to be rescaled with
\begin{equation}
    A_U^t(\gamma) = \frac{\alpha M_v^t}{\Delta(\gamma)}.
    \label{eq:BO-AU-general}
\end{equation}
Therefore the conditional coordinate overlap is
\begin{equation}
    M_u^t(\gamma) = \mathcal F_U\left( \frac{\alpha M_v^t}{\Delta(\gamma)} \right).
    \label{eq:BO-Mu-gamma-general}
\end{equation}

and asymptotically
\begin{equation}
    \overline M_u^t := \lim_{d\to\infty} \frac1d\sum_{i=1}^d \frac{M_{u,i}^t}{\Delta_i} = \mathbb E_\gamma \left[ \frac{M_u^t(\gamma)}{\Delta(\gamma)} \right].
    \label{eq:BO-Mubar-general}
\end{equation}
The $V$-side update is consequently
\begin{equation}
    M_v^{t+1} = \mathcal F_V\left(\overline M_u^t\right).
    \label{eq:BO-Mv-update-general}
\end{equation}

For the heteroskedastic Gaussian channel $\Delta(\gamma)=\gamma/\beta$, these equations become
\begin{gather}
    M_u^t(\gamma) = \mathcal F_U\left( \frac{\alpha\beta}{\gamma}M_v^t \right),
    \label{eq:BO-Mu-gamma-Gaussian}
    \\
    \overline M_u^t = \beta\, \mathbb E_\gamma \left[ \frac{M_u^t(\gamma)}{\gamma} \right],
    \label{eq:BO-Mubar-Gaussian}
    \\
    M_v^{t+1} = \mathcal F_V\left(\overline M_u^t\right).
    \label{eq:BO-Mv-update-Gaussian}
\end{gather}

\subsection{Fixed-point equations in the present notation}

Returning to
\begin{equation}
    v_\mu^\star=\lambda_\mu, \qquad M_v=q_\lambda, \qquad M_u(\gamma)=q_u(\gamma),
\end{equation}
the coordinate-side scalar precision is
\begin{equation}
    \widehat q_u(\gamma) = \frac{\alpha\beta}{\gamma}q_\lambda.
    \label{eq:BO-qhat-u-original}
\end{equation}
The sample-amplitude scalar precision is the Fisher-weighted coordinate overlap
\begin{equation}
    \widehat q_\lambda = \overline M_u = \beta\mathbb E_\gamma \left[ \frac{q_u(\gamma)}{\gamma} \right].
    \label{eq:BO-qhat-lambda-original}
\end{equation}
Thus the fixed-point equations are
\begin{gather}
    q_u(\gamma) = \mathbb E_{u^\star,W} \left[ u^\star f_{\mathrm{in}}^u\left( \widehat q_u(\gamma), \widehat q_u(\gamma)u^\star + \sqrt{\widehat q_u(\gamma)}\,W \right) \right],
    \label{eq:BO-qu-original}
    \\
    q_\lambda = \mathbb E_{\lambda,W} \left[ \lambda f_{\mathrm{in}}^\lambda\left( \widehat q_\lambda, \widehat q_\lambda\lambda + \sqrt{\widehat q_\lambda}\,W \right) \right],
    \label{eq:BO-qlambda-original}
\end{gather}
with conjugate relations
\begin{equation}
        \widehat q_u(\gamma) = \frac{\alpha\beta}{\gamma}q_\lambda, \qquad \widehat q_\lambda = \beta\mathbb E_\gamma \left[ \frac{q_u(\gamma)}{\gamma} \right].
    \label{eq:BO-conjugates-original}
\end{equation}
By Nishimori,
\begin{align}
    q_u(\gamma) & = \mathbb E_{u^\star,W} \left[ f_{\mathrm{in}}^u\left( \widehat q_u(\gamma), \widehat q_u(\gamma)u^\star + \sqrt{\widehat q_u(\gamma)}W \right)^2 \right], \\
    q_\lambda & = \mathbb E_{\lambda,W} \left[ f_{\mathrm{in}}^\lambda\left( \widehat q_\lambda, \widehat q_\lambda\lambda + \sqrt{\widehat q_\lambda}W \right)^2 \right].
\end{align}

\subsection{Continuous weak-recovery threshold}
\label{subsec:BO-weak-threshold}

The uninformative fixed point is
\begin{equation}
    q_\lambda=0, \qquad q_u(\gamma)=0.
\end{equation}
On the matched Nishimori trajectory, and under the scalar-response regularity used in the linearization, at small order
\begin{equation}
    \mathcal F_U(A)=A+o(A), \qquad \mathcal F_V(A)=A+o(A).
    \label{eq:BO-small-A-expansion}
\end{equation}
Linearizing Eq.~\eqref{eq:BO-Mu-gamma-Gaussian} around $M_v=0$ gives
\begin{equation}
    M_u^t(\gamma) = \frac{\alpha\beta}{\gamma}M_v^t + o(M_v^t).
    \label{eq:BO-linear-Mu}
\end{equation}
Consequently,
\begin{equation}
    \overline M_u^t = \beta\mathbb E_\gamma \left[ \frac{M_u^t(\gamma)}{\gamma} \right] = \alpha\beta^2 \mathbb E_\gamma[\gamma^{-2}] M_v^t + o(M_v^t).
    \label{eq:BO-linear-Mubar}
\end{equation}
The $V$-side update then gives
\begin{equation}
    M_v^{t+1} = \alpha\beta^2 \mathbb E_\gamma[\gamma^{-2}] M_v^t + o(M_v^t).
    \label{eq:BO-linear-map}
\end{equation}
For Gaussian priors, dominated convergence justifies the integrated linear coefficient with only \eqref{eq:BO-negative-moment-assumption}. A quadratic remainder generally requires an inverse third moment; the linear threshold does not require that stronger expansion. Therefore the uninformative fixed point is locally stable when
\begin{equation}
    \alpha\beta^2 \mathbb E_\gamma[\gamma^{-2}]<1
\end{equation}
and loses stability when
\begin{equation}
    \alpha\beta^2 \mathbb E_\gamma[\gamma^{-2}]>1.
\end{equation}
The Bayes-optimal continuous weak-recovery threshold is consequently
\begin{equation}
    \alpha_{\mathrm c}^{\mathrm{BO}} = \frac{1}{ \beta^2\mathbb E_\gamma[\gamma^{-2}] }.
    \label{eq:BO-final-threshold}
\end{equation}
This is the local continuous instability of the uninformative Bayes fixed point; for generic priors it does not by itself exclude an informative branch appearing discontinuously at a smaller sample complexity. For the homogeneous Gaussian noise-specialization $P_\gamma=\delta_1$, this reduces to
\begin{equation}
    \alpha_{\mathrm c}^{\mathrm{BO}} = \frac1{\beta^2},
\end{equation}

\subsection{Gaussian-prior specialization}
\label{subsec:BO-Gaussian-priors}

If
\begin{equation}
    P_u=P_\lambda=\mathcal N(0,1),
\end{equation}
then
\begin{equation}
    Z(A,B) = \frac1{\sqrt{1+A}} \exp\left\{ \frac{B^2}{2(1+A)} \right\},
\end{equation}
so
\begin{gather}
    f_{\mathrm{in}}(A,B) = \frac{B}{1+A} \\
    \mathcal F(A) = \frac{A}{1+A}.
\end{gather}
The coordinate overlap is therefore
\begin{equation}
    q_u(\gamma) = \frac{ \alpha\beta q_\lambda }{ \gamma+\alpha\beta q_\lambda }.
    \label{eq:BO-Gaussian-prior-qu}
\end{equation}
The sample-side precision is
\begin{gather}
    \widehat q_\lambda = \beta\mathbb E_\gamma \left[ \frac{q_u(\gamma)}{\gamma} \right] = \alpha\beta^2q_\lambda \mathbb E_\gamma \left[ \frac1{ \gamma(\gamma+\alpha\beta q_\lambda) } \right],
    \label{eq:BO-Gaussian-prior-qhatlambda}
    \\
    q_\lambda = \frac{\widehat q_\lambda}{1+\widehat q_\lambda}.
    \label{eq:BO-Gaussian-prior-qlambda}
\end{gather}
Linearizing Eqs.~\eqref{eq:BO-Gaussian-prior-qu}--\eqref{eq:BO-Gaussian-prior-qlambda} again yields Eq.~\eqref{eq:BO-final-threshold}.

\paragraph{Complete overlap curve.} For $q_\lambda>0$, the preceding equations are equivalent to the single scalar equation
\begin{equation}
 \frac1{1-q_\lambda} =\alpha\beta^2\E_\gamma \frac1{\gamma(\gamma+\alpha\beta q_\lambda)}, \qquad 0<q_\lambda<1.
 \label{eq:BO-complete-scalar-root}
\end{equation}
The left side strictly increases from one to infinity; the right side strictly decreases from $\alpha\beta^2\E_\gamma\gamma^{-2}$. Hence a positive fixed point exists if and only if $\alpha\beta^2\E_\gamma\gamma^{-2}>1$, and it is unique within these Gaussian-prior equations. Below and at this threshold only the uninformative fixed point remains. This excludes a competing positive fixed point within this scalar reduction, not an algorithmic failure caused by its initialization or finite-dimensional dynamics.

The Bayes-optimal cosine similarity plotted in the figures is $\sqrt{\mathcal O_{\rm BO}}$, where $\mathcal O_{\rm BO}:=\mathbb E_\gamma[q_u(\gamma)]$. For homogeneous noise, elimination of $q_\lambda$ recovers $\mathcal O_{\rm BO}=(\alpha\beta^2-1)/(\alpha\beta^2+\beta)$ above threshold, in agreement with the PCA benchmark.

For Gaussian priors and known covariance with compact positive variance support, the weak-estimation threshold also has an independent rigorous justification through Proposition~4.1 and Corollary~4.3 of \citet{ZhangMondelli2024}, after transposing the model and identifying their Section~4 SNR with $\beta$ and aspect ratio with $\alpha$. That result concerns sign-invariant estimation and has covariance-support assumptions; it does not by itself justify plug-in AMP or the full unwhitened cosine curve for arbitrary unbounded laws.

% =============================================================================
\newpage

\section{Additional observables and finite-shot downstream analysis}
\label{app:additional-observables}
% =============================================================================
\subsection{Training and test errors}
\label{subsec:observables}
Besides the cosine similarity defined in the main text, we can also track the training and test error of the pretraining for the autoencoder. Using the objective $\mathcal R_{K,d}$ in \eqref{eq:main-finite-K-objective}, we define the centered training reconstruction error per sample by removing the ridge penalty and subtracting the risk assuming $\widehat \bw=\mathbf{0}$,
\begin{equation}
    \Delta\mathcal L_{{\rm train},d} :=\frac1n\left[ \mathcal R_{K,d}(\widehat\bw)-\mathcal R_{K,d}(\mathbf 0) -\frac{\laR}{2}\|\widehat\bw\|_2^2 \right].
    \label{eq:train-error}
\end{equation}
For dynamic masking, the view average is replaced by an expectation over fresh masks on the training samples. The centered population test error uses an independent sample and an independent fresh mask:
\begin{equation}
    \Delta\mathcal L_{{\rm test},d} :=\E_{\bx_{\rm new},\bchi_{\rm new}} \left[ \left\|\bchi_{\rm new}\odot \left[\bx_{\rm new}-\frac{\widehat\bw}{\sqrt d} \sig(h_{\vis}^{\rm new})\right]\right\|_2^2 -\|\bchi_{\rm new}\odot\bx_{\rm new}\|_2^2 \right].
    \label{eq:test-error}
\end{equation}
Both quantities exclude the ridge penalty. Lower values are better. In the simulations, the test expectation is estimated using independent test samples and masks.

At the replica saddle, these observables are
\begin{equation}
    \Delta\mathcal L_{\rm train} =\E_{\lambda,\bz} [\ell_{\out}(\bfield_\star;q_I,q_\Gamma)],
    \label{eq:train-error-RS}
\end{equation}
where $\bfield_\star$ is the training-sample proximal field, and
\begin{equation}
    \Delta\mathcal L_{\rm test} =\E_{\lambda,z}[\ell_{\rm dyn}(h_0;q_I,q_\Gamma)], \qquad h_0=\sqrt{\beta}\lambda M+\sqrt{q_\Gamma}\,z, \quad z\sim\N(0,1).
    \label{eq:test-error-RS}
\end{equation}

\paragraph{Why center the reconstruction error?} The population test loss for the null estimator is
\begin{equation}
    \mathcal L_{0,{\rm test},d} =\rho\sum_{i=1}^d\gamma_i +\frac{\rho\beta}{d}\|\bu^\star\|_2^2 \sim\rho d,
\end{equation}
Under the order-one scaling of the replica solution, the reconstruction-dependent terms are only $O(1)$, so this $O(d)$ baseline dominates the raw test loss. Subtracting it reduces to:
\begin{equation}
    \Delta\mathcal L_{{\rm test},d} =\E\left[\sum_i\chi_i^{\rm new} \left(\widehat x_i^2-2x_i^{\rm new}\widehat x_i\right)\right], \qquad \widehat\bx=\frac{\widehat\bw}{\sqrt d}\sig(h_{\vis}^{\rm new}).
\end{equation}
The centered error therefore isolates the $O(1)$ contribution that changes with the learned representation. The same cancellation applies to the training error, using its own samples and masks.

\subsection{Downstream task}
\label{subsec:downstream-classification}
% =============================================================================

After self-supervised pretraining, we freeze the learned weight vector $\widehat{\bw}$ and evaluate it on an independent labeled dataset
\begin{equation}
    \mathcal D_{\rm ds} = \left\{ (\bx_\mu^{\rm ds},y_\mu) \right\}_{\mu=1}^{n_{\rm ds}}, \qquad y_\mu=\operatorname{sign}(\lambda_\mu), \qquad n_{\rm ds}=O(1).
\end{equation}
We assume $P_\lambda(\{0\})=0$, so labels lie in $\{-1,+1\}$. No masking is applied at downstream time.

For a fresh downstream sample, the frozen preactivation satisfies
\begin{equation}
    H := \frac1{\sqrt d}\widehat{\bw}^{\top}\bx^{\rm ds} \dto \sqrt{\beta}\,M\lambda + \sqrt{q_\Gamma}\,Z, \qquad Z\sim\N(0,1).
    \label{eq:downstream-scalar-channel}
\end{equation}
We use this preactivation directly as the downstream representation. The downstream channel therefore depends only on the pretraining overlaps $M$ and $q_\Gamma$: $M$ controls the informative component and $q_\Gamma$ the projected noise variance.

We fit a one-dimensional logistic probe,
\begin{equation}
    \widehat a = \argmin_{a\in\R} \sum_{\mu=1}^{n_{\rm ds}} \log\!\left( 1+\exp[-y_\mu a H_\mu] \right).
    \label{eq:downstream-logistic-probe}
\end{equation}
Its orientation is determined by
\begin{equation}
    \operatorname{sign}(\widehat a) = \operatorname{sign} \left( \sum_{\mu=1}^{n_{\rm ds}}y_\mu H_\mu \right).
    \label{eq:downstream-probe-orientation}
\end{equation}
If the unregularized probe is separable, this orientation is understood along a minimizing sequence. Adding a positive quadratic penalty makes the minimizer finite and preserves the same orientation rule.

Assume $M\ne0$ in the following orientation formulas. The population-optimal orientation of the frozen feature is
\begin{equation}
    s_\star = \operatorname{sign} \left( \E_{\lambda,Z} \left[ \operatorname{sign}(\lambda) \left( \sqrt{\beta}\,M\lambda + \sqrt{q_\Gamma}\,Z \right) \right] \right) = \operatorname{sign}(M),
    \label{eq:downstream-population-orientation}
\end{equation}
where we used
\begin{equation}
    \E_{\lambda,Z} \left[ \operatorname{sign}(\lambda)H \right] = \sqrt{\beta}\,M\,\E_\lambda[|\lambda|].
\end{equation}
The corresponding population classification error is
\begin{equation}
    \epsilon_{\rm ds}^{\infty} = \mathbb P \left[ s_\star\, \operatorname{sign}(\lambda)\, \left( \sqrt{\beta}\,M\lambda + \sqrt{q_\Gamma}\,Z \right) <0 \right].
    \label{eq:downstream-population-error}
\end{equation}

For $n_{\rm ds}=O(1)$ labeled examples, the probability that the fitted probe chooses the correct population orientation is
\begin{equation}
    p_{n_{\rm ds}} = \mathbb P \left[ s_\star \sum_{\mu=1}^{n_{\rm ds}} \operatorname{sign}(\lambda_\mu) \left( \sqrt{\beta}\,M\lambda_\mu + \sqrt{q_\Gamma}\,Z_\mu \right) >0 \right],
    \label{eq:downstream-orientation-probability}
\end{equation}
where $(\lambda_\mu,Z_\mu)$ are i.i.d. copies of $(\lambda,Z)$.

The expected few-shot classification error is therefore
\begin{equation}
    \epsilon_{\rm ds}^{(n_{\rm ds})} = p_{n_{\rm ds}}\epsilon_{\rm ds}^{\infty} + (1-p_{n_{\rm ds}}) (1-\epsilon_{\rm ds}^{\infty}).
    \label{eq:downstream-finite-error}
\end{equation}

At $M=0$, both orientations give chance-level performance, and we set $\epsilon_{\rm ds}^{\infty}=\epsilon_{\rm ds}^{(n_{\rm ds})}=1/2$. Predictions with zero score are resolved by a fair random choice, including when the frozen feature vanishes identically.

These downstream formulas are conditional on the limiting overlaps $(M,q_\Gamma)$. For the autoencoders, the replica saddle supplies their predicted values; the subsequent calculation evaluates the specified logistic probe.

\paragraph{Gaussian amplitudes.} For $P_\lambda=\N(0,1)$, write $A=\beta M^2/q_\Gamma$ and $k=n_{\rm ds}\geq1$. The population error is
\begin{equation}
    \epsilon_{\rm ds}^{\infty}(A) =\frac1\pi\arccos\sqrt{\frac{A}{1+A}}.
    \label{eq:downstream-Gaussian-population-error}
\end{equation}
Conditioning the orientation statistic in \eqref{eq:downstream-orientation-probability} on the amplitudes gives
\begin{equation}
    p_k(A) =\E_{\lambda_1,\ldots,\lambda_k} \Phi\!\left(\sqrt{\frac Ak}\sum_{\mu=1}^k|\lambda_\mu|\right), \qquad \lambda_\mu\stackrel{\rm iid}{\sim}\N(0,1),
    \label{eq:downstream-Gaussian-orientation}
\end{equation}
where $\Phi$ is the standard normal cumulative distribution function. Together with \eqref{eq:downstream-finite-error}, these expressions give the finite-label accuracy curve. Set $A=0$ when the feature vanishes; the error is then $1/2$.

\paragraph{Interpretation} The downstream scalar channel,
\begin{equation}
    H = \sqrt{\beta}\,M\lambda + \sqrt{q_\Gamma}\,Z,
\end{equation}
is controlled by the downstream signal-to-noise ratio
\begin{equation}
    \frac{\beta M^2}{q_\Gamma} = \frac{ \beta\,\theta_{\bu^\star}^2 }{ q_\Gamma/q_I },
    \label{eq:main-downstream-snr}
    \qquad \theta_{\bu^\star} = \frac{|M|}{\sqrt{q_I}}.
\end{equation}
The ratio
\begin{equation}
    \frac{q_\Gamma}{q_I} = \frac{ \sum_i \gamma_i^\star \widehat w_i^2 }{ \sum_i \widehat w_i^2 }
\end{equation}
can therefore be interpreted as the effective noise level seen by the learned representation. Whereas $\beta$ specifies the signal-to-noise ratio in the data model, \eqref{eq:main-downstream-snr} describes the signal-to-noise ratio after projection onto the learned direction. Hence downstream performance depends on two distinct properties of the pretrained representation: its alignment with the planted feature, measured by $\theta_{\bu^\star}$, and its effective exposure to heteroskedastic noise, measured by $q_\Gamma/q_I$. Changes in the number of masks $K$ or in the activation function can improve downstream performance through either mechanism.

\paragraph{AMP pretraining reference with the same probe.} At the matched state-evolution solution of \Cref{app:bayes-optimal}, the unweighted AMP feature has $|M_{\rm AMP}|=\E_\gamma q_u(\gamma)$ and $q_{\Gamma,{\rm AMP}}=\E_\gamma[\gamma q_u(\gamma)]$ by the Nishimori identities. Its downstream SNR is therefore
\begin{equation}
    A_{\rm AMP} =\frac{\beta\bigl(\E_\gamma q_u(\gamma)\bigr)^2} {\E_\gamma[\gamma q_u(\gamma)]}.
    \label{eq:downstream-AMP-snr}
\end{equation}
For Gaussian priors, solve \eqref{eq:BO-complete-scalar-root}, obtain $q_u(\gamma)$ from \eqref{eq:BO-Gaussian-prior-qu}, and substitute $A_{\rm AMP}$ into \eqref{eq:downstream-Gaussian-population-error}, \eqref{eq:downstream-Gaussian-orientation}, and \eqref{eq:downstream-finite-error}. At the uninformative fixed point set $A_{\rm AMP}=0$. This is the green downstream reference in \Cref{fig:mask-sample-efficiency}: Bayes-optimal pretraining followed by the same finite-label logistic probe as the autoencoders. Its identification with AMP performance requires the corresponding state-evolution solution to be attained. Algorithmically, freeze the AMP estimate as $\widehat{\bw}=\widehat{\bu}$, form the unweighted scores \eqref{eq:downstream-scalar-channel}, and fit \eqref{eq:downstream-logistic-probe}.

\newpage

% =============================================================================
\section{Experimental settings and reporting}
\label{app:experimental-reporting}

We report the settings needed to interpret the comparisons below. Source code and per-run configurations are provided in the accompanying code and data supplement. In all figures, error bars and shaded bands indicate $\pm1$ sample standard deviation (SD) across seeds around the empirical mean for each condition. Single-run results are shown without error bars.

\subsection{Masking and augmentation protocols}
\label{app:experimental-protocols}

\paragraph{Mask diversity and training budget.} For finite $K$, each example has $K$ independently drawn masks, fixed throughout pretraining. Static masking is $K=1$; dynamic masking draws fresh masks on repeated presentations. The number of underlying examples is unchanged. In the shallow experiments, each update averages over the fixed mask collection, or uses one fresh mask for dynamic training (\Cref{app:experimental-metadata}). The CNN training schedule is specified below.

\paragraph{CNN data and evaluation.} \Cref{fig:static-vs-dynamic}(b) and \Cref{fig:K-interpolation}(b) use CIFAR-10 \citep{krizhevsky2009learning} with five independent encoder-initialization and data-split seeds. Each seed uses a balanced $40{,}000$-image training subset for both pretraining and labeled frozen probing. Masked pretraining and linear probing use $10{,}000$ held-out images for validation. The separate official $10{,}000$-image test set supplies final accuracy. One probe is fitted per encoder; points report mean test accuracy and bars show $\pm1$ SD across the five seeds. The masked comparisons share the split and initialization within each seed.

\paragraph{CNN architecture and preprocessing.} All CNN conditions use the same $854{,}563$-parameter autoencoder. Images are scaled to $[0,1]$ and standardized using channel statistics from the pretraining subset. The encoder receives the three masked image channels and a binary mask channel, with hidden pixels set to zero after standardization. Its four convolutions have output widths $(32,64,128,128)$, kernel sizes $(3,4,4,4)$, strides $(1,2,2,2)$, padding one, and ReLU activations. The decoder uses three stride-two, kernel-four transposed convolutions with widths $(128,64,32)$, padding one and ReLU, followed by a kernel-three, padding-one convolution to RGB. There are no skip connections or normalization layers. Global spatial averaging of the encoder's $128\times4\times4$ output gives the frozen representation. Images contain $64$ nonoverlapping $4\times4$ patches; each mask hides $\operatorname{round}(64\rho)$ patches chosen uniformly without replacement.

\paragraph{CNN optimization.} For the main CNN sweeps in \Cref{fig:static-vs-dynamic,fig:K-interpolation,fig:app-cnn-augmentation,fig:app-cnn-training-original}, we use AdamW \citep{loshchilov2019decoupled} with batch size $512$, learning rate $10^{-3}$, weight decay $10^{-4}$, and gradient norm clipping at $5$. A validation-based plateau scheduler halves the learning rate, with patience five epochs and a minimum of $10^{-5}$.

For finite $K$, each epoch traverses the fixed collection of $40{,}000K$ image--mask pairs. Dynamic masking draws a fresh mask at every presentation throughout training. We group $32$ presentations per image into one epoch for validation and scheduling, corresponding to $2{,}500$ optimizer updates. Under dynamic masking, each image therefore receives $32E$ independent mask draws over $E$ epochs.

Validation uses one fixed mask per held-out image after each epoch. We retain the checkpoint with the lowest validation reconstruction MSE and stop after $15$ epochs without improvement, with a minimum of $20$ and a ceiling of $460$ epochs. Under these parameters, our setting is always early-stopped and not undertrained.

\paragraph{CNN linear probes.} Probes receive unmasked, unaugmented images and use the frozen $128$-dimensional representation, standardized coordinatewise using the probe training set. A linear ten-class head is fitted with cross-entropy, AdamW, batch size $512$, constant learning rate $10^{-2}$, and zero weight decay. Classifier initialization and minibatch-shuffling seeds are fixed across encoders. The head with lowest validation cross-entropy is retained; training stops after $25$ epochs without improvement, with a minimum of $20$ and a ceiling of $200$ epochs. All reported masked-model probes satisfy this stopping criterion. The unmasked checkpoint sweep uses the same probe settings with a $300$-epoch ceiling (\Cref{app:unmasked-training}).

\paragraph{ViT architecture and data.} Our implementation adapts the U-MAE code of \citet{zhang2022mask}, which builds on MAE \citep{he2022masked}.\footnote{U-MAE implementation: \url{https://github.com/zhangq327/U-MAE}.} We use ViT-Base/16 on ImageNet-100, with $126{,}689$ training and $5{,}000$ validation images across $100$ classes. Images have resolution $224\times224$, yielding $196$ patches. The encoder has $12$ blocks, width $768$ and $12$ attention heads; the decoder has $8$ blocks, width $512$ and $16$ heads, with MLP expansion ratio $4$ throughout. Pretraining minimizes MSE on masked patches, using targets standardized within each patch. U-MAE's uniformity penalty is disabled in all reported ViT runs; AdamW weight decay is retained. The inherited online classifier receives detached features and does not train the encoder. Reported accuracies use a separately fitted frozen probe.

\paragraph{ViT preprocessing and masking.} Without stochastic cropping or flipping, images are resized bicubically to shorter side $256$ and center-cropped to $224\times224$. The augmented condition instead uses random resized crops with area fraction in $[0.2,1]$ and aspect ratio in $[3/4,4/3]$, followed by horizontal flipping with probability $1/2$. Both conditions use ImageNet channel normalization. A random patch permutation retains $\lfloor196(1-\rho)\rfloor$ visible patches. Static masking fixes this permutation per training image; dynamic masking redraws it at every presentation. With cropping and flipping, the static patch positions refer to the transformed image, whose content can change across epochs.

\paragraph{ViT pretraining.} All masked runs in \Cref{fig:static-vs-dynamic}(c) and \Cref{fig:masking-augmentation} use $100$ epochs of AdamW with momentum parameters $(0.9,0.95)$, weight decay $0.05$, peak learning rate $7.5\times10^{-5}$, ten warmup epochs and cosine decay to zero. The effective batch size is $128$: batches of $64$ with two accumulation steps without cropping/flipping, and batches of $128$ without accumulation in the augmented condition. Biases and normalization parameters are excluded from weight decay. We probe the final pretraining checkpoint.

The comparison uses one common seed, with identical training and validation splits, initial autoencoder weights, optimizer settings, and pretraining budget across conditions. A separate frozen probe is fitted to each encoder using the same probe initialization and training and evaluation protocol. In contrast, the unaugmented curves in \Cref{fig:static-vs-dynamic}(c) summarize three encoder initializations (seeds $0,1,2$), with means and $\pm1$ SD.

\paragraph{ViT linear probes.} The frozen encoder processes all patches. Its $768$-dimensional class-token representation feeds a non-affine batch-normalization layer and a linear $100$-class head. The head is trained with cross-entropy and LARS \citep{you2017large}, batch size $256$, momentum $0.9$, zero weight decay, peak learning rate $0.1$, ten warmup epochs and cosine decay to zero over $50$ epochs. Probe training uses random resized crops with area fraction in $[0.08,1]$, aspect ratio in $[3/4,4/3]$, and horizontal flipping with probability $1/2$ for every pretraining condition. Validation uses the deterministic resize and center crop above. Thus the no-cropping/no-flipping condition refers to pretraining. Probe seeds are fixed at zero, and we report validation top-5 accuracy at the final probe epoch. Unmasked checkpoint selection is described in \Cref{app:unmasked-training}.

\Cref{tab:vit-probe-augmentation} ablates the use of stochastic augmentation during linear probing. We find that including or omitting stochastic augmentation yields quantitatively similar downstream performance.

\paragraph{Longer ViT training.} The $100$-epoch protocol above is used to compare masking conditions under a common training budget. To assess the attainable absolute performance with the same architecture, we also adapt the longer training schedules of U-MAE and MAE \citep{zhang2022mask,he2022masked}, retaining dynamic masking at $\rho=0.75$, no stochastic cropping or flipping during pretraining, and no uniformity penalty. \Cref{tab:vit-training-budget} reports these checks, all using initialization zero and ImageNet-100 pretraining only. The $200$-epoch schedule uses effective batch $1024$ ($128\times8$ accumulation), $40$ warmup epochs and peak learning rate $6\times10^{-4}$. The longer MAE schedule uses effective batch $4096$ ($64\times64$ accumulation), $40$ warmup epochs and peak learning rate $2.4\times10^{-3}$; it was stopped early, and the evaluated checkpoint is after $641$ epochs of the planned $1600$-epoch cosine schedule. Other pretraining settings remain as above.

Each encoder is evaluated with the existing probe and with a $90$-epoch probe using effective batch $16{,}384$ ($512\times32$ accumulation), peak learning rate $6.4$ and ten warmup epochs, following the MAE linear-probing recipe. Both probes use cropping and flipping. The larger batch is implemented by gradient accumulation; batch normalization uses the physical batch, and incomplete accumulation windows at the end of an epoch do not produce an optimizer update. With the $200$-epoch encoder, the longer probe reaches $61.50\%$ top-1 and $86.04\%$ top-5 accuracy without pretraining augmentation. For context, \citet{zhang2022mask} report $61.2\%$ and $67.5\%$ top-1 for MAE and regularized U-MAE, respectively, on ImageNet-100 with ViT-Base. These published results provide a reference under different training conditions. The additional runs show that the shorter protocol used for our masking comparisons does not exhaust the performance of the architecture; they vary the optimization schedule as well as batch size. See \cite{baevski2023efficient} for experiments showing the effect of the batch size on downstream performance.

\begin{table}[!htbp]
\centering
\small
\setlength{\tabcolsep}{5pt}
\begin{tabular}{lrrrrrr}
\toprule
\multicolumn{3}{c}{Pretraining} & \multicolumn{2}{c}{Linear probe} &
\multicolumn{2}{c}{Accuracy (\%, $\uparrow$)} \\
\cmidrule(lr){1-3}\cmidrule(lr){4-5}\cmidrule(lr){6-7}
Schedule & Epochs & Batch & Epochs & Batch & Top-1 & Top-5 \\
\midrule
Main comparison & 100 & 128 & 50 & 256 & 39.96 & 68.78 \\
U-MAE & 200 & 1024 & 50 & 256 & 47.56 & 76.14 \\
U-MAE & 200 & 1024 & 90 & 16384 & 61.50 & 86.04 \\
MAE & 641 & 4096 & 50 & 256 & 51.38 & 79.60 \\
MAE & 641 & 4096 & 90 & 16384 & 59.12 & 84.76 \\
\bottomrule
\end{tabular}
\caption{\textbf{Higher accuracy with longer ViT training schedules.} Frozen-probe validation accuracy for dynamic masking at $\rho=0.75$, without cropping or flipping during pretraining. Batch sizes are effective batches. All rows use the same initialization; repeated pretraining rows probe the same encoder checkpoint. Schedule names indicate adaptations of the published optimization recipes, with uniformity regularization disabled. Results use one run per configuration and the final probe epoch.}
\label{tab:vit-training-budget}
\end{table}

\paragraph{Removing augmentation from the ViT probe.} We also repeat the $50$-epoch probes of the static and dynamic $100$-epoch encoders at $\rho=0.75$, using initialization zero and deterministic resize/center crop during probe training. All other probe settings and the pretrained checkpoints are held fixed. \Cref{tab:vit-probe-augmentation} shows that removing cropping and flipping raises top-5 accuracy by $2.28$ and $2.08$ percentage points for dynamic and static masking, respectively. Their gap remains nearly unchanged ($17.58$ versus $17.78$ points), indicating that probe augmentation does not account for the dynamic-masking advantage in this check.

\begin{table}[!htbp]
\centering
\small
\begin{tabular}{lrrrr}
\toprule
& \multicolumn{2}{c}{Probe with crop/flip} &
\multicolumn{2}{c}{Probe without crop/flip} \\
\cmidrule(lr){2-3}\cmidrule(lr){4-5}
Pretraining masks & Top-1 & Top-5 & Top-1 & Top-5 \\
\midrule
Static & 25.34 & 51.20 & 26.22 & 53.28 \\
Dynamic & 39.96 & 68.78 & 41.98 & 71.06 \\
\bottomrule
\end{tabular}
\caption{\textbf{The masking advantage persists without probe augmentation.} ViT-Base/ImageNet-100 validation accuracy (\%, $\uparrow$), using the same pretrained encoders and probe hyperparameters in each pair. Both encoders were pretrained without cropping or flipping at $\rho=0.75$. Results use one initialization and one probe seed.}
\label{tab:vit-probe-augmentation}
\end{table}

\subsection{Finite-size effects of PCA}
\label{app:pca-numerics}

\paragraph{Finite-dimensional cosine similarity.} \Cref{fig:app-pca-scaling} uses the same bounded-noise setting as \Cref{fig:validation}(b), with $\gamma=\frac14+\frac94 B$, $B\sim\operatorname{Beta}(1,2)$, and $\beta=1$, while varying the dimension $d$ and sample complexity $\alpha$. It examines the residual finite-dimensional PCA alignment visible in that main-text comparison. In \Cref{fig:app-pca-scaling}, the observed cosine decreases as the dimension increases, including at the largest values of $\alpha$ shown, consistent with the high-dimensional scaling $A_\alpha d^{-1/2}$ and supporting its interpretation as a finite-size effect. We use uncentered, unstandardized PCA at $\beta=1$, dimensions $d=100\times2^j$ for $j=0,\ldots,8$, ten logarithmically spaced values of $\alpha$ from $1$ to $50$, and $30$ seeds. The plotted observable is the mean absolute cosine, with $\pm1$ SD.

\begin{figure}[!htbp]
\centering
\includegraphics[width=\textwidth]{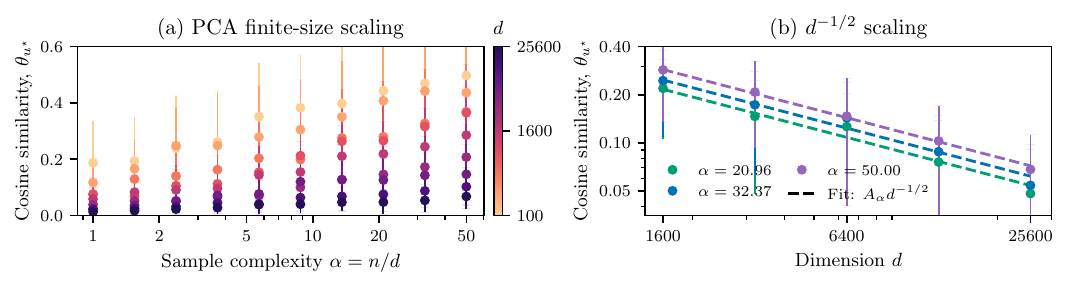}
\caption{\textbf{Finite-dimensional PCA cosine similarity decreases with dimension.} Same bounded-noise setting as \Cref{fig:validation}(b): $\gamma=\frac14+\frac94 B$, $B\sim\operatorname{Beta}(1,2)$, and $\beta=1$. \textbf{(a)} Cosine similarity versus sample complexity, with color indicating dimension size $d$. \textbf{(b)} Dimension dependence at fixed $\alpha$. Dashed lines fit $A_\alpha d^{-1/2}$ for $d\geq1600$, with exponent fixed. Bars show $\pm1$ SD over $30$ seeds.}
\label{fig:app-pca-scaling}
\end{figure}

\subsection{Top-1 accuracy for ViT}
\label{app:vit-top1}

\Cref{fig:app-vit-top1} reports the top-1 counterparts of the ViT comparisons in \Cref{fig:static-vs-dynamic}(c) and \Cref{fig:masking-augmentation}, using the same pretrained encoders and frozen linear probes. The qualitative conclusions also hold for this metric: dynamic masking improves accuracy without cropping or flipping during pretraining, while these augmentations substantially improve static masking and narrow the gap to dynamic masking.

\Cref{tab:vit-training-budget} puts these absolute accuracies in context. At $\rho=0.75$, dynamic masking reaches $39.96\%$ top-1 under the main comparison protocol, increasing to $61.50\%$ and $59.12\%$ with the longer U-MAE and MAE schedules and $90$-epoch probes, respectively. All three settings use no cropping or flipping during pretraining. Thus, the shorter protocol used to compare masking conditions does not exhaust the architecture's performance; these additional runs vary both the training schedule and batch size.

\Cref{tab:vit-probe-augmentation} separately checks standard augmentation during linear probing. Removing probe cropping and flipping raises top-1 accuracy from $39.96\%$ to $41.98\%$ for dynamic masking and from $25.34\%$ to $26.22\%$ for static masking. The dynamic--static gap therefore persists ($14.62$ versus $15.76$ percentage points), supporting the same conclusion with either probe protocol. Both tables report single-seed checks, distinct from the three-seed averages in \Cref{fig:app-vit-top1}(a).

\begin{figure}[!htbp]
\centering
\includegraphics[width=\textwidth]{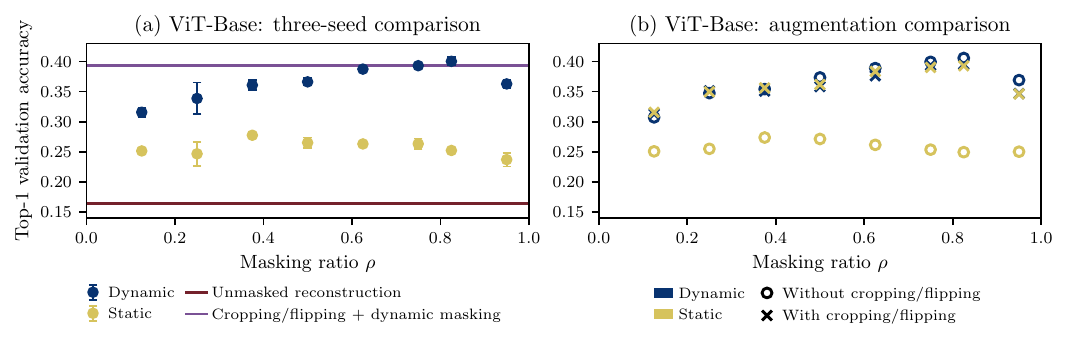}
\caption{\textbf{The ViT masking comparison also holds for top-1 accuracy.} Frozen linear-probe validation accuracy on ImageNet-100. \textbf{(a)} Static and dynamic masking without cropping or flipping: mean $\pm1$ SD over three seeds. Each horizontal reference uses one seed. \textbf{(b)} Static and dynamic masking with and without cropping and flipping, using the common seed and protocol of \Cref{fig:masking-augmentation}; no error bars are shown. Training and evaluation settings are given in \Cref{app:experimental-protocols}.}
\label{fig:app-vit-top1}
\end{figure}

\subsection{Noise-aware spectral methods}
\label{app:heteroPCAs}

\Cref{fig:app-heterogeneous-pca} adds diagonal-deletion PCA, HeteroPCA, and estimated-whitening PCA to the bounded and unbounded settings of \Cref{fig:validation}(b,c), with $\beta=1$ and $\rho=0.75$ for dynamic masking. Spectral estimators and autoencoders use $d=2000$ and $20$ seeds; AMP uses $d=4000$ and ten seeds (\Cref{app:experimental-bayes}). Bars show $\pm1$ SD. The added spectral methods use the same observations and seeds as ordinary PCA and receive neither the spike nor the true coordinate variances. Covariances are centered in the unbounded setting and uncentered in the bounded setting.

\paragraph{Spectral estimators.} Write $S=\widehat\Sigma$ the empirical covariance matrix, $\operatorname{diag}(S)$ for its diagonal matrix, and $\operatorname{off}(S)=S-\operatorname{diag}(S)$. In this section we will work with the following additional spectral estimators:

\emph{Diagonal-deletion PCA} removes the empirical diagonal before spectral extraction, using $\operatorname{off}(S)$. This baseline is studied alongside HeteroPCA by \citet{zhang2022heteroskedastic} but was originally introduced by \citet{florescu2016spectral}. We extract its largest-magnitude eigenpair, corresponding to a leading singular direction. This removes the direct diagonal noise contribution, but also discards the signal diagonal.

\begin{figure}[!htbp]
\centering
\includegraphics[width=\textwidth]{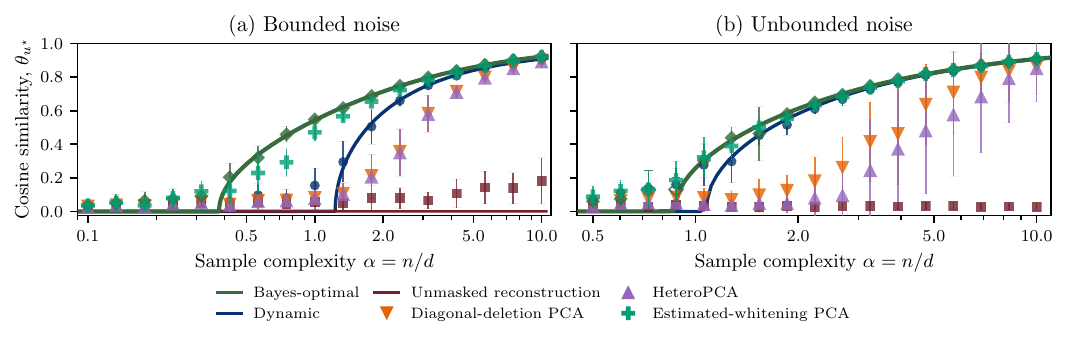}
\caption{\textbf{Noise-aware spectral baselines for heteroskedastic noise.} Cosine similarity versus sample complexity under \textbf{(a)} bounded and \textbf{(b)} unbounded heterogeneous noise. The comparison includes dynamic masking, unmasked reconstruction (ordinary PCA), diagonal-deletion PCA \cite{florescu2016spectral}, HeteroPCA \citep{zhang2022heteroskedastic}, estimated-whitening PCA \citep{leeb2021optimal}, and the \bayesopt{Bayes-optimal} baseline. Lines show theoretical references; markers show numerical estimates. Green curves give the Bayes-optimal state-evolution reference; green diamonds show AMP results (\Cref{subsec:BO-AMP,app:experimental-bayes}). Bars show $\pm1$ SD; hollow AMP markers identify groups containing iteration-limited runs.}
\label{fig:app-heterogeneous-pca}
\end{figure}

\emph{HeteroPCA} iteratively imputes the diagonal from a rank-one approximation while retaining the observed off-diagonal entries \citep[Algorithm~1]{zhang2022heteroskedastic}. We initialize with $\operatorname{off}(S)$, retain the signed largest-magnitude eigenpair, and stop when the relative diagonal change is at most $10^{-8}$. Unresolved runs are continued up to $10^5$ iterations; all $680$ fits in the displayed comparison meet this criterion.

\emph{Estimated-whitening PCA} follows the noise-whitening approach of \citet{leeb2021optimal}, who also discuss estimated noise covariances in Section~4.3. We estimate $\widehat\Gamma_{ii}=\max(S_{ii},10^{-12})$ and apply PCA to $S_{\rm W}=\widehat\Gamma^{-1/2}S\widehat\Gamma^{-1/2}$. If $\bv_{\rm W}$ is its leading eigenvector, alignment is evaluated using $\widehat\Gamma^{1/2}\bv_{\rm W}$ in the original coordinates. This baseline uses whitening and unwhitening without additional spectral shrinkage.

Dynamic masking achieves higher mean alignment than diagonal-deletion PCA and HeteroPCA across both displayed sweeps, without explicit noise estimation or whitening. Neither the coordinate variances nor their distribution are supplied to the autoencoder: adaptation to heterogeneity emerges through the masked prediction objective (\Cref{app:mask-noise-adaptation}). This provides a complementary perspective to covariance-aware inference. For example, \citet{ZhangMondelli2024} construct spectral estimators using known noise covariances and signal strength, with optimality guarantees under their assumptions. Estimated-whitening PCA generally achieves higher alignment in our experiments and lies closer to the Bayes reference, using variances estimated from the same observations. The comparison therefore highlights the ability of masked prediction to learn informative structure without a separate procedure for estimating and correcting the noise.

\paragraph{The whitening-PCA threshold.} The whitening threshold follows from the spiked-model analysis of \citet{leeb2021optimal}. In our rank-one model with Gaussian latent priors, whitening by $(\Gamma^\star)^{-1/2}$ produces isotropic noise and a population signal eigenvalue $\ell^{\rm W}=\beta\|(\Gamma^\star)^{-1/2}\bu^\star\|^2/d \to\beta\E_\gamma[\gamma^{-1}]$. Their recovery condition is $\ell^{\rm W}>\alpha^{-1/2}$, with the same onset after unwhitening, giving $\alpha_c^{\rm W}=[\beta^2\E_\gamma[\gamma^{-1}]^2]^{-1}$.

\citet{leeb2021optimal} also justify replacing the noise covariance by an estimate consistent in operator norm. For diagonal noise and a delocalized signal direction, they obtain such an estimate from the sample coordinate variances, as in the baseline above. Our bounded variance law and normalized Gaussian spike satisfy these conditions, so the same threshold applies to estimated whitening in that setting. For the shifted-lognormal law, $\alpha_c^{\rm W}$ is the ideal-whitening reference; their bounded-spectrum assumptions do not directly cover the estimated-whitening algorithm, whose performance is shown empirically.

\paragraph{Relation to the masking and Bayes thresholds.} Our replica analysis connects this established whitening threshold to dynamic masking. Combining the Bayes expression \eqref{eq:BO-final-threshold} with the lower bound in \eqref{eq:fixed-beta-mask-lower-bound}, specialized to dynamic linear masking in \eqref{eq:dyn-bound-sandwich}, gives

\begin{equation}
    \alpha_c^{\rm BO} =\frac{1}{\beta^2\E_\gamma[\gamma^{-2}]} \leq \alpha_c^{\rm W} =\frac{1}{\beta^2\E_\gamma[\gamma^{-1}]^2} \leq \alpha_{c,\mathrm{RS}}^{\rm dyn}.
    \label{eq:experimental-whitening-threshold-order}
\end{equation}

The inverse moments are finite for both displayed laws. The first inequality is strict for a nonconstant variance law by Cauchy--Schwarz. It orders the recovery onsets; the relative alignments above threshold are assessed by the simulations. Thus masking can outperform diagonal correction while remaining less effective than whitening at the onset of recovery.

\subsection{Dynamic masking and nonlinear autoencoders}
\label{app:dynamic-full-comparison}

\begin{figure}[!htbp]
\centering
\includegraphics[width=\textwidth]{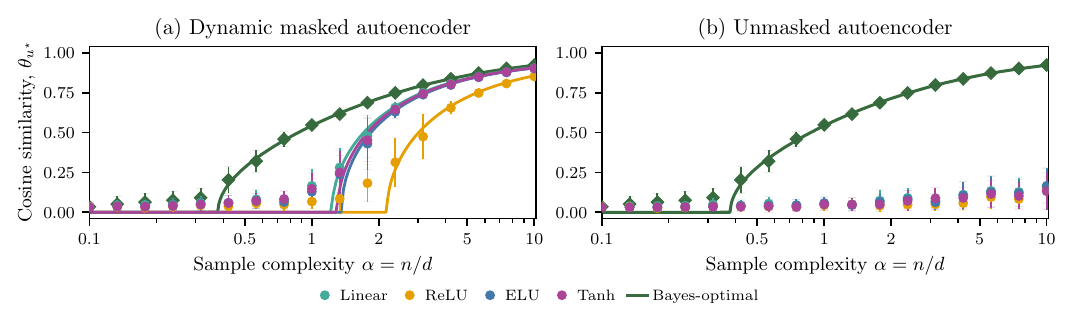}
\caption{\textbf{Masking improves recovery across activation functions.} Cosine similarity for \textbf{(a)} dynamic masking and \textbf{(b)} no masking in the same setting as \Cref{fig:mask-sample-efficiency}. Colored curves are replica predictions, colored markers in both panels are ERM simulations at finite $d=2000$, averaged across $30$ seeds $\pm 1$ SD. In both panels, the green curve shows the Bayes-optimal prediction and green diamonds show AMP results (\Cref{subsec:BO-AMP,app:experimental-bayes}).}
\label{fig:app-dynamic-full}
\end{figure}

\Cref{fig:app-dynamic-full} compares dynamic masking with the no-masking reconstruction, \Cref{eq:main-full-objective}, for linear, ReLU, ELU, and tanh activations. We use the same setting as \Cref{fig:mask-sample-efficiency}.

Across the four activations, dynamic masking develops substantially higher alignment, whereas full reconstruction retains low mean alignment throughout the displayed range. Thus, changing the activation of the unmasked autoencoder does not help to recover the spike. On the other hand, for dynamic masking, linear, ELU, and tanh dynamic curves are close, while ReLU requires larger sample complexity to attain comparable alignment.

\subsection{Reconstruction errors across mask diversity}
\label{app:reconstruction-errors}

\Cref{fig:app-reconstruction-errors} evaluates the centered training and test errors defined in \Cref{subsec:observables}, using linear activation, $\alpha=8$, $\beta=1$, $\laR=10^{-4}$, and the bounded variance law of \Cref{fig:validation}(b). Again, $d=2000$, $30$ seeds. Each loss subtracts the zero-decoder loss on the same samples and masks, excludes the ridge penalty, and is reported per sample without division by $\rho$. Test errors use independent samples and fresh masks; dynamic training errors average fresh masks on the training samples.

\begin{figure}[!htbp]
\centering
\includegraphics[width=\textwidth]{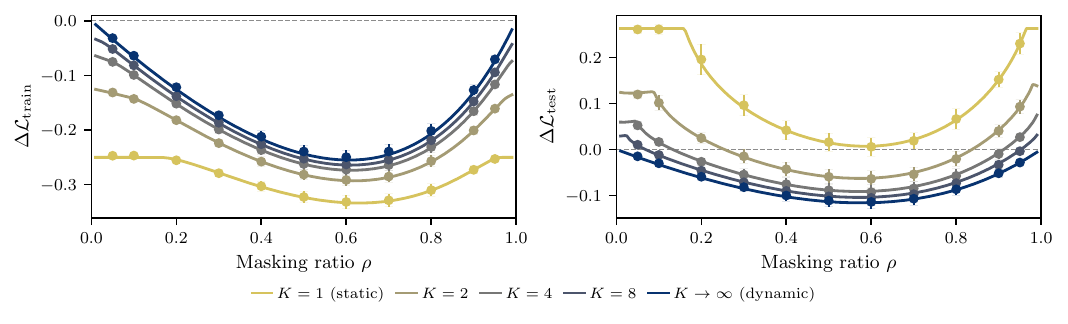}
\caption{\textbf{Mask resampling worsens training error while improving test loss.} Centered reconstruction errors versus masking ratio at $\alpha=8$ under bounded heterogeneous noise. Left: training error. Right: independent test error. Curves are the replica predictions in \eqref{eq:train-error-RS} and \eqref{eq:test-error-RS}; markers and bars show mean $\pm1$ SD over $30$ simulations. Lower values are better; zero denotes the zero-decoder baseline. Colors distinguish the fixed mask collections and dynamic masking.}
\label{fig:app-reconstruction-errors}
\end{figure}

\subsection{Number of downstream labels}
\label{app:downstream-label-count}

\Cref{fig:app-downstream-label-count} varies the number of independently labeled examples used to fit the frozen scalar logistic probe. We reuse the linear dynamic-masking replica solution from \Cref{fig:mask-sample-efficiency}, with bounded noise, $\beta=1$, $\rho=0.75$, and $\laR=10^{-4}$, and evaluate \eqref{eq:downstream-finite-error} for $n_{\rm ds}\in[1,128]$. The pretraining overlaps remain fixed for each $\alpha$: additional labels only change the probability that the probe selects the correct orientation of the representation. We evaluate \eqref{eq:downstream-Gaussian-orientation} with $10^5$ Monte Carlo amplitude samples, except at $n_{\rm ds}=1$, where $p_1(A)=\frac12+\pi^{-1}\arctan\sqrt A$ is exact.

Accuracy improves with the label count and approaches the population-orientation limit rapidly away from the recovery transition. Increasing $n_{\rm ds}$ therefore improves the finite-label readout without changing the pretraining recovery threshold.

\begin{figure}[!htbp]
\centering
\includegraphics[width=0.5\textwidth]{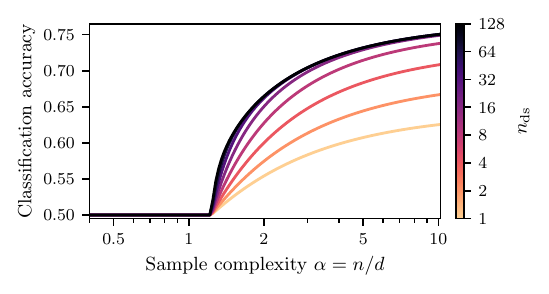}
\caption{\textbf{Additional labels resolve the orientation of a frozen feature.} Predicted downstream classification accuracy for linear dynamic masking versus sample complexity. The colorbar gives $n_{\rm ds}$ on a logarithmic scale. All curves use the same pretraining solution at each $\alpha$ and differ only in the size of the independent labeled dataset used by the scalar logistic probe.}
\label{fig:app-downstream-label-count}
\end{figure}
\FloatBarrier

\subsection{Early-stopping without masking}
\label{app:unmasked-training}

\Cref{fig:app-no-masking-cnn,fig:app-no-masking-vit} evaluate frozen linear probes at intermediate unmasked-pretraining checkpoints for the architectures and datasets of \Cref{fig:static-vs-dynamic}. In both architectures, training and validation reconstruction MSE continue to improve after downstream accuracy has peaked. The deterioration in the representation is therefore not accompanied by a rise in held-out reconstruction error.

\begin{figure}[ht]
\centering
\includegraphics[width=\textwidth]{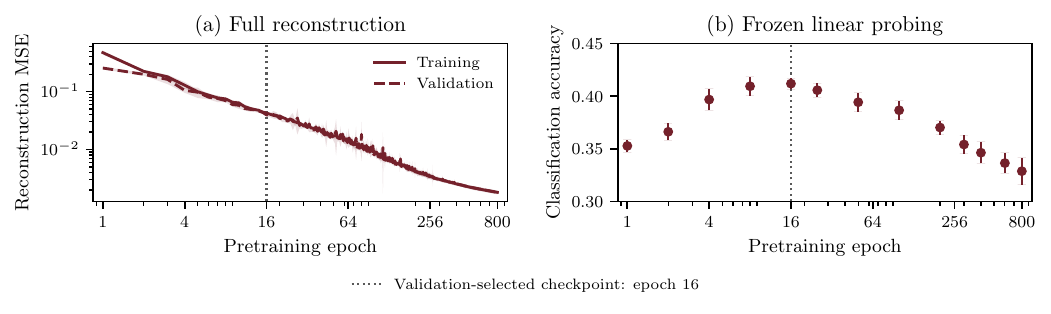}
\caption{\textbf{Better reconstruction need not improve CNN representations.} \textbf{(a)} Training and validation reconstruction MSE during unmasked CNN pretraining. \textbf{(b)} Linear-probe test accuracy at different pretraining epochs. Shading and bars show $\pm1$ SD over five seeds. The dotted line marks epoch $16$, selected by mean probe validation accuracy for \Cref{fig:static-vs-dynamic}.}
\label{fig:app-no-masking-cnn}
\end{figure}

\begin{figure}[ht]
\centering
\includegraphics[width=\textwidth]{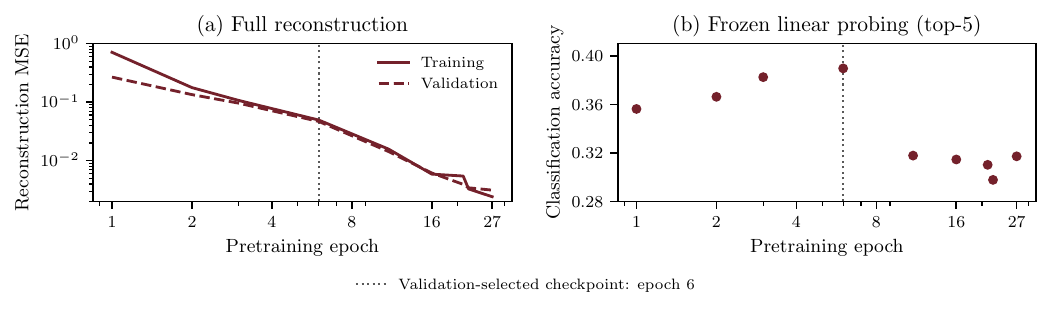}
\caption{\textbf{Reconstruction and downstream performance also diverge for ViT.} \textbf{(a)} Training and validation reconstruction MSE during unmasked ViT pretraining. \textbf{(b)} Linear-probe top-5 accuracy on the validation set at different pretraining epochs. The dotted line marks six completed pretraining epochs, selected by probe validation accuracy for \Cref{fig:static-vs-dynamic}. Results use one pretraining seed.}
\label{fig:app-no-masking-vit}
\end{figure}

For CNN, epoch $16$ maximizes mean probe validation accuracy across seeds and supplies the unmasked baseline in \Cref{fig:static-vs-dynamic,fig:K-interpolation}; the plotted curve reports test accuracy. Reconstruction losses are measured in the standardized image coordinates.

For ViT, unmasked pretraining uses the architecture, deterministic preprocessing and optimizer settings of \Cref{app:experimental-protocols}, retaining all patches and evaluating reconstruction MSE over all patches. Targets remain standardized within each patch, and the learning-rate schedule retains its $100$-epoch horizon. The final probe's validation top-5 accuracy selects six pretraining epochs ($38.96\%$). \Cref{fig:app-no-masking-vit}(a) shows normalized-patch reconstruction MSE, excluding the detached online classifier's loss; validation MSE is evaluated on all $5{,}000$ validation images. Neither the CNN nor the ViT checkpoint is selected by minimum reconstruction validation loss since it is misleading for downstream performance.

\textbf{Masked} CNNs use validation-selected reconstruction checkpoints, whereas the main \textbf{masked} ViT comparisons use the final pretraining checkpoint. This selection favors the unmasked reference, whose reported score is a selected validation result rather than an independent test estimate.

\subsection{Mask renewal and image augmentation in CNNs}
\label{app:cnn-augmentation}

\Cref{fig:app-cnn-augmentation} complements \Cref{fig:masking-augmentation} using the CNN autoencoder on CIFAR-10 from \Cref{fig:static-vs-dynamic}(b). We compare static and dynamic masking with and without stochastic cropping and flipping, averaging over five seeds. Within each seed, the four conditions share the data split and initial autoencoder weights. The architecture, optimizer settings, validation-based checkpoint selection and stopping rule, and frozen linear-probing protocol follow \Cref{app:experimental-protocols}.

\begin{figure}[ht]
\centering
\includegraphics[width=0.50\textwidth]{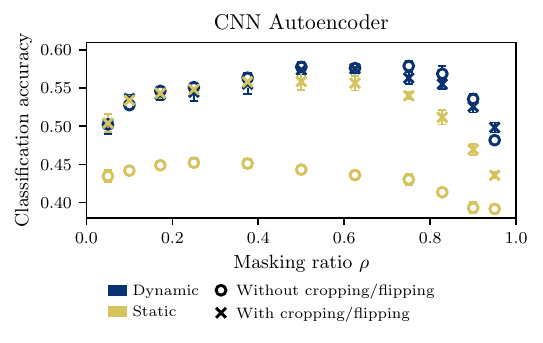}
\caption{\textbf{Mask renewal and image augmentation in CNNs.} Frozen linear-probe test top-1 accuracy on CIFAR-10 versus masking ratio $\rho$. Blue/yellow indicate \dynamicmask{dynamic}/\staticmask{static} masking; circles/crosses denote pretraining without/with stochastic cropping and flipping. Each marker reports the mean over five seeds, with data splits and autoencoder initializations shared across conditions within each seed. Linear probes use no image augmentation.}
\label{fig:app-cnn-augmentation}
\end{figure}

Cropping and flipping substantially improve static masking across the sweep, bringing its mean accuracy close to dynamic masking at lower masking ratios. At larger masking ratios, dynamic masking retains an advantage even with augmentation. The CNN comparison therefore supports the same interpretation as the ViT experiment: geometric transformations can partly compensate for mask reuse by changing the content to be predicted. The remaining gap shows that the extent of this compensation depends on the masking ratio and experimental setting.

\subsection{CNN training budgets and validation selection}
\label{app:cnn-training-diagnostics}
We examine the CNN mask-diversity comparison of \Cref{fig:K-interpolation}(b) at $\rho=0.625$, with $K\in\{1,2,4,8,16,32\}$ and dynamic masking. Both experiments use the CIFAR-10 architecture, AdamW settings, and the frozen linear-probing protocol of \Cref{app:experimental-protocols}, without cropping or flipping. Each condition has five seeds (different pretraining/validation splits). Encoders are selected by validation reconstruction MSE followed by downstream accuracy measured on a separate test set.

\Cref{fig:app-cnn-training-original} shows the original training and validation trajectories with the \textit{plateau learning-rate} (LR) scheduler. Finite-mask runs continue reducing training error after validation error levels off or rises (\textit{overfitting}), motivating validation-based checkpoint selection. Since the original scheduler operates in epochs whose update counts depend on $K$, we also test whether the results persist under a common schedule measured in optimizer updates.

\begin{figure}[!htbp]
\centering
\includegraphics[width=\textwidth]{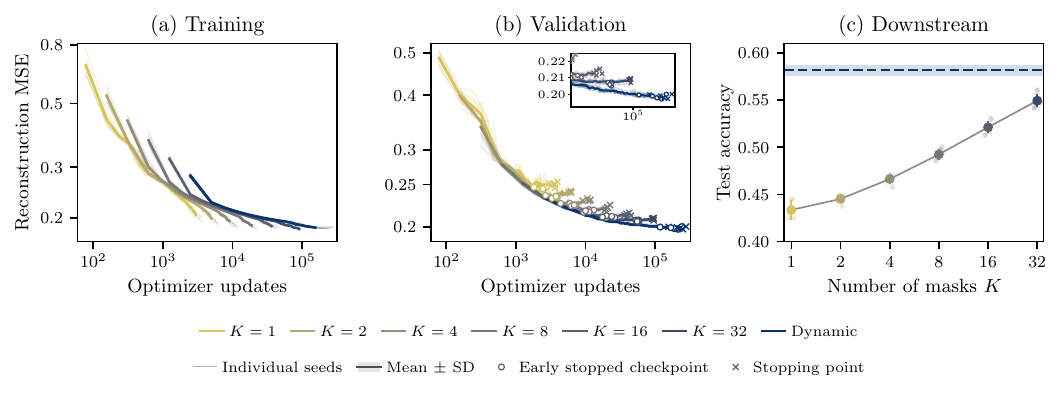}
\caption{\textbf{CNN training under the original protocol.} \textbf{(a,b)} Training and validation reconstruction MSE versus optimizer updates. Thin traces show individual seeds; thick curves and shading show mean $\pm1$ SD while all five runs are observed. Circles and crosses mark early stopped and final checkpoints, respectively. \textbf{(c)} Frozen linear-probe test accuracy of early-stopped encoders, with $\pm1$ SD. The horizontal dashed line indicates dynamic masking.}
\label{fig:app-cnn-training-original}
\end{figure}

To assess the role of the scheduler, we repeat the experiments with the same data splits, random seeds, and initial weights, using a fixed learning rate (LR) of $10^{-3}$ and a longer training number of optimizer updates (\Cref{fig:app-cnn-training-fixedlr}). All conditions now use $2{,}500$ updates between validation evaluations, a stopping patience of $125{,}000$ updates without improvement, and a total maximum of $1{,}150{,}000$ updates.

\Cref{fig:app-cnn-protocol-overlay}(a) compares validation-selected (early-stopped checkpoints) encoders under the two protocols. Their mean test accuracies remain close across masking conditions, and the downstream improvement with mask diversity persists under the fixed-learning-rate control. The extended runs also exhibit a reconstruction-generalization gap.

\Cref{fig:app-cnn-protocol-overlay}(b) instead selects the encoder checkpoint saved at step $125{,}000$ from each fixed learning rate run (\Cref{fig:app-cnn-training-fixedlr}) and evaluates it using the same frozen linear-probing protocol. This changes the checkpoint used for downstream evaluation, not the training or stopping procedure. Thus, the downstream mask-diversity trend persists both when selecting the encoder by validation reconstruction MSE and when selecting its checkpoint at a common training step.

\begin{figure}[!htbp]
\centering
\includegraphics[width=\textwidth]{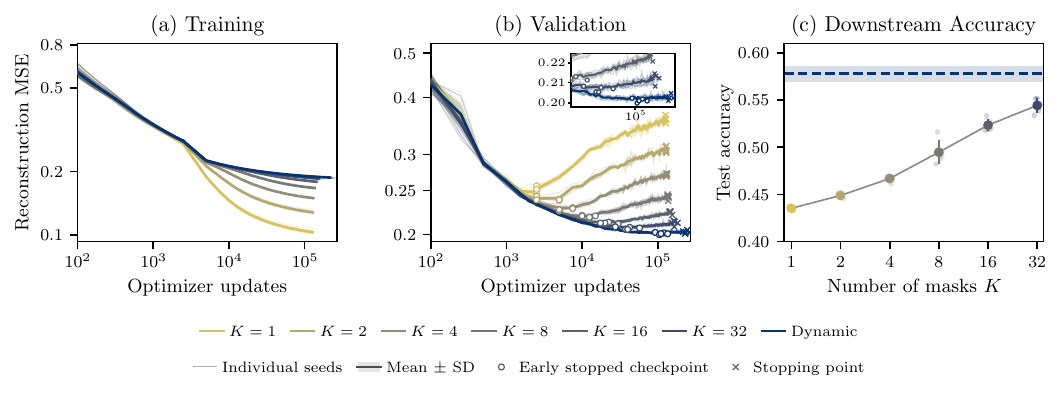}
\caption{\textbf{CNN control with a fixed learning rate and extended training.} Panels and uncertainty conventions follow \Cref{fig:app-cnn-training-original}.}
\label{fig:app-cnn-training-fixedlr}
\end{figure}

\begin{figure}[!htbp]
\centering
\includegraphics[width=1.0\textwidth]{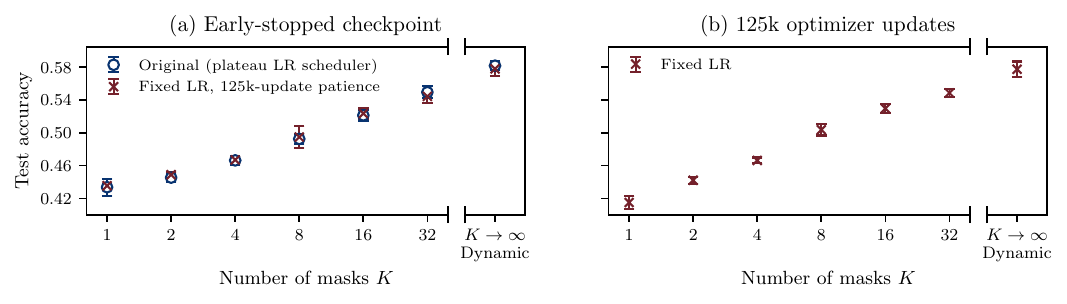}
\caption{\textbf{The downstream mask-diversity trend persists across training protocols.} Test accuracies from \textbf{(a)} are taken from linear probing the early-stopped checkpoints from \Cref{fig:app-cnn-training-original,fig:app-cnn-training-fixedlr}. \textbf{(b)} corresponds to linear probing the checkpoints after $125$k optimizer updates in the setting of \Cref{fig:app-cnn-training-fixedlr}. Markers and bars show mean $\pm1$ SD over five seeds.}
\label{fig:app-cnn-protocol-overlay}
\end{figure}

\subsection{Numerical assessment of finite-mask threshold ratios}
\label{app:threshold-ratio-assessment}

How well does the weak-signal mask-reuse law describe recovery at finite signal strength? \Cref{fig:app-threshold-ratios} compares it with thresholds computed from the full linear equations at zero ridge. We solve the uninformative closures in \eqref{eq:fixed-beta-vgamma}--\eqref{eq:fixed-beta-vGamma-closure} and locate the gain crossing in \eqref{eq:fixed-beta-instability-condition}; \Cref{app:experimental-metadata} gives the numerical procedure.

\paragraph{Variance laws and normalization.} The two rows of \Cref{fig:app-threshold-ratios} use homogeneous noise, $\gamma=1$, and the bounded law
\begin{equation}
 \gamma=1+c\left(\frac94 B-\frac34\right),\qquad B\sim\operatorname{Beta}(1,2),\qquad c\simeq0.5743390142.
 \label{eq:app-threshold-bounded-law}
\end{equation}

This bounded law differs from the example in \Cref{fig:validation}(b).

\begin{figure}[!htbp]
\centering
\includegraphics[width=\textwidth]{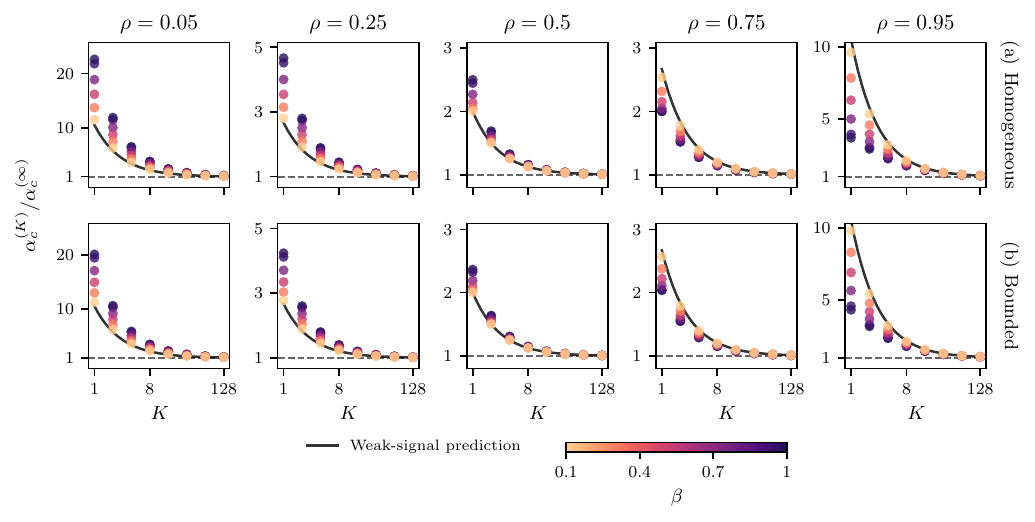}
\caption{\textbf{Finite-mask threshold ratios under homogeneous and bounded noise.} Rows show \textbf{(a)} homogeneous variance distribution $\gamma=1$ and \textbf{(b)} the bounded law \eqref{eq:app-threshold-bounded-law}; columns vary the masking ratio $\rho$. Colors indicate signal strength $\beta$. Markers are numerical teacher-overlap instability thresholds divided by their corresponding dynamic threshold ($K\rightarrow \infty$). The black curve is the weak-signal ratio prediction $1+K^{-1}([2\rho(1-\rho)]^{-1}-1)$; the dashed line marks unity. }
\label{fig:app-threshold-ratios}
\end{figure}

Across the displayed parameters, increasing mask diversity lowers the threshold toward the dynamic reference. The finite-signal deviations from the weak-signal curve depend on $\rho$, while the diminishing cost of mask reuse remains visible.

\subsection{Synthetic experiments and numerical predictions}
\label{app:experimental-metadata}
\paragraph{Data and masking-ratio sweeps.} The synthetic autoencoder simulations use $P_u=P_\lambda=\N(0,1)$ and $\beta=1$. The masked results in \Cref{fig:static-vs-dynamic}(a) and \Cref{fig:K-interpolation}(a) use linear activation, $d=2000$, $\alpha=8$, and $\gamma=\frac14+\frac94 B$ with $B\sim\operatorname{Beta}(1,2)$, as in \Cref{fig:validation}(b). Each condition uses $30$ seeds and $4000$ full-batch Adam updates. \Cref{fig:app-reconstruction-errors} uses the same experiments. Bars show $\pm1$ SD. The red PCA references in \Cref{fig:static-vs-dynamic}(a) and \Cref{fig:K-interpolation}(a) show the asymptotic chance accuracy $1/2$ for this law (\Cref{prop:sep-compact-pca,subsec:sep-example}).

\paragraph{Sample-complexity sweeps.} \Cref{fig:validation} uses the three variance laws stated in its caption, $d=2000$, $\sigma(x)=x$, $\rho=0.75$, $20$ seeds per point, and $4000$ Adam updates for each autoencoder. The no masking points are found by extracting the first principal component of the pretraining set and then linear probing following the procedure detailed in \Cref{sec:main-model}.

In \Cref{fig:mask-sample-efficiency}, the four activations use the bounded law $\gamma=\frac14+\frac94 B$, $B\sim\operatorname{Beta}(1,2)$, as in \Cref{fig:validation}(b), with $d=2000$, $\beta=1$, and $\rho=0.75$. Each autoencoder condition uses $30$ seeds and $4000$ full-batch Adam updates. Both rows report means with $\pm1$ SD across seeds. AMP dimensions and seeds are given in \Cref{app:experimental-bayes}.

\paragraph{Optimization and downstream evaluation.} Adam uses learning rate $0.05$, no schedule or weight decay, and one initialization with iid $\N(0,1)$ coordinates, without warm starts across conditions. Each update averages the reconstruction loss over all hidden entries in the fixed mask collection, or over one fresh mask per sample for dynamic training. The masked autoencoders in \Cref{fig:static-vs-dynamic}(a) and \Cref{fig:K-interpolation}(a), the bounded autoencoders in \Cref{fig:validation}(b), and all masked autoencoders use $\laR=10^{-4}$. Sampled coordinate variances are not renormalized or truncated. Frozen scalar probes use $n_{\rm ds}=100$ independently labeled examples and $10{,}000$ independent test examples from the same quenched spike, with labels $\operatorname{sign}(\lambda)$. For every activation, probing uses the raw, unmasked preactivation $\bw^\top\bx/\sqrt d$. The scalar logistic coefficient has no intercept or regularization and is fitted with L-BFGS for at most $100$ iterations.

\paragraph{Numerical predictions.} The RS learning curves use the small ridge $\laR=10^{-4}$, separately from the zero-ridge threshold calculation in \Cref{fig:app-threshold-ratios}. For the masking-ratio sweeps in \Cref{fig:static-vs-dynamic}(a), \Cref{fig:K-interpolation}(a), and \Cref{fig:app-reconstruction-errors}, we use $200$-node Gauss--Legendre quadrature over the full bounded support $[1/4,5/2]$ and analytic linear output averages.

For \Cref{fig:mask-sample-efficiency}, variance integrals use $200$-node Gauss--Legendre quadrature over the full bounded support $[1/4,5/2]$; the Bayes reference uses $40$ variance nodes. The RS curves use a $120$-point grid over $\alpha\in[0.09,11]$, with denser sampling near the transitions. Linear output averages are analytic. Antithetic scrambled Sobol field integration uses $2^{18}$ samples for tanh at $K=1$ and ReLU at $K=2,4$, and $2^{14}$ for ELU at $K=1,2,4$ and tanh at $K=2,4$. ReLU at $K=1$ and dynamic masking uses Gauss--Hermite field quadrature of order $120$. Dynamic tanh and ELU use order $64$ for both field and mask quadrature. Finite-$K$ ReLU uses its piecewise-quadratic proximal; tanh and ELU use numerical proximal minimization. Fixed-point updates use damping $0.3$ and tolerance $10^{-6}$, except tanh at $K=1$ ($10^{-7}$) and dynamic tanh ($10^{-8}$). Downstream curves include finite-label orientation uncertainty through \eqref{eq:downstream-finite-error}, evaluated with $10^4$ Monte Carlo draws for the masking-ratio sweeps and $10^5$ for the four-activation comparison.

\paragraph{Finite-mask threshold ratios.} For \Cref{fig:app-threshold-ratios}, we use linear activation and $\lambda_R = 0$, evaluate the output channel analytically, and follow the uninformative saddle on a logarithmic grid in $\alpha$. The first crossing of $\Lambda_K=1$ in \eqref{eq:fixed-beta-gain} is refined by logarithmic bisection. Each finite-$K$ threshold is normalized by the dynamic threshold at the same $\beta$, $\rho$, and variance law. The bounded distribution is defined in \eqref{eq:app-threshold-bounded-law} and differs from the law used in \Cref{fig:validation}(b).

\paragraph{Matched-moment noise-adaptation examples.} We evaluate the finite-support expectations $I_{j,\ell}(\tau)$ exactly as weighted sums. For a fixed $\beta$, parameterize the critical curve by $\tau$ using \eqref{eq:dyn-bound-critical-reduction}, then recover $\rho$ from \eqref{eq:dyn-bound-rho-identity}. Follow the regular continuation through $\tau=0$ and solve for the specified masking ratio. Reconstructing the full order parameters verifies the susceptibility, variance, and teacher-response closures. For the three laws in \Cref{tab:mask-adaptation-matched}, the homogeneous law, and $\tfrac12\delta_{1/2}+\tfrac12\delta_{3/2}$, these checks at $\rho\in\{0.8,0.95,0.99\}$ give absolute critical closure residuals below $2\times10^{-13}$. Solutions of \eqref{eq:dyn-bound-informative-reduction} at sample ratios $1.001$, $1.01$, and $1.05$ times each critical ratio have positive alignment and satisfy the full closures to absolute tolerance $10^{-7}$. These checks support the continuous informative continuation used for the reported thresholds.

\subsection{Mask diversity in BERT}
\label{app:bert-masking}

We compare one fixed mask per sequence ($K=1$), a fixed collection of ten masks ($K=10$), and dynamic mask resampling in a BERT-Medium pretraining pilot. Each condition uses three pretraining seeds and two fine-tuning seeds per pretraining seed and task. The corpus, architecture, and pretraining budget are shared across masking conditions.

\paragraph{Implementation and data.} We train a BERT-style masked language model \citep{devlin2019bert} using our adaptation of DinkyTrain\footnote{\url{https://github.com/princeton-nlp/DinkyTrain}.} \citep{wettig2023should}, built on fairseq \citep{ott2019fairseq} and its RoBERTa implementation \citep{liu2019roberta}. DinkyTrain's efficient pretraining recipe builds on \citet{izsak2021train}. The pre-LayerNorm encoder has eight layers, hidden dimension $512$, feed-forward dimension $2048$, eight attention heads, and approximately $51.5$ million pretraining parameters. We use the repository's preprocessed Wikipedia--BookCorpus data \citep{zhu2015aligning}, with GPT-2 byte-pair encoding and a $50{,}265$-token model vocabulary. The pilot uses one fixed shard, retaining $3{,}756{,}032$ contiguous sequences of length $128$. Token blocks are constructed before mask views are assigned; the underlying text remains fixed across presentations.

\paragraph{Masking protocol.} We select $15\%$ of token positions for prediction. Every selected token is replaced by \texttt{[MASK]}; neither random-token substitution nor unchanged selected tokens are used. For finite $K$, the corruption depends on the mask seed, sequence identifier, and view identifier, and is fixed across presentations. Dynamic masking draws a fresh corruption at each presentation. Each sequence is presented $109$ times in every condition: the single $K=1$ mask is reused throughout, the ten $K=10$ masks are reused approximately eleven times each, and dynamic masking draws $109$ masks. Pretraining seeds are $0,1,2$, with the training and mask seed both set to the same value. Seeds control initialization, data order, dropout, and mask draws. Held-out validation uses $1{,}941$ sequences.

\paragraph{Pretraining budget and optimization.} Each run completes $99{,}953$ optimizer updates with an effective batch size of $4{,}096$. Adam uses a peak learning rate of $2\times10^{-3}$, $1{,}999$ warmup updates followed by cosine decay, coefficients $(0.9,0.98)$, $\epsilon=10^{-6}$, weight decay $0.01$, and no gradient clipping. Dropout and attention dropout are both $0.1$. Pretraining checkpoints are selected using held-out masked-language-model validation loss.

\begin{table}[htbp]
\centering
\small
\caption{\textbf{BERT downstream results.} Development scores ($\times100$, $\uparrow$), reported as mean $\pm$ SD across three pretraining seeds after averaging two fine-tuning runs per pretraining seed. MCC denotes Matthews correlation; STS-B uses Spearman correlation. MNLI-m and MNLI-mm are the two splits of the same task.}
\label{tab:bert-glue-means}
\begin{tabular}{llccc}
\toprule
Task & Metric & $K=1$ & $K=10$ & Dynamic \\
\midrule
MNLI-m & Accuracy & $72.85\pm0.33$ & $76.27\pm0.42$ & $76.83\pm0.18$ \\
MNLI-mm & Accuracy & $74.47\pm0.29$ & $77.48\pm0.13$ & $77.65\pm0.35$ \\
QNLI & Accuracy & $82.55\pm0.87$ & $84.40\pm0.61$ & $85.45\pm0.38$ \\
QQP & Accuracy & $85.92\pm0.10$ & $87.02\pm0.16$ & $87.22\pm0.16$ \\
SST-2 & Accuracy & $87.63\pm0.60$ & $88.08\pm1.03$ & $89.40\pm0.46$ \\
STS-B & Spearman & $79.76\pm1.18$ & $81.32\pm1.04$ & $82.46\pm1.67$ \\
MRPC & MCC & $46.27\pm1.94$ & $55.50\pm2.63$ & $55.55\pm5.38$ \\
CoLA & MCC & $37.23\pm1.80$ & $43.30\pm2.82$ & $47.17\pm3.40$ \\
RTE & MCC & $9.43\pm3.29$ & $10.42\pm1.95$ & $7.28\pm4.27$ \\
\bottomrule
\end{tabular}
\end{table}

\paragraph{Downstream optimization.} We fine-tune the full model on eight GLUE tasks \citep{wang2018glue} for ten epochs and evaluate after the last epoch on the development split. The maximum input length is $512$ tokens. Learning rate and batch size are $10^{-5}$ and $32$ for MNLI, QNLI, QQP, and SST-2; $2\times10^{-5}$ and $16$ for STS-B; and $5\times10^{-5}$ and $16$ for MRPC, CoLA, and RTE.

For MRPC, CoLA, and RTE, a preliminary learning-rate sweep resolved the majority-class predictions observed with the initial fine-tuning settings. We selected $5\times10^{-5}$ using the $K=10$ model with pretraining seed~0 from $\{5\times10^{-5},10^{-4},2\times10^{-4},4\times10^{-4}\}$ and applied it across all masking conditions and seeds. This value is included in the released fine-tuning grid accompanying the compact BERT models of \citet{turc2019well}.\footnote{\url{https://github.com/google-research/bert}.} Adam uses coefficients $(0.9,0.98)$, $\epsilon=10^{-6}$, weight decay $0.1$, and no gradient clipping. The learning rate warms up for $6\%$ of the update budget and then decays linearly. Dropout and attention dropout are both $0.1$. Classification tasks use cross-entropy; STS-B uses mean-squared-error regression.

\paragraph{Downstream performance.} In \Cref{tab:bert-glue-means}, $K=10$ improves the mean score over $K=1$ on all eight tasks. Dynamic masking has the highest mean on seven of eight tasks, including both MNLI splits. The additional gain over $K=10$ depends on the task: the MRPC means are nearly equal, while RTE remains weak and variable. The results support a benefit of mask diversity across independent pretraining runs.

\subsection{Bayes curves and AMP estimates}
\label{app:experimental-bayes}

The Bayes-optimal lines in \Cref{fig:validation,fig:mask-sample-efficiency,fig:app-heterogeneous-pca} show the state-evolution solution of \eqref{eq:BO-complete-scalar-root}.

Green diamonds use the alternating, Fisher-averaged AMP from \Cref{subsec:BO-AMP}, specializing the bipartite Low-RAMP of \citet{lesieur2017constrained}. Inference uses the known signal strength and Gaussian priors, with coordinate variances estimated by the empirical second moments in \eqref{eq:BO-empirical-gamma}.

\paragraph{Initialization and convergence.} The plotted runs use weak teacher-informed initialization
\begin{equation}
\widehat{\bu}^{\,0}=\sqrt p\,\bu^\star+\sqrt{1-p}\,\bz,\qquad \bz\sim\N(0,I_d),
\end{equation}
In \Cref{fig:validation,fig:mask-sample-efficiency,fig:app-heterogeneous-pca}, AMP uses $d=4000$, $p=0.05$, and ten seeds per $\alpha$ value. Initial sample-amplitude estimates are Gaussian with variance $0.1$, with damping factor $\eta=0.75$.

\paragraph{Matched downstream comparison.} The lower row of \Cref{fig:mask-sample-efficiency} applies the same unweighted raw-feature logistic probe to AMP and autoencoder estimates, using $100$ labels and $10{,}000$ test examples. Its green curve combines the Bayes pretraining solution with the raw-feature SNR \eqref{eq:downstream-AMP-snr} and the finite-label logistic-probe formulas in \Cref{subsec:downstream-classification}.

\end{document}

%% file: authors.tex
\author{
\normalsize Jorge Medina Moreira\thanks{Corresponding author. jorge.medinamoreira@epfl.ch}, Lorenzo Bardone, Lenka Zdeborová\\
\small\itshape Statistical Physics of Computation Laboratory,\\
\small\itshape École Polytechnique Fédérale de Lausanne (EPFL), CH-1015 Lausanne, Switzerland
}
\hypersetup{
  pdftitle={The Hidden Advantage of Mask Resampling: A Theory of Masked Autoencoders},
  pdfauthor={Jorge Medina Moreira, Lorenzo Bardone, Lenka Zdeborová}
}

%% file: appendices/appendix_dynamic_linear_bound.tex
\subsection{Dynamic linear masking at fixed signal strength}
\label{app:dynamic-linear-bound}

For linear activation, the fixed-signal equations give both the recovery threshold and the feature alignment above it. We first identify when dynamic masking favors cleaner coordinates, then recover the homogeneous PCA learning curve. The comparison with static masking is developed in \Cref{app:mask-noise-adaptation}. Throughout, $\sig(h)=h$ and $\laR=0$.

\subsubsection{Quadratic output and the critical equations}
\label{subsec:dyn-bound-critical}
Write $Q=q_I$, $C=q_\Gamma$, and $\Delta=v_\Gamma$. The exact dynamic effective loss for the linear activation is
\begin{equation}
    \ell_{\rm dyn}^{\rm lin}(h;Q,C)=a h^2+bC, \qquad a=\rho(1-\rho)((1-\rho)Q-2),\qquad b=\rho(1-\rho)(\rho Q+2).
    \label{eq:dyn-bound-loss}
\end{equation}
For $D=1+2a\Delta>0$, the output proximal and force are
\begin{equation}
    h_\star=h_0/D,\qquad F=\mathcal{X} h_0,\qquad \mathcal{X}=-2a/D,\qquad h_0=\sqrt\beta M\lambda+\sqrt C z.
    \label{eq:dyn-bound-prox}
\end{equation}
Set $T=\beta M^2+C$. The explicit derivatives of the loss give
\begin{align}
    U&=\rho(1-\rho)\left((1-\rho)T/D^2+\rho C\right), &H&=b-\mathcal{X}/2,\nonumber\\
    \widehat m&=\alpha\beta\mathcal{X} M, &\widehat q(\gamma)&=\alpha\gamma\mathcal{X}^2T, &\widehat v(\gamma)&=2\alpha(U+\gamma H).
    \label{eq:dyn-bound-hats}
\end{align}
In particular, $H$ includes the explicit derivative $\partial_C\ell=b$; discarding it would impose the negative variance coefficient characteristic of the full-reconstruction susceptibility. On a nontrivial regular saddle, $U>0$ and
\begin{equation}
    v(\gamma)=\frac{1}{2\alpha U(1+\tau\gamma)},\qquad \tau=H/U,\qquad 1+\tau\gamma>0\quad P_\gamma\text{-a.s.}
    \label{eq:dyn-bound-v}
\end{equation}
Define, whenever the indicated moments are finite,
\begin{equation}
    I_{j,\ell}(\tau):=\E_\gamma \left[\frac{\gamma^j}{(1+\tau\gamma)^\ell}\right].
    \label{eq:dyn-bound-moments}
\end{equation}
At a regular continuous critical point with $M=0$ and $C>0$, the variance closure and the magnetization instability condition are
\begin{equation}
    1=\alpha\mathcal{X}^2\E_\gamma[\gamma^2v(\gamma)^2], \qquad 1=\alpha\beta\mathcal{X}\E_\gamma[v(\gamma)].
    \label{eq:dyn-bound-two-closures}
\end{equation}
Consequently $\mathcal{X}>0$, $0<D<1$, and
\begin{equation}
\begin{aligned}
    \alpha_{c,\mathrm{RS}}^{\rm dyn} &=\frac{I_{2,2}(\tau_c)}{\beta^2 I_{0,1}(\tau_c)^2} =\frac{\mathcal R_\gamma(\tau_c)}{\beta^2},\\
    \frac{Q}{C}&=\frac{I_{1,2}}{I_{2,2}},\qquad D=\left(1+\frac{I_{1,1}}{\sqrt{\alpha I_{2,2}}}\right)^{-1}.
\end{aligned}
    \label{eq:dyn-bound-critical-reduction}
\end{equation}
This is the fixed-signal factorization with output-efficiency factor one; no weak-signal expansion is involved.

\subsubsection{Upper bound and the masking condition}
\label{subsec:dyn-bound-rho}
A positive response parameter favors cleaner coordinates and lowers the threshold below the second-moment scale $\kappa_2/\beta^2$. We now relate this improvement to the masking rate. The monotonicity argument in \cref{app:fixed-beta-comparison} gives
\begin{equation}
    \frac{\dd}{\dd\tau}\log\mathcal R_\gamma(\tau) =-2\frac{I_{3,3}}{I_{2,2}}+2\frac{I_{1,2}}{I_{0,1}} \leq0,\qquad \mathcal R_\gamma(0)=\kappa_2.
    \label{eq:dyn-bound-monotone}
\end{equation}
For finite $\kappa_2$, continuity at zero extends this monotonicity bound even when $\kappa_3=\infty$. When the inverse moments are finite, the same calculation gives
\begin{equation}
    \frac{1}{\beta^2\E_\gamma[\gamma^{-2}]} \leq\frac{1}{\beta^2\E_\gamma[\gamma^{-1}]^2} \leq\alpha_{c,\mathrm{RS}}^{\rm dyn} \leq\frac{\kappa_2}{\beta^2},\qquad \tau_c\geq0.
    \label{eq:dyn-bound-sandwich}
\end{equation}
The last inequality needs only the finite second moment.

We now determine the sign of $\tau_c$. At $M=0$, write $A=1-\rho+\rho D^2$. Using $H=\rho(1-\rho)Q-\mathcal{X}(1-D)/2$, $U=\rho(1-\rho)CA/D^2$, and \eqref{eq:dyn-bound-two-closures}, one obtains
\begin{equation}
    \tau I_{2,2}+D I_{1,1}=\frac{D^2I_{1,2}}{A},\qquad \frac{I_{1,2}}{I_{1,1}}=\frac{(1+D)A}{A+D^2}.
    \label{eq:dyn-bound-rho-identity}
\end{equation}
Here $I_{1,1}-I_{1,2}=\tau I_{2,2}$. Since $0<D<1$,
\begin{equation}
    \tau\geq0 \quad\Longleftrightarrow\quad A\leq D \quad\Longleftrightarrow\quad \rho\geq\frac{1}{1+D}.
    \label{eq:dyn-bound-sign-D}
\end{equation}
At fixed $\alpha$, $D(\tau)$ in \eqref{eq:dyn-bound-critical-reduction} is nonincreasing, because
\begin{equation}
    \frac{\dd}{\dd\tau}\log\frac{I_{1,1}}{\sqrt{I_{2,2}}} =\frac{I_{1,1}I_{3,3}-I_{2,2}^2}{I_{1,1}I_{2,2}}\geq0.
\end{equation}
The numerator is nonnegative by Cauchy--Schwarz. At zero, $D(0)=\sqrt{\alpha\kappa_2}/(1+\sqrt{\alpha\kappa_2})$. Comparing the positive and negative sides of zero in \eqref{eq:dyn-bound-sign-D} therefore yields
\begin{equation}
    \tau_c\geq0 \quad\Longleftrightarrow\quad \rho\geq\frac{1+\sqrt{\alpha_c\kappa_2}} {1+2\sqrt{\alpha_c\kappa_2}}.
    \label{eq:dyn-bound-rho-alpha}
\end{equation}
For a nondegenerate law, strict monotonicity of $\mathcal R_\gamma$ implies that $\tau_c>0$ corresponds to $\alpha_c<\kappa_2/\beta^2$, and $\tau_c<0$ to the reverse inequality. The right-hand side of \eqref{eq:dyn-bound-rho-alpha} decreases with $\alpha_c$. Evaluating it at $\kappa_2/\beta^2$ gives the equivalent fixed-signal statement
\begin{equation}
    \tau_c\geq0\quad\Longleftrightarrow\quad \rho\geq\frac{\beta+\kappa_2}{\beta+2\kappa_2} \quad\Longrightarrow\quad \alpha_{c,\mathrm{RS}}^{\rm dyn}\leq\frac{\kappa_2}{\beta^2}.
    \label{eq:dyn-bound-final-rho}
\end{equation}
Equality holds at $\tau_c=0$. For the homogeneous law the threshold is $\beta^{-2}$ for every regular critical solution, irrespective of the sign of $\tau_c$.

For unbounded support, positivity in \eqref{eq:dyn-bound-v} rules out $\tau_c<0$, so the masking condition also bounds the domain of this continuous-onset calculation. It does not imply that recovery is impossible at smaller masking rates: under the Gaussian model with variances bounded away from zero and finite second moment, \Cref{thm:unbounded-dynamic-recovery} guarantees recovery at sufficiently large sample ratio for every $\rho\in(0,1)$. The transition mechanism at those smaller rates remains to be determined.

\subsubsection{Feature alignment above threshold and the homogeneous benchmark}
\label{subsec:dyn-bound-informative}
The full linear dynamic equations can also be reduced without linearizing in $M$. To avoid confusing the cosine $\theta_{\bu^\star}$ with a ratio of second moments, define $z_M=M^2/C$ and $r_Q=Q/C$. For $M\ne0$, the magnetization equation gives $\mathcal{X}/(2U)=1/(\beta I_{0,1})$, hence
\begin{equation}
    q(\gamma)=\frac{M^2+\gamma T/(\alpha\beta^2)} {I_{0,1}^2(1+\tau\gamma)^2}.
\end{equation}
The remaining equations reduce to
\begin{align}
    z_M&=\frac{I_{0,1}^2-I_{2,2}/(\alpha\beta^2)} {I_{1,2}+I_{2,2}/(\alpha\beta)},\nonumber\\
    r_Q&=\frac{z_MI_{0,2}+(1+\beta z_M)I_{1,2}/(\alpha\beta^2)} {I_{0,1}^2},\nonumber\\
    D&=\left(1+\frac{I_{1,1}}{\alpha\beta I_{0,1}}\right)^{-1},\nonumber\\
    \Phi(\tau,\alpha)&:=\tau- \frac{r_Q}{\rho+(1-\rho)(1+\beta z_M)D^{-2}} +\frac{1-D}{\beta I_{0,1}}=0.
    \label{eq:dyn-bound-informative-reduction}
\end{align}
All $I_{j,\ell}$ are evaluated at $\tau$; $\beta,\rho,P_\gamma$ are fixed in the notation $\Phi(\tau,\alpha)$. A root must have $z_M>0$ and satisfy the regularity conditions. Its scale is recovered from
\begin{equation}
\begin{aligned}
    C&=\frac{2}{(1-\rho) r_Q+ [\rho+(1-\rho)(1+\beta z_M)D^{-2}]D/(\beta I_{0,1})},\\
    Q&=r_Q C,\qquad M^2=z_M C.
\end{aligned}
\end{equation}
The squared alignment is $\theta_{\bu^\star}^2=z_M/r_Q$. For $P_\gamma=\delta_1$, all susceptibility factors cancel, $r_Q=1$, and this becomes
\begin{equation}
    \theta_{\bu^\star}^2 =\frac{\alpha\beta^2-1}{\alpha\beta^2+\beta}, \qquad \alpha>\beta^{-2},
    \label{eq:dyn-bound-homogeneous}
\end{equation}
which recovers the PCA learning curve throughout the informative regime, consistent with the agreement observed in \Cref{fig:validation}(a).

%% file: appendices/appendix_mask_adaptation.tex
\subsection{Mask diversity changes how the learner uses heterogeneous noise}
\label{app:mask-noise-adaptation}

The weak-signal law in \eqref{eq:mask-diversity-threshold} describes an additional sample requirement caused by mask reuse. At finite signal strength, the replica equations reveal a further distinction: static and dynamic masking can use the coordinate noise profile differently when learning first becomes possible. Static masking has a noise-independent response to an emerging feature at its regular linear recovery transition. Dynamic masking can instead give cleaner coordinates a larger response, lowering the threshold through an adaptation that the static critical solution lacks. Thus the protocols differ not only in how many samples they require, but also in how they extract the feature from those samples.

We develop this interpretation using the fixed-signal factorization in \cref{subsec:fixed-beta-factorization} and the dynamic linear equations in \cref{app:dynamic-linear-bound}. Throughout this subsection, $\sig(h)=h$, $\laR=0$, $\E_\gamma\gamma=1$, and $\kappa_2=\E_\gamma\gamma^2<\infty$; the concrete examples have finite positive variance support and Gaussian latent priors. We evaluate the response at the continuous recovery onset defined above.

\subsubsection{Coordinate response separates noise adaptation from mask reuse}
\label{subsec:mask-adaptation-response}

The coordinate susceptibility $v_c(\gamma)$ describes the response to an infinitesimal informative field near recovery. In particular, the input equation gives $m(\gamma)=\widehat m v(\gamma)$, with the same $\widehat m$ at every coordinate variance. Ratios of susceptibilities therefore determine where the emerging feature alignment is concentrated. Using \eqref{eq:fixed-beta-v-tau}, for two variances $0<\gamma_1<\gamma_2$,
\begin{equation}
    \frac{v_c(\gamma_1)}{v_c(\gamma_2)} =\frac{1+\tau_c\gamma_2}{1+\tau_c\gamma_1}.
    \label{eq:mask-adaptation-response-ratio}
\end{equation}
For $\tau_c>0$, this ratio exceeds one: the response favors cleaner coordinates. At $\tau_c=0$ it is uniform, while negative $\tau_c$ favors noisier coordinates on its admissible domain. This is a statement about the response to the shared feature, not equality or ordering of the realized coordinate weights. The noise levels are not supplied to the autoencoder; their influence on the response emerges from the reconstruction objective and the data.

The general finite-$K$ factorization \eqref{eq:fixed-beta-factorization-R} separates two contributions: $\eta_{K,c}^2$ measures output efficiency, while $\mathcal R_\gamma(\tau_{K,c})$ measures how coordinate reweighting changes the effective noise burden. Both depend on mask diversity through the full equations. The static and dynamic endpoints below show how the response itself changes, beyond the sample penalty captured by the weak-signal law.

\subsubsection{Why static masking cannot adapt its response at the transition}
\label{subsec:static-noise-response}

For $K=1$, the regular uninformative equations force $\mathcal B_{1,0}=0$. We give the short elimination because this identity is the source of the distinction, rather than an imposed approximation. Write $Q=q_{I,0}$, $C=q_{\Gamma,0}$, and $\Delta=v_{\Gamma,0}$, and use the visible and hidden fields $y,t$ of the $K=1$ channel. At $M=0$, their Gaussian centers $y_0,t_0$ are independent, with variances $(1-\rho)C$ and $\rho C$. The quadratic loss is $\rho Qy^2-2yt$. Eliminating $t$ in the proximal gives
\begin{equation}
    y_\star=\frac{y_0/[(1-\rho)\Delta]+2t_0}{\mathcal D_{\rm stat}}, \qquad \mathcal D_{\rm stat} =\frac1{(1-\rho)\Delta}+2\rho(Q-2\Delta)>0.
    \label{eq:mask-adaptation-static-prox}
\end{equation}
The last inequality is the remaining proximal-curvature condition. Evaluating the definitions in \eqref{eq:fixed-beta-A}, \eqref{eq:fixed-beta-B}, and \eqref{eq:fixed-beta-E} gives
\begin{align}
    \mathcal A_{1,0} &=\frac{\rho C}{\mathcal D_{\rm stat}^2} \left[\frac1{(1-\rho)\Delta^2}+4\rho\right],\nonumber\\
    \mathcal B_{1,0} &=\frac1{2\Delta}-\frac1{2\mathcal D_{\rm stat}} \left[\frac1{(1-\rho)\Delta^2}+4\rho\right],\nonumber\\
    \mathcal E_{1,0} &=\left[\frac1{(1-\rho)\Delta^2}+4\rho\right] \frac{\mathcal A_{1,0}}{\rho} +\frac C{\Delta^2} -\frac{2C}{(1-\rho)\Delta^3\mathcal D_{\rm stat}}.
    \label{eq:mask-adaptation-static-output}
\end{align}
The input equations \eqref{eq:fixed-beta-vgamma}--%
\eqref{eq:fixed-beta-vGamma-closure} imply
\begin{equation}
    \mathcal E_{1,0}\Delta =2\mathcal A_{1,0}Q+2\mathcal B_{1,0}C.
    \label{eq:mask-adaptation-static-input}
\end{equation}
Substituting the explicit output expressions into the difference of the two sides yields the identity
\begin{equation}
    \mathcal E_{1,0}\Delta-2\mathcal A_{1,0}Q-2\mathcal B_{1,0}C =-\frac{4C\mathcal B_{1,0}} {(1-\rho)\Delta\mathcal D_{\rm stat}}.
    \label{eq:mask-adaptation-static-cancellation}
\end{equation}
Since $C,\Delta,\mathcal D_{\rm stat}>0$ and $0<\rho<1$, the input closure forces $\mathcal B_{1,0}=0$. Consequently,
\begin{equation}
    \tau_{1,c}=0,\qquad v_0(\gamma)=\Delta,\qquad Q=4\Delta,\qquad C=\kappa_2Q.
    \label{eq:mask-adaptation-static-flat}
\end{equation}
Here the constant susceptibility follows from $\E_\gamma\gamma=1$; the relations for $Q,C$ follow from \eqref{eq:mask-adaptation-static-output} and the variance closures. In fact $q_0(\gamma)=4\gamma\Delta$, so noisy coordinates still contribute more to the uninformative weight fluctuations.

The remaining closure gives
\begin{equation}
    \Delta=\frac1{2\sqrt{\rho(1-\rho)(2\alpha\kappa_2-1)}}, \qquad 2\alpha\kappa_2>1.
    \label{eq:mask-adaptation-static-delta}
\end{equation}
Evaluating the teacher-response gain \eqref{eq:fixed-beta-instability-condition} then gives
\begin{equation}
    \alpha\beta\mathcal X_{1,0}\E_\gamma v_0 =\frac\beta{\kappa_2} \left[\sqrt{\rho(1-\rho)(2\alpha\kappa_2-1)} +\frac{2\rho-1}{2}\right].
    \label{eq:mask-adaptation-static-gain}
\end{equation}
Setting this gain to one yields the regular static threshold
\begin{equation}
    \alpha_c^{\rm stat} =\frac1{2\kappa_2} \left[1+\frac{(\kappa_2/\beta+1/2-\rho)^2}{\rho(1-\rho)}\right], \qquad \frac{\kappa_2}{\beta}+\frac12-\rho>0.
    \label{eq:mask-adaptation-static-threshold}
\end{equation}
Thus two variance laws with the same mean and second moment have the same regular static threshold, even if their clean coordinates occur with very different frequencies. This insensitivity is specific to the uninformative saddle and its recovery transition: it does not imply that static masking cannot develop noise-dependent feature alignment after recovery.

\subsubsection{Dynamic masking exploits cleaner coordinates at recovery onset}
\label{subsec:dynamic-noise-response}

Mask averaging changes the output channel. In the linear case, its explicit dependence on $C$ contributes to $H$ in \eqref{eq:dyn-bound-hats}; the static cancellation above no longer applies. The resulting susceptibility and threshold are already derived in \eqref{eq:dyn-bound-v} and \eqref{eq:dyn-bound-critical-reduction}. Their interpretation follows from the monotonicity proved in \cref{subsec:fixed-beta-R-bounds}: positive $\tau_c$ downweights high-noise coordinates and strictly lowers $\mathcal R_\gamma$ below its neutral value $\kappa_2$ for a nondegenerate variance law.

The masking condition \eqref{eq:dyn-bound-final-rho} makes the relevant regime explicit:
\begin{equation}
    \rho>\frac{\beta+\kappa_2}{\beta+2\kappa_2} \quad\Longrightarrow\quad \tau_c>0,\qquad \alpha_c^{\rm dyn}<\frac{\kappa_2}{\beta^2},
    \label{eq:mask-adaptation-dynamic-improvement}
\end{equation}
whenever the regular critical saddle exists and the variance law is nonconstant. The learner can therefore exploit the heterogeneous reliability of coordinates while the informative direction is only beginning to emerge. Static masking, whose response is uniform at this point, cannot obtain the same threshold reduction by such reweighting.

For $0<\beta<2\kappa_2$, optimizing \eqref{eq:mask-adaptation-static-threshold} gives
\begin{equation}
    \min_\rho\alpha_c^{\rm stat}=\frac{2\kappa_2}{\beta^2}, \qquad \rho_{\rm stat}^\star=\frac12+\frac\beta{4\kappa_2}.
    \label{eq:mask-adaptation-static-optimum}
\end{equation}
Hence a dynamic critical saddle with $\tau_c>0$ requires less than half the sample ratio of even the best regular static crossing in this regime. The factor-of-two weak-signal comparison is therefore not the whole advantage: finite-signal noise adaptation can increase it. The signal range above is the domain of this optimized-static comparison. The dynamic masking boundary also depends on $\beta$; its advantage at fixed $\rho$ therefore need not increase with signal strength.

\subsubsection{Matched second moments expose the different mechanisms}
\label{subsec:mask-adaptation-examples}

Consider the three mean-one laws
\begin{align}
    P_\gamma^{\rm balanced} &=\tfrac12\delta_{1-\sqrt{0.45}} +\tfrac12\delta_{1+\sqrt{0.45}},\nonumber\\
    P_\gamma^{\rm rare\ high} &=\tfrac{320}{329}\delta_{71/80}+\tfrac9{329}\delta_5,\nonumber\\
    P_\gamma^{\rm very\ clean} &=\tfrac{180}{541}\delta_{1/20} +\tfrac{361}{541}\delta_{28/19}.
    \label{eq:mask-adaptation-matched-laws}
\end{align}
Each has $\kappa_2=29/20$, but the distributions of reliable and unreliable coordinates are different. The balanced law divides its mass equally between two noise levels. The second assigns a small fraction of coordinates a high variance. The third contains a substantial fraction of particularly clean coordinates.

\begin{table}[!htbp]
\centering
\caption{\textbf{Identical static thresholds, different dynamic adaptation.} Regular linear replica thresholds for the laws in \eqref{eq:mask-adaptation-matched-laws} at $\beta=1.5$, $\rho=0.95$, and zero ridge. All three laws have unit mean and $\kappa_2=1.45$. The response ratio compares the lowest- and highest-variance coordinates using \eqref{eq:mask-adaptation-response-ratio}. Entries are computed numerically from the replica equations.}
\label{tab:mask-adaptation-matched}
\begin{tabular}{lrrrr}
\toprule
Variance law & $\alpha_c^{\rm stat}$ & $\alpha_c^{\rm dyn}$
 & $\tau_c^{\rm dyn}$ & $v_c(\gamma_{\min})/v_c(\gamma_{\max})$\\
\midrule
Balanced & 2.28272 & 0.52222 & 0.19380 & 1.24442\\
Rare high variance & 2.28272 & 0.43843 & 0.26078 & 1.87089\\
Very clean component & 2.28272 & 0.54817 & 0.17987 & 1.25380\\
\bottomrule
\end{tabular}
\end{table}

\Cref{tab:mask-adaptation-matched} shows that matching the static threshold does not match dynamic learning. In this example, resampling is particularly effective for the rare-high-variance law, for which the response most strongly suppresses the noisy component. The common optimized regular static threshold is $1.28889$, so the dynamic benefit also survives optimizing the static masking ratio. The comparison isolates dependence on the shape of $P_\gamma$ beyond its second moment, rather than a change in average noise power or signal strength.

The entries are obtained from the linear critical equations, with numerical checks reported in \Cref{app:experimental-metadata}.

The rare-high-variance law even has $\alpha_c^{\rm dyn}=0.43843<1/\beta^2=0.44444$, the homogeneous threshold at the same mean variance and signal strength. Conversely, the very clean component does not give the smallest dynamic threshold: the benefit depends on how the reconstruction objective uses the entire noise profile. Dynamic adaptation still differs from fully noise-aware inference, as quantified by the inverse-moment bounds in \eqref{eq:dyn-bound-sandwich} and the Bayes comparison in \Cref{app:heteroPCAs}.

\newpage

%% file: appendices/appendix_linear_separation.tex
\subsection{PCA obstructions and bounded-noise masking recovery}
\label{app:linear-separation}

This subsection establishes the PCA obstructions and bounded-noise masking recovery used in \cref{thm:main-separation}; \cref{app:dynamic-unbounded} extends masking recovery beyond bounded variance support. The PCA statements concern the leading empirical principal component. The masking statements concern every global minimizer of the exact mask-averaged objective, with no replica assumption and no claim about convergence of a particular optimization algorithm. The sufficient sample ratio is deliberately non-sharp.

\subsubsection{Model, PCA reduction, and the homogeneous benchmark}
\label{subsec:sep-pca}
Use \eqref{eq:data-model} with independent standard Gaussian $u_i^\star,\lambda_\mu,z_{\mu i}$, independent positive $\gamma_i^\star\iid P_\gamma$, and $n/d\to\alpha\in(0,\infty)$. Define
\begin{equation}
    \bm e=\frac{\bu^\star}{\|\bu^\star\|},\qquad \beta_d=\frac{\beta\|\bu^\star\|^2}{d},\qquad \Sigma_d=\Gamma^\star+\beta_d\bm e\bm e^\top,\qquad \widehat\Sigma=\frac1n\sum_\mu\bx_\mu\bx_\mu^\top.
    \label{eq:sep-model}
\end{equation}
Conditionally on the quenched variables $\bu^\star,\Gamma^\star$, the samples are independent $\N(0,\Sigma_d)$ vectors. With the sample-first matrices $X,Z$ of \cref{app:notation}, write
\begin{equation}
    Y:=\frac{X}{\sqrt n} =N+\sqrt\beta\,\frac{\bm\lambda}{\sqrt n} \frac{(\bu^\star)^\top}{\sqrt d}, \qquad N=\frac1{\sqrt n}Z(\Gamma^\star)^{1/2}.
    \label{eq:sep-data-matrix}
\end{equation}
Both $Y$ and $N$ are $n\times d$, and $\widehat\Sigma=Y^\top Y$. The PCA direction $\widehat{\br}_{\rm PCA}$ is the leading right singular vector of $Y$; its squared overlap is $|\bm e^\top\widehat{\br}_{\rm PCA}|^2$.

For a linear tied autoencoder, $\bw=\sqrt{dQ}\,\br$, $\|\br\|=1$, produces $\widehat\bx=Q\br\br^\top\bx$. Therefore \eqref{eq:main-full-objective}, divided by $n$ and with the target-only constant $n^{-1}\sum_\mu\|\bx_\mu\|^2$ subtracted, is exactly
\begin{equation}
    R_{\rm full}(Q,\br) =(Q^2-2Q)\br^\top\widehat\Sigma\br.
    \label{eq:sep-linear-pca-identity}
\end{equation}
Since the largest sample eigenvalue is positive, the global minimum has $Q=1$ and $\br$ in its top eigenspace. This proves the finite-dimensional equivalence with PCA, including the radial optimization.

For $P_\gamma=\delta_1$, the rectangular rank-one singular-vector transition \citep{benaych2012singular} gives
\begin{equation}
    |\bm e^\top\widehat{\br}_{\rm PCA}|^2\longrightarrow
    \begin{cases}
       0,&\alpha\beta^2\leq1,\\[2pt]
       \displaystyle\frac{1-1/(\alpha\beta^2)}{1+1/(\alpha\beta)}, &\alpha\beta^2>1,
    \end{cases}
    \label{eq:sep-homogeneous-pca}
\end{equation}
in probability. \Cref{subsec:dyn-bound-informative} derives the matching regular dynamic linear RS formula.

\subsubsection{PCA failure for unbounded coordinate noise}
\label{subsec:sep-unbounded}
\begin{proposition}[Unbounded variance support]
\label[proposition]{prop:sep-unbounded-pca}
Under \eqref{eq:sep-model}, if $P_\gamma$ has unbounded support and $\E_\gamma\gamma=1$, then for every fixed $\beta>0$ and $\alpha>0$, $|\bm e^\top\widehat{\br}_{\rm PCA}|^2\to0$ in probability.
\end{proposition}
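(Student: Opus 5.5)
The plan is to show that the top sample eigenvalue is driven by a few coordinates with extreme variance. Those coordinates create eigenvalues that grow without bound. The top eigenvector then concentrates on them, while its overlap with the delocalized direction $\bm e$ vanishes.

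\textbf{Step 1: an unbounded lower bound on the top eigenvalue.} Since $P_\gamma$ has unbounded support, for every level $L$ we have $p_L:=P_\gamma(\gamma>L)>0$. With probability tending to one, about $p_Ld$ coordinates have $\gamma_i^\star>L$. The diagonal entries of $\widehat\Sigma$ are $\widehat\Sigma_{ii}=\gamma_i^\star\chi^2_n/n+O_{\mathbb P}(d^{-1/2})$. Therefore
\[
\lambda_{\max}(\widehat\Sigma)\ge\max_i\widehat\Sigma_{ii}\ge L(1-o(1))
\]
for every fixed $L$, so $\lambda_{\max}\to\infty$ in probability. If $\gamma_{\max}$ grows slowly, one can instead use the principal submatrix on the set $S_L=\{i:\gamma_i^\star>L\}$. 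It is a sample covariance with $n/|S_L|\approx\alpha/p_L$ and variances at least $L$, so its top eigenvalue is at least $L$ times something of order $(1+\sqrt{p_L/\alpha})^2\ge1$.

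\textbf{Step 2: the top eigenvector has vanishing overlap with $\bm e$.} Write $\widehat\Sigma=N^\top N+\Delta$, where $\Delta$ collects the rank-two signal and cross terms, and $\|\Delta\|_{\rm op}=O_{\mathbb P}(1)$ is not automatic since $N$ can be large. I would use the secular-equation (resolvent) approach instead. Let $\mu=\lambda_{\max}(\widehat\Sigma)$ and let $\widehat{\br}$ be its eigenvector. Split the coordinates into $S_L$ and its complement $S_L^c$. On $S_L^c$ the variances are bounded by $L$, so the submatrix $B:=\widehat\Sigma_{S_L^cS_L^c}$ has operator norm $O_{\mathbb P}(L(1+\alpha^{-1/2})^2+\beta)$. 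This uses the standard bound $\|N_{S^c}\|_{\rm op}\le\sqrt{L}\,\|Z/\sqrt n\|_{\rm op}$ plus the rank-one spike.

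Choose $L$ large enough that $\mu\ge 2\|B\|_{\rm op}$ whp. The eigen-equation then gives
\[
\widehat{\br}_{S^c}=(\mu-B)^{-1}\widehat\Sigma_{S^cS}\widehat{\br}_S .
\]
The contribution of $\bm e$ on $S$ is small, since $\|\bm e_S\|^2\approx p_L\to0$ as $L\to\infty$. For the part on $S^c$,
\[
|\bm e_{S^c}^\top\widehat{\br}_{S^c}|\le\|(\mu-B)^{-1}\|_{\rm op}\,|\widehat\Sigma_{S^cS}^\top\bm e_{S^c}|\cdots .
\]
This is controlled because $\widehat\Sigma_{S^cS}^\top\bm e_{S^c}$ involves only the independent noise columns, together with a signal term of order $\sqrt{\beta p_L}$. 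I would bound it by $O(\sqrt{\mu})$ at most, through $\|\widehat\Sigma_{S^cS}\|\le\sqrt{\|B\|\mu}$ by Cauchy interlacing of the PSD block structure. This gives
\[
|\bm e^\top\widehat{\br}|\lesssim\sqrt{p_L}+\sqrt{\|B\|/\mu}.
\]
Finally, let $L\to\infty$ slowly. Then $p_L\to0$, while Step 1 (with a threshold $L'\gg L$ inside the support) gives $\mu/\|B\|\to\infty$.

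\textbf{Main obstacle.} The delicate point is the cross-term bound in Step 2. The crude bound $\|\widehat\Sigma_{S^cS}\|\le\sqrt{\|B\|\mu}$ only yields $|\bm e_{S^c}^\top\widehat{\br}|\lesssim\sqrt{\|B\|/\mu}$. That suffices only if Step 1 produces $\mu$ growing faster than the bulk scale $\|B\|=O(L)$. I would handle this with a two-level choice: fix $L$, so that $\|B\|=O_L(1)$, then take $L'\to\infty$ in Step 1 so that $\mu\ge L'(1-o(1))$, and finally send $L\to\infty$. This yields $\limsup|\bm e^\top\widehat{\br}|^2\le Cp_L$ for every $L$, hence the claimed convergence to zero.
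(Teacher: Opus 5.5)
Your argument is correct, but it takes a longer route than the paper. The paper's proof also starts from the divergence of the top singular value, obtained from the column of maximal variance much as in your Step 1. It then finishes in one line. From $Y^\top\widehat{\bm v}_1=s_1(Y)\widehat{\br}_{\rm PCA}$ it gets $|\bm e^\top\widehat{\br}_{\rm PCA}|\le\|Y\bm e\|/s_1(Y)$, which is the same as $(\bm e^\top\widehat{\br})^2\le\bm e^\top\widehat\Sigma\bm e/\lambda_{\max}(\widehat\Sigma)$ because $\widehat\Sigma\succeq\lambda_{\max}\widehat{\br}\widehat{\br}^\top$. Here $\bm e^\top\widehat\Sigma\bm e=\|Y\bm e\|^2$ is $O_{\mathbb P}(1)$, since $\E\|Y\bm e\|^2=1+\beta$ ($\bm e$ is independent of $\Gamma^\star$ and $\E_\gamma\gamma=1$). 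So your whole Step 2 can be replaced by this Rayleigh-quotient inequality once Step 1 is in hand. That includes the split into $S_L$ and $S_L^c$, the eigen-equation $\widehat{\br}_{S^c}=(\mu-B)^{-1}\widehat\Sigma_{S^cS}\widehat{\br}_S$, the cross-block bound $\|\widehat\Sigma_{S^cS}\|_{\rm op}\le\sqrt{\|B\|_{\rm op}\,\mu}$ (immediate from $\widehat\Sigma_{S^cS}=Y_{S^c}^\top Y_S$), and the two-level limit. What your route buys in exchange is extra information and slightly weaker hypotheses. For each fixed $L$ you get $\|\widehat{\br}_{S_L^c}\|\le2\sqrt{\|B\|_{\rm op}/\mu}\to0$, so the top eigenvector localizes on the coordinates whose variance exceeds any fixed level. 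You also never use the finite mean of $\gamma$: you need only $p_L\to0$, Gaussian operator-norm bounds, and independence of $\bu^\star$ from $\Gamma^\star$. The paper's bound on $\|Y\bm e\|$, by contrast, does use $\E_\gamma\gamma<\infty$. One presentational point: ``choose $L$ large enough that $\mu\ge2\|B\|_{\rm op}$'' has the logic backwards, since $\|B\|_{\rm op}$ grows with $L$. The correct order is the one in your last paragraph: fix $L$, let $\mu\to\infty$, and only then send $L\to\infty$.
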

\begin{proof}
Let $i_d$ maximize $\gamma_i^\star$. Since $P_\gamma$ is unbounded, $\gamma_{i_d}^\star\to\infty$ in probability. The selected column of $Z$ is independent of this selection, and its squared norm divided by $n$ converges in probability to one. Thus
\begin{equation}
    \|N\|_{\mathrm{op}}\geq\sqrt{\gamma_{i_d}^\star}\, \frac{\|Z_{:,i_d}\|}{\sqrt n} \longrightarrow\infty.
\end{equation}
The signal term in \eqref{eq:sep-data-matrix} has operator norm $\sqrt\beta\|\bu^\star\|\|\bm\lambda\|/\sqrt{dn}=O_{\mathbb P}(1)$. Hence the top singular value $s_1(Y)$ also diverges in probability. On the other hand,
\begin{equation}
    \E\|Y\bm e\|^2 =\E[\bm e^\top\Sigma_d\bm e]=1+\beta,
\end{equation}
because $\bm e$ is independent of $\Gamma^\star$ and $\E\beta_d=\beta$. $\|Y\bm e\|$ is consequently bounded in probability. Let $\widehat{\bm v}_1\in\R^n$ be the leading left singular vector paired with $\widehat{\br}_{\rm PCA}$. Then $Y^\top\widehat{\bm v}_1=s_1(Y)\widehat{\br}_{\rm PCA}$ gives
\begin{equation}
    |\bm e^\top\widehat{\br}_{\rm PCA}| =\frac{|\widehat{\bm v}_1^\top Y\bm e|}{s_1(Y)}\le\frac{\|\widehat{\bm v}_1^\top\|\|Y\bm e\|}{s_1(Y)}\longrightarrow0.
\end{equation}
This argument needs no assumptions about localization of the top eigenvector and no finite second moment of $\gamma$.
\end{proof}

\subsubsection{A bounded-noise obstruction at every sample ratio}
\label{subsec:sep-compact-pca}
The following lemma identifies when the population covariance has an eigenvalue separated from the variance support.

\begin{lemma}[Population outlier equation]
\label[lemma]{lem:sep-population-outlier}
Under \eqref{eq:sep-model}, suppose $P_\gamma$ is supported in $[a,\gamma_+]\subset(0,\infty)$, with $\gamma_+$ its essential upper endpoint. Let $m_+=\E_\gamma[(\gamma_+-\gamma)^{-1}]\in(0,\infty]$. If $\beta m_+>1$, the largest eigenvalue of $\Sigma_d$ converges in probability to the unique $t>\gamma_+$ satisfying
\begin{equation}
    1=\beta\E_\gamma\frac1{t-\gamma}.
    \label{eq:sep-population-outlier}
\end{equation}
If $\beta m_+\leq1$, the largest eigenvalue instead converges in probability to $\gamma_+$.
\end{lemma}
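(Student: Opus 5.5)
The plan is to use the rank-one perturbation structure of $\Sigma_d=\Gamma^\star+\beta_d\bm e\bm e^\top$ and its secular equation. Since $\Gamma^\star$ is diagonal, any eigenvalue $t$ of $\Sigma_d$ that is not an eigenvalue of $\Gamma^\star$ satisfies
\begin{equation*}
 1=\beta_d\,\bm e^\top(t-\Gamma^\star)^{-1}\bm e=\frac{\beta}{d}\sum_{i=1}^d\frac{(u_i^\star)^2}{t-\gamma_i^\star}=:g_d(t).
\end{equation*}
By Weyl interlacing, $\lambda_{\max}(\Sigma_d)\geq\max_i\gamma_i^\star$, and at most one eigenvalue exceeds $\max_i\gamma_i^\star$. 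That eigenvalue exists precisely when $g_d(t)=1$ has a root above $\max_i\gamma_i^\star$; otherwise $\lambda_{\max}=\max_i\gamma_i^\star$. On $(\max_i\gamma_i^\star,\infty)$ the function $g_d$ is strictly decreasing from $+\infty$ to $0$, so the root exists and is unique. The two key facts are therefore the behavior of $\max_i\gamma_i^\star$ and the convergence of $g_d$.

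First, since $\gamma_+$ is the essential supremum and the $\gamma_i^\star$ are i.i.d., $\max_i\gamma_i^\star\to\gamma_+$ in probability. It never exceeds $\gamma_+$ almost surely, and for each $\epsilon>0$ the probability $P_\gamma(\gamma>\gamma_+-\epsilon)$ is positive. Second, for each fixed $t>\gamma_+$ the summands $(u_i^\star)^2/(t-\gamma_i^\star)$ are i.i.d. and bounded by $(u_i^\star)^2/(t-\gamma_+)$. Using independence of $u^\star$ and $\gamma^\star$ together with $\E(u^\star)^2=1$, the law of large numbers gives $g_d(t)\to g(t):=\beta\E_\gamma(t-\gamma)^{-1}$ in probability. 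Because every $g_d$ is monotone and $g$ is continuous and strictly decreasing on $(\gamma_+,\infty)$, pointwise convergence upgrades to uniform convergence on compacts $[\gamma_++\epsilon,T]$ by a Dini/Pólya-type argument.

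Case $\beta m_+>1$. By monotone convergence, $g(t)\uparrow\beta m_+>1$ as $t\downarrow\gamma_+$, and $g(t)\to0$ as $t\to\infty$. So there is a unique $t^\star>\gamma_+$ with $g(t^\star)=1$, and for small $\epsilon$ we have $g(t^\star-\epsilon)>1>g(t^\star+\epsilon)$, with $t^\star-\epsilon>\gamma_+$. With probability tending to one, $g_d(t^\star-\epsilon)>1>g_d(t^\star+\epsilon)$ and $\max_i\gamma_i^\star<t^\star-\epsilon$. Monotonicity then places the unique root of $g_d$, which is $\lambda_{\max}(\Sigma_d)$, in $(t^\star-\epsilon,t^\star+\epsilon)$.

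Case $\beta m_+\leq1$. Here $g(t)<1$ for every $t>\gamma_+$. Fix $\epsilon>0$. With probability tending to one, $g_d(\gamma_++\epsilon)<1$ and $\max_i\gamma_i^\star>\gamma_+-\epsilon$. Since $g_d$ decreases past $\max_i\gamma_i^\star$, any root above $\max_i\gamma_i^\star$ must lie below $\gamma_++\epsilon$, and in the no-root case $\lambda_{\max}=\max_i\gamma_i^\star$. Either way $\lambda_{\max}\in(\gamma_+-\epsilon,\gamma_++\epsilon)$. The lower bound always comes from interlacing.

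The main obstacle I expect is the near-edge behavior when $m_+=\infty$ or when $g_d$ is close to $1$ just above $\gamma_+$. The fixed-$t$ law of large numbers does not control $g_d$ as $t\downarrow\gamma_+$, and heavy tails of $(t-\gamma)^{-1}$ near the edge could create spurious roots. The plan avoids this by working only at fixed $t$ bounded away from $\gamma_+$ and relying on monotonicity plus the interlacing lower bound; neither argument needs control of $g_d$ on $(\max_i\gamma_i^\star,\gamma_++\epsilon)$. One detail needs checking: that $\beta_d\to\beta$ and the $(u_i^\star)^2$ weights enter correctly when $\bm e$ is normalized. This is immediate from the definitions in \eqref{eq:sep-model}.
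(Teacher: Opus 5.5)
Your proposal is correct and follows essentially the same route as the paper: the exact secular equation $g_d(\lambda_{\max}(\Sigma_d))=1$, the pointwise law of large numbers at fixed deterministic $t>\gamma_+$, and strict monotonicity of $g_d$ to trap the random root between $t^\star\pm\epsilon$ (or below $\gamma_++\epsilon$), with the lower bound coming from $\max_i\gamma_i^\star\to\gamma_+$. The differences are cosmetic: you get $\lambda_{\max}(\Sigma_d)\geq\max_i\gamma_i^\star$ from Weyl interlacing rather than the paper's Rayleigh-quotient argument (which gives strict inequality because $u_i^\star\neq0$ almost surely), and your Dini/P\'olya uniform-convergence remark is never actually used.
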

\begin{proof}
Write $t_d=\lambda_{\max}(\Sigma_d)$. Almost surely, $t_d>\max_i\gamma_i^\star$: the Rayleigh quotient at a coordinate maximizing $\gamma_i^\star$ is strictly larger than that maximum, since every Gaussian teacher coordinate is nonzero almost surely. For a corresponding eigenvector $\bv\ne0$,
\[
 (t_d\id-\Gamma^\star)\bv =\beta_d\bm e(\bm e^\top\bv).
\]
The matrix on the left is invertible, and $\bm e^\top\bv\ne0$; otherwise $\bv=0$. Multiplying by $\bm e^\top(t_d\id-\Gamma^\star)^{-1}$ and cancelling gives the exact finite-dimensional equation
\[
 1=\beta_d\sum_{i=1}^d\frac{e_i^2}{t_d-\gamma_i^\star} =\frac\beta d\sum_{i=1}^d \frac{(u_i^\star)^2}{t_d-\gamma_i^\star}.
\]
For this proof, define the random function
\[
 F_d(t)=\frac\beta d\sum_{i=1}^d \frac{(u_i^\star)^2}{t-\gamma_i^\star}, \qquad t>\max_i\gamma_i^\star,
\]
and its deterministic limit
\[
 F(t)=\beta\E_\gamma\frac1{t-\gamma}, \qquad t>\gamma_+.
\]
Thus $F_d(t_d)=1$, and $F_d$ is almost surely strictly decreasing on its domain. For each fixed deterministic $t>\gamma_+$, independence and the law of large numbers yield
\[
 F_d(t)\xrightarrow{\mathbb P}F(t).
\]
Indeed, the summands are i.i.d. and bounded above by $(u_i^\star)^2/(t-\gamma_+)$, which is integrable, and $\E[(u_i^\star)^2]=1$. The function $F$ is continuous and strictly decreasing on $(\gamma_+,\infty)$, with limits $\beta m_+$ at the left endpoint and zero at infinity. Thus it equals one at a unique point in this interval exactly when $\beta m_+>1$.

It remains to pass from convergence at deterministic arguments to convergence of the random solution $t_d$ of $F_d(t_d)=1$; the pointwise law of large numbers alone does not justify substituting the random argument $t_d$. Monotonicity allows us to locate this solution by evaluating $F_d$ at deterministic arguments on either side of its proposed limit.

Suppose first that $\beta m_+>1$, and denote the unique solution of $F(t)=1$ by $t_\star>\gamma_+$. Fix $0<\varepsilon<t_\star-\gamma_+$ and set $t_-=t_\star-\varepsilon$ and $t_+=t_\star+\varepsilon$. These two deterministic arguments lie above $\gamma_+$ and satisfy
\[
 F(t_-)>1>F(t_+).
\]
Convergence in probability at each of them implies $\Pr(F_d(t_-)\leq1)\to0$ and $\Pr(F_d(t_+)\geq1)\to0$. Whenever $F_d(t_-)>1>F_d(t_+)$, strict monotonicity and $F_d(t_d)=1$ force $t_-<t_d<t_+$. Consequently, by a union bound,
\[
 \Pr\bigl(|t_d-t_\star|\geq\varepsilon\bigr) \leq \Pr\bigl(F_d(t_-)\leq1\bigr) +\Pr\bigl(F_d(t_+)\geq1\bigr) \longrightarrow0.
\]
Since this holds for every sufficiently small $\varepsilon>0$, it proves $t_d\xrightarrow{\mathbb P}t_\star$.

If $\beta m_+\leq1$, then $F(\gamma_++\varepsilon)<1$ for every $\varepsilon>0$, including when $\beta m_+=1$, by strict monotonicity. The law of large numbers and the same monotonicity argument give
\[
 \Pr(t_d\geq\gamma_++\varepsilon) \leq\Pr\bigl(F_d(\gamma_++\varepsilon)\geq1\bigr) \longrightarrow0.
\]
For the lower bound, $t_d\geq\max_i\gamma_i^\star$, and this maximum converges in probability to $\gamma_+$ because the variances are i.i.d. with essential upper endpoint $\gamma_+$. Combining these upper and lower bounds proves $t_d\xrightarrow{\mathbb P}\gamma_+$.
\end{proof}

We next provide the eigenvector argument needed when this condition fails.

\begin{proposition}[Compact variance laws with a regular upper sample edge]
\label[proposition]{prop:sep-compact-pca}
Suppose $\E_\gamma\gamma=1$, $\operatorname{supp}(P_\gamma)\subset[a,\gamma_+]$, $\gamma_+$ is the upper support endpoint, and
\begin{equation}
    \beta\E_\gamma\frac1{\gamma_+-\gamma}<1,\qquad \E_\gamma\frac{\gamma^2}{(\gamma_+-\gamma)^2}=\infty.
    \label{eq:sep-compact-conditions}
\end{equation}
Then PCA has zero asymptotic squared overlap with $\bm e$ for every fixed $\alpha\in(0,\infty)$.
\end{proposition}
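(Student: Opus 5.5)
The plan is to regard PCA here as a spiked sample-covariance problem with a general population covariance. First I reduce to the statement that the top sample eigenvector is asymptotically orthogonal to a direction $\bm e$ that produces no outlier. Conditionally on $\bu^\star,\Gamma^\star$, the rows of $X$ are i.i.d.\ $\N(0,\Sigma_d)$ with $\Sigma_d=\Gamma^\star+\beta_d\bm e\bm e^\top$, so $\widehat\Sigma=\Sigma_d^{1/2}W\Sigma_d^{1/2}$, where $W$ is a white Wishart matrix independent of $\Sigma_d$.

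\textbf{Step 1 (population spectrum).} By \Cref{lem:sep-population-outlier} and the first condition in \eqref{eq:sep-compact-conditions}, $\lambda_{\max}(\Sigma_d)\to\gamma_+$ in probability. The perturbation has rank one, so the empirical spectral distribution of $\Sigma_d$ converges to $P_\gamma$. Hence $\Sigma_d$ has no population outlier, and its whole spectrum lies in $[a,\gamma_++o(1)]$.

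\textbf{Step 2 (regular sample edge).} The limiting spectrum of $\widehat\Sigma$ is the generalized Marchenko--Pastur law of $P_\gamma$ at ratio $1/\alpha$. Its right edge $\lambda_+$ is characterized, following Silverstein--Choi, by the inverse Stieltjes map
\[
z(m)=-\frac1m+\frac1\alpha\E_\gamma\frac{\gamma}{1+\gamma m}.
\]
The edge is the critical point $m_c\in(-1/\gamma_+,0)$ where $z'(m_c)=0$. The relevant quantity is $z'(m)=m^{-2}-\alpha^{-1}\E_\gamma[\gamma^2/(1+\gamma m)^2]$. At $m=-1/\gamma_+$ the second term equals $\gamma_+^2\E_\gamma[\gamma^2/(\gamma_+-\gamma)^2]=\infty$ by the second condition in \eqref{eq:sep-compact-conditions}. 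Therefore $z'$ changes sign strictly inside $(-1/\gamma_+,0)$ for every $\alpha\in(0,\infty)$. So $m_c>-1/\gamma_+$ strictly, and $\inf_{\sigma\le\gamma_+}(1+\sigma m_c)>0$. This is the ``regular edge'' condition: square-root behaviour, with no spike of $\Gamma^\star$ able to detach. Combined with Step 1 and the Bai--Silverstein no-eigenvalue-outside-support theorem, this gives $s_1(Y)^2\to\lambda_+$ with no outlier.

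\textbf{Step 3 (eigenvector overlap).} Let $G(\zeta)=(\widehat\Sigma-\zeta)^{-1}$ and $\zeta=\lambda_1+i\eta$. The spectral decomposition gives $|\bm e^\top\widehat\br_{\rm PCA}|^2\le\eta\,\operatorname{Im}\bm e^\top G(\zeta)\bm e$. I would invoke an anisotropic local law for sample covariance matrices with general population covariance at a regular edge, such as the results of Knowles--Yin or Bloemendal--Knowles--Yau--Yin. It is applied conditionally on $\Sigma_d$, so $\bm e$ is a deterministic vector relative to $W$. The law gives $\bm e^\top G(\zeta)\bm e\approx-\zeta^{-1}\bm e^\top(1+m(\zeta)\Sigma_d)^{-1}\bm e$ uniformly down to scales $\eta\ge d^{-2/3+\epsilon}$. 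Near $\lambda_+$ we have $m(\zeta)\approx m_c$, and by Step 2 the matrix $(1+m_c\Sigma_d)^{-1}$ has norm bounded uniformly, because $\Sigma_d$ has no eigenvalue near $-1/m_c$. So $\operatorname{Im}\bm e^\top G\bm e=O(1)$ at such $\eta$, and the overlap is $O(\eta)\to0$.

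The main obstacle is Step 3. The local law must be verified to apply to the random, $\bm e$-dependent covariance $\Sigma_d$. Its regularity assumptions (bounded spectrum, regular edge, and separation of $-1/m_c$ from the population spectrum) must hold with probability tending to one, and $\lambda_1$ must be shown to lie within $d^{-2/3+\epsilon}$ of $\lambda_+$. If the full local law is too heavy, I would instead take a fixed small $\eta$ and use only the global deterministic equivalent $\bm e^\top G\bm e\to-\zeta^{-1}\bm e^\top(1+m\Sigma_d)^{-1}\bm e$ together with the square-root decay of $\operatorname{Im}m$ at the edge. This yields $\limsup|\bm e^\top\widehat\br_{\rm PCA}|^2\le C\eta$ for all small $\eta$, which also gives the claim.
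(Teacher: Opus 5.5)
Your argument is correct. Step~2 coincides with the paper's first paragraph: its $t_e$ is your $-1/m_c$. Step~3, however, takes a genuinely different route.

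\textbf{The paper's route.} It keeps the additive split $Y=N+\sqrt\beta\,(\bm\lambda/\sqrt n)(\bu^\star/\sqrt d)^\top$ and locates the regular edge $b_\alpha$ of the noise-only matrix $N$. It then computes the rectangular $D$-transform $D_\alpha(s)=\E_\gamma[(t-\gamma)^{-1}]$. Next it checks two things: $\beta D_\alpha(b_\alpha^+)<1$, and divergence of $D_\alpha'$ at the square-root edge. Finally it invokes the Benaych-Georges--Nadakuditi singular-vector theorem for an independent Gaussian rank-one spike.

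\textbf{Your route.} You absorb the spike into the population covariance $\Sigma_d$. You use the population-outlier lemma to keep $\operatorname{spec}(\Sigma_d)$ strictly below $-1/m_c$. You then treat $\bm e$ only as a test vector in the resolvent bound $|\bm e^\top\widehat\br_{\rm PCA}|^2\le\eta\,\operatorname{Im}\bm e^\top G(\lambda_1+i\eta)\bm e$.

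\textbf{The two subcriticality conditions agree.} By Sherman--Morrison,
\[
\bm e^\top(1+m_c\Sigma_d)^{-1}\bm e\approx\frac{\E_\gamma[t_e/(t_e-\gamma)]}{1-\beta\E_\gamma[(t_e-\gamma)^{-1}]}.
\]
So uniform positivity of $1+m_c\Sigma_d$ is exactly the paper's condition $\beta D_\alpha(b_\alpha^+)<1$.

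\textbf{What each route buys.} Yours needs no spiked-model eigenvector theorem and no edge-derivative hypothesis. Its fixed-$\eta$ fallback needs only a macroscopic isotropic deterministic equivalent, which is standard for Gaussian rows conditional on $\Sigma_d$. The local-law version even gives a polynomial rate. The paper's route is a direct citation matching the model exactly. The same $D$-transform machinery also yields the supercritical threshold and overlap formula stated right after the proposition.

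\textbf{Two small corrections to the fallback.}
\begin{itemize}
\item The square-root decay of $\operatorname{Im}m$ is not what you need there. What matters is that $\underline m(\lambda_++i\eta)\to m_c$ as $\eta\downarrow0$, so that $1+\operatorname{Re}(m)\tau$ stays bounded below on $\operatorname{spec}(\Sigma_d)$ uniformly in small $\eta$.
\item Because you evaluate at the random point $\lambda_1$, the fixed-$\eta$ convergence must be uniform over real parts in a compact set. The $\eta^{-2}$-Lipschitz continuity of the resolvent in $\zeta$ gives this.
\end{itemize}
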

\begin{proof}
The companion Stieltjes transform of $NN^\top$ satisfies the Silverstein--Pastur equation \citep{SilversteinChoi1995}, with convention $\underline m(z)=\int(\lambda-z)^{-1}\underline F(\dd\lambda)$:
\begin{equation}
    \underline m(z)=-\left(z-\frac1\alpha \E_\gamma\frac{\gamma}{1+\gamma\underline m(z)}\right)^{-1}.
\end{equation}
On the upper outside-support branch, setting $t=-1/\underline m(z)>\gamma_+$ gives
\begin{equation}
    z=\psi_\alpha(t):=t\left(1+\frac1\alpha \E_\gamma\frac{\gamma}{t-\gamma}\right), \quad \psi'_\alpha(t)=1-\frac1\alpha \E_\gamma\frac{\gamma^2}{(t-\gamma)^2}.
    \label{eq:sep-pastur-map}
\end{equation}
The second condition in \eqref{eq:sep-compact-conditions} ensures a unique $t_e>\gamma_+$ with $\psi'_\alpha(t_e)=0$ for every finite $\alpha>0$. Furthermore $\psi''_\alpha(t_e)=2\alpha^{-1}\E_\gamma[\gamma^2/(t_e-\gamma)^3]>0$. The support characterization of \citet{SilversteinChoi1995} then gives a regular upper covariance edge $b_\alpha^2=\psi_\alpha(t_e)$ with a square-root density. The empirical noise spectrum converges to this law, and $s_1(N)\to b_\alpha$: the bounded population variances have no outliers beyond their support, so the no-eigenvalues-outside-the-support theorem applies \citep{BaiSilverstein1998}. A bounded Gaussian matrix norm controls the remaining unbounded part of the spectral axis.

For a singular-value argument $s>b_\alpha$, let $g_d(s)$ and $g_n(s)$ be the limits of the normalized traces of $s(s^2I_d-N^\top N)^{-1}$ and $s(s^2I_n-NN^\top)^{-1}$, respectively. Their product is the rectangular $D$-transform in \citet{benaych2012singular}; using both traces automatically includes the zero singular values on the larger side. At $s^2=\psi_\alpha(t)$,
\begin{equation}
    g_d(s)=\frac{t}{s}\E_\gamma\frac1{t-\gamma},\qquad g_n(s)=\frac{s}{t},\qquad D_\alpha(s)=g_d(s)g_n(s)=\E_\gamma\frac1{t-\gamma}.
    \label{eq:sep-rectangular-transform}
\end{equation}
In particular,
\begin{equation}
    \beta D_\alpha(b_\alpha^+) =\beta\E_\gamma\frac1{t_e-\gamma} <\beta m_+<1.
\end{equation}
The perturbation in \eqref{eq:sep-data-matrix} is exactly the independent Gaussian rank-one i.i.d. spike model of \citet{benaych2012singular}, with strength $\sqrt\beta$. Its singular-vector transition theorem gives vanishing projections onto the spike vectors below the threshold, provided the singular resolvent derivative diverges at the edge. The square-root density just verified implies this derivative condition. Both left and right projections vanish, so the conclusion also holds when $\alpha>1$ and the matrix is transposed to place the smaller dimension first. Normalizing $\bu^\star/\sqrt d$ to $\bm e$ changes the overlap by a factor converging to one. This proves the eigenvector conclusion, rather than merely absence of a separated eigenvalue.
\end{proof}

For reference, if \eqref{eq:sep-population-outlier} has a root $t>\gamma_+$, define $J_j=\E_\gamma[\gamma^j/(t-\gamma)^2]$ and $B=\E_\gamma[\gamma/(t-\gamma)]$. On the regular separated sample branch, \eqref{eq:sep-pastur-map} requires $\alpha>J_2$, and the same singular-vector formula gives
\begin{equation}
    \alpha_c^{\rm PCA}=J_2,\qquad \theta_{\bu^\star,\rm PCA}^2 =\frac{1-J_2/\alpha}{\beta^2J_0(1+B/\alpha)}, \qquad\alpha>J_2.
    \label{eq:sep-general-pca-overlap}
\end{equation}
Explicitly, that formula is $-2g_d(s)/(\beta D_\alpha'(s))$ at the outlier. Using $D_\alpha'(s)=-2sJ_0/\psi'_\alpha(t)$ and $g_d(s)=t/(\beta s)$ gives \eqref{eq:sep-general-pca-overlap}. Substitution of $P_\gamma=\delta_1$, $t=1+\beta$ recovers \eqref{eq:sep-homogeneous-pca}.

\subsubsection{Exact mask average and radial reduction}
\label{subsec:sep-exact-mask}
For the masking proof assume only that $P_\gamma$ is supported in $[a,\gamma_+]\subset(0,\infty)$ and has mean one. Fix $\rho\in(0,1)$. For a generic mask, suppress the sample and view indices and write $\chi_i\iid\operatorname{Bernoulli}(\rho)$, with $\chi_i=1$ marking a hidden coordinate. Define $\br_{\vis}=((1-\chi_i)r_i)_i$, $\br_{\mask}=(\chi_i r_i)_i$, and $m_{\mask}=\|\br_{\mask}\|^2$. The hidden predictions from the linear encoder are $Q\br_{\mask}(\br_{\vis}^\top\bx)$. Expanding the squared loss and averaging over masks gives exactly
\begin{equation}
\begin{aligned}
    \widehat R_{\rm dyn}(Q,\br) &=\widehat H(\br)Q^2-2\widehat{\mathcal C}(\br)Q,\\
    \widehat H(\br)&=\E_\chi[m_{\mask}\br_{\vis}^\top\widehat\Sigma\br_{\vis}],\qquad \widehat{\mathcal C}(\br)=\E_\chi[\br_{\vis}^\top\widehat\Sigma\br_{\mask}],
\end{aligned}
    \label{eq:sep-exact-radial}
\end{equation}
after subtracting the mask-averaged target-only constant. For $\widehat H>0$, optimizing $Q\geq0$ yields
\begin{equation}
    Q_*(\br)=\frac{[\widehat{\mathcal C}(\br)]_+}{\widehat H(\br)},\qquad \inf_{Q\geq0}\widehat R_{\rm dyn}(Q,\br) =-\widehat S(\br),\qquad \widehat S(\br)=\frac{[\widehat{\mathcal C}(\br)]_+^2}{\widehat H(\br)}.
    \label{eq:sep-empirical-score}
\end{equation}
Coordinate directions have $\widehat H=\widehat{\mathcal C}=0$ and score zero. On the positive-definite covariance event used below, these are the only directions with zero denominator. No replacement of a mask mass by its mean is used in this reduction.

\subsubsection{Population score gap, including localized directions}
\label{subsec:sep-population-gap}
Let $H,\mathcal C,S$ be the conditional population versions at fixed $\bu^\star,\Gamma^\star$, obtained by replacing $\widehat\Sigma$ with $\Sigma_d$. Set
\begin{equation}
    c=\bm e^\top\br,\quad I=\sum_i r_i^4,\quad J=\sum_i r_i^2e_i^2,\quad \delta_d=\max_i e_i^2.
\end{equation}
The disjoint visible and hidden sets make $\br_{\vis}^\top\Gamma^\star\br_{\mask}=0$ exactly. Thus
\begin{equation}
    \mathcal C(\br)=\rho(1-\rho)\beta_d(c^2-J),\qquad H(\br)\geq a\E_\chi[m_{\mask}(1-m_{\mask})]=a\rho(1-\rho)(1-I).
    \label{eq:sep-population-coefficients}
\end{equation}
The first identity isolates the cross-coordinate signal; the noise remains in the positive quartic coefficient $H$.

\begin{lemma}[Control of localization]
\label[lemma]{lem:sep-localization}
For unit vectors $\br,\bm e$, with $c,I,J$ as above and $\delta=\max_i e_i^2$,
\begin{equation}
    \frac{[c^2-J]_+^2}{1-I}\leq8(c^2+\delta),
    \label{eq:sep-localization}
\end{equation}
where the left side is defined to be zero at coordinate directions.
\end{lemma}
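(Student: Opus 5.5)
The inequality is elementary, so I would prove it by direct expansion and a split into spread-out versus localized directions. First write the numerator and denominator as off-diagonal sums:
\[
c^2-J=\sum_{i\neq j}r_ie_ir_je_j,\qquad 1-I=\Bigl(\sum_i r_i^2\Bigr)^2-\sum_i r_i^4=\sum_{i\neq j}r_i^2r_j^2 .
\]
If $c^2\le J$ the left side of \eqref{eq:sep-localization} is zero, so assume $c^2>J$. Then $[c^2-J]_+\le c^2\le 1$. Let $\mu=\max_i r_i^2=r_k^2$. From $\sum_i r_i^4\le\mu\sum_i r_i^2=\mu$ we get $1-I\ge 1-\mu$. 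The coordinate directions are exactly $\mu=1$, where the left side is defined to be zero, so assume $\mu<1$.

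\emph{Delocalized case, $\mu\le 1/2$.} Here $1-I\ge 1/2$, and the numerator is at most $c^4\le c^2$. So the ratio is at most $2c^2$, which is well inside $8(c^2+\delta)$.

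\emph{Localized case, $\mu>1/2$.} Decompose $\br=r_k\bm e_k+\bm s$, where $\bm e_k$ is the $k$-th standard basis vector and $\|\bm s\|^2=1-\mu$. The denominator bound sharpens to
\[
1-I\ge 1-\mu^2-(1-\mu)^2=2\mu(1-\mu)\ge 1-\mu .
\]
The pairs $(i,j)$ with $i\ne j$ split into those touching $k$ and those avoiding it. This gives
\[
c^2-J=2r_ke_k(\bm s^\top\bm e)+\Bigl[(\bm s^\top\bm e)^2-\textstyle\sum_{i\neq k}r_i^2e_i^2\Bigr]\le|\bm s^\top\bm e|\bigl(2\sqrt\delta+|\bm s^\top\bm e|\bigr).
\]
Two estimates for $\bm s^\top\bm e$ are then used together. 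Cauchy--Schwarz gives $(\bm s^\top\bm e)^2\le\|\bm s\|^2=1-\mu$, which cancels the small denominator. The identity $\bm s^\top\bm e=c-r_ke_k$ gives $|\bm s^\top\bm e|\le|c|+\sqrt\delta$, which makes the result depend only on $c^2$ and $\delta$. Squaring and dividing by $1-I$ yields
\[
\frac{[c^2-J]_+^2}{1-I}\le\frac{(1-\mu)\,(2\sqrt\delta+|\bm s^\top\bm e|)^2}{2\mu(1-\mu)}\le(3\sqrt\delta+|c|)^2 .
\]
This is a bound of the form $C(c^2+\delta)$.

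The main obstacle is only the bookkeeping of the constant needed to reach the stated $8$. The crude chain above gives roughly $2\cdot 9\delta+2c^2$. To tighten it, I would first avoid splitting $\bm s^\top\bm e$ into $c$ and $r_ke_k$. Instead I would keep the exact form
\[
c^2-J=c^2-r_k^2e_k^2-\textstyle\sum_{i\ne k}r_i^2e_i^2 .
\]
Writing $c^2-r_k^2e_k^2=(\bm s^\top\bm e)(c+r_ke_k)$ gives
\[
[c^2-J]_+\le|\bm s^\top\bm e|\,(|c|+\sqrt\delta)\le\sqrt{1-\mu}\,(|c|+\sqrt\delta).
\]
Dividing the square by $2\mu(1-\mu)\ge 1-\mu$ then gives at most $(|c|+\sqrt\delta)^2\le 2(c^2+\delta)$. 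Combined with the delocalized case, this comfortably yields the constant $8$. Since the lemma is only used to show that localized directions have small population score, any absolute constant would suffice downstream.
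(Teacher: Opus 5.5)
Your proof is correct and is essentially the paper's proof: the same split at $\max_i r_i^2=1/2$, the same decomposition of $\br$ around its dominant coordinate, and the same denominator bound $1-I\ge 2\mu(1-\mu)\ge 1-\mu$. The one refinement is your factorization $c^2-r_k^2e_k^2=(\bm s^\top\bm e)(c+r_ke_k)$, which removes the paper's secondary case split on $1-\mu$ versus $c^2$ and gives the sharper constant $2$ instead of $8$.
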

\begin{proof}
Only $D_1=c^2-J>0$ needs consideration. If $m=\max_i r_i^2\leq1/2$, then $I\leq1/2$ and $D_1^2/(1-I)\leq2c^4\leq2c^2$. Otherwise let $r_k^2=m>1/2$ and $R=1-m$. Then $1-I\geq2mR\geq R$. Writing $z=r_ke_k$, $y=\sum_{i\ne k}r_ie_i$ gives $D_1=2zy+y^2-\sum_{i\ne k}r_i^2e_i^2$ and $D_1\leq2\sqrt{\delta R}+R$. If $R\geq c^2$, use $D_1\leq c^2$ to obtain $D_1^2/(1-I)\leq c^2$. If $R<c^2$, use $D_1^2/(1-I)\leq(2\sqrt\delta+\sqrt R)^2 \leq8\delta+2R\leq8(\delta+c^2)$. At $R=0$ the direction is a coordinate vector and $D_1=0$.
\end{proof}

Combining \eqref{eq:sep-population-coefficients} and \eqref{eq:sep-localization} gives the uniform estimate
\begin{equation}
    S(\br)\leq\frac{8\beta_d^2\rho(1-\rho)}{a}(c^2+\delta_d).
    \label{eq:sep-small-overlap-score}
\end{equation}
For the Gaussian teacher, $\beta_d\to\beta$, $\delta_d\to0$, and $\bm e^\top\Gamma^\star\bm e\to1$ in probability. The teacher score is bounded away from zero: its cross coefficient tends to $\rho(1-\rho)\beta$, and its quartic coefficient has noise part
\begin{equation}
    \rho(1-\rho)\left(\bm e^\top\Gamma^\star\bm e -\sum_i\gamma_i^\star e_i^4\right)\to\rho(1-\rho).
\end{equation}
For the signal part, $\bm e_{\vis}^\top\bm e=1-m_{\mask}$ and $\operatorname{Var}_\chi(m_{\mask})=\rho(1-\rho)\sum_i e_i^4\to0$, so that $\beta_d\E_\chi[m_{\mask}(1-m_{\mask})^2]\to\beta\rho(1-\rho)^2$. Consequently
\begin{equation}
    S(\bm e)\longrightarrow g:=\frac{\rho(1-\rho)\beta^2}{1+\beta(1-\rho)}>0.
    \label{eq:sep-teacher-score}
\end{equation}

\subsubsection{Uniform empirical comparison and an explicit sample bound}
\label{subsec:sep-finite-sample}
Define the relative covariance error
\begin{equation}
    \varepsilon_d =\|\Sigma_d^{-1/2}\widehat\Sigma\Sigma_d^{-1/2}-\id\|_{\mathrm{op}}.
    \label{eq:sep-relative-error}
\end{equation}
Conditionally on the teacher and variances, the whitened covariance is standard Gaussian Wishart. Its extreme eigenvalue limits \citep{BaiYin1993} give, for $\alpha>1$,
\begin{equation}
    \varepsilon_d\longrightarrow \varepsilon_\alpha=2/\sqrt\alpha+1/\alpha \quad\text{in probability}.
    \label{eq:sep-wishart-error}
\end{equation}

On $\varepsilon_d<1$, the relative quadratic-form bound immediately gives
\begin{equation}
    (1-\varepsilon_d)H(\br)\leq\widehat H(\br) \leq(1+\varepsilon_d)H(\br).
\end{equation}
For $\Delta_\Sigma=\widehat\Sigma-\Sigma_d$, the corresponding bilinear inequality is
\begin{equation}
    |\bm x^\top\Delta_\Sigma\bm y| \leq\varepsilon_d \sqrt{\bm x^\top\Sigma_d\bm x} \sqrt{\bm y^\top\Sigma_d\bm y}.
\end{equation}
Apply it to $\br_{\vis},\br_{\mask}$. For $m_{\mask}>0$, rewrite the product as $\sqrt{m_{\mask}\br_{\vis}^\top\Sigma_d\br_{\vis}} \sqrt{\br_{\mask}^\top\Sigma_d\br_{\mask}/m_{\mask}}$. The second factor is at most $\sqrt{\|\Sigma_d\|_{\mathrm{op}}}$; when $m_{\mask}=0$ the bilinear form is zero. Cauchy--Schwarz over masks therefore yields
\begin{equation}
    |\widehat{\mathcal C}(\br)-\mathcal C(\br)| \leq\varepsilon_d\sqrt{H(\br)\|\Sigma_d\|_{\mathrm{op}}}.
\end{equation}
These estimates hold simultaneously for all directions and imply
\begin{align}
    \sqrt{\widehat S(\br)} &\leq\frac{\sqrt{S(\br)}+\varepsilon_d\sqrt{\|\Sigma_d\|_{\mathrm{op}}}} {\sqrt{1-\varepsilon_d}},\nonumber\\
    \sqrt{\widehat S(\br)} &\geq\frac{\sqrt{S(\br)}-\varepsilon_d\sqrt{\|\Sigma_d\|_{\mathrm{op}}}} {\sqrt{1+\varepsilon_d}} \quad\text{if }\mathcal C(\br)>0\text{ and the numerator is positive}.
    \label{eq:sep-uniform-score-comparison}
\end{align}

\begin{theorem}[Global dynamic linear recovery at finite sample ratio]
\label{thm:sep-dynamic-global}
For every mean-one law supported in $[a,\gamma_+]\subset(0,\infty)$, fixed $\beta>0$, and $\rho\in(0,1)$, define
\begin{equation}
\begin{aligned}
    g&=\frac{\rho(1-\rho)\beta^2}{1+\beta(1-\rho)}, &\eta&=\frac{a}{512[1+\beta(1-\rho)]},\\
    \varepsilon_*&=\min\left\{\frac14, \frac{\sqrt g}{16\sqrt{\gamma_++\beta+1}}\right\}, &A_{\rm suff}&=\frac{1}{(\sqrt{1+\varepsilon_*}-1)^2}.
\end{aligned}
\label{eq:sep-explicit-bound}
\end{equation}
If $n/d\to\alpha>A_{\rm suff}$, then with probability tending to one the exact ridgeless dynamic linear objective has a global minimizer, and every global minimizer has $|\bm e^\top\widehat{\br}|^2\geq\eta$.
\end{theorem}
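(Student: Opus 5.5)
The plan is to work entirely with the radial reduction \eqref{eq:sep-empirical-score}. Minimizing $\widehat R_{\rm dyn}$ over $(Q,\br)$ with $Q\ge0$ and $\|\br\|=1$ is equivalent to maximizing $\widehat S(\br)$ over the unit sphere and then setting $Q=Q_*(\br)$. We place ourselves on the high-probability event $\mathcal G$ on which the following hold:
\begin{itemize}
\item $\varepsilon_d\le\varepsilon_*'$, where $\varepsilon_*'$ is slightly larger than the Wishart limit $\varepsilon_\alpha$ from \eqref{eq:sep-wishart-error};
\item $\beta_d$, $\delta_d$, $\bm e^\top\Gamma^\star\bm e$ and $S(\bm e)$ are within a small tolerance of their limits;
\item $\widehat\Sigma\succ0$ and $\|\Sigma_d\|_{\mathrm{op}}\le\gamma_++\beta+1$.
\end{itemize}
The definition of $A_{\rm suff}$ is built for this purpose. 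The condition $\alpha>A_{\rm suff}$ is equivalent to $\sqrt{1+\varepsilon_*}-1>1/\sqrt\alpha$. It therefore gives $\varepsilon_\alpha=(1+1/\sqrt\alpha)^2-1<\varepsilon_*$, so some $\varepsilon_*'<\varepsilon_*$ is available. Each ingredient of $\mathcal G$ comes from a result already stated: \eqref{eq:sep-wishart-error}, \eqref{eq:sep-teacher-score}, the law of large numbers, and boundedness of the support.

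The first step is existence of a global minimizer. On $\mathcal G$, the uniform comparison $(1-\varepsilon_d)H\le\widehat H$, combined with the lower bound $H\ge a\rho(1-\rho)(1-I)$ and the bound $|\widehat{\mathcal C}|\le(1+\varepsilon_d)|{\mathcal C}|+\varepsilon_d\sqrt{H\|\Sigma_d\|_{\mathrm{op}}}$, shows that $\widehat S$ is bounded on the sphere. The only delicate points are coordinate directions, where both $\widehat H$ and $\widehat{\mathcal C}$ vanish.

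For those directions we argue as follows. $\widehat H$ is a positive-definite-type quartic away from coordinates. Near a coordinate, $|\widehat{\mathcal C}|\lesssim\sqrt{\widehat H}$ by the bilinear bound above (with $\widehat\Sigma$ in place of $\Sigma_d$), so $\widehat S\lesssim$ const. Moreover, $\widehat{\mathcal C}$ is itself $O(\sqrt{1-I})$ while $\widehat H\gtrsim 1-I$, which gives $\widehat S=O(1)$ uniformly. To get attainment, we show that $\limsup\widehat S$ near each coordinate is at most $C\varepsilon_d^2\|\Sigma_d\|_{\mathrm{op}}$ plus the population bound $8\beta_d^2\rho(1-\rho)\delta_d/a$ from \eqref{eq:sep-small-overlap-score}. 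This value is strictly below $\widehat S(\bm e)$, so the supremum of the upper semicontinuous extension is attained at a noncoordinate direction, and hence a minimizer exists. I expect this compactness and boundary bookkeeping to be the fiddliest point.

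The core comparison is next. Lower-bound the maximum using the teacher direction: by \eqref{eq:sep-uniform-score-comparison} and $\varepsilon_d<\varepsilon_*\le\sqrt g/(16\sqrt{\gamma_++\beta+1})$,
\[
\sqrt{\widehat S(\bm e)}\ge\frac{\sqrt g-\sqrt g/16-o(1)}{\sqrt{5/4}}.
\]
For any $\br$ with $c^2=|\bm e^\top\br|^2<\eta$, the upper bound in \eqref{eq:sep-uniform-score-comparison} together with \eqref{eq:sep-small-overlap-score} gives
\[
\sqrt{\widehat S(\br)}\le\frac{\sqrt{8\beta_d^2\rho(1-\rho)(\eta+\delta_d)/a}+\sqrt g/16}{\sqrt{3/4}}.
\]
With $\eta=a/(512[1+\beta(1-\rho)])$, the first term becomes $\sqrt{8\cdot g/512}=\sqrt g/8$ up to $o(1)$. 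The right side is then at most $(3/16)\sqrt g/\sqrt{3/4}+o(1)\approx0.217\sqrt g$, while the teacher lower bound is about $0.839\sqrt g$.

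Hence every maximizer of $\widehat S$, equivalently every global minimizer of the objective, has $c^2\ge\eta$. This holds on $\mathcal G$, whose probability tends to one. The remaining checks are that $\mathcal C(\bm e)>0$, so that the lower comparison applies, and that the $o(1)$ slacks are absorbed by choosing the tolerances in $\mathcal G$ small. The constant arithmetic is the main obstacle only in the sense of keeping the factors $8$, $16$ and $512$ consistent.
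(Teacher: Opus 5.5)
Your proposal is correct and follows essentially the paper's proof. The shared steps are the radial reduction to the score $\widehat S$, the population gap from \cref{lem:sep-localization} and the teacher limit \eqref{eq:sep-teacher-score}, the relative Wishart comparison \eqref{eq:sep-uniform-score-comparison} under $\varepsilon_d<\varepsilon_*$ (which is exactly $\alpha>A_{\rm suff}$), and the resulting strict empirical gap; your constants are even slightly tighter. The only real difference is attainment: instead of your upper-semicontinuous extension near coordinate axes, the paper notes that once $\delta_d<\eta$ the compact set $\{\|\br\|=1:|\bm e^\top\br|^2\geq\eta\}$ contains no coordinate direction, so $\widehat S$ is continuous there and its maximum is global by the same gap, which removes the bookkeeping you flagged as fiddly.
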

\begin{proof}
Put $C_0=8\beta^2\rho(1-\rho)/a$, so that $\eta=g/(64C_0)$. The population estimates imply, with probability tending to one,
\begin{equation}
    \sup_{c^2\leq\eta}S(\br)\leq g/16,\qquad S(\bm e)\geq3g/4,\qquad \|\Sigma_d\|_{\mathrm{op}}\leq \gamma_++\beta+1.
\end{equation}
Solving $2/\sqrt\alpha+1/\alpha<\varepsilon_*$ gives exactly $\alpha>A_{\rm suff}$. By \eqref{eq:sep-wishart-error}, on an event of probability tending to one, $\varepsilon_d<\varepsilon_*$ as well. The uniform comparisons then yield
\begin{equation}
    \sup_{c^2\leq\eta}\sqrt{\widehat S(\br)} \leq\frac{(1/4+1/16)\sqrt g}{\sqrt{3/4}} <\frac{(\sqrt{3/4}-1/16)\sqrt g}{\sqrt{5/4}} \leq\sqrt{\widehat S(\bm e)}.
    \label{eq:sep-empirical-gap}
\end{equation}
Thus no maximizing direction can have squared overlap at most $\eta$.

To verify attainment, for large $d$ every coordinate direction has squared teacher overlap at most $\delta_d<\eta$. On the compact set $\{\|\br\|=1:c^2\geq\eta\}$ the positive lower bound for $H$ and the relative comparison ensure $\widehat H>0$. The score is continuous there and attains its maximum. The strict gap \eqref{eq:sep-empirical-gap} makes this a maximum over all directions. Its score is positive, so its radial optimum in \eqref{eq:sep-empirical-score} is finite and positive. The zero vector and coordinate directions cannot be global minimizers. The radial identity proves the conclusion for every global weight minimizer.
\end{proof}

This proof uses only a population score gap and relative covariance control. It does not require the exact population maximizing direction, uniqueness of a resolvent equation for that direction, or a uniform delocalization assumption on candidate empirical weights. The factor $\|\Sigma_d\|_{\mathrm{op}}$ in \eqref{eq:sep-uniform-score-comparison} explains why this sufficient bound does not extend directly to unbounded variances. \Cref{app:dynamic-unbounded} removes that factor by normalizing the off-diagonal error with the actual masking penalty, and proves recovery for variance laws bounded away from zero with finite second moment.

\begin{proof}[Proof of \cref{thm:main-separation}]
Under \ref{as:model} and \ref{as:noise}, \cref{thm:unbounded-dynamic-recovery} applies whether or not the variance support is bounded. It gives existence of global minimizers and the positive-overlap guarantee with constants depending only on $a,\beta,\rho$, as stated in the main theorem. For bounded support, \cref{thm:sep-dynamic-global} also supplies the alternative sufficient constants in \eqref{eq:sep-explicit-bound}.

If the variance support is unbounded, \cref{prop:sep-unbounded-pca} gives zero PCA overlap for every fixed $\alpha>0$. If instead the bounded-noise conditions \ref{as:pca-fail} hold, \cref{prop:sep-compact-pca} gives the same conclusion. The exact identity \eqref{eq:sep-linear-pca-identity} transfers this failure to full linear reconstruction. Combining the applicable PCA obstruction with masked recovery proves the separation on the same Gaussian data model and variance law.
\end{proof}

\subsubsection{Examples within the bounded-support class}
\label{subsec:sep-example}
Choose $P_\gamma$ from the family
\begin{equation}
\begin{aligned}
    P_\gamma(\dd\gamma) &=\frac{r+1}{(\gamma_+-a)^{r+1}}(\gamma_+-\gamma)^r \one_{\{a<\gamma<\gamma_+\}}\,\dd\gamma,\\
    0<a<1,\quad 0<r\leq1,&\qquad \gamma_+=(r+2)-(r+1)a.
\end{aligned}
    \label{eq:sep-family}
\end{equation}
Direct integration gives $\E_\gamma\gamma=a+(\gamma_+-a)/(r+2)=1$ and
\begin{equation}
    m_+=\frac{r+1}{r(\gamma_+-a)},\qquad \E_\gamma\frac{\gamma^2}{(\gamma_+-\gamma)^2}=\infty.
\end{equation}
The latter follows from the endpoint power $(\gamma_+-\gamma)^{r-2}$. Consequently, if
\begin{equation}
    0<\beta<\frac{r(r+2)(1-a)}{r+1},
    \label{eq:sep-family-signal}
\end{equation}
\cref{prop:sep-compact-pca} proves PCA failure at every fixed sample ratio, while \cref{thm:sep-dynamic-global} proves dynamic masking recovery for every fixed $\rho\in(0,1)$ at sufficiently large finite sample ratio. This is a class result; no particular density is needed for the dynamic theorem itself.

For a concrete numerical illustration, take $a=1/4$, $r=1$, $\gamma_+=5/2$, and $\beta=1$. The resulting law is
\begin{equation}
    P_\gamma(\dd\gamma)=\frac{32}{81}\left(\frac52-\gamma\right) \one_{\{1/4<\gamma<5/2\}}\,\dd\gamma.
    \label{eq:sep-concrete-law}
\end{equation}
This variance law is the bounded-noise example in Figure~\ref{fig:validation}(b); the general theorem does not depend on this choice. Its moments are
\begin{equation}
    \E_\gamma\gamma=1,\qquad \kappa_2=41/32,\qquad \kappa_3=617/320,\qquad m_+=8/9.
\end{equation}
The strict signal condition is $1<9/8$, so this law is one instance of \cref{thm:main-separation}. For an entirely explicit sufficient choice, take $\rho=1/2$. Then $g=1/6$, $\eta=1/3072$, and $\varepsilon_*=1/(48\sqrt3)$. Every $\alpha\geq30000$ satisfies
\begin{equation}
    \varepsilon_\alpha\leq\frac1{50\sqrt3}+\frac1{30000} <\frac1{48\sqrt3}=\varepsilon_*.
\end{equation}
These conservative constants certify a finite proportional sample size.

The failure criterion is not universal over heterogeneous laws. For example, an atom at the upper endpoint makes $m_+=\infty$, so a mean-one two-point law has a population outlier for every $\beta>0$ and PCA eventually recovers as $\alpha$ increases. The separation concerns ordinary PCA and the specified full-reconstruction architecture, rather than spectral estimators designed to correct heterogeneous noise.

%% file: appendices/appendix_dynamic_unbounded.tex
\subsection{Rigorous dynamic linear recovery with unbounded coordinate noise}
\label{app:dynamic-unbounded}

The upper bound on the variance support in \cref{thm:sep-dynamic-global} can be removed. The essential improvement is to control the off-diagonal sampling error relative to the noise penalty already present in the exact masking objective. This avoids the factor $\|\Sigma_d\|_{\mathrm{op}}$ in \eqref{eq:sep-uniform-score-comparison}. All results in this subsection concern the original tied linear architecture and the exact mask-averaged empirical objective; no clipping, reweighting, or restriction on the candidate weights is introduced.

\subsubsection{Statement and assumptions}
\label{subsec:unbounded-statement}
Use the Gaussian model and notation of \eqref{eq:sep-model}. Suppose
\begin{equation}
    \gamma_i^\star\iid P_\gamma,\qquad P_\gamma([a,\infty))=1\quad\text{for some }a>0,\qquad \E_\gamma\gamma=1,\qquad \E_\gamma\gamma^2<\infty.
    \label{eq:unbounded-assumptions}
\end{equation}
The variance law is fixed as $d$ grows, is independent of the Gaussian teacher and sample variables, and may have unbounded support. Fix $\beta>0$, $\rho\in(0,1)$, and $\lambda_R=0$.

\begin{theorem}[Recovery without an upper variance bound]
\label{thm:unbounded-dynamic-recovery}
Under \eqref{eq:unbounded-assumptions}, there is a finite deterministic $A(a,\beta,\rho)$ such that, whenever $n/d\to\alpha>A(a,\beta,\rho)$, with probability tending to one the exact dynamic linear objective has a global minimizer and every global minimizer satisfies
\begin{equation}
    \theta_{\bu^\star}^2\geq \eta:=\frac{a}{512[1+\beta(1-\rho)]}>0.
    \label{eq:unbounded-overlap-guarantee}
\end{equation}
If $P_\gamma$ also has unbounded support, full linear reconstruction and PCA have squared overlap converging to zero for every fixed $\alpha>0$ and $\beta>0$.
\end{theorem}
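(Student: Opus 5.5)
The PCA part is immediate: when the support of $P_\gamma$ is unbounded, \cref{prop:sep-unbounded-pca} gives vanishing PCA overlap for every fixed $\alpha,\beta>0$. The identity \eqref{eq:sep-linear-pca-identity} then transfers this failure to full linear reconstruction. All the work is in the masking statement.

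For the masking statement I would reuse the architecture of the proof of \cref{thm:sep-dynamic-global}: the exact radial reduction \eqref{eq:sep-exact-radial}--\eqref{eq:sep-empirical-score}, the population coefficients \eqref{eq:sep-population-coefficients}, \cref{lem:sep-localization}, and the teacher score limit \eqref{eq:sep-teacher-score}. None of these used an upper bound on $\gamma$, apart from needing $\bm e^\top\Gamma^\star\bm e\to1$ and $\sum_i\gamma_i^\star e_i^4\to0$. Both follow from $\E\gamma^2<\infty$ and independence of $\bu^\star$ and $\Gamma^\star$, via the law of large numbers and a second-moment bound. So the small-overlap bound \eqref{eq:sep-small-overlap-score} with constant $C_0=8\beta^2\rho(1-\rho)/a$ and the teacher score gap $S(\bm e)\to g$ still hold. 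With $\eta=g/(64C_0)$ this gives exactly the constant in \eqref{eq:unbounded-overlap-guarantee}.

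The only step that fails is the uniform comparison \eqref{eq:sep-uniform-score-comparison}, through the factor $\|\Sigma_d\|_{\mathrm{op}}$, which now diverges. The fix is to avoid that bound when controlling the cross term. The quadratic term is fine: $(1-\varepsilon_d)H\le\widehat H\le(1+\varepsilon_d)H$ still holds with the relative Wishart error $\varepsilon_d\to 2/\sqrt\alpha+1/\alpha$, since whitening does not care about the spectrum. For the cross term, I would split $\widehat\Sigma-\Sigma_d$ into noise--noise, signal--noise and signal--signal parts in the representation $Y=N+\sqrt\beta\,\bm\lambda(\bu^\star)^\top/\sqrt{nd}$. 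The aim is an estimate of the form
\[
|\widehat{\mathcal C}(\br)-\mathcal C(\br)|\le \epsilon(\alpha)\sqrt{H(\br)}\,\bigl(\sqrt{H(\br)}+\sqrt{\beta}\bigr),
\]
uniformly in $\br$, with $\epsilon(\alpha)\to0$ and independent of the variance tail.

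For the noise--noise part, $\E_\chi[\br_{\vis}^\top(N^\top N-\Gamma^\star)\br_{\mask}]$, I would rewrite it with $\br'=(\Gamma^\star)^{1/2}\br$ as $\E_\chi[\br'^\top_{\vis}(G-\id)\br'_{\mask}]$, where $G=Z^\top Z/n$ is a standard Wishart matrix. Its operator norm deviation is $O(\alpha^{-1/2})$. Applying Cauchy--Schwarz over masks, weighted as in the bounded proof, should give a bound by $\varepsilon\sqrt{\E_\chi[m_{\mask}\br_{\vis}^\top\Gamma\br_{\vis}]\,\E_\chi[\br_{\mask}^\top\Gamma\br_{\mask}/m_{\mask}]}$. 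The obstacle is the second factor. Since $\E\sum_i\gamma_ir_i^2\chi_i/m_{\mask}$ is not controlled by $H$ when the weight sits on a few huge-variance coordinates, I would instead pair the hidden coordinate $i$ with the visible ones more symmetrically. The mask-averaged cross form equals $\rho(1-\rho)\sum_{i\neq j}r_ir_j(G-\id)_{ij}\sqrt{\gamma_i\gamma_j}$. This is the off-diagonal part of $D_{\br'}(G-\id)D_{\br'}$ summed, so it is bounded by $\|G-\id\|_{\mathrm{op}}\,\|\br'\|^2$ plus a diagonal correction. I would then compare $\|\br'\|^2=\br^\top\Gamma^\star\br$ with $H(\br)\ge\rho(1-\rho)(\br^\top\Gamma^\star\br-\sum_i\gamma_ir_i^4)$.

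The residual term $\sum_i\gamma_ir_i^4$ is a localized direction carrying a huge variance. Such directions have $\widehat{\mathcal C}\approx 0$ anyway, because a single coordinate cannot correlate with itself under masking, so I would treat them separately. On the set where $\sum\gamma_ir_i^4\ge\frac12\br^\top\Gamma\br$, I would bound the empirical score directly, using \cref{lem:sep-localization} applied to the whitened vector together with the relative bound on $\widehat H$. The signal--noise cross terms are handled by Gaussian concentration of $Z^\top\bm\lambda/n$, whose rows are independent $N(0,\gamma_i/n)$, again measured in the $\Gamma$-weighted norm. The signal--signal term converges by the law of large numbers, using $\|\bm\lambda\|^2/n\to1$.

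Once this normalized comparison holds, the score inequalities follow as in \eqref{eq:sep-empirical-gap}: for $\alpha$ large, $\sup_{c^2\le\eta}\widehat S<\widehat S(\bm e)$. Existence of a global minimizer and the overlap bound then follow verbatim from the compactness and radial argument of \cref{thm:sep-dynamic-global}. I expect the main obstacle to be the uniform off-diagonal estimate, specifically the handling of localized directions that carry heavy-variance coordinates.
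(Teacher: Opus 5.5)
Your reduction, the constant $\eta=g/(64C_0)$, the teacher-score limit and the PCA half all carry over, and you locate the obstacle correctly. The central estimate, however, does not close. The score comparison enters as $\sqrt{\widehat S(\br)}\le(\sqrt{S(\br)}+b_d)/\sqrt{1-\varepsilon_d}$, so you need $b_d=\sup_{\br}|\widehat{\mathcal C}(\br)-\mathcal C(\br)|/\sqrt{H(\br)}$ to be small uniformly. Your target $\epsilon(\alpha)\sqrt{H}(\sqrt{H}+\sqrt\beta)$ only gives $b_d\le\epsilon(\alpha)(\sup_{\br}\sqrt{H(\br)}+\sqrt\beta)$. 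With unbounded support, $\sup_{\br}H(\br)\ge\rho(1-\rho)\sup_{\br}D_\Gamma(\br)\to\infty$; take $\br$ spread over the two largest-variance coordinates. So high-variance directions are not excluded from beating $\bm e$. This is the same $\|\Sigma_d\|_{\mathrm{op}}$ obstruction, relocated to $\br^\top\Gamma^\star\br$.

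Your noise--noise bound $\|G-\id\|_{\mathrm{op}}\,\br^\top\Gamma^\star\br$ produces exactly this $\epsilon H$ scaling. It charges every direction the full-dimensional fluctuation $\sqrt{d/n}$, ignoring that at most $\operatorname{tr}\Gamma^\star/\gamma\approx d/\gamma$ coordinates can have variance $\gamma$. What you need is a bound linear in $\sqrt{\br^\top\Gamma^\star\br}$. With $E_d=\operatorname{off}((\Gamma^\star)^{1/2}Z^\top Z(\Gamma^\star)^{1/2}/n)$, the required estimate is $|\br^\top E_d\br|\le T_d\sqrt{\br^\top\Gamma^\star\br}$ with $T_d\le C(\alpha^{-1/2}+a^{-1/2}\alpha^{-1})+o_{\mathbb P}(1)$, uniformly over data-dependent $\br$. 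The paper obtains it in four steps:
\begin{itemize}
\item precondition with $P_t=(\Gamma^\star+t\id)^{-1/2}$;
\item apply an effective-rank covariance concentration bound (Koltchinskii--Lounici), whose error $\sqrt{\|A_t\|_{\mathrm{op}}\operatorname{tr}A_t/n}+\operatorname{tr}A_t/n$ benefits from $\operatorname{tr}A_t\le\operatorname{tr}\Gamma^\star/t$;
\item choose $t\in[g_{\br},2g_{\br}]$ with $g_{\br}=\br^\top\Gamma^\star\br$;
\item union-bound over a dyadic grid of $O(\log d)$ scales determined by $\Gamma^\star$ alone.
\end{itemize}

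Your treatment of directions concentrated near a coordinate axis also falls short. It is not true that $\widehat{\mathcal C}\approx0$ there. Take $\br=r_i\bm f_i+\bv$ with $v_i=0$, small $R=\|\bv\|^2$, and $\bv$ chosen from the data. The noise cross term is then $2r_i\,\bm f_i^\top E_d\bv$, of size up to about $2\sqrt{\gamma_i^\star R/\alpha}$. Its ratio to $\sqrt{D_\Gamma(\br)}\ge\sqrt{\gamma_i^\star R/2}$ is of order $\alpha^{-1/2}$: bounded uniformly in $\gamma_i^\star$, but not negligible for free. \cref{lem:sep-localization} cannot supply this control, since it bounds only the population signal term $c^2-J$, not the sampling error.

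The missing ingredient is a simultaneous row bound $W_d=\max_i\|E_d\bm f_i\|/\sqrt{\gamma_i^\star}\le\alpha^{-1/2}+o_{\mathbb P}(1)$. It follows from a weighted chi-square tail and a union bound over $i$, using $\gamma_{+,d}\log d/n\to0$, which holds because $\E_\gamma\gamma^2<\infty$. Combine it with $D_\Gamma(\br)\ge\gamma_i^\star mR+m\,\bv^\top\Gamma^\star\bv$ for $m=r_i^2>1/2$; the normalized error is then at most $2W_d+T_d$. The signal--noise and signal--signal pieces are harmless, for instance via $|\br^\top\operatorname{off}(M)\br|\le\|M\|_{\mathrm F}\sqrt{1-I}\le\|M\|_{\mathrm F}\sqrt{D_\Gamma/a}$. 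With these two estimates in place, the rest of your argument goes through as planned.
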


The second assertion follows from \cref{prop:sep-unbounded-pca}. The first is proved below. The sufficient sample ratio is non-sharp and the result does not assert convergence of an optimization algorithm. Neither the law nor its support needs to change with $\alpha$, $\beta$, or $\rho$. The constants in the asymptotic sufficient bound below do not depend on an upper support endpoint; the dimension required to reach its high-probability regime can depend on the variance law.

\subsubsection{A variance-sensitive lower bound on the score denominator}
\label{subsec:unbounded-denominator}
Recall the exact radial reduction in \eqref{eq:sep-exact-radial}--\eqref{eq:sep-empirical-score}: writing $\bw=\sqrt{dQ}\,\br$ with $\|\br\|=1$ and minimizing over $Q\geq0$ reduces the recovery problem to maximizing the directional score
\[
    \widehat S(\br) =\frac{[\widehat{\mathcal C}(\br)]_+^2}{\widehat H(\br)}.
\]
Here $\widehat{\mathcal C}$ and $\widehat H$ are the empirical coefficients defined in \eqref{eq:sep-exact-radial}; their conditional population counterparts $\mathcal C,H$ are obtained by replacing $\widehat\Sigma$ by $\Sigma_d$, with the teacher and variances held fixed. The score is defined to be zero at coordinate directions. These exact identities do not require an upper bound on the coordinate variances.

To transfer the population score gap to the empirical score, we will control $\widehat H$ relative to $H$ and the numerator error $|\widehat{\mathcal C}-\mathcal C|/\sqrt H$; the square root appears because we compare $\sqrt{\widehat S}$ with $\sqrt S$. The purpose of this subsection is to retain a lower bound on $H$ that grows with the variances in the candidate direction. This will allow large sampling errors to be controlled relative to the score denominator.

Write $\operatorname{off}(M)=M-\operatorname{diag}(M)$. For a unit direction $\br$, let
\begin{equation}
    D_\Gamma(\br):=\sum_i\gamma_i^\star r_i^2(1-r_i^2),\qquad I(\br):=\sum_i r_i^4.
    \label{eq:unbounded-denominator}
\end{equation}
This noise penalty satisfies $D_\Gamma\geq a(1-I)$ and is positive except at coordinate directions. The exact mask identities give
\begin{equation}
    \widehat{\mathcal C}(\br)=\rho(1-\rho)\,\br^\top \operatorname{off}(\widehat\Sigma)\br, \qquad H(\br)\geq\rho(1-\rho) D_\Gamma(\br).
    \label{eq:unbounded-mask-identities}
\end{equation}
The second inequality retains the actual coordinate variances rather than replacing them by their common lower bound. Its noise contribution is exactly $\rho(1-\rho)D_\Gamma$; the signal contribution to $H$ is nonnegative. Thus $D_\Gamma$ is a lower bound on the population score denominator, up to the factor $\rho(1-\rho)$, rather than the denominator itself. It suffices to bound the off-diagonal covariance error after division by $\sqrt{D_\Gamma(\br)}$. The next subsection establishes this bound for the noise contribution; \eqref{eq:unbounded-cross-comparison} then gives the required control of $|\widehat{\mathcal C}-\mathcal C|/\sqrt H$ for the full covariance.

\subsubsection{Uniform control of the noise sampling error}
\label{subsec:unbounded-noise-control}
We first study the noise-noise part of the off-diagonal covariance error. Use the $n\times d$ matrix $Z=(z_{\mu i})$ from \cref{app:notation} and set
\begin{equation}
    E_d:=\operatorname{off}\!\left( (\Gamma^\star)^{1/2}Z^\top Z(\Gamma^\star)^{1/2}/n\right), \qquad \gamma_{+,d}=\max_i\gamma_i^\star,\qquad s_d=\frac{\operatorname{tr}\Gamma^\star}{n}.
    \label{eq:unbounded-noise-error}
\end{equation}
Here $\gamma_{+,d}$ is the finite-sample maximum, distinct from the deterministic support endpoint $\gamma_+$, which may be infinite. The population off-diagonal noise covariance is zero. Finite second moment and the law of large numbers imply
\begin{equation}
    s_d\to\alpha^{-1},\qquad \operatorname{tr}[(\Gamma^\star)^2]=O_{\mathbb P}(d),\qquad \gamma_{+,d}=o_{\mathbb P}(\sqrt d),\qquad \frac{\gamma_{+,d}\log d}{n}\to0.
    \label{eq:unbounded-tail-control}
\end{equation}
For the third assertion, for each fixed $\varepsilon>0$, $\Pr(\gamma_{+,d}>\varepsilon\sqrt d)\leq d\Pr(\gamma>\varepsilon\sqrt d)\to0$, since $t^2\Pr(\gamma>t)\to0$ when $\E_\gamma\gamma^2<\infty$.

\begin{lemma}[Noise error normalized by the masking penalty]
\label[lemma]{lem:unbounded-normalized-noise}
There is an absolute finite constant $C_1$ such that, for each fixed $\alpha>1$,
\begin{equation}
    \sup_{\|\br\|=1,\ D_\Gamma(\br)>0} \frac{|\br^\top E_d\br|}{\sqrt{D_\Gamma(\br)}} \leq C_1\left(\alpha^{-1/2} +\frac{\alpha^{-1}}{\sqrt a}\right) +o_{\mathbb P}(1).
    \label{eq:unbounded-normalized-noise}
\end{equation}
The supremum includes data-dependent and localized directions.
\end{lemma}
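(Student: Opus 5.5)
The plan is to write the quadratic form in the variance-weighted vector $\bm s=(\Gamma^\star)^{1/2}\br$, so that $s_i=\sqrt{\gamma_i^\star}\,r_i$. Since diagonal conjugation commutes with $\operatorname{off}$,
\[
\br^\top E_d\br=\bm s^\top\operatorname{off}(W)\bm s,\qquad W=Z^\top Z/n .
\]
Also $\|\bm s\|^2=\br^\top\Gamma^\star\br$ and $D_\Gamma(\br)=\|\bm s\|^2-\sum_i\gamma_i^\star r_i^4$. The naive bound $|\bm s^\top\operatorname{off}(W)\bm s|\le(\|W-\id\|_{\mathrm{op}}+\max_i|W_{ii}-1|)\|\bm s\|^2$ fails in two ways. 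First, it is not comparable to $D_\Gamma$ for nearly localized directions. Second, after dividing by $\sqrt{D_\Gamma}$ it grows like $\|\bm s\|$, which is unbounded because $\gamma_{+,d}\to\infty$. The proof therefore has to separate localization from the large-variance issue.

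\emph{Step 1 (localization split).} Let $k$ maximize $r_i^2$ and put $m=r_k^2$.

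If $m\le1/2$, then $1-r_i^2\ge1/2$ for every $i$, so $\|\bm s\|^2\le 2D_\Gamma$. It then suffices to bound $|\bm s^\top\operatorname{off}(W)\bm s|/\|\bm s\|$ uniformly, which is Step 2.

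If $m>1/2$, split $\bm s=s_k\bm e_k+\bm s'$. Because $\operatorname{off}(W)$ has zero diagonal,
\[
\bm s^\top\operatorname{off}(W)\bm s=2s_k\sum_{i\ne k}W_{ki}s_i+\bm s'^\top\operatorname{off}(W)\bm s'.
\]
The second term involves only coordinates with $1-r_i^2\ge1/2$, so $\|\bm s'\|^2\le 2D_\Gamma$ and it falls under Step 2 again. For the cross term, use Cauchy--Schwarz:
\[
\Big|\sum_{i\ne k}W_{ki}s_i\Big|\le\Big(\sum_{i\ne k}\gamma_i^\star W_{ki}^2\Big)^{1/2}\|\br'\|,\qquad \|\br'\|^2=1-m .
\]
Conditionally on column $\bm z_k$, the ratios $\bm z_k^\top\bm z_i/\|\bm z_k\|$ are i.i.d. standard Gaussians. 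Hence $\sum_i\gamma_i^\star W_{ki}^2=(\|\bm z_k\|^2/n^2)\sum_i\gamma_i^\star g_i^2$. Bernstein's inequality with a union bound over $k$ gives $\max_k\sum_i\gamma_i^\star W_{ki}^2\le s_d(1+o_{\mathbb P}(1))$; this uses $\operatorname{tr}[(\Gamma^\star)^2]=O_{\mathbb P}(d)$ and $\gamma_{+,d}\log d/n\to0$ from \eqref{eq:unbounded-tail-control}. Since $s_d\to\alpha^{-1}$, the cross term is at most
\[
2\alpha^{-1/2}\sqrt{\gamma_k^\star m(1-m)}\,(1+o_{\mathbb P}(1))\le 2\alpha^{-1/2}\sqrt{D_\Gamma}\,(1+o_{\mathbb P}(1)).
\]
This produces the $\alpha^{-1/2}$ term.

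\emph{Step 2 (delocalized but heavy directions; the main obstacle).} It remains to show that
\[
\sup_{\bm s}\frac{|\bm s^\top\operatorname{off}(W)\bm s|}{\|\bm s\|}\le C\left(\alpha^{-1/2}+\frac{\alpha^{-1}}{\sqrt a}\right)+o_{\mathbb P}(1),
\]
where the supremum runs over $\bm s=(\Gamma^\star)^{1/2}\br$. The plan is a dyadic decomposition by variance level. Let $S_j=\{i:\gamma_i^\star\in[2^j,2^{j+1})\}$ for $j\ge\lfloor\log_2a\rfloor$, and write $\bm s=\sum_j\bm s^{(j)}$. Block norms of the Wishart matrix give
\[
\|\operatorname{off}(W)_{S_jS_l}\|_{\mathrm{op}}\lesssim\sqrt{\frac{|S_j|+|S_l|}{n}}+\frac{|S_j|+|S_l|}{n}+o_{\mathbb P}(1),
\]
where the $o_{\mathbb P}(1)$ comes from diagonal deletion, uniformly over the $O(\log\gamma_{+,d})$ levels. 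Note that $\|\bm s^{(j)}\|^2\le2^{j+1}\|\br^{(j)}\|^2$ and $|S_j|/n\approx\alpha^{-1}P_\gamma(S_j)$. With $\sum_j 2^jP_\gamma(S_j)\lesssim\E_\gamma\gamma=1$, a Cauchy--Schwarz over pairs $(j,l)$ should bound the square-root part by $C\alpha^{-1/2}\|\bm s\|$. This is the $\sqrt{\operatorname{tr}\Gamma^\star/n}$ mechanism.

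For the linear part $|S|/n$, write $\|\br\|^2\le\|\bm s\|^2/a$. This turns one factor $\|\br\|$ into $\|\bm s\|/\sqrt a$, so the linear part contributes $C\alpha^{-1}\|\bm s\|/\sqrt a$.

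I expect the bookkeeping of this dyadic Cauchy--Schwarz, and the uniformity of block-norm concentration over a growing number of levels, to be the delicate part. What I would try first is to bound the blocks in one step: apply the Bai--Yin bound to the weighted Wishart matrix $(\Gamma^\star)^{1/2}W(\Gamma^\star)^{1/2}$ restricted to the coordinates where $\gamma_i^\star$ exceeds a level. Combining Steps 1 and 2 then gives \eqref{eq:unbounded-normalized-noise} with an absolute constant $C_1$.
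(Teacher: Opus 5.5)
Your Step 1 is essentially the paper's treatment of localized directions. The paper uses the same split at $\max_i r_i^2=1/2$, the same bound $D_\Gamma(\br)\ge\gamma_k^\star m(1-m)$, and the same conditional-Gaussian row bound with a union bound. It also reduces the remainder, by homogeneity for the non-unit $\br'$, to a uniform bound on $|\br^\top E_d\br|/\sqrt{\br^\top\Gamma^\star\br}$.

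\textbf{The gap is in Step 2, and it is not bookkeeping.} Suppose you apply the triangle inequality block by block,
$|\bm s^\top\operatorname{off}(W)\bm s|\le\sum_{j,l}\|\operatorname{off}(W)_{S_jS_l}\|_{\mathrm{op}}\,\|\bm s^{(j)}\|\,\|\bm s^{(l)}\|$.
No later Cauchy--Schwarz can then give a bound uniform in the number $L$ of populated levels, because the block norms you quote are sharp. Let $S_{j_*}$ carry a fixed fraction $p$ of the coordinates. For every $l\ne j_*$, $\|\operatorname{off}(W)_{S_{j_*}S_l}\|_{\mathrm{op}}$ is at least the norm of one column of $W$ restricted to $S_{j_*}$, which is $\approx\sqrt{p/\alpha}$.

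Now take $\|\br^{(j_*)}\|^2=1/2$ and $\|\br^{(l)}\|^2=2^{-(l-j_*)-1}$ on the populated levels $l>j_*$. Then every $\|\bm s^{(l)}\|^2\ge2^{j_*-1}$, while $\|\bm s\|^2\le2^{j_*}(L+1)$. Your pairwise sum divided by $\|\bm s\|$ is therefore $\gtrsim\sqrt{p\,2^{j_*}L/\alpha}$. For unbounded support $L\to\infty$ (like $\log d$ for polynomial tails), so the plan does not prove the lemma. Even for bounded support it yields a constant depending on $\gamma_+/a$, not an absolute $C_1$. The interaction of one dense low level with many sparse high levels must be bounded jointly, not pair by pair.

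\textbf{The paper's route.} It avoids coordinate blocks entirely. For $t$ on the grid $a2^j$ it preconditions by $P_t=(\Gamma^\star+t\id)^{-1/2}$. It then uses the dimension-free effective-rank covariance bound of Koltchinskii and Lounici, with $\operatorname{tr}A_t\le\operatorname{tr}\Gamma^\star/t$, to get $\|P_tE_dP_t\|_{\mathrm{op}}\lesssim\sqrt{s_d/t}+s_d/t+\sqrt{\omega/n}+\omega/n$ simultaneously on the grid. For each direction it picks $t\in[g_{\br},2g_{\br}]$, $g_{\br}=\br^\top\Gamma^\star\br$, so that $|\br^\top E_d\br|\le3g_{\br}\|P_tE_dP_t\|_{\mathrm{op}}$.

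\textbf{Repairing your dyadic idea.} Pair each level with its whole upper tail $U_j=\bigcup_{l\ge j}S_l$. With $M=W-\id$, the identity $\bm s^\top M\bm s=\sum_j\bigl[2\bm s^{(j)\top}M\bm s^{(\ge j)}-\bm s^{(j)\top}M\bm s^{(j)}\bigr]$ gives $|\bm s^\top M\bm s|\le3\|\bm s\|\sum_j\|M_{U_jU_j}\|_{\mathrm{op}}\|\bm s^{(j)}\|$. Since $\sum_j2^{j+1}|U_j|\le4\operatorname{tr}\Gamma^\star$, the coefficients $(2^{j+1}|U_j|/n)^{1/2}$ are square-summable. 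This gives $C(\alpha^{-1/2}+\alpha^{-1})$ plus remainders.

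\textbf{Two further points in either version.}
\begin{enumerate}
\item The deviation terms of order $\sqrt{\log d/n}$ (diagonal deletion, fluctuations of block norms) multiply $\|\bm s\|^2$. After dividing by $\|\bm s\|\le\sqrt{\gamma_{+,d}}$ they vanish only because $\gamma_{+,d}=o_{\mathbb P}(\sqrt d)$. An unquantified $o_{\mathbb P}(1)$ inside a block norm is not enough. It is cleanest to delete the diagonal globally via $\sum_i s_i^2|W_{ii}-1|\le\max_i|W_{ii}-1|\,\|\bm s\|^2$.
\item Uniformity over the $O(\log d)$ sets, which depend only on $\Gamma^\star$, should come from conditional non-asymptotic Gaussian singular-value bounds plus a union bound. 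Bai--Yin is an asymptotic statement for isotropic matrices at fixed aspect ratio, and it does not apply to the weighted matrix $(\Gamma^\star)^{1/2}W(\Gamma^\star)^{1/2}$.
\end{enumerate}
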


\begin{proof}
We first establish a bound using $\br^\top\Gamma^\star\br$ and then treat directions near a coordinate axis separately.

\paragraph{A uniform quadratic-form bound.} For $t>0$, define the diagonal matrices
\begin{equation}
    P_t=(\Gamma^\star+t\id)^{-1/2},\qquad A_t=P_t\Gamma^\star P_t, \qquad \|A_t\|_{\mathrm{op}}\leq1,\quad \operatorname{tr}A_t\leq\operatorname{tr}\Gamma^\star/t.
\end{equation}
Fix $t>0$ and condition on $\Gamma^\star$. For $\bz_\mu=(z_{\mu1},\ldots,z_{\mu d})^\top$, the transformed samples $\bm y_\mu^{(t)}=P_t(\Gamma^\star)^{1/2}\bz_\mu$ are conditionally i.i.d. $\N(0,A_t)$. Their sample covariance $\widehat A_t=n^{-1}\sum_\mu\bm y_\mu^{(t)}(\bm y_\mu^{(t)})^\top$ therefore satisfies
\begin{equation}
    \widehat A_t-A_t =P_t\left((\Gamma^\star)^{1/2}\frac{Z^\top Z}{n} (\Gamma^\star)^{1/2}-\Gamma^\star\right)P_t.
    \label{eq:unbounded-preconditioned-error}
\end{equation}
By Jensen's inequality, $\bigl(\E[\|\bm y_\mu^{(t)}\|\mid\Gamma^\star]\bigr)^2 \leq\E[\|\bm y_\mu^{(t)}\|^2\mid\Gamma^\star]=\operatorname{tr}A_t$. Thus the effective rank in Corollary~2 of \citet{koltchinskii2017concentration} is at most $\operatorname{tr}(A_t)/\|A_t\|_{\mathrm{op}}$. Applying that corollary and bounding its maximum by a sum gives, for every confidence parameter $\omega\geq1$, with conditional probability at least $1-e^{-\omega}$,
\begin{equation}
\begin{aligned}
 \|\widehat A_t-A_t\|_{\mathrm{op}}\leq C\Big[ &\sqrt{\|A_t\|_{\mathrm{op}}\operatorname{tr}(A_t)/n} +\operatorname{tr}(A_t)/n \\ &+\|A_t\|_{\mathrm{op}}\sqrt{\omega/n}
 +\|A_t\|_{\mathrm{op}}\omega/n\Big].
\end{aligned}
\end{equation}
Here $C$ is universal. Using the bounds on $\|A_t\|_{\mathrm{op}}$ and $\operatorname{tr}A_t$ above, together with \eqref{eq:unbounded-preconditioned-error}, yields
\begin{equation}
\begin{aligned}
 &\left\|P_t\left((\Gamma^\star)^{1/2}Z^\top Z(\Gamma^\star)^{1/2}/n -\Gamma^\star\right)P_t\right\|_{\mathrm{op}}\\
 &\hspace{3em}\leq C\left[\sqrt{s_d/t}+s_d/t +\sqrt{\omega/n}+\omega/n\right].
\end{aligned}
    \label{eq:unbounded-preconditioned-covariance}
\end{equation}
For a symmetric matrix $M$, $\|\operatorname{off}(M)\|_{\mathrm{op}}\leq2\|M\|_{\mathrm{op}}$. Because $P_t$ and $\Gamma^\star$ are diagonal, $\operatorname{off}(\widehat A_t-A_t)=P_tE_dP_t$, so the same bound with $2C$ controls $\|P_tE_dP_t\|_{\mathrm{op}}$.

To choose the preconditioning scale according to a possibly data-dependent direction, we first make the bound simultaneous over a finite grid. Use the grid $t_j=a2^j$, from $a$ up to its first value at least $\gamma_{+,d}$, and take $\omega=4\log d$. On $\gamma_{+,d}\leq d$, which has probability tending to one, this grid has $O(\log d)$ points. The grid depends only on $\Gamma^\star$, which is fixed under the conditioning; it does not depend on $Z$ or on the candidate direction. Its conditional failure probability is at most $O(\log d)d^{-4}$. A conditional union bound makes \eqref{eq:unbounded-preconditioned-covariance} simultaneous on the grid with probability tending to one. Averaging over $\Gamma^\star$ and adding $\Pr(\gamma_{+,d}>d)=o(1)$ gives the same unconditional conclusion. On this simultaneous event, for any unit $\br$, put $g_{\br}=\br^\top\Gamma^\star\br\in[a,\gamma_{+,d}]$ and choose a grid point $g_{\br}\leq t_j\leq2g_{\br}$. Then $\br^\top(\Gamma^\star+t_j\id)\br\leq3g_{\br}$, and therefore
\begin{equation}
    \frac{|\br^\top E_d\br|}{\sqrt{g_{\br}}} \leq6C\left[ \sqrt{s_d}+\frac{s_d}{\sqrt a} +\sqrt{\gamma_{+,d}}\left(\sqrt{\omega/n}+\omega/n\right)\right].
    \label{eq:unbounded-uniform-g}
\end{equation}
By \eqref{eq:unbounded-tail-control}, the last term vanishes in probability. Consequently, with
\begin{equation}
    T_d:=\sup_{\|\br\|=1} \frac{|\br^\top E_d\br|}{\sqrt{\br^\top\Gamma^\star\br}},
\end{equation}
we have $T_d\leq6C(\alpha^{-1/2}+a^{-1/2}\alpha^{-1})+o_{\mathbb P}(1)$. By homogeneity, every vector $\bv$ obeys
\begin{equation}
    |\bv^\top E_d\bv|\leq T_d\|\bv\| \sqrt{\bv^\top\Gamma^\star\bv}.
    \label{eq:unbounded-homogeneous-g}
\end{equation}

\paragraph{A simultaneous row bound.} Let $\bm f_i$ denote coordinate vector $i$ and let $\bz_i=Z_{:,i}\in\R^n$ be column $i$ of $Z$. Conditional on $\Gamma^\star,\bz_i$, the variables $\bz_i^\top\bz_j/\|\bz_i\|$, $j\ne i$, are independent standard Gaussians. Hence
\begin{equation}
    \frac{\|E_d\bm f_i\|^2}{\gamma_i^\star} \ \overset{\mathrm{law}}{=}\ \frac{\|\bz_i\|^2}{n}\, \frac1n\sum_{j\ne i}\gamma_j^\star\xi_j^2,
    \label{eq:unbounded-row-law}
\end{equation}
where the equality in law is conditional on $\Gamma^\star,\bz_i$. For independent standard Gaussians $\xi_j$, their moment-generating function gives
\begin{equation}
 \Pr\!\left\{ \sum_j\gamma_j^\star(\xi_j^2-1) >2\sqrt{\omega\sum_j(\gamma_j^\star)^2}+2\gamma_{+,d}\omega \,\middle|\,\Gamma^\star\right\}\leq e^{-\omega}.
    \label{eq:unbounded-weighted-chi-square}
\end{equation}
Indeed, for $0<s<(2\gamma_{+,d})^{-1}$, the logarithm of the centered moment-generating function is at most $s^2\sum_j(\gamma_j^\star)^2/(1-2s\gamma_{+,d})$; Chernoff's inequality gives the displayed bound. Apply it to each row with $\omega=4\log d$ and combine it with the analogous bound for $\|\bz_i\|^2$. With conditional probability at least $1-2de^{-\omega}$, simultaneously for all rows,
\begin{equation}
 W_d^2\leq \left(1+2\sqrt{\omega/n}+2\omega/n\right) \left(s_d+\frac{2\sqrt{\omega\operatorname{tr}[(\Gamma^\star)^2]}}n +\frac{2\gamma_{+,d}\omega}n\right).
    \label{eq:unbounded-row-finite-bound}
\end{equation}
Here $W_d$ is defined in the next display. This conditional union bound and \eqref{eq:unbounded-tail-control} yield
\begin{equation}
    W_d:=\max_i\frac{\|E_d\bm f_i\|}{\sqrt{\gamma_i^\star}} \leq\alpha^{-1/2}+o_{\mathbb P}(1).
    \label{eq:unbounded-row-bound}
\end{equation}
No independence between different rows' events is required.

\paragraph{Including directions concentrated on a coordinate.} If $\max_i r_i^2\leq1/2$, then $D_\Gamma(\br)\geq g_{\br}/2$ and the normalized error is at most $\sqrt2T_d$. Otherwise, let $r_i^2=m>1/2$ and write $\br=r_i\bm f_i+\bv$, where $v_i=0$ and $R=\|\bv\|^2=1-m$. Put $g_{\bv}=\bv^\top\Gamma^\star\bv$. Since $1-v_j^2\geq m$ for $j\ne i$,
\begin{equation}
    D_\Gamma(\br)\geq\gamma_i^\star mR+mg_{\bv}.
\end{equation}
The diagonal of $E_d$ is zero, so
\begin{equation}
    |\br^\top E_d\br| \leq2W_d\sqrt{\gamma_i^\star mR}+T_d\sqrt{Rg_{\bv}}.
\end{equation}
Dividing by $\sqrt{D_\Gamma(\br)}$ bounds this by $2W_d+T_d$, because $R<m$. At coordinate directions the numerator is exactly zero. Thus the supremum is at most $2T_d+2W_d$. The preceding bounds prove \eqref{eq:unbounded-normalized-noise}, for example with $C_1=12C+2$.
\end{proof}

\subsubsection{Adding the signal and transferring the score gap}
\label{subsec:unbounded-score-gap}
The sample covariance decomposes exactly as
\begin{equation}
\begin{aligned}
 \widehat\Sigma-\Sigma_d &=(\Gamma^\star)^{1/2}(Z^\top Z/n-\id)(\Gamma^\star)^{1/2} +\sqrt{\beta_d}(\bm e\bm t^\top+\bm t\bm e^\top) +\beta_d(q_\lambda-1)\bm e\bm e^\top,\\
 q_\lambda&=\|\bm\lambda\|^2/n,\qquad \bm t=(\Gamma^\star)^{1/2}Z^\top\bm\lambda/n.
\end{aligned}
    \label{eq:unbounded-signal-decomposition}
\end{equation}
Here $q_\lambda\to1$, $\beta_d\to\beta$, and $\|\bm t\|^2\to\alpha^{-1}$ in probability. To see the last assertion, condition on $\Gamma^\star,\bm\lambda$: $\bm t$ is Gaussian with covariance $q_\lambda\Gamma^\star/n$. Its squared norm has conditional mean $q_\lambda s_d\to\alpha^{-1}$ and conditional variance $2q_\lambda^2\operatorname{tr}[(\Gamma^\star)^2]/n^2\to0$.

For unit $\br$, the Frobenius-norm identity is $\|\operatorname{off}(\br\br^\top)\|_{\mathrm F}=\sqrt{1-I(\br)}$. Frobenius Cauchy--Schwarz and $D_\Gamma\geq a(1-I)$ give
\begin{equation}
\begin{aligned}
 \frac{|\br^\top\operatorname{off}(\bm e\bm t^\top+\bm t\bm e^\top)\br|} {\sqrt{D_\Gamma(\br)}}&\leq\frac{2\|\bm t\|}{\sqrt a},\\
 \frac{|\br^\top\operatorname{off}(\bm e\bm e^\top)\br|} {\sqrt{D_\Gamma(\br)}}&\leq\frac1{\sqrt a}.
\end{aligned}
\end{equation}
Combining this with \cref{lem:unbounded-normalized-noise} proves
\begin{equation}
\begin{aligned}
 &\sup_{\|\br\|=1,\ D_\Gamma>0} \frac{|\br^\top\operatorname{off}(\widehat\Sigma-\Sigma_d)\br|} {\sqrt{D_\Gamma(\br)}} \leq B_\alpha+o_{\mathbb P}(1),\\
 &B_\alpha:=C_1\left(\alpha^{-1/2} +a^{-1/2}\alpha^{-1}\right) +2\sqrt{\frac{\beta}{a\alpha}}.
\end{aligned}
    \label{eq:unbounded-full-error}
\end{equation}
In particular, $B_\alpha\to0$ as $\alpha\to\infty$, with no upper variance bound. The mask identities now imply, simultaneously over all non-coordinate directions,
\begin{equation}
    \frac{|\widehat{\mathcal C}(\br)-\mathcal C(\br)|}{\sqrt{H(\br)}} \leq\sqrt{\rho(1-\rho)}\,[B_\alpha+o_{\mathbb P}(1)].
    \label{eq:unbounded-cross-comparison}
\end{equation}

The relative Wishart bound \eqref{eq:sep-wishart-error} remains valid: conditionally on $\bu^\star,\Gamma^\star$, the samples are Gaussian with covariance $\Sigma_d$. Hence $(1-\varepsilon_d)H\leq\widehat H\leq(1+\varepsilon_d)H$, with $\varepsilon_d\to2/\sqrt\alpha+1/\alpha$. Define the actual uniform cross-coefficient error
\begin{equation}
 b_d:=\sup_{\|\br\|=1,\ H(\br)>0} \frac{|\widehat{\mathcal C}(\br)-\mathcal C(\br)|}{\sqrt{H(\br)}} \leq\sqrt{\rho(1-\rho)}B_\alpha+o_{\mathbb P}(1).
\end{equation}
The score comparison is therefore
\begin{equation}
\begin{aligned}
 \sqrt{\widehat S(\br)} &\leq\frac{\sqrt{S(\br)}+b_d}{\sqrt{1-\varepsilon_d}},\\
 \sqrt{\widehat S(\bm e)} &\geq\frac{\sqrt{S(\bm e)}-b_d}{\sqrt{1+\varepsilon_d}},
\end{aligned}
    \label{eq:unbounded-score-comparison}
\end{equation}
where the lower bound is used when its numerator is positive.

\begin{proof}[Proof of \cref{thm:unbounded-dynamic-recovery}]
The population localization bound \eqref{eq:sep-small-overlap-score} uses only $\Gamma^\star\succeq a\id$, so it remains valid. The teacher score limit \eqref{eq:sep-teacher-score} also remains valid: the law of large numbers gives $\bm e^\top\Gamma^\star\bm e\to1$, and $\sum_i\gamma_i^\star e_i^4\leq(\max_i e_i^2)\bm e^\top\Gamma^\star\bm e\to0$. Thus, with $g=\rho(1-\rho)\beta^2/(1+\beta(1-\rho))$ and $\eta$ from \eqref{eq:unbounded-overlap-guarantee}, with probability tending to one,
\begin{equation}
    \sup_{|\bm e^\top\br|^2\leq\eta}S(\br)\leq g/16, \qquad S(\bm e)\geq3g/4.
    \label{eq:unbounded-population-gap}
\end{equation}
For a finite sufficient bound, define
\begin{equation}
\begin{aligned}
 C_{a,\beta}&=C_1(1+a^{-1/2})+2\sqrt{\beta/a},\\
 A(a,\beta,\rho)&=\max\left\{100, \frac{1024C_{a,\beta}^2[1+\beta(1-\rho)]}{\beta^2}\right\}.
\end{aligned}
    \label{eq:unbounded-sample-bound}
\end{equation}
For $\alpha>A$, one has $\varepsilon_\alpha<1/4$ and $B_\alpha\leq C_{a,\beta}/\sqrt\alpha <\beta/[32\sqrt{1+\beta(1-\rho)}]$. Consequently, with probability tending to one, $\varepsilon_d<1/4$ and $b_d<\sqrt g/16$. Equations \eqref{eq:unbounded-score-comparison} and \eqref{eq:unbounded-population-gap} yield the strict empirical gap
\begin{equation}
 \sup_{|\bm e^\top\br|^2\leq\eta}\sqrt{\widehat S(\br)} \leq\frac{(1/4+1/16)\sqrt g}{\sqrt{3/4}} <\frac{(\sqrt{3/4}-1/16)\sqrt g}{\sqrt{5/4}} \leq\sqrt{\widehat S(\bm e)}.
\end{equation}
This excludes every low-overlap direction, including directions chosen adaptively from the data.

For attainment, $\max_i e_i^2<\eta$ with probability tending to one. The compact set $\{\|\br\|=1:|\bm e^\top\br|^2\geq\eta\}$ then contains no coordinate direction. On this set, $D_\Gamma>0$, and the relative covariance event makes $\widehat H>0$. The continuous score attains its maximum there. The strict empirical gap makes this a global maximum with positive score. Its radial optimizer $Q_*=[\widehat{\mathcal C}]_+/\widehat H$ is finite and positive by \eqref{eq:sep-empirical-score}. Coordinate directions and the zero weight have zero centered improvement and cannot minimize globally. This proves existence and the overlap assertion for every global minimizer.
\end{proof}

The sufficient sample ratio in \eqref{eq:unbounded-sample-bound} has, up to universal multiplicative constants, the dependence $A(a,\beta,\rho)\asymp 1+(1+\beta)[1+\beta(1-\rho)]/(a\beta^2)$, since $a\leq1$. Holding the other parameters fixed, this bound scales as $\beta^{-2}$ for weak signal and as $a^{-1}$ when the lower variance bound vanishes; it remains bounded as $\beta\to\infty$. These describe the chosen sufficient bound, not a sharp recovery threshold. Its conservatism reflects uniform control over all candidate directions, lower bounds on the masking penalty, and the numerical margins used to preserve the score gap. The variance law's second moment enters the vanishing remainder terms rather than this leading bound; the dimension needed for the high-probability guarantee can therefore depend on the law.
\paragraph{One fixed unbounded example}
\label{subsec:unbounded-example}
Take
\begin{equation}
    \gamma=(1+\xi_{\rm exp})/2,\qquad \xi_{\rm exp}\sim\operatorname{Exp}(1).
    \label{eq:unbounded-example-law}
\end{equation}
Then $a=1/2$, $\E_\gamma\gamma=1$, $\kappa_2=5/4$, and $\kappa_3=2$. This one variance law has unbounded support and satisfies all assumptions of \cref{thm:unbounded-dynamic-recovery}. For every fixed $\beta>0$ and $\rho\in(0,1)$, dynamic linear masking recovers at every sufficiently large fixed $\alpha$, whereas PCA and full linear reconstruction fail at every fixed $\alpha>0$. The required sufficient $\alpha$ may depend on $\beta$ and $\rho$; it is not uniform as $\beta\downarrow0$.

\newpage

%% file: appendices/appendix_bayes_amp.tex
\subsection{Explicit heteroskedastic bipartite AMP}
\label{subsec:BO-AMP}

We specialize the bipartite Low-RAMP equations (90)--(97) of \citet{lesieur2017constrained}, Section~II.4, using the sample-first observations $x_{\mu i}$, $d$ coordinate variables, and $n$ sample variables with $v_\mu^\star=\lambda_\mu$. The variance labels $\gamma_i^\star$ are held fixed and known in the matched benchmark. Define the coordinate-by-sample coupling and likelihood curvature
\begin{equation}
 J_{i\mu}:=\frac{\sqrt\beta\,x_{\mu i}}{\sqrt d\,\gamma_i^\star}, \qquad T_i:=\frac{\beta}{d\gamma_i^\star}.
 \label{eq:BO-AMP-couplings}
\end{equation}
The Gaussian likelihood contributes exactly $J_{i\mu}u_iv_\mu-T_i u_i^2v_\mu^2/2$ to the log posterior. In the score notation of \eqref{eq:BO-score-Gaussian} and \eqref{eq:BO-R-Gaussian}, $S_{i\mu}/\sqrt d=J_{i\mu}$ and $R_{i\mu}/d=J_{i\mu}^2-T_i$. These identities fix both the normalization and the signs of the precision updates below.

Let $\widehat u_i,\widehat v_\mu$ denote AMP means and $c_{u,i},c_{v,\mu}$ their scalar posterior variances. A sweep first updates the coordinate side at time $t$, then the sample side to time $t+1$. Given $(\widehat v^t,c_v^t,\widehat u^{t-1})$, set
\begin{equation}
\begin{aligned}
 B_{U,i}^t &=\sum_{\mu=1}^nJ_{i\mu}\widehat v_\mu^t -\widehat u_i^{t-1}\sum_{\mu=1}^nJ_{i\mu}^2c_{v,\mu}^t,\\
 A_{U,i}^t &=\sum_{\mu=1}^n\left[ T_i(\widehat v_\mu^t)^2 -(J_{i\mu}^2-T_i)c_{v,\mu}^t\right],\\
 \widehat u_i^t&=f_{\rm in}^U(A_{U,i}^t,B_{U,i}^t),\qquad c_{u,i}^t=\partial_B f_{\rm in}^U(A_{U,i}^t,B_{U,i}^t).
\end{aligned}
 \label{eq:BO-AMP-U}
\end{equation}
Using these newly computed coordinate estimates, set
\begin{equation}
\begin{aligned}
 B_{V,\mu}^t &=\sum_{i=1}^dJ_{i\mu}\widehat u_i^t -\widehat v_\mu^t\sum_{i=1}^dJ_{i\mu}^2c_{u,i}^t,\\
 A_{V,\mu}^t &=\sum_{i=1}^d\left[ T_i(\widehat u_i^t)^2 -(J_{i\mu}^2-T_i)c_{u,i}^t\right],\\
 \widehat v_\mu^{t+1}&=f_{\rm in}^V(A_{V,\mu}^t,B_{V,\mu}^t),\qquad c_{v,\mu}^{t+1}=\partial_B f_{\rm in}^V(A_{V,\mu}^t,B_{V,\mu}^t).
\end{aligned}
 \label{eq:BO-AMP-V}
\end{equation}
In particular, the two Onsager terms multiply $\widehat u^{t-1}$ and $\widehat v^t$, respectively. This is the alternating schedule of the cited equations. These are AMP approximations to the posterior; the exactness of the Gaussian likelihood expansion does not make its finite-dimensional marginals factorized.

\paragraph{Fisher-averaged implementation.} Under the regular state-evolution assumptions, the squared couplings in the summed corrections can be replaced by their null-channel means $\E_0[J_{i\mu}^2\mid\gamma_i^\star]=T_i$; the centered curvature terms $J_{i\mu}^2-T_i$ self-average. This is a site-dependent version of the Bayes simplification in Section~II.3 of \citet{lesieur2017constrained}. With $\alpha_d=n/d$, define
\begin{equation}
\begin{aligned}
 Q_v^t&=\frac1n\sum_\mu(\widehat v_\mu^t)^2, &V_v^t&=\frac1n\sum_\mu c_{v,\mu}^t,\\
 Q_{u,-1}^t&=\frac1d\sum_i\frac{(\widehat u_i^t)^2}{\gamma_i^\star}, &V_{u,-1}^t&=\frac1d\sum_i\frac{c_{u,i}^t}{\gamma_i^\star}.
\end{aligned}
 \label{eq:BO-AMP-empirical-moments}
\end{equation}
The resulting implementation uses
\begin{equation}
\begin{aligned}
 A_{U,i}^t&=\frac{\alpha_d\beta Q_v^t}{\gamma_i^\star}, &B_{U,i}^t&=\sum_\mu J_{i\mu}\widehat v_\mu^t -\frac{\alpha_d\beta V_v^t}{\gamma_i^\star}\widehat u_i^{t-1},\\
 A_{V,\mu}^t&=\beta Q_{u,-1}^t, &B_{V,\mu}^t&=\sum_i J_{i\mu}\widehat u_i^t -\beta V_{u,-1}^t\widehat v_\mu^t,
\end{aligned}
 \label{eq:BO-AMP-Fisher}
\end{equation}
followed by the same denoisers and the same update order. The factors $1/\gamma_i^\star$ remain inside the coordinate-side coefficients and sample-side averages; replacing them by a single mean variance changes the algorithm. The two finite-dimensional versions \eqref{eq:BO-AMP-U}--\eqref{eq:BO-AMP-V} and \eqref{eq:BO-AMP-Fisher} need not give identical iterates.

For $P_u=P_\lambda=\mathcal N(0,1)$, both denoisers are explicit:
\begin{equation}
\begin{aligned}
 \widehat u_i^t&=\frac{B_{U,i}^t}{1+A_{U,i}^t}, &c_{u,i}^t&=\frac1{1+A_{U,i}^t},\\
 \widehat v_\mu^{t+1}&=\frac{B_{V,\mu}^t}{1+A_{V,\mu}^t}, &c_{v,\mu}^{t+1}&=\frac1{1+A_{V,\mu}^t}.
\end{aligned}
 \label{eq:BO-AMP-Gaussian}
\end{equation}
The Fisher-averaged precisions are nonnegative, so these denominators are at least one. The unsimplified iteration requires $1+A>0$ for each Gaussian denoiser. For a Rademacher sample prior, only the sample-side denoiser changes to $\tanh B$ with variance $1-\tanh^2B$; it must not be used for Gaussian sample amplitudes.

\paragraph{Connection to the existing state evolution.} Let $M_v^t,Q_v^t$ be the limiting overlap and squared norm on the sample side, and $M_u^t(\gamma),Q_u^t(\gamma)$ their conditional coordinate-side counterparts. The formal Onsager-corrected scalar fields corresponding to the alternating AMP have
\begin{equation}
\begin{aligned}
 \mathcal A_U^t(\gamma)&=\frac{\alpha\beta Q_v^t}{\gamma}, &\mathcal B_U^t(\gamma) &=\frac{\alpha\beta M_v^t}{\gamma}u^\star +\sqrt{\mathcal A_U^t(\gamma)}\,Z_U,\\
 \mathcal A_V^t&=\beta\E_\gamma\frac{Q_u^t(\gamma)}\gamma, &\mathcal B_V^t &=\beta\E_\gamma\frac{M_u^t(\gamma)}\gamma\,v^\star +\sqrt{\mathcal A_V^t}\,Z_V,
\end{aligned}
 \label{eq:BO-AMP-SE-fields}
\end{equation}
where the Gaussian noises are independent of the respective scalar teachers. These are effective cavity laws, not independence assertions between finite-dimensional iterates and their training data. On a matched Nishimori trajectory, $Q_v^t=M_v^t$ and $Q_u^t(\gamma)=M_u^t(\gamma)$. Applying the scalar denoisers to \eqref{eq:BO-AMP-SE-fields} then gives exactly \eqref{eq:BO-Mu-gamma-Gaussian}--\eqref{eq:BO-Mv-update-Gaussian}. This verifies the normalization of the explicit AMP against the existing Bayes equations. It does not supply a general convergence theorem from an arbitrary initialization or a uniform state-evolution theorem for all unbounded variance laws.

The formulas condition on the realized variances. Replacing them by \eqref{eq:BO-empirical-gamma} defines a plug-in AMP, whose performance requires the stability analysis discussed in \cref{subsec:BO-gamma-identifiability}. When complete training samples are available, independent masking carries no additional information about the planted factors. The full-data Bayes/AMP benchmark therefore does not depend on the masking ratio $\rho$ or the number of masks $K$.